\newcommand{\BibTeX}{{\rm B\kern-.05em{\sc i\kern-.025em b}\kern-.08em T\kern-.1667em\lower.7ex\hbox{E}\kern-.125emX}}
\newcounter{address}
\newcommand{\V}[1]{\ensuremath{\boldsymbol{#1}}\xspace}
\newcommand{\0}{\V{0}}
\newcommand{\N}[1]{\ensuremath{\mathit{#1}}\xspace}
\newcommand{\mc}[1]{\ensuremath{\mathcal{#1}}\xspace}
\newcommand{\D}{\mc{D}}
\newcounter{alg}
\newcommand{\Dd}[1]{\mathcal{D}^{#1}}
\newcommand{\Ss}{\mathcal{S}}
\newcommand{\PP}{\mathbf{P}}
\newcommand{\val}{\mathrm{val}}
\DeclareMathOperator*{\argmin}{arg\,min}
\newtheorem{prop}{Proposition}
\title{Model Monitoring in the Absence of Labeled Data via Feature Attributions Distributions}
\authors    {Carlos Mougan Navarro} 
\date       {\today}
\keywords   {}
\newcommand{\tikzxmark}{%
\tikz[scale=0.23] {
    \draw[line width=0.7,line cap=round] (0,0) to [bend left=6] (1,1);
    \draw[line width=0.7,line cap=round] (0.2,0.95) to [bend right=3] (0.8,0.05);
}}
\newcommand{\tikzcmark}{%
\tikz[scale=0.23] {
    \draw[line width=0.7,line cap=round] (0.25,0) to [bend left=10] (1,1);
    \draw[line width=0.8,line cap=round] (0,0.35) to [bend right=1] (0.23,0);
}}
\begin{document}
\pagenumbering{gobble} 

\frontmatter
\maketitle

\begin{abstract}

Model monitoring involves analyzing AI algorithms once they have been deployed and detecting changes in their behaviour.
This thesis explores machine learning model monitoring ML before the predictions impact real-world decisions or users. This step is characterized by one particular condition: the absence of labelled data at test time, which makes it challenging, even often impossible, to calculate performance metrics.

The thesis is structured around two main themes: \emph{(i) AI alignment}, measuring if AI models behave in a manner consistent with human values and \emph{(ii) performance monitoring}, measuring if the models achieve specific accuracy goals or desires. 

The thesis uses a common methodology that unifies all its sections. It explores feature attribution distributions for both monitoring dimensions. Using these feature attribution explanations, we can exploit their theoretical properties to derive and establish certain guarantees and insights into model monitoring.

For AI Alignment, we explore whether the distributions of feature attributions are distinct for different social groups and propose a new formalization of equal treatment. This novel metric assesses how well AI decisions adhere to ethical standards and political-philosophical values. Our notion of Equal Treatment tests for statistical independence of the explanation distributions over populations with different protected characteristics. We show the theoretical properties of our formalization of equal treatment and devise an equal treatment inspector based on the AUC of a classifier two-sample test. 

For performance monitoring, we define \emph{explanation shift} as the statistical comparison between how predictions from training data are explained and how predictions on new data are explained. We propose explanation shift as a key indicator to investigate the interaction between distribution shifts and learned models.  We introduce an Explanation Shift Detector that operates on the explanation distributions, providing more sensitive and explainable changes in interactions between distribution shifts and learned models. We compare explanation shifts with other methods that are based on distribution shifts, showing that monitoring for explanation shifts results in more sensitive indicators for varying model behavior. We provide theoretical and experimental evidence and demonstrate the effectiveness of our approach on synthetic and real data.

Finally, to explain model degradation we use a second model, that predicts the uncertainty estimates of the first

Additionally, we release two open-source Python packages, \texttt{skshift} and \texttt{explanationspace}, which implement our methods and provide usage tutorials for further reproducibility.

\end{abstract}
\tableofcontents

\chapter*{\centering \large\textbf{LIST OF PUBLICATIONS}}
\addcontentsline{toc}{section}{\bf ~~~~~~~~~~~LIST OF PUBLICATIONS}
\noindent

The research presented in Chapter~ \ref{ch:et} has been disseminated through the following publication:
\begin{itemize}
\item  \textbf{1.}\textbf{Mougan, C}, Ferrara, A., State, L., Ruggieri, S., $\&$ Staab, S. Beyond Demographic Parity: Redefining Equal Treatment\\
    \textit{NeurIPS 2023 workshop on AI meets Moral Philosophy}\\
    \cite{equalTreatment}\\
\end{itemize}

The research presented in Chapter \ref{ch:exp.shift} has been disseminated through the following publication:
\begin{itemize}
    \item \textbf{2.}\textbf{Mougan, C.}, Broelemann, K., Kasneci, G., Tiropanis, T., and Staab, S. (2022)\\
    Explanation shift: Detecting distribution shifts on tabular data via the explanation space.
    \textit{NeurIPS 2022 Workshop on Distribution Shifts: Connecting Methods and Applications}~\cite{explanationShift}
	
\end{itemize}

The research presented in Chapter~\ref{ch:monitoring} has been disseminated through the following publication:
\begin{itemize}

    \item \textbf{3.}\textbf{Mougan, C.}, $\&$ Nielsen, D. S. (2023).\\ 
    Monitoring Model Deterioration with Explainable Uncertainty Estimation via Non-parametric Bootstrap.\\
    \textit{ In AAAI Conference on Artificial Intelligence,2023}~\cite{mougan2022monitoring}
	
\end{itemize}

Additional publications, while not directly incorporated into this thesis, have enriched the understanding of the field and influenced some of the proposed solutions:

\begin{itemize}

    \item \textbf{4.}\textbf{Mougan, C.}, Plant, R., Teng, C., Bazzi, M., Ejea, A. C., Chan, R. S.-Y., Jasin, D. S., Stoffel, M., Whitaker, K. J., and Manser, J. (2023).\\
    How to data in datathons.\\
    \textit{Advances in Neural Information Processing Systems}~\cite{dsg}
    
    \item \textbf{5.}\textbf{Mougan, C.}, Alvarez, J., Ruggieri, S., and Staab, S. (2023).\\
    Fairness implications of encoding protected categorical attributes.\\
    \textit{In Proceedings of the 2023 AAAI/ACM Conference on AI, Ethics, and Society, AIES Association for Computing Machinery}~\cite{FairEncoder}

    \item \textbf{6.}\textbf{Mougan, C}, Masip, D., Nin, J., and Pujol, O. (2021).\\
    Quantile encoder: Tackling high cardinality categorical features in regression problems.
    \textit{Modeling Decisions for Artificial Intelligence - 18th International Conference, MDAI 2021} ~\cite{DBLP:conf/mdai/QuantileEncoder}

    \item \textbf{7.}\textbf{Mougan, C},  Kanellos, G, and Gottron, T. \\
    Desiderata for Explainable AI in Statistical Production Systems of the European Central Bank.
    \textit{ECMLPKDD Workshop on Bias (2021)}~\cite{desiderataECB}

    \item \textbf{8.}\textbf{Mougan, C}, Kanellos, G., Micheler, J., Martinez, J., $\&$ Gottron, T. (2022).\\ 
    Introducing explainable supervised machine learning into interactive feedback loops for statistical production system. 
    \textit{Irving Fisher Committee (IFC) - Bank of Italy workshop on Data science in central banking: Applications and tools.}~\cite{DBLP:journals/corr/csdb_ml}

    \item \textbf{9.}Yasar, A. G., Chong, A., Dong, E., Gilbert, T. K., Hladikova, S., Maio, R., \textbf{Mougan, C}, Shen, X., Singh, S., Stoica, A.-A., Thais, S., and Zilka, M. (2023). \\
    AI and the EU digital markets act: Addressing the risks of bigness in generative AI.\\
    \textit{In Proceedings of the 1st Workshop on Generative AI and Law. International Conference on Machine Learning (ICML)}~\cite{workshop2023generative}

    \item \textbf{10.}Papageorgiou, I. and \textbf{Mougan, C} (2023). \\
    Processing sensitive personal data for bias monitoring under the AI act: Necessity principle. \\
    \textit{In Beyond Data Protection Conference 2023}\\
    \textit{NeurIPS 2023 workshop on Regulatable ML}~\cite{papageorgiou2023processing}

    \item \textbf{11.}Alvarez, J. M., Colmenarejo, A. B., Elobaid, A., Fabbrizzi, S., Fahimi, M., Ferrara, A., Ghodsi, S., \textbf{Mougan, C.}, Papageorgiou, I., Reyero, P., Russo, M., Scott, K. M., State, L., Zhao, X., and Ruggieri, S.\\
    Policy advice and best practices on bias and fairness in AI. \\
    \textit{Ethics and Information Technology}\cite{alvarez2024policy}

    \item \textbf{12.}Leslie, D., Ashurst, C., Gonzalez, N. M., Griffiths, F., Jayadeva, S., Jorgensen, M., Katell, M., Krishna, S., Kwiatkowski, D., Martins, C. I., Mahomed, S., \textbf{Mougan, C.}, Pandit, S., Richey, M., Sakshaug, J. W., Vallor, S., and Vilain, L. \\
    Frontier AI, power, and the public interest: Who benefits, who decides? \textit{Harvard Data Science Review} \cite{leslie2024frontier}

\end{itemize}
\section*{\centering \large\textbf{UNDER REVIEW}}
\begin{itemize}

\item \textbf{13.}\textbf{Mougan, C}, Broelemann,D., Masip, K., Kasneci, G., Tiropanis, T., and Staab, S. (2022).Explanation shift: How did the distribution shift impact the model?

\item \textbf{14.}Pugnana, A., \textbf{Mougan, C}, Saatrup , D., Model Agnostic Explainable Selective Regression via Uncertainty Estimation

\item \textbf{15.}\textbf{Mougan C.}, Brand, J., Kantian Deontology Meets AI Alignment: Towards Morally Robust Fairness Metrics ~\cite{mougan2023kantian}

\item \textbf{16.}Castillo-Eslava, F., \textbf{Mougan, C}, Romero-Reche, A., and Staab, S. (2023).
    The role of large language models in the recognition of territorial sovereignty: An analysis of the construction of legitimacy~\cite{LLMTerritorial}

    \item \textbf{17.}Yasar, A. G., Chong, A., Dong, E., Gilbert, T. K., Hladikova, S., \textbf{Mougan, C.}, Shen, X., Singh, S., Stoica, A.-A., and Thais, S. (2024). Integration of generative AI in the digital markets act: Contestability and fairness from a cross-disciplinary perspective. In Working Paper Series. London School of Economics.\cite{yasar2024}

    \item  \textbf{18.} \textbf{Mougan, C.} AI Alignment via Virtue-Based Selected Feedback

\end{itemize}
\authorshipdeclaration{}
\mainmatter
\chapter{Introduction to Model Monitoring} \label{chapter:intro}

Model monitoring refers to the practice of observing and understanding the behaviour of Artificial Intelligence (AI) systems in production~\citep{DesignMLSystems,mle}. It involves collecting and analyzing data from deployed AI models and systems to ensure they are functioning as intended and to identify and diagnose any issues or anomalies that may arise during their operation.

\begin{figure}[ht]
\centering
\includegraphics[width=\textwidth]{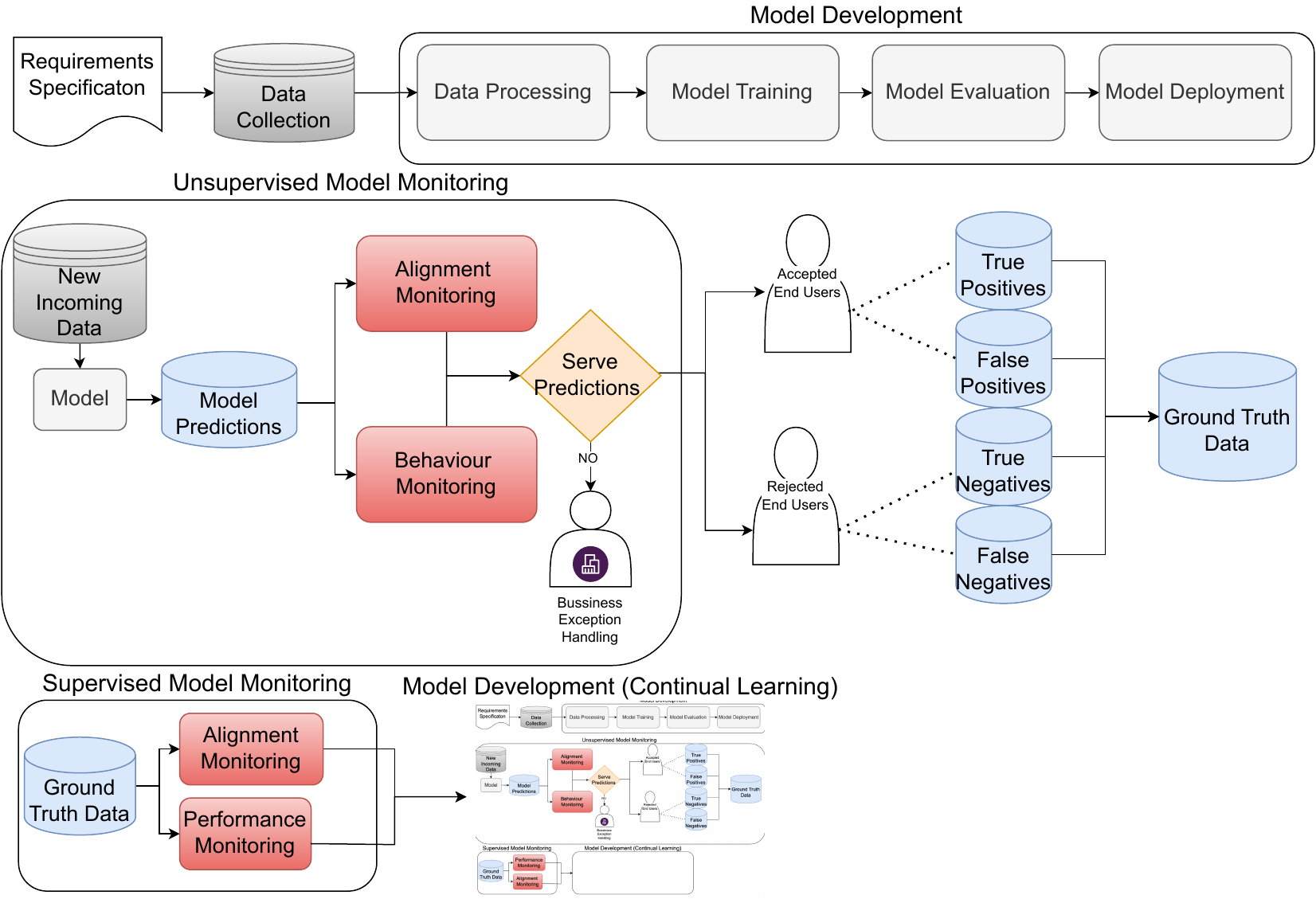}
\caption{Model monitoring occurs within the AI system life cycle. During model development, task requirements and data are collected, models are trained and evaluated, and parameters are refined. Finally, the model is assessed against benchmarks and metrics using a hold-out dataset.}\label{fig:AI.train}
\end{figure}

The process of model monitoring unfolds within the broader context of the AI system life cycle, as illustrated in Figure~\ref{fig:AI.train}. After the application task requirements specifications are collected and needed data is selected, AI models undergo training and evaluation using available data during the model developmental phase. In this phase, model parameters are learned, tuned and refined. Then, the model is evaluated using the available data from a hold-out set against established benchmarks and metrics. 

Subsequently, based on the results of this evaluation, models are deployed into production environments. This shift marks the transition from the controlled development environment, with available ground truth data, to the dynamic and frequently unpredictable landscape of real-world deployment~\citep{DBLP:journals/csur/PaleyesUL23}, as illustrated in Figure~\ref{fig:AI.train}. This stage not only marks a significant point in the AI life cycle but also a moment when understanding how the model is going to behave, which is crucial for the successful and responsible application of AI models.

\begin{figure}[ht]
\centering
\includegraphics[width=\textwidth]{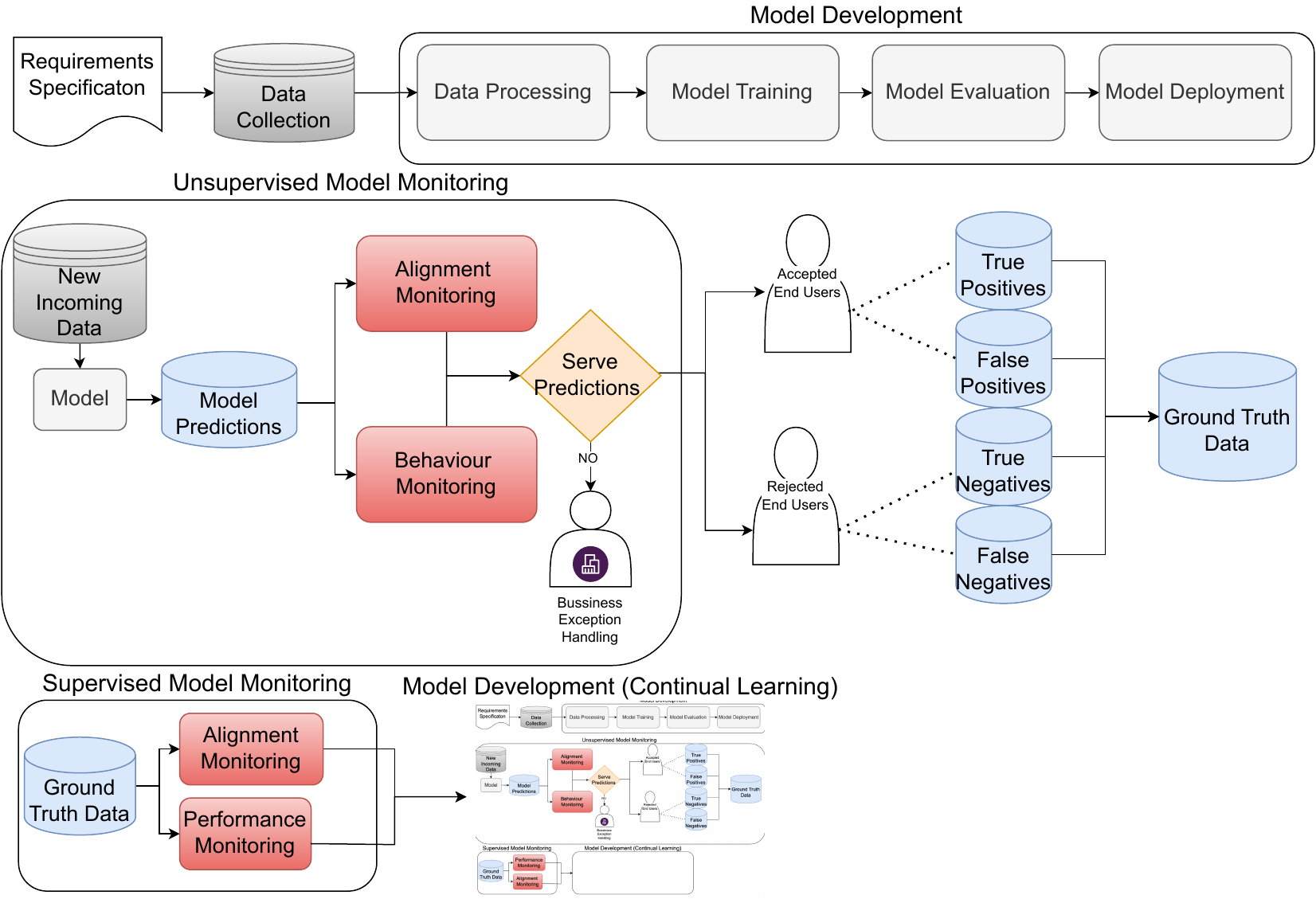}
\caption{Life cycle of monitoring a supervised learning binary classifier. Once a model is deployed, it encounters new, unlabelled data, prompting the need to observe and analyse its behaviour to detect potential issues before serving the predictions in the real world. In some cases, after serving the model to users, the acquisition of ground truth data allows for further evaluation. Leveraging these results and the available ground truth data, the original model can undergo retraining.}\label{fig:AI.lifecycle}
\end{figure}

We can distinguish two types of model monitoring: before serving the model predictions (unsupervised model monitoring) or after, when it is often possible to obtain ground truth data collection (supervised model monitoring).  However, it is at the first stage of model monitoring, depicted in Figure~\ref{fig:AI.lifecycle}, where the primary focus of this thesis lies; when newly arrived, unlabeled data triggers the model to make predictions, right before serving those predictions to the world (e.g. users). It is precisely during this phase that we observe and analyse the model's predictions to detect any undesirable behaviour before it reaches real-world applications. This specific phase,  prior to serving the model predictions, presents a unique challenge: the \emph{absence of ground truth data} that makes it extremely hard and often impossible to estimate conventional model performance metrics such as accuracy, precision, AUC, or F1-score and social inequality metrics such as equal opportunity, average absolute odds or equal odds. Within this step of an AI model's life cycle, we pinpoint two paramount challenges of model monitoring: 

\begin{itemize}
    \item \textbf{AI Alignment}: relies on ensuring that the AI systems deployed align with a diverse set of human values.
    
    \item \textbf{Model Monitoring}: Accounting for changes in the model's behaviour and performance on the new incoming data.
\end{itemize}

After performing an unsupervised model monitoring assessment, we reach the stage in which we have to decide whether to serve those predictions. This assessment is particular to each task, and the final decision lies within the domain of engineering and business teams. Opting not to serve the predictions to the users leads us back to revisiting the problem definition, as the absence of labeled data prevents retraining. On the other hand, if we accept those predictions and decide to serve them to the users, then in some situations, there is the possibility of collecting ground truth data.  Acquiring labeled data after deployment poses a considerable challenge and is often impractical,  exhibiting well-known biases, such as confounding, selection, and missingness~\citep{feng2023towards,DBLP:conf/aaai/Ruggieri0PST23}.
For example, in a loan lending scenario, after the period is due, we will be able to collect the data for the users who we gave the loan, but we won't be able to collect it for the users who we did not offer the loan, creating then a biased ground truth data.

Once we have obtained ground truth data, the second step of model monitoring can be performed. In this step, other metrics based on the model errors can be performed. If those metrics are satisfactory enough, we can proceed to make predictions in the new incoming data; if they are not and we want to optimize further, in some situation, we can retrain the model on this new data, a process called \enquote{continual learning}~\citep{continual_learning,DBLP:conf/nips/PhamLH21,DBLP:conf/icml/KnoblauchHD20}.

This thesis centres on the previous step, the unsupervised monitoring stage, where labeled post-deployment data is unavailable. The techniques applicable in this context are relatively limited. Specifically, this thesis seeks to understand how features attribution distribution can be used as this post-deployment monitoring stage.\footnote{The foundations of feature attribution methods will be discussed in Chapter~\ref{ch:foundations}}. This focus leads to the first research question of the thesis:

\say{(Research Question 1) \emph{How can we use feature attributions distributions for post-deployment monitor in the absence of labeled truth data?}}

We will divide this thesis's overarching research question into two main parts: monitoring for performance and AI Alignment. We now introduce more fine-grained research questions aligned with the current challenges of both dimensions of unsupervised model monitoring. For AI alignment, we start by providing two social science perspectives and then the technical contribution of the thesis that aims to align with them. For model monitoring, we divide the problem into two tasks: understanding changes in the model under distribution shift and building indicators of model performance deterioration. An overarching principle of this thesis is its commitment to open science. To uphold the focus on openness and reproducibility, we will compile a set of requirements designed to ensure replicable research outcomes and the development of practical software tools.

\section{AI Alignment and Feature Attributions Challenges}

The AI alignment field relies on ensuring that AI systems are oriented with a diverse set of human values, encompassing moral, societal, justice-oriented, and other relevant principles~\citep{DBLP:journals/mima/Gabriel20,gabriel2022toward}. 

In the field of computer science, researchers have traditionally quantified various statistical disparities that exist among protected attributes. These measures are often labelled using terms from the social sciences~\citep{DBLP:conf/fat/Binns18,mitchell2021algorithmic}, such as equal opportunity~\citep{DBLP:conf/nips/HardtPNS16} and equal treatment~\citep{DBLP:conf/fat/CandelaWHJRAH023}. These notions have been incorporated into the study of algorithmic fairness to address historical and systemic biases that can arise in decision-making systems. Researchers have subsequently proposed methods and algorithmic approaches to improve these metrics. Deciding which metrics are best suited in each situation is an important yet complex task as researchers have demonstrated incompatibility between certain metrics, thus arguing that several AI fairness concepts cannot be achieved simultaneously~\citep{DBLP:journals/bigdata/Chouldechova17,DBLP:conf/innovations/KleinbergMR17,DBLP:conf/nips/HardtPNS16,DBLP:conf/nips/HsuMN022}.  It should be noted that concepts in social sciences are not designed with mathematical precision, which makes it challenging to quantify statistical differences in a straightforward mathematical way directly. Instead, they emerge from the understanding of historical events, social inequalities, and discriminatory practices.

At the intersection of social sciences fields and computer science, there exists an already recognized misalignment~\citep{DBLP:conf/aies/FazelpourL20}. In light of this, there are several research lines that propose frameworks to help understand existing notions of algorithmic fairness in social science terms~\citep{DBLP:conf/fat/HeidariLGK19,DBLP:conf/aies/SimonsBW21}. 
Within the context of this thesis, which focuses on post-deployment monitoring without labeled data and explores the use of feature attribution distributions, we pose the second research question:

\say{(Research Question 2) \emph{Can feature attribution distributions be used to measure alignment of post-deployed models in a unsupervised monitoring set up??}}

This thesis will introduce novel metrics based on feature attribution distributions, aligned with two core principles of AI alignment, demonstrating how these distributions can be utilized to effectively measure them:
\begin{itemize}

    \item Moral values grounded in Kantian deontological philosophy.

    \item Equality notions based on distributive justice principles that are derived from political philosophy.

\end{itemize}

\section{Performance Monitoring with Feature Attributions Distributions}

The second and subsequent part of this PhD thesis centres on performance aspects of model monitoring, covering chapters~\ref{ch:exp.shift} and \ref{ch:monitoring}. In supervised machine learning, model monitoring of predictive performance is an important step of the ML model pipeline as it directly relates to the quality of the application~\citep{DesignMLSystems,mle}. Monitoring machine learning models once they have been deployed is not an easy task. Model monitoring ensures that a machine learning application in a production environment displays consistent behaviour over time. Understanding how distribution shift impacts the model behaviour is paramount in deployed ML models.  It enables us to gauge how well models adapt to changing data distributions, ensuring their reliability in dynamic real-world scenarios. Moreover, such investigations aid in addressing critical potential societal issues and ethical concerns by identifying and mitigating post-deployment biases promoting responsible AI deployment. By understanding how models respond to distribution shifts, we can enhance model accountability, allocate resources efficiently, and meet regulatory requirements.

With this challenge in mind and the usage of feature attribution distributions this thesis covers the following research question:

\emph{(Research Question (3)\emph{By looking at changes in the distributions of explanations, can we measure how did distribution shift impact the model?}}\footnote{To be answered in Chapter~\ref{ch:exp.shift}}

Current methods to detect shifts in the input or output data distributions have limitations in identifying model behaviour changes, where we recollect our research requirement \emph{(REQ-05)}, measuring the impact of distribution shift in input data in the model. Extensive related work has aimed at detecting that data is from out-of-distribution. To this end, they have created several benchmarks that measure whether data comes from in-distribution or not~\citep{wilds,wilds_ext,malinin2021shifts,malinin2022shifts,DBLP:conf/nips/MalininBGGGCNPP21}. However, many methods developed for detecting out-of-distribution data are specific to neural networks processing image and text data and can not be applied to traditional machine learning techniques. These methods often assume that the relationships between predictor and response variables remain unchanged, i.e., no concept shift occurs. In this thesis, we focus on unlabeled monitoring, where ground truth from deployment data is not available, forming research requirement \emph{(REQ-06)}.  In this thesis, a research requirement is the statistical comparison between how predictions from training data are explained and how predictions on new data are explained, \emph{(REQ-07)}. Also, identifying and having explanation capabilities of the changes in the model behaviour is research requirement \emph{(REQ-08)}

Once we have proposed and studied methods, based on feature attributions distributions, to understand the impact of distribution shifts on the model, the next step is to assess whether these shifts have affected the model's performance. This task is a well known impossible challenge without further assumption, therefore developing an approach to measure performance decay is notoriously difficult. In the final research question of this thesis, we will study this open ended challenge , exploring how performance degradation can be identified and explained under a specific set of conditions using feature attributions distributions:

\emph{(Research Question (4)\emph{How can we identify and explain changes in model performance using feature attribution distribution?}}\footnote{To be answered in Chapter~\ref{ch:monitoring}}

\subsection{Reproducibility and Software Contribution Objectives}\label{sec:reproducibility}

Having well-defined software requirements in a computer science thesis is crucial to ensure clarity, reproducibility, and applicability of the research. Open-source software enables other researchers to replicate the study findings, validating the results and ensuring their reliability across different settings, which is a fundamental aspect of scientific research, as captured in requirement \emph{REQ-09}. In addition to releasing open-source packages, the software used for experiments should be available in the form of a software repository. This facilitates systematic evaluation, scrutiny, and comparison of the developed experiments against other benchmarks or existing solutions, as detailed in requirement \emph{REQ-10}.

\clearpage
\subsection{Summary Requirements for Model Monitoring in the Absence of Labeled Data via Feature Attributions Distributions}

\begin{table}[ht]

\small
  \caption{Requirements Elicitation Table for Equal Treatment}
  \label{tab:requirements}
  \begin{tabular}{|p{0.2\linewidth}|p{0.5\linewidth}|p{0.2\linewidth}|}
    \hline
    \rowcolor{lightgray}
    \textbf{Requirement} & \textbf{Description} & \textbf{Source} \\
    \hline

    \multicolumn{3}{|c|}{\textbf{Alignment Requirements}} \\
    \hline
    
    \makecell{REQ-01 \\ Unlabeled Data }&  Metrics should prioritize assessing disparities in model decisions rather than fixating on model errors. 
    & Society: Deontological Ethics\\
    \hline

    \makecell{REQ-02 \\ Group \\Discrimination }&Decisions based on protected characteristics that individuals did not choose at birth. & Society: Politics \& Ethics \\
    \hline
    \makecell{REQ-03 \\ Measuring Proxy \\Discrimination }&   Detecting that certain features may indirectly contribute to biased decisions of non-chosen factors at birth (protected attributes) & Society(Liberal Political Philosophy)\\
    \hline
    \makecell{REQ-04 \\ Explanation\\ Capability}&  Offer both theoretical underpinnings and empirical validation of the sources driving discrimination.& End User \\
    \hline
    \multicolumn{3}{|c|}{\textbf{Performance Monitoring}} \\
    \hline
    \makecell{REQ-05 \\ Model-Data \\Interactions}&The system shall measure and quantify the impact of distribution shifts in input data in the model & Monitoring Objectives \\
    \hline
    \makecell{REQ-06 \\ Unlabeled Data }& The system should not require labelled test data & Monitoring Objectives \\
    \hline
    \makecell{REQ-07 \\ Explanation\\ Distributions}& The system shall identify model explanation changes resulting from distribution shifts. & Monitoring Objectives \\
    \hline
    \makecell{REQ-08 \\Explanation \\ Capabilities}& The system shall provide insights into how models respond to different types of distribution shifts. & System Users \\
    \hline
    \multicolumn{3}{|c|}{\textbf{Software Requirements}} \\
    \hline
    \makecell{REQ-09 \\ Open Source }& An open-source software package with clear and comprehensive documentation, including installation guides, usage instructions, tutorials covering both basic and advanced usage. & Developers \& Scientific Community \\
    \hline
    \makecell{REQ-10 \\ Reproducibility} & The experiments should be available with sample datasets and examples showcasing the experiments used in this thesis. Additionally, there should be a feedback mechanism for users to report issues and ask questions. & Developers \& Scientific Community \\
    \hline
  \end{tabular}
\end{table}

\clearpage
\section{Overview of the Thesis}

The opening chapter of this thesis sets the stage for what is unsupervised model monitoring within machine learning systems. This chapter is divided into two main sections, each addressing different facets of AI monitoring: AI alignment challenges and performance monitoring.

Chapter \ref{ch:foundations}, titled "Foundations," provides the theoretical framework essential for understanding and addressing model monitoring and AI alignment. The chapter begins by establishing the necessary mathematical notation. It then provides the foundations for feature attribution explanations (Shapley values and LIME). Then, AI alignment explores how AI can adhere to distributive justice and moral philosophy principles such as egalitarianism, socialism, liberalism, utilitarianism, and deontology. The chapter also introduces the different types of distribution shifts and model monitoring challenges.

Chapter~\ref{ch:et}, titled "Measuring AI Alignment with Equal Treatment," is dedicated to exploring and establishing methodologies for ensuring AI systems align with philosophical notions of equal treatment. The chapter uses illustrative examples, such as college admissions and paper-blind reviews, to demonstrate the practical implications and challenges of AI in decision-making processes that require impartiality. It present a methodology for monitoring equal treatment in AI systems. The chapter engages with theoretical analyses, comparing various fairness metrics like equal treatment, demographic parity, and individual fairness, and includes experiments with both synthetic and real data to test these concepts. Through these discussions and experiments, the chapter aims to refine the tools and methods needed to evaluate and ensure that AI systems administer equal treatment across different scenarios, aligning closely with both distributive justice concepts and moral values.

Chapter \ref{ch:exp.shift}, titled "How Did Distribution Shift Impact the Model?", examines the effects of various types of distribution shifts on machine learning models, particularly focusing on how these shifts influence explanations provided by models. The chapter introduces a model designed to detect explanation shifts and explores relationships between common distribution shifts—like covariate shift, prediction shift, and concept shift—and their impacts on model explanations. It studies analytical and empirical evaluations using both synthetic data and real-world datasets, such as StackOverflow survey data and ACS derived datasets. It concludes with a discussion on the challenges of monitoring models under distribution shifts and the implications for model explainability.

Chapter ~\ref{ch:monitoring}, titled "Building Indicators of Model Performance Deterioration,"  develops an indicator to detect and evaluate the deterioration in machine learning model performance over time. It reviews related works on model monitoring and the role of uncertainty in assessing model performance. The chapter describes various methods to evaluate model deterioration detection systems and estimate the uncertainty that can lead to model decay. Experiments are conducted using synthetic and classical ML data benchmarks to assess these methods. It concludes with a discussion on the limitations of current approaches and the potential for further research in the area of dynamic model monitoring and maintenance.

Chapter \ref{ch:conclusion}, the conclusion of the thesis, synthesizes the key findings, delineates the contributions made to the field, and outlines future research directions. It summarizes the results of investigations into AI model monitoring, fairness, and performance.

\chapter{Foundations}\label{ch:foundations}
In this chapter, we present the foundations on which this thesis is build. We first introduce the formalization and then provide \emph{(i)} explainable AI foundations \emph{(ii)}  model monitoring and finally \emph{(iii)} AI alignment foundations.

\section{Mathematical Notation}

\begin{table}[ht]
\centering
\caption{Notation used throughout the paper.}\label{tab:notation}
\begin{tabular}{ll}
\hline
Notation                    & Meaning                                                           \\\hline
$f_\theta$                  & Supervised learning model.                                        \\
$X$                         & Predictive features.                                              \\
$Y$                         & Target feature.                                                   \\
$(x_i, y_i)$                & Observation in the training set.                                  \\
$\mathrm{dom}(Y)$           & Domain of the target feature.                                     \\
$f_\theta(x)$               & Model prediction for $x$                                          \\
$\Dd{tr}$                   & Training set.                                                     \\
$\Dd{new}$                  & Unseen dataset at training.                                       \\
$\D_X{}$                    & Projection of $\Dd{}$ on $X$.                                     \\
$\Dd{tr}_X$                 & Covariate training dataset input.                                 \\
$\Dd{new}_X$                & Covariate unseen dataset input.                                   \\
$f_\theta({\D_X})$          & Empirical prediction distribution.                                \\
$N$                         & Size of $X$.                                                      \\
$B$                         & Number of bootstrap samples.                                      \\
$b$                         & The index of a particular bootstrap sample.                       \\
$f_\theta(x^{\star})$       & Instance to be predicted.                                         \\
$\mathcal{S}(f, x^{\star})$ & Shapley values for $x^{\star}$ and model $f$.                     \\
$\mathcal{S}_i(f, x^{\star})$& Shapley value of feature $X_i$.                                  \\
$\Ss_j(f_\theta,\D_X)$      & Shapley value of feature $j$ calculated from dataset $\D_X$.      \\
$\PP(\D_X)$                 & Distribution of dataset $X$.                                      \\
$\PP(\Ss(f_\theta,\D_X))$   & Distribution of Shapley values calculated from dataset $\D_X$.    \\
\hline
\hline
\end{tabular}
\end{table}
In supervised learning, a function $f_\theta: X  \to Y$, also called a model, is induced from a set of observations, called the training set, $\Dd{} =\{(x_1,y_1),\ldots,(x_n,y_n)\} \sim X \times Y$, 
where $X = \{X_1, \ldots, X_p\}$ represents predictive features and $Y$ is the target feature. 
The domain of the target feature is $\mathrm{dom}(Y)=\{0, 1\}$ (binary classification) or $\mathrm{dom}(Y) = \mathbb{R}$ (regression). For binary classification, we assume a probabilistic classifier, and we  denote by $f_\theta(x)$ the estimates of the probability $P(Y=1|X=x)$ over the (unknown)  distribution of $X \times Y$. For regression, $f_\theta(x)$ estimates $E[Y|X=x]$. We call the projection of $\D$ on $X$, written $\D_X{} = \{x_1, \ldots, x_n\} \sim X$,  the empirical \textit{input distribution}. 
The dataset $f_\theta({\D_X}) = \{f_\theta(x) \ |\ x \in \D_X{} \}$ is called the empirical \textit{prediction distribution}.

\section{Feature Attribution Explainations}\label{sec:xai.foundations}

Explainability has become an important concept in legal and ethical data processing guidelines and machine learning applications ~\cite{intuitive_appeal}. A wide variety of methods have been developed to account for the decision of algorithmic systems ~\cite{guidotti_survey,DBLP:conf/fat/MittelstadtRW19,DBLP:journals/inffus/ArrietaRSBTBGGM20}.

\subsection{Shapley Values}
One of the most popular approaches to explaining machine learning models has been the use of Shapley values to attribute relevance to features used by the model~\cite{lundberg2020local2global,lundberg2017unified}. The Shapley value is a concept from coalition game theory that aims to allocate the surplus generated by the grand coalition in a game to each of its players~\cite{shapley}. The Shapley value $\mathcal{S}_j$ for the $j$'th player can be defined via a value function $\mathrm{val}:2^N \to \mathbb{R}$ of players in $T$:

\begin{small}
\begin{gather}
\mathcal{S}_j(\mathrm{val}) = \sum_{T\subseteq N\setminus \{j\}} \frac{|T|!(p-|T|-1)!}{p!}(\mathrm{val}(T\cup \{j\}) - \mathrm{val}(T))
\end{gather}
\end{small}

In machine learning, $N=\{1,\ldots,p\}$ is the set of features occurring in the training data, and $T$ is used to denote a subset of $N$. Given that $x$ is the feature vector of the instance to be explained, and the term $\mathrm{val}_x(T)$ represents the prediction for the feature values in $T$ that are marginalized over features that are not included in $T$:

\begin{gather}
\mathrm{val}_{f,x}(T) = E_{X|X_T=x_T}[f(X)]-E_X[f(X)]
\end{gather}

The Shapley value framework satisfies several theoretical properties, which we will describe in the following section. Our approach works with feature attribution techniques that fulfill efficiency and non-informative properties, and we use Shapley values as an example.

\subsubsection{Efficiency.} 

\textit{The feature contributions must add up to the difference of prediction $x^{\star}$ and the expected value}:
\begin{gather}
    \sum_{j \in N} \Ss_j(f, x^{\star}) = f(x^{\star}) - E[f(X)])
    \label{eq:SHAP-efficiency}
\end{gather}

It is important to differentiate between the theoretical Shapley values and the different implementations that approximate them. We use TreeSHAP as an efficient implementation of an approach for tree-based models of Shapley values~\cite{lundberg2020local2global,molnar2019,Zern2023Interventional}, particularly we use the observational (or path-dependent) estimation  ~\cite{DBLP:journals/corr/abs-2207-07605,DBLP:conf/nips/FryeRF20,DBLP:journals/corr/ShapTrueModelTrueData} and for linear models we use the correlation dependent implementation that takes into account feature dependencies \cite{DBLP:journals/ai/AasJL21}.

The following property only holds for the interventional variant (SHAP values), but not for the observational variant.

\subsubsection{Uninformativeness.}\label{eq:SHAP-uninformativeness}

\textit{A feature $X_j$ that does not change the predicted value (i.e., for all $x, x'_j$: $f(x_{N\setminus \{j\}}, x_j) = f(x_{N\setminus \{j\}}, x'_j)$) have a Shapley value of zero, i.e., $\Ss_j(f, x^{\star}) = 0$.}

There are two variants for the term $\val(T)$~\cite{DBLP:journals/ai/AasJL21,DBLP:journals/corr/ShapTrueModelTrueData,Zern2023Interventional}: the \textit{observational} and the \textit{interventional}. When using the observational conditional expectation, we consider the expected value of $f$ over the joint distribution of all features conditioned to fix features in $T$ to the values they have in $x^{\star}$:
\begin{equation}\label{def:val:obs}
\val(T) = E[f(x^{\star}_T, X_{N\setminus T})|X_T=x^{\star}_T]
\end{equation}
where $f(x^{\star}_T, X_{N\setminus T})$ denotes that features in $T$ are fixed to their values in $x^{\star}$, and features not in $T$ are random variables over the joint distribution of features.
Opposed, the interventional conditional expectation considers the expected value of $f$ over the marginal distribution of features not in $T$: 
\begin{equation}\label{def:val:int}
\val(T) = E[f(x^{\star}_T, X_{N\setminus T})]
\end{equation}

In the interventional variant, the marginal distribution is unaffected by the knowledge that $X_T=x^{\star}_T$~\citep{DBLP:journals/ai/AasJL21}. In general, the estimation of (\ref{def:val:obs}) is difficult, and some implementations (e.g., SHAP) consider (\ref{def:val:int}) as the default one~\cite{DBLP:journals/corr/ShapTrueModelTrueData}. In the case of decision tree models, TreeSHAP offers both possibilities\citep{lundberg2020local2global}.

The interventional SHAP values are said to stay \enquote{true to the model}, which means they will only give credit to the model's features and do not consider any correlation between inputs. In contrast, the full conditional SHAP values stay \enquote{true to the data}, considering how the model would behave when respecting the correlations in the input data~\citep{DBLP:journals/corr/ShapTrueModelTrueData}. Therefore, the full conditional SHAP values only credit the features relevant to the model behavior, while the interventional SHAP values allow us to explore the effect of changing inputs.

The only option supported for sparse cases is the interventional option. Thus, interventional SHAP values can be useful for sparse models, while full conditional SHAP values benefit models with highly correlated inputs. Overall, both interventional and full conditional SHAP values are useful for explaining the behaviour of machine learning models and providing insights into the most important features of model performance.

\paragraph{Linear Models and IID Data} In the case of a linear model $f_\beta(x) = \beta_0 + \sum_j \beta_j \cdot x_j$, the SHAP values turns out to be $\Ss(f, x^{\star}) = \beta_i(x^{\star}_i-\mu_i)$ where $\mu_i = E[X_i]$. For the observational case, this holds only if the features are independent \citep{DBLP:journals/ai/AasJL21}.

\subsection{LIME: Local Interpretable Model-Agnostic Explanations}

Another widely used feature attribution technique is LIME (Local Interpretable Model-Agnostic Explanations). The intuition behind LIME is to create a local linear model that approximates the behavior of the original model in a small neighbourhood of the instance to explain~\citep{ribeiro2016why,ribeiro2016modelagnostic}, whose mathematical intuition is very similar to the Taylor/Maclaurin series. Instead of attempting to interpret the entire model globally, LIME focuses on understanding predictions at a local level. It achieves this by perturbing or sampling data points in the vicinity of the prediction of interest and fitting a simple, interpretable model to approximate the complex model's behavior in that local region. The interpretable model, often a linear model, decision tree or a decision rule, can be easily understood and analyzed~\citep{DBLP:journals/corr/abs-1805-10820}.

This approach effectively "explains" a model's prediction by providing insight into which features were most influential in the local context. LIME quantifies the feature importance, enabling users to grasp why the model made a particular prediction for a specific data point. This local approximation is crucial because it reflects the model's behavior regarding the instance in question, which may differ from its global behavior.  In this thesis  we use the implementation based on the Euclidean version of LIME, called \enquote{tabular LIME} and it involves the following steps:
\begin{itemize}
    \item \textbf{Selection of Data Point}: Choose the data point for which you want an explanation.
    
    \item \textbf{Data Perturbation:} Generate a dataset of similar data points by perturbing the chosen instance. This step is crucial for approximating local behavior.

    \item \textbf{Prediction Collection:} Obtain model predictions for the perturbed data points.

    \item \textbf{Surrogate Model Creation:} Fit an interpretable model, such as a linear regression model, to the perturbed data and their corresponding model predictions. This surrogate model approximates the complex model's behavior locally.

    \item \textbf{Explanation Generation:} The surrogate model's coefficients are used to estimate the importance of each feature for the chosen prediction, resulting in an explanation of the model's decision for that instance.
\end{itemize}

LIME is a versatile and model-agnostic technique, which means it can be applied to virtually any machine learning model without requiring knowledge of the model's internal architecture. This makes it a valuable tool for interpretable AI in diverse domains.

\section{AI Alignment}\label{sec:foundations.alginemnt}
AI value alignment aims to ensure that AI is properly aligned with human values~\citep{DBLP:journals/mima/Gabriel20}. As computer systems continue to gain greater autonomy and impact,  AI value alignment takes on greater importance. The rapid decision-making capabilities of AI systems, combined with their autonomy, raise the stakes when it comes to ensuring that these systems align with human values. In real-time scenarios, AI may encounter complex ethical dilemmas, making it increasingly challenging for human oversight and evaluation of each action. This lack of oversight can lead to issues of accountability and ethical risks, underscoring the need to establish mechanisms that ensure AI acts according to a set of human values.

We can divide the alignment challenge into two parts~~\citep{DBLP:journals/mima/Gabriel20}. A first \emph{normative} question asks which values AI should align. A second challenge is  \emph{technical} and focuses on mathematically formalising those values or principles in AI systems to capture the previously discussed notions. 

The first part of the challenge raises questions about the very nature of "value" and its subjectivity, as it serves as a placeholder for a wide array of concepts. Values are fundamental beliefs or principles that guide behavior, decision-making, and judgments about what is right or wrong, good or bad, desirable or undesirable. This leads us into a meta-normativity tecno-legal debate about the objectivity of values.  In this thesis, we acknowledge  this meta-normativity debate, but for the purpose of practicality, we opt for a classic and shallow definition of values:

\blockquote{\textit{In its broadest sense, \enquote{value theory} is a catch-all label used to encompass all branches of moral philosophy, social and political philosophy, aesthetics, and sometimes feminist philosophy and the philosophy of religion}}~\cite{sep-value-theory}

In this thesis, we will limit the discussion of AI alignment to two main branches of philosophy and two specific values of those branches, which will be discussed in the next section: \textit{(i)} the branch of political philosophy with the value of distributive justice and \textbf{ \textit{(ii}}  the branch of ethics, centred on the value of moral rightness.

The second part of the challenge relies on the mathematical formalization of those values. The challenge becomes especially intricate when considering the diverse and dynamic nature of human values, ranging from ethical principles and cultural norms to personal preferences and moral priorities. Developing a mathematical framework that captures this intricate web of values demands a balance between flexibility and precision. It often requires acknowledging that human values are not fixed or universally agreed upon. A fundamental question is, \textit{if values lack an objectively quantifiable foundation, how can AI be developed to align with them?} \cite{DBLP:journals/mima/Gabriel20} proposes a distinction in order to ease this translation of human values into computational metrics:

\blockquote{\textit{Here it is useful to draw a distinction between minimalist and maximalist conceptions of value alignment. The former involves tethering artificial intelligence to some plausible schema of human value and avoiding unsafe outcomes. The latter involves aligning artificial intelligence with the correct or best scheme of human values on a society-wide or global basis. While the minimalist view starts with the sound observation that optimizing exclusively for almost any metric could create bad outcomes for human beings, we may ultimately need to move beyond minimalist conceptions if we are going to produce fully aligned AI. This is because AI systems could be safe and reliable but still a long way from what is best—or from what we truly desire.}\citep{DBLP:journals/mima/Gabriel20} }

In this thesis, we deviate from minimalist alignments and explore the foundational aspects of two distinct branches of philosophy: political philosophy and moral philosophy. These branches can serve to establish the fundamental humanities' fundamental pillars, guiding systems in accordance with the values, principles and standards found within these fields. We will now present the fundamental concepts underpinning each of these values and relate them to their significance in the landscape of AI alignment.

\subsection{Distributive Justice Political Philosophy Notions}

In philosophy, longstanding discussions about what constitutes a fair or unfair political system have led to established frameworks of distributive justice. Justice is a broad and foundational concept that pertains to the fair and equitable treatment of individuals and groups in society. It encompasses the idea of treating people with fairness, impartiality, and according to moral or legal principles. Even though there are several types of justice (retributive, procedural, restorative, or corrective justice), this thesis focuses on the intersection of distributive justice and supervised learning algorithms. Distributive justice is a specific subset of justice that deals with the fair allocation and distribution of resources, benefits, and burdens in a society. It is concerned with questions of how economic and social goods (such as income, wealth, education, healthcare, and opportunities) should be distributed among individuals and groups, emphasizing that circumstances not chosen at birth should not be the basis for disparate distribution~\citep{rawls1971theory,distributiveJustice}.

Different theories of distributive justice propose varying principles for achieving a fair distribution of goods in society. We will now introduce a few of them discussed in machine learning research. 

\subsubsection{Egalitarianism and Equal Opportunity}

Egalitarianism is a moral and political philosophy that advocates for the ideal of equality among individuals in society. At its core, egalitarianism stands for the belief that all people should be treated with equal dignity, respect, and worth, regardless of their inherent characteristics, social status, or circumstances of birth. This philosophy places a strong emphasis on the principle of opportunity, asserting that individuals should have equal access to resources, opportunities, and benefits. Egalitarianism strives to create a society characterized by fairness and justice, aiming to minimize, if not eliminate, disparities in access to income, wealth, and essential services. It asserts that factors beyond an individual's control should not be translated into social inequalities.

In this context, \textbf{equal opportunity} is a fundamental mechanism for achieving egalitarian goals. Egalitarians argue that by providing individuals with an equal chance to access opportunities like education, employment, and social services, society can move closer to the ideal of fairness, as individuals are not systematically disadvantaged due to factors beyond their control.

Equal opportunity is a practical means for addressing structural barriers that hinder individuals from realizing their potential and is instrumental in levelling the playing field. Egalitarians often advocate for policies and practices that promote both equal opportunity and their broader egalitarian objectives, such as affirmative action to rectify historical injustices, progressive taxation to reduce economic disparities, and social safety nets to ensure that all individuals have access to essential services. In essence, equal opportunity is a fundamental component of the egalitarian vision, working hand in hand with egalitarianism to reduce inequalities and foster a more just and equitable society.

The concept of equal opportunity has been translated into computable metrics with the same name~\citep{DBLP:conf/nips/HardtPNS16}, as we will discuss later when we review fairness metrics. From a machine learning perspective, the technical drawback is that metrics for equal opportunity require label annotations for true positive outcomes, which are not always available after the deployment of a model~\citep{feng2023towards}.

\subsubsection{Socialism and Equal Outcome}

Socialism is a political and economic ideology that advocates for collective or government ownership of the means of production, distribution, and exchange~\footnote{Socialism, throughout its history, has featured a multitude of perspectives and theories, frequently characterized by disparities in their conceptual, empirical, and normative principle. For example, \cite{rappoport1924dictionary} in his book of dictionary of socialism distinguished $\sim 40$ definitions of socialism.}. Socialism has been shaped by numerous influential thinkers throughout its history, such as~\cite{marx2019communist,engels2023condition}.
 
The basic intuition is that it aims to reduce economic inequalities and promote social equality by redistributing wealth and resources from the affluent to the less privileged.  While socialism does not necessarily mandate equal outcomes for all individuals, it does aim to reduce extreme disparities in wealth and income, fostering a more equitable distribution of resources. This makes socialism inherently concerned with reducing the gap in outcomes between different socio-economic groups.

In this context, \textbf{equal outcome}, is the concept of ensuring that every individual in a society attains the same or very similar results in terms of wealth, income, and social well-being. While some forms of socialism aspire to achieve more equal outcomes, it's not an inherent feature of all socialist systems. The pursuit of equal outcomes can be seen as a more extreme or utopian vision, as it often requires substantial intervention and redistribution of resources. Many socialist systems seek to balance the principles of equality and individual merit, providing a safety net and essential services while still allowing for varying degrees of success and personal achievement.

From a technical standpoint, within the framework of this PhD, we translate socialism to be construed as the aspiration for all individuals to attain the same outcome, irrespective of their individual efforts and that factors beyond an individual's control should not be translated into social inequalities. This conceptualization is operationalized by employing a random estimator, a tool that assigns outcomes without regard to any of the input. This strategic use of a random estimator not only aligns with socialist principles but also facilitates a meaningful comparison with other fairness metrics.

This concept of a random estimator, as a means to uphold the principles of socialism, has also been articulated as a form of bias preservation in a study by~\cite{wachter2020bias}.

\subsubsection{Liberalism and Equal Treatment}

Liberalism is a political and philosophical tradition that places a strong emphasis on individual rights, personal freedom, and limited government intervention in the lives of citizens~\citep{friedman1990free,nozick1974anarchy}. Liberals argue that individuals should be treated based on their merits and character rather than arbitrary factors such as race, gender, or socioeconomic status\footnote{We use the term \emph{liberalism} to refer to the perspective exemplified by~\cite{friedman1990free}. This perspective can also be referred to as \emph{neoliberalism} or \emph{libertarianism}~\citep{distributiveJustice,wiki_friedman}. }. 

Liberalism pursues the idea that societies should strive to eliminate discriminatory practices and institutional barriers that prevent certain individuals or groups from enjoying the same rights and opportunities as others. This commitment to equal treatment is often reflected in liberal democracies through anti-discrimination laws, anti-affirmative action policies, and efforts to ensure equal access to education and employment.

Equal treatment has also been defined as ``equal treatment-as-blindness" or neutrality~\citep{sunstein1992neutrality,miller1959myth}. This expression is often used in the context of policies or practices that aim to ensure fairness and impartiality by treating individuals without regard to certain characteristics, such as race, gender, or other potential sources of bias. This concept is closely related to the idea of treating everyone with equality, regardless of their inherent attributes or background.

When it is said that equal treatment is "blind" to certain characteristics, it means that those characteristics should not influence how individuals are treated in a particular context~\footnote{\cite{rawls1971theory}  introduces the concept of the "veil of ignorance." The veil of ignorance is a thought experiment suggesting that to achieve fairness and justice, individuals should design societal structures without knowing their own characteristics, such as their race, gender, or socioeconomic status. In this thesis, we understand that it also implies proxy discrimination, discrimination based on features that correlate with the protected attribute. This concept aligns with the idea of making decisions "blind" to certain characteristics to ensure impartiality and fairness.}. The goal is to create a system or environment where decisions and actions are based on merit, qualifications, or relevant criteria rather than on factors that could lead to unfair discrimination. Achieving completely equal treatment has been argued to be a social desiderata that is impossible to achieve.

\subsection{Moral Philosophy Notions}
Moral philosophy explores what constitutes ethical behaviour and has given rise to established frameworks, two prominent among them being deontology and utilitarianism. Ethical considerations are fundamental to how individuals and societies make decisions, in this thesis we limit our review to these two perspectives.

\subsubsection{Utilitarianism}

Utilitarianism is one of the most powerful and persuasive approaches to normative ethics in the history of philosophy~\footnote{Consequentialism is a broader ethical theory that asserts that the morality of an action is determined solely by the outcomes they produce, considering the overall good or bad resulting from those consequences. Utilitarianism, in the other hand, is a form of consequentialism where the right action is understood entirely in terms of consequences produced.~\citep{sep-consequentialism}}. Utilitarianism, traditionally understood through the likes of Jeremy Bentham or John Stuart Mill, focuses on the consequences of an action to determine its ethical value, considering the utility or benefit brought about by the action~\citep{sep-mill,mill1966utilitarianism,bentham1996collected}. In traditional utilitarian accounts, utility is identified as pleasure, well-being, or happiness~\citep{sep-utilitarianism-history}. The goal of moral action is therefore generally construed as increasing the amount of pleasure or happiness in our communities, with various proposed methods to measure these outcomes.

\paragraph{The Principle of Utility.} At the core of utilitarianism lies the principle of utility, which serves as the guiding criterion for evaluating actions. This principle states:

\blockquote{\emph{the greatest amount of good for the greatest number}}\footnote{This sentence is a concise summary of the principle of utility. While it's not attributed to a specific individual, the concept is closely associated with utilitarian philosophers such as Jeremy Bentham and John Stuart Mill. \cite{bentham1996collected},  in Chapter 1, titled "Of the Principle of Utility," Bentham discusses the foundation of his utilitarian philosophy. While the exact wording varies, the essence of the principle aligns with the idea of maximizing happiness or pleasure for the greatest number of people.}.

Even if the sentence can provide a synthetic summary of utilitarianism, it is not perfectly accurate. While an action may enhance happiness for the majority (the greatest number of people), it can fall short of maximizing the overall good if the smaller group, whose happiness remains unchanged or decreases, experiences significant losses that outweigh the gains of the larger group. The principle of utility prohibits sacrificing the well-being of the smaller group for the benefit of the greater number unless the net increase in overall good surpasses any available alternative.
The ethical worth of an action is contingent upon its contribution to the general well-being and happiness of the affected individuals. One key aspect of the principle of utility is its emphasis on impartiality. It considers the well-being of all individuals equally, without giving special preference to oneself or a particular group. This universality contributes to the idea that the morally right action is the one that maximizes collective happiness.

\paragraph{Consequentialist Evaluation.} Utilitarianism prompts us to assess actions based on their consequences.  The moral calculus involves weighing the positive and negative outcomes, aiming for the greatest net happiness. This principle requires assessing not just the increase in happiness for the majority but also considering potential sacrifices made by a smaller group~\citep{sep-consequentialism}. Classic utilitarianism is consequentialist, denying that moral rightness depends on anything other than consequences. It contends that breaking promises or considering the circumstances of an act is only relevant insofar as they affect future consequences. Despite its seeming simplicity, classic utilitarianism comprises various interconnected claims, ranging from its consequentialist nature to specific views on evaluating consequences and considering the well-being of all individuals equally\footnote{Classic utilitarianism, while seemingly straightforward in its emphasis on consequences, is, in fact, a complex amalgamation of distinct claims about the moral rightness of actions. These encompass various dimensions, from the overarching consequentialist stance to specific perspectives on evaluating consequences. Classic utilitarianism includes actual consequentialism, direct consequentialism, evaluative consequentialism, hedonism, maximizing consequentialism, aggregative consequentialism, total consequentialism, universal consequentialism, equal consideration, and agent-neutrality. Each of these facets adds layers of intricacy to the theory, demonstrating that classic utilitarianism entails a multifaceted framework that goes beyond a simplistic focus on consequences\citep{sep-consequentialism}.}.

Guided by the principle of utility, this ethical framework seeks to maximize overall happiness, embracing a consequentialist evaluation that adapts to the nuances of individual situations.

\subsubsection{Deontology}

Deontological ethics, with Immanuel Kant as its central figure and in particular through his seminal work \emph{Groundwork of the Metaphysics of Morals} \cite{kant1785groundwork}, emphasizes that it is through duties, or universal set of rules, that the ethical value of actions is determined. Generally, deontology maintains that these duties flow from the recognition that all humans are intrinsically valuable---it is our rational nature that gives us reason to act in one way or another. This duty, or rule-based, moral theory thus aligns well with various declarations, such as the \emph{Universal Declaration of Human Rights}, that recognize a set of inalienable rights based on the individual, such as the right to life and the right to equal treatment under the law; with deontology, moral action involves respecting inescapable duties that lay the fundamental foundation of the rights of others. While not all duties entail rights, all rights suppose a corresponding duty~\citep{watson2021right}.

\subsubsection{Kantian Critique of Utilitarianism}

After providing a concise outline of Kantian deontology and utilitarianism, we now consider the Kantian critique of utilitarianism\footnote{ Kant, however, did not directly address the utilitarian of his time, Jeremy Bentham, so it is helpful to look to contemporary Kantians such as Onora O'Neill and Christine Korsgaard, who provide a direct critique of utilitarian ethics. The following provides a foundational, but certainly not exhaustive, list of critiques.}.

\paragraph{Intrinsic value and the pursuit of the 'good'.} Utilitarianism believes that all moral actions ought to be aimed at obtaining the ‘good’ as an impersonal absolute value, such as happiness or pleasure, that can be maximized. Peter Singer writes that the classical utilitarian approach,

\blockquote{\textit{regards sentient beings as valuable only insofar as they make possible the existence of intrinsically valuable experiences like pleasure.}}~\citep{singer2011}.

Singer, himself a utilitarian, continues this notion that it is not the individuals that matter but the defined utility through the replaceability argument. While he appreciates that all individuals have inherent worth, this value does not extend far enough to guarantee its protection. Singer has thus argued that there are circumstances (albeit rare) where killing humans is the right thing to do if it brings about more happiness in the community~\cite{singer2011}.

This, of course, contrasts the second formula of the categorical imperative that stresses that each individual is intrinsically valuable and is not instrumentally subordinated to the well-being, happiness, or pleasure of others. An individual cannot be replaced by another individual even if their existence brings about more happiness. This is because Kantians disagree that the 'good' can be understood as something abstract and impersonal, something that is simply just 'good'. Rather, the good is a relational concept because, as Korsgaard argues, all value is tethered. Happiness or pleasure matters because they are \emph{good-for} individuals~\citep{korsgaard2018fellow}. Therefore, crucially, something can only be considered as universally good when it is \emph{good-for} everyone, from every individual's perspective.

\paragraph{The issue of aggregation.} Because utilitarians do not consider individuals intrinsically valuable, the moral decision procedure thus allows aggregating the good. There is no prohibition against using individuals, even if they are innocent, as mere means, as long as the loss of their dignity is for maximizing a sufficient amount of \enquote{good} in the community. Yet, this aggregation not only may violate the second formula of the categorical imperative but also consider the tethered notion of the \enquote{good}. As Korsgaard contends:

\blockquote{\emph{what is good-for me plus what is good-for you is not necessarily good-for \emph{anyone}.} (page 157~\cite{korsgaard2018fellow})}

There is a problem when maximizing the good involves taking something away from an individual: if giving something to person A brings him more pleasure than it would to person B, the utilitarian would say we ought to give it to person A. But this action would just be good for person A, and it would be worse for person B. Because it is tethered, the good is not something we can aggregate~\citep{korsgaard2018fellow}. This particularly is more concerning when those on the receiving end of aggregation are already marginalized individuals.

\paragraph{Optimizing outcomes. } 
The utilitarian approach involves calculating outcomes, whether it be maximizing benefit or reducing harm, and is not a robust approach to identifying which actions should be prohibited. As Onora O’Neill points out, circumstances can change the outcome. Take the act of lying as an example: typically, lying is considered to be immoral and, thus, generally prohibited. Yet not all lies, such as ‘white lies’, cause harm. The same goes for telling the truth, which often provides a benefit, but it can also inflict emotional harm. Focusing on consequences is unpredictable and, therefore, provides a less robust standard to measure the rightness and wrongness of actions. The link between harmful consequences and actions is unstable. The connection between norms, or duties, and actions is a more robust approach to morally evaluating actions. Duties, such as the duties of civility or to respect the freedom of others, provide a more useful and robust way of evaluating actions\citep{o2022philosopher}. 

Utilitarianism thus has a limitless scope of acceptable actions as it does not depend on the type of action but rather the consequence, or utility, it brings about. Being dishonest for the utilitarian, for example, is not \emph{categorically} condemned; for this reason, it lacks precision as the rightness or wrongness of an action can only be determined until after it is carried out and the consequences are known. The Kantian approach, while it has a more restricted scope given its categorical stance on duties, provides a precise and fundamental standard of action, where moral authority persists irrespective of any potential harmful consequence. Being dishonest (to some Kantians) is always wrong irrespective of unforeseen consequences.

Later, in the discussion section of AI alignment, we will map the Kantian criticism of current utilitarianism to our proposed Equal Treatment.

\section{Technical Types of Fairness}\label{sec:foundations.fairness}
In this section, we compare our proposed notion and model for equal treatment with respect to other metrics found in the literature. In alignment with regulatory frameworks like U.S. Title VII of the Civil Rights Act of 1964, prohibiting discrimination in employment based on race, sex, and other protected characteristics~\citep{titleVII1964} and prior work by~\citet{DBLP:journals/corr/abs-2105-01441}; we distinguish between two discrimination types: \emph{disparate treatment}, aiming to prevent intentional model discrimination, and \emph{disparate impact}, addressing unintentional effects across subpopulations. The former examines the system's intentions, as realized in model predictions $f_\theta(X)$, while the latter considers effects, interpreted as the error $f_\theta(X)-Y$.

\subsection{Disparate Impact Fairness Metrics: Approaches that Rely on Labeled Data}

\subsubsection{Equal Opportunity} 

Two formulations of the Equal Opportunity fairness metric are defined in the literature. First, it is the difference in the True Positive Rate (TPR) between the protected group and the reference group~\cite{DBLP:conf/nips/HardtPNS16,bigdata_potus,bigdata_potus1}:
\begin{gather*}
\text{TPR} = \frac{TP}{TP + FN}\\
\text{EOF}_{z}= \text{TPR}_z - \text{TPR}
\end{gather*}
Second -- a formulation proposed by~\citet{DBLP:conf/fat/HeidariLGK19} and that does not rely on the false negatives -- a model $f_\theta$ achieves equal opportunity if $P(f_\theta(X)=1|Y=1,Z=z) = P(f_\theta(X)=1|Y=1)$.

Equal Opportunity comes with a technical limitation, as it necessitates the availability of labeled data for true positives. For instance, in a loan lending scenario, true positives represent users who are granted a loan and successfully repay it. Obtaining such data can be time-consuming, requiring the completion of the entire loan cycle. This reliance on true positive rates in formulating Equal Opportunity presents an additional challenge involving the calculation of False Negatives. In the context of the loan scenario, a False Negative occurs when a loan is denied to someone capable of repayment, but acquiring this data may not be feasible. An alternative approach involves maintaining a holdout set of randomly selected users to whom loans are uniformly granted, allowing for monitoring in that controlled environment. Nevertheless, the implementation of a holdout set comes with potential economic and societal implications. Careful consideration is needed, as this approach may impact both financial outcomes and broader societal dynamics.

\subsubsection{Treatment Equality}

Despite its similar name, Treatment Equality does not necessarily align with the same distributive justice values as equal treatment, and the mathematical metric it employs is notably distinct~\citep{berk2021fairness}. 

In the original formulation by~\cite{berk2021fairness}, the concept is defined as follows:

\blockquote{\textit{Treatment equality is achieved when the ratio of false negatives and false positives is the same for both protected group categories. The term “treatment” is used to convey that such ratios can be a policy lever with which to achieve other kinds of fairness. For example, if false negatives are taken to be more costly for men than women so that conditional procedure accuracy equality can be achieved, men and women are being treated differently by the algorithm. Incorrectly classifying a failure on parole as a success, say, is a bigger mistake for men. The relative numbers of false negatives and false positives across protected group categories also can by itself be viewed as a problem in criminal justice risk assessments addresses similar issues, but through the false negative rate and the false positive rate: our b/(a+b) and c/(c + d), respectively.}}

\[
    \frac{FP_z}{FN_z}=\frac{FP}{FN}
\]

The authors do not specify the principle or value with which the notion should align, and this clarity is absent from their definition. An illustrative sentence is \enquote{\textit{men and women are being treated differently by the algorithm}}. However, this statement may not be entirely accurate, as the algorithm may treat men and women equally while exhibiting different error rates. For instance, a constant classifier that grants loans to everyone treats everyone equally. Still, if males and females have distinct patterns of loan repayment, the classifier will yield different errors, resulting in distinct $\frac{FP}{FN}$ values.

\subsection{Disparate Treatment Fairness Metrics: Approaches that Determine Unfairness without Labelled Data}
In this section we compare against metrics that rely only on $X$ and $f_\theta$ rather than on disparities of model errors.

\subsubsection{Fairness Through Unawareness.}
This bias mitigation method was initially proposed by~\citet{DBLP:conf/aaai/Grgic-HlacaZGW18} and is often used as a baseline. 

\begin{definition}\textit{(Fairness Through Unawareness}. An algorithm is fair if no protected
attributes $Z$ is explicitly used as an input of the algorithm.
\end{definition}

\subsubsection{Demographic Parity.}

\begin{definition}\textit{(Demographic Parity (DP))}. A model $f_\theta$ achieves demographic parity if $f_\theta(X) \perp Z$\label{def:dp}.
\end{definition}

Thus, demographic parity holds if  $\forall z.\,P(f_\theta(X)|Z=z)=P(f_\theta(X))$. For binary $Z$'s, we can derive an unfairness metric as $d(P(f_\theta(X)|Z=1),P(f_\theta(X))$, where $d(\cdot)$ is a distance between probability distributions.

A refined metric introduced by legal scholars aiming to align with European Court of Justice \enquote{gold standard} ~\citep{wachter2021fairness} is Conditional Demographic Parity, that extends Demographic Parity by fixing one or more attributes.

\begin{definition}\textit{(Conditional Demographic Parity (CDP))}. A model $f_\theta$ achieves conditional demographic parity if $P(f_\theta(X)|X_i=\tau,Z=z)=P(f_\theta(X)|X_i=\tau)$, where $\tau$ is any fixed value.\label{def:cdp}
\end{definition}

CDP differs from DP only in the sense that one or more additional covariate conditions are added.

\subsubsection{Individual Fairness.}

Individual fairness principle is that any two individuals who are similar concerning a particular task should be classified similarly, a topic with a broad literature on fairness, notably in social choice theory, game theory, economics, and law as well, for example, John Rawls' in \textit{Justice as Fairness} states\enquote{citizens who are similarly endowed and motivated should have similar opportunities to hold office}~\cite{rawls1958justice}. Within the computer science field, to accomplish
this individual-based fairness, a distance metric defines the similarity between the individuals. Several individual fairness metrics have been proposed, such as the Lipschitz condition or the earth-mover distance between two distributions~\cite{DworkHPRZ2012_IndFair}.

We say that a model $f$ achieves individual fairness if similar individuals received similar predictions:
\[ \forall X, X'\ d (f(X),f(X')) \leq \mathcal{L} \cdot d(X,X') 
\]
Where $\mathcal{L}>0$ is a constant and the two $d(\cdot,\cdot)$'s are distance functiosn between models' outputs and between instances respectively~\citep{DworkHPRZ2012_IndFair}.



\subsubsection{Counterfactual Fairness.}

Counterfactual fairness, as defined by~\citet{DBLP:conf/nips/KusnerLRS17}
captures the intuition that a decision is fair towards an individual if it is the same in
\emph{(i)} the actual case and \emph{(ii)} in a counterfactual case where the individual belonged
to a different protected attribute group.

\begin{definition}\textit{Counterfactual Fairness}
We say that the model $f_\theta$  is counterfactually fair if:
\[
    \forall X \in X\ \forall z, z' \in Z\    
    P(f_\theta(X_{Z=z}) | X = X, Z = z) = P(f_\theta(X_{Z=z'}) | X = X, Z = z)
\]
where $X_{Z=z}$ and $X_{Z=z'}$ are the instances $X$ in the (counterfactual) worlds where $Z=z$ and $Z=z'$ respectively.
\end{definition}

\subsubsection{Causal Feature Selection}

Another avenue of research in defining fairness operates within the causal fairness paradigm~\citep{DBLP:conf/sigmod/SalimiRHS19}. The following definition proposes a method for identifying a subset of features that ensures fairness in the data by conducting conditional independence tests between various feature subsets.

\begin{definition}\textit{(Causal Fairness)}. For a given set of admissible variables, $A$, a model $f_\theta$ is considered fair if  $P(f_\theta(X)=y|\texttt{do}(S)=s',\texttt{do}(A=a)=P(f_\theta(X)=y|\texttt{do}(S)=s,\texttt{do}(A=a)$ for all values of $A$,$S$.
\end{definition}

Here, the \texttt{do-operator} is defined as an intervention or modification of initial attributes to a specific value, observing its effect~\citep{pearl2009causality}. 

\begin{table}[htbp]
\centering
\caption{Summary of Technical Definitions of Fairness Metrics}
\label{tab:fairness_metrics}
\begin{tabular}{@{}ll@{}}
\toprule
\textbf{Fairness Metric} & \textbf{Definition} \\ \midrule

Equal Opportunity (EOF)              & \begin{tabular}[c]{@{}l@{}}$\text{EOF}_{z}= \text{TPR}_z - \text{TPR}$ \\ $\quad P(f_\theta(X)=1|Y=1,Z=z) = P(f_\theta(X)=1|Y=1)$\end{tabular} \bigskip \\
Treatment Equality                   & $\nicefrac{FP_z}{FN_z}=\nicefrac{FP}{FN}$ \\ \bigskip \\
Fairness Through Unawareness         & \begin{tabular}[c]{@{}l@{}}An algorithm is fair if no protected \\ attributes $Z$ is explicitly used as an input \\ of the algorithm\\ $f: X \rightarrow Y$ for which $Z \not\in X$ \end{tabular} \\ \bigskip \\
Demographic Parity (DP)              & $f_\theta(X) \perp Z$ \\ \bigskip \\
Conditional Demographic Parity (CDP) & $P(f_\theta(X)|X_i=\tau,Z=z)= P(f_\theta(X)|X_i=\tau)$\\ \bigskip \\
Individual Fairness                  & \begin{tabular}[c]{@{}l@{}}$\forall X, X'\ d (f(X),f(X')) \leq \mathcal{L} \cdot d(X,X')$\end{tabular} \\ \bigskip \\
Counterfactual Fairness              & \begin{tabular}[c]{@{}l@{}}$\forall X \in X\ \forall z, z' \in Z$ \\ $P(f_\theta(X_{Z=z}) | X = X, Z = z)$ \\ $= P(f_\theta(X_{Z=z'}) | X = X, Z = z)$\end{tabular} \\ \bigskip \\
Causal Fairness                      & \begin{tabular}[c]{@{}l@{}}For a given set of admissible variables, $A$, \\ model $f_\theta$ is fair if\end{tabular} \\
& $P(f_\theta(X)=y|\texttt{do}(S)=s',\texttt{do}(A=a)$ \\
& $=P(f_\theta(X)=y|\texttt{do}(S)=s,\texttt{do}(A=a)$ \\
& \quad \quad \quad $\forall$ values of $A$, $S$ \\ \bottomrule
\end{tabular}
\end{table}
\clearpage

\section{Model Monitoring}\label{sec:foundations.monitoring}

Model monitoring techniques help to detect unwanted changes in the performance of machine learning applications in a production environment. One of the biggest challenges in model monitoring is distribution shift, which is also one of the primary sources of model degradation~\citep{datasetShift,continual_learning}. 

Diverse types of model monitoring scenarios require different supervision techniques. We can distinguish two main groups: Supervised learning and unsupervised learning. Supervised monitoring, is the appealing one since the label is available and performance metrics can easily be tracked.  Acquiring labelled data post-deployment poses a considerable challenge and is often impractical,  exhibiting well-known biases, such as confounding, selection, and missingness~\citep{feng2023towards}.\footnote{\emph{Confounding Bias:} Misleading results due to an unconsidered external variable.
\emph{Selection Bias:} Skewed results from non-representative data samples.
\emph{Missingness Bias:} Distortion from systematically absent data~\citep{good2012common}.} While attractive, these techniques are often unfeasible as they rely either on having ground truth labelled data available or maintaining a hold outset, which leaves the challenge of how to monitor ML models to the realm of unsupervised learning~\citep{continual_learning}. Popular unsupervised methods that are used in this respect are the Population Stability Index (PSI) and the Kolmogorov-Smirnov test (K-S), all of which measure how much the distribution of the covariates in the new samples differs from the covariate distribution within the training samples~\citep{DBLP:conf/nips/RabanserGL19}. These methods are often limited to real-valued data, low dimensions, and require certain probabilistic assumptions~\citep{continual_learning,ShiftsData}.

Monitoring machine learning models in production is not an easy task. There are situations when the true label of the deployment data is available, and performance metrics can be monitored. However, there are cases where it is not, and performance metrics are not so trivial to calculate once the model has been deployed. Model monitoring ensures that a machine learning application in a production environment displays consistent behavior over time. 

Being able to explain or remain accountable for the performance or the deterioration of a deployed model is crucial, as a drop-in model performance can affect the whole business process ~\citep{desiderataECB}, potentially having catastrophic consequences\footnote{The Zillow case is an example of consequences of model performance degradation in an unsupervised monitoring scenario, see
\url{https://edition.cnn.com/2021/11/09/tech/zillow-ibuying-home-zestimate/index.html} (Online accessed January 26, 2022).}. Once a deployed model has deteriorated, models are retrained using previous and new input data in order to maintain high performance. This process is called continual learning ~\citep{continual_learning}, and it can be computationally expensive and put high demands on the software engineering system. Deciding when to retrain machine learning models is paramount in many situations~\citep{desiderataECB}.

\subsection{Distribution Shift Formalization}
The core machine learning assumption is that training data $\Dd{tr}$ and novel data  $\Dd{new}$ are sampled from the same underlying distribution $\PP(X, Y)$. The twin problems of \emph{model monitoring} and recognizing that new data is \emph{out-of-distribution}  can now be described as predicting an absolute or relative performance drop between $\texttt{perf}(\D^{tr})$ and $\texttt{perf}(\D^{new})$, where 
$\texttt{perf}(\D)=\sum_{(x,y)\in \D} \ell_\text{eval}(f_\theta(x),y)$, $\ell_\text{eval}$ is a metric like 0-1-loss (accuracy), but $\Dd{new}_Y$ is  unknown and cannot be used for such judgment.

Therefore related work analyses distribution shifts between training and newly occurring data.
Let two datasets $\D, \D'$ define two empirical distributions $\PP(\D), \PP(\D')$, then we write $\PP(\D) \nsim \PP(\D')$ to express that 
$\PP(\D)$ is sampled from a different underlying distribution than $\PP(\D')$ with high probability $p>1-\epsilon$ allowing us to formalize various types of distribution shifts.
\\

\subsection{Types of Distribution Shift}\label{subsec:typesShift}

This section aims to provide an illustrative introduction to distribution shift; for more in-depth surveys see~\cite{datasetShift}.

\subsubsection{Covariate Shift}
The most classical and straightforward type of distribution shift is covariate shift, where the distribution from the training/source set $P(\D_X^{tr})$ differs from the test $P(\D_X^{te})$, meaning that the input distribution changes, but the conditional probability of a label given an input remains the same.~\cite{SHIMODAIRA2000227,sugiyama,sugiyama2}

\begin{definition}[Univariate data shift] There is a univariate data shift between $\PP(\D^{tr}_{X})=\PP(\D^{tr}_{X_1}, \ldots,\D^{tr}_{X_p})$ and $\PP(\D^{new}_{X})=\PP(\D^{new}_{X_1},\ldots,\D^{new}_{X_p})$, if\,$\exists i\in\{1\ldots p\}: \PP(\D^{tr}_{X_i})\nsim \PP(\D^{new}_{X_i})$.
\end{definition}

\begin{definition}[Covariate data shift] There is a covariate data shift between $P(\D^{tr}_X)=\PP(\D^{tr}_{X_1},\ldots,\D^{tr}_{X_p})$ and $\PP(\D^{new}_X)=\PP(\D^{new}_{X_1},\ldots,\D^{new}_{X_p})$  if  $\PP(\D_X^{tr}) \nsim \PP(\D_X^{new})$, which cannot only be caused by univariate shift. 
\end{definition}

An example of this could be training a machine learning model to predict the gender of a person based on physiological characteristics. During the last century, population height and weight distribution have changed~\cite{humanheight}, but the proportion of males/females is not affected by this temporal evolution. It is important to note that there is no causal relationship between our source data $P(\D_X^{tr})$ and our test data $P(\D_X^{te})$ at this stage.\footnote{There can be confounding factors in the modelling or even statistical assumptions that do not hold in a real case scenario. But, for the sake of explanation, examples are overly simplistic}

Another example of this distribution shift could be when one wants to build a machine-learning model to detect whether someone has COVID-19 by their coughing sound. In the data collection phase, recordings are collected from the hospitals. These recordings are recorded in controlled conditions: silent rooms with clear starting and ending times. However, when you deploy the model and provide the service to the population (for example, as a phone application), users can test coughing directly with the phone app microphones, inducing a covariate bias in the test data. The starting point could be before or in the middle of the cough. Also, the coughing can have some background noise due to not being performed in laboratory conditions.\\

Both of the above examples, might create a distribution shift that will jeopardize the model's performance.\\

\subsubsection{Predictions or Label Shift}
Only the prior class probabilities change between the training 
and test sets. In this case, this second type of distribution shift occurs when only the class probabilities $P(\D_Y^{te})\neq P(\D_Y^{te})$ 
changes and everything else prevails $P(\D_X^{tr}) = P(\D_X^{te})$~\cite{priorShift}.This poses a risk towards monitoring the performance of a model since unwanted changes in the real test target distribution will make unfeasible performance metrics.

\begin{definition}[Predictions Shift]
There is a predictions shift between distributions $\mathbf{P}(\Dd{tr}_X)$ and $\mathbf{P}(\Dd{new}_X)$ related to model $f_\theta$ if $\mathbf{P}(f_\theta(\Dd{tr}_{X})) \nsim \mathbf{P}(f_\theta(\Dd{new}_X))$.
\end{definition}

An example of this shift could be economic inflation, where buying the same set of products (e.g., gas, tomatoes, and potatoes) in 2001 will cost a very different price than in 2022.\\

\subsubsection{Concept or Posterior Shift}
This type of shift is characterized by input  data 
 and predictions distribution that remains the same, and what changes is the conditional distribution of the output given an input changes:

\begin{definition}[Concept Shift] There is a concept shift between $\PP(\D^{tr})=P(\D_X^{tr}, \D_Y^{tr})$  and $\PP(\D^{new})=P(\D_X^{new}, \D_Y^{new})$ if  conditional distributions change, i.e.\
$\frac{\PP(\D^{tr}_Y)}{\PP(\D^{tr}_X)} \nsim \frac{\PP(\D^{new})_Y}{\PP(\D^{new}_X)}$. 
\end{definition}

An example of this is the Zillow case, an example of the consequences of model performance degradation due to concept drift. The price of an apartment in the city centre before COVID-19 could have drastically dropped during/after COVID-19 due to people living in the city centre, even if the apartment stayed the same.

\subsection{Model Monitoring Limitation}

This thesis addresses a common scenario in machine learning applications, characterized by a specific setup: during training, both covariates, denoted as $\mathcal{D}_X^{tr}$, and labels, $\mathcal{D}_Y^{tr}$, are available. However, only covariates, $\mathcal{D}_X^{te}$, are known at the testing stage. This situation, where practitioners possess a labelled training set but must deal with unlabeled data at deployment, complicates model monitoring and evaluation.

In such a setup, detecting covariate shifts—changes in input data distribution—is theoretically and practically achievable, as evidenced by existing literature~\citep{DBLP:conf/aaai/RamdasRPSW15,DBLP:conf/nips/RabanserGL19}. Conversely, identifying concept shifts, which refer to changes in the relationship between the input data and the target labels, inherently depends on access to labelled data.

Consequently, accurately estimating the degradation of a model's performance when transitioning from the source data, $\texttt{perf}(\mathcal{D}^{tr})$, to new, unseen data, $\texttt{perf}(\mathcal{D}^{new})$, is recognized as an infeasible task without introducing additional assumptions or characterizing concept shift~\citep{garg2022leveraging}.

\chapter{Measuring Alignment with Equal Treament}\label{ch:et}
In philosophy, debates surrounding the principles of a fair political system and the essence of ethical decision-making have given rise to well-defined frameworks addressing distributive justice~\citep{distributiveJustice,kymlicka2002contemporary} and moral decision-making processes~\citep{sep-consequentialism,kant1785groundwork}

The \emph{liberalism} school of thought\footnote{We use the term \emph{liberalism} to refer to the perspective exemplified by~\cite{friedman1990free}. This perspective can also be referred to as \emph{neoliberalism} or \emph{libertarianism}~\citep{distributiveJustice,wiki_friedman}.}, put forward by scholars such as~\cite{friedman1990free} and~\cite{nozick1974anarchy}, argues for \emph{equal treatment} of individuals regardless of their protected characteristics.  On the other hand, deontology, grounded in the philosophy of Immanuel Kant, emphasizes that actions must be judged based on their adherence to a set of moral principles or duties, regardless of the consequences.

Verdicts like the recent Supreme Court decision about college admissions,\footnote{\url{https://www.jdsupra.com/legalnews/the-impact-of-the-supreme-court-s-7330075/}} may suggest that a proper formalization of \emph{equal treatment} is required in software supporting these processes. 
 
By analyzing the college admission use case\footnote{\url{https://en.wikipedia.org/wiki/Students_for_Fair_Admissions_v._Harvard}} in Section~\ref{examples:use.case2} and the paper blind reviews case in Section~\ref{examples:blind.reviews}, we observe that existing approaches fail to meet expected requirements. For example, in college admissions, \emph{fairness-by-unawareness} \citep{DBLP:conf/aaai/Grgic-HlacaZGW18} fails to recognize discrimination through proxy variables, and \emph{individual fairness} \citep{DworkHPRZ2012_IndFair} cannot identify group discrimination.
\emph{Equal opportunity} \cite{DBLP:conf/nips/HardtPNS16} and \emph{causal fairness} \cite{pearl2009causality} are good candidates for \emph{equal treatment}, but require labelled data and background knowledge, respectively,  neither of which is available in most practical scenarios. \emph{Equal treatment} has been translated by researchers  \citep{DBLP:conf/aies/SimonsBW21,DBLP:conf/fat/HeidariLGK19,wachter2020bias} into  Demographic Parity (DP), also called Statistical Parity, which compares the distributions of predicted outcomes of a model $f$ for different social groups. However, we show that Demographic Parity may indicate fairness, even when groups are discriminated against according to the \emph{equal treatment} principle.

We propose the novel fairness notion of Equal Treatment\footnote{We distinguish the philosophical principle of ``\emph{equal treatment}'' from our proposed fairness notion,
`Equal Treatment'' (ET), by printing the first in italics and capitalizing the initials of the latter.} for supervised machine learning models, which satisfies the requirements for a formalization of 
\emph{equal treatment}. 

The notion of Equal Treatment considers the contribution of
non-protected features to the output of the machine learning model as explained by Shapley values. If two social groups are treated the same, the distributions of feature contributions, which we call \emph{explanation distributions}, will not be distinguishable. Thus, Equal Treatment tests independence of Shapley values with the protected feature.
We introduce a decision tool, the \enquote{Equal Treatment Inspector}, that implements this idea. When detecting unequal treatment, it explains the features involved in such inequality, supporting the understanding of the roots of un-equal treatment in the machine learning model. 

In summary, this chapter contributions are:
\begin{enumerate}\itemsep0em
    \item The definition of Equal Treatment, based on {explanation distributions}, as a formalization of \emph{equal treatment}.

    \item The definition of an \enquote{Equal Treatment Inspector} workflow, based on a classifier two sample test, for {recognizing and explaining un-equal treatment}.
    
    \item The study of the formal relationships between Demographic Parity and Equal Treatment.

    \item Extensive experiments, both on synthetic and natural datasets, to demonstrate our method and compare it with related work.

    \item An open-source Python package \texttt{explanation\-space} implementing the \enquote{Equal Treatment Inspector}, which is \texttt{scikit-learn} compatible, and includes documentation and tutorials.
     
\end{enumerate}

\clearpage
\section{Illustrative Examples}
\subsection{College Admissions}\label{examples:use.case2}

To illustrate the difference between equal opportunity, equal outcomes, and equal treatment, we consider the hypothetical use case, the admission process of a university:

In June 2023, the U.S. Supreme Court struck down race-conscious admission programs at some universities \citep{killenbeck2022brief}\footnote{\tiny\url{https://www.scotusblog.com/2023/06/supreme-court-strikes-down-affirmative-action-programs-in-college-admissions/}}, ruling these as discriminatory against certain racial groups. This decision marked a significant shift from previous policies that used race as one of several factors in a holistic admissions process to promote a diverse student body\footnote{\tiny\url{https://www.politico.com/news/2023/06/29/supreme-court-ends-affirmative-action-in-college-admissions-00104179}}. While the ruling does not completely forbid universities from considering applicants' racial features, it emphasizes a more colorblind approach to admissions. This move towards equal treatment focuses less on achieving equal outcomes (diverse representation) and more on not considering race as an admissions factor. Now we discuss these notions within the context of fair machine learning.

\textit{Equal Opportunity}. In this approach, the university focuses on ensuring that students from different backgrounds have an equal chance of being admitted if they have high potential or qualifications. For instance, students from under-resourced high schools are given the same opportunity as those from well-funded schools if they demonstrate the same exceptional talent or potential. From a technical perspective, equal opportunity has been operationalized by the true positive rate. The university may implement this strategy by adjusting admission criteria based on educational opportunities tied to race. However, this can lead to challenges like overcompensating for disadvantages or unintentionally lowering standards for certain groups.

\textit{Equal Outcomes}. This measure requires that the distribution of acceptance rates is similar, independently of the student (cf. Definition~\ref{def:dp}). This could mean setting targets or quotas to ensure representation from various groups, regardless of their individual academic credentials. For instance, if 20\% of applicants are from a certain ethnicity, the university might aim to have 20\% of their admits from that ethnicity as well. While this promotes a diverse student body, it can overlook individual merit and potentially lead to tension between equity and academic standards. Particularly given the recent ruling of the US Supreme Court, considering ethnicity to match the proportion of applicants from that ethnicity would likely be considered illegal\footnote{\tiny\url{https://www.jdsupra.com/legalnews/the-impact-of-the-supreme-court-s-7330075/}}. It's also important to note that outcomes can have similar rates due to random chance.

\textit{Equal Treatment}. In this scenario, the university ensures that every application is evaluated based on the same criteria, such as academic achievement, extracurricular involvement, and personal essays, without bias towards the applicant's background. This means that a student’s socioeconomic status, high school's reputation, or geographic location (protected attributes) would not influence their likelihood of admission. The challenge here is to ensure the following in the admissions process: \emph{(R1) \textit{Group Discrimination}} different decisions are made based on inherent or protected characteristics of individuals, such as race, gender, ethnicity, or other attributes that are present at birth and beyond an individual's choice or control; \emph{(R2) \textit{Unlabeled Data}}, focus on the disparate decisions, metrics should evaluate the decisions of the model instead of statistical differences on the errors; \emph{(R3) \textit{No Background Knowledge}}, metrics should not necessitate an understanding of causal or structural aspects, ensuring practical applicability in real-world scenarios;  \emph{(R4) \textit{ Proxy discrimination}}:
metrics should be capable of detecting whether the model behaviour is genuinely free from discriminatory features, arising from proxy discrimination. This involves scrutinizing the differential impact that various features may have on the predictions, ensuring that no indirect bias is influencing the outcome; \emph{(R5) \textit{Explanation Capabilities}} focuses on the necessity for fairness metrics to not only detect biases or discriminations but also to explain them.

In the university admission case study, each approach to fairness in admissions – equal opportunity, equal outcomes, and equal treatment – has its merits and challenges. Equal opportunity aims to level the playing field based on potential, equal outcomes, strive for a representative student body, and equal treatment focuses on a uniform and unbiased evaluation process. We leave the normative discussion of which fairness paradigm should be pursued by policy to the discourse in the social sciences and the broad public. This case illustrates the complexity of implementing fair practices in college admissions, highlighting that there is no one-size-fits-all solution.

\subsection{Paper Blind Reviews}\label{examples:blind.reviews}

To illustrate the difference between equal opportunity, equal outcomes, and equal treatment, based on the previously discussed framework and applied to AI, we consider the example of conference papers' blind reviews and focus on the protected attribute of the country of origin of the paper's author, comparing Germany and the United Kingdom.

For \emph{equal opportunity}, we quantify fairness by the true positive rate. In other words, it is the acceptance ratio given that the quality of the paper is high. Achieving equal opportunity will imply that these ratios are similar between the two countries. In blind reviews, the purpose is to evaluate the paper's quality and the research's merit without being influenced by factors such as the author's identity, affiliations, background or country. If we were to enforce equal opportunity in this use case, we would aim for similar true positive rates for submissions from different countries. However, this approach could lead to unintended consequences, such as unintentionally favouring certain countries to meet some quota, overcorrection or quotas of affirmative action towards certain countries.

For \emph{equal outcomes}, we require that the distribution of acceptance rates is similar, independently of the quality of the paper (cf. Definition~\ref{def:dp}). So that the ratio of accepted papers is similar for each country. Note that the outcomes can have similar rates due to random chance, even if there is a country bias in the acceptance procedure.

For \emph{equal treatment}, we require that the contributions of the features used to make a decision on paper's acceptance has similar distributions (cf. Definition~\ref{def:et}). Equality of treatment through blindness is more desirable than equal opportunity or equal outcomes because it ensures that all submissions are evaluated solely on the basis of their quality, without any bias or discrimination towards any particular country. By achieving equality of treatment through blindness, we can promote fairness and objectivity in the review process and ensure that all papers have an equal chance to be evaluated on their merits. We complement the previous use case recollecting the following requirements: \emph{(R1) \textit{Group Discrimination}} different decisions are made based on belonging to an institution,; \emph{(R2) \textit{Unlabeled Data}}, focus on the disparate decisions, metrics should evaluate the decisions of the model instead of statistical differences on the errors; \emph{(R3) \textit{No Background Knowledge}}, metrics should not necessitate an understanding of causal or structural aspects, ensuring practical applicability in real-world scenarios;  \emph{(R4) \textit{ Proxy discrimination}}:
metrics should be capable of detecting whether the model behaviour is genuinely free from institutional discriminatory features, arising from proxy discrimination.\emph{(R5) \textit{Explanation Capabilities}} focuses on the necessity for fairness metrics to not only detect biases or discriminations but also to explain them. 

\clearpage

\section{Fairness Formalization}

We assume a feature modelling protected social groups is denoted by $Z$, called \textit{protected feature}, and assume it to be binary valued in the theoretical analysis. $Z$ can either be included in the predictive features $X$ used by a model or not. If not, we assume that it is still available for a test dataset. Even without the protected feature in training data, a model can discriminate against the protected groups by using correlated features as a proxy of the protected one~\citep{DBLP:conf/kdd/PedreschiRT08}.

We write $A \bot B$
to denote statistical independence between the two sets of random variables $A$ and $B$, or equivalently, between two multivariate proability distributions. We define two common fairness notions and corresponding fairness metrics that quantify a model's degree of discrimination or unfairness \cite{mehrabi2021survey}. 

\begin{definition}\textit{(Demographic Parity (DP))}. A model $f_\theta$ achieves demographic parity if $f_\theta(X) \perp Z$
\end{definition}

Thus, demographic parity holds if  $\forall z.\,P(f_\theta(X)|Z=z)=P(f_\theta(X))$. For binary $Z$'s, we can derive an unfairness metric as $d(P(f_\theta(X)|Z=1),P(f_\theta(X))$, where $d(\cdot)$ is a distance between probability distributions.

\begin{definition}\textit{(Equal Opportunity (EO))}\label{eq:TPR} A model $f_\theta$ achieves equal opportunity if $\forall z.\,P(f_\theta(X)|Y=1,Z=z) = P(f_\theta(X)=1|Y=1)$.
\end{definition}
As before, we can measure unfairness for binary $Z$'s as  $d(P(f_\theta(X)|Y=1,Z=1),P(f_\theta(X)=1|Y=1))$. Equal opportunity comes with the problem that labels for correct outcomes are required.

\section{A Model for Monitoring Equal Treatment}\label{sec:equal:method}

\subsection{Formalizing Equal Treatment}

To establish a criterion for equal treatment, we rely on the notion of explanation distributions.

\begin{definition}\textit{(Explanation Distribution)} An explanation function $\Ss:{\cal F}\times X\to \mathbb{R}^p$ 
maps a model $f_\theta \in {\cal F}$ and an input instance $x \in X$ into a vector of reals $\Ss(f_\theta, x) \in \mathbb{R}^p$. We extend it by mapping an
input distribution $\D_X$ into an (empirical) \textit{explanation distribution} as follows:
$\Ss(f_\theta, \D_X) = \{ \Ss(f_\theta, x)\ |\ x \in \D_X\} \subseteq \mathbb{R}^p$.
\end{definition}

We will use Shapley values as an explanation function (cf.\ Appendix~\ref{sec:xai.foundations}).
 We introduce next the new fairness notion of Equal Treatment, which considers independence over the explanations of the model's outputs.

\begin{definition}\textit{(Equal Treatment (ET))}. A model $f_\theta$ achieves equal treatment if  $\Ss(f_\theta,X) \perp Z$.\label{def:et}
\end{definition}
As we will see later in Section~\ref{sec:theory}, Equal Treatment is a stronger definition than Demographic Parity since it not only requires that the distributions of the predictions are similar but that the process of how predictions are made is also similar.

\subsection{Equal Treatment Inspector}\label{sec:demographicParityInspector}
\begin{figure}[ht]
    \centering
    \includegraphics[width=\columnwidth]{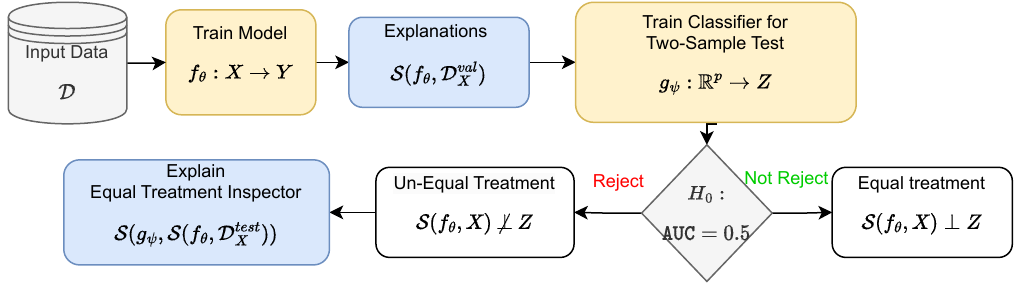}
    \caption{Equal Treatment Inspector workflow. The  model $f_\theta$ is learned based on training data, $\D = \{(x_i,y_i)\}$,  and outputs the explanations $\Ss(f_\theta,\D_X)$. The Classifier Two-Sample Test receives the explanations to predict the protected attribute, $Z$. The AUC of the two-sample test classifier $g_\psi$ decides for or against \emph{equal treatment}.  We can interpret the driver for unequal treatment on $g_\psi$ with explainable AI techniques}
    \label{fig:equal:workflow} 
\end{figure}

Our approach is based on the  properties of the Shapley values (cf. Appendix \ref{sec:xai.foundations})  and the prior work of ~\cite{DBLP:conf/iclr/Lopez-PazO17} of classifier two-sample tests. We split the available data into three parts $\Dd{tr},\Dd{val},\Dd{te} \subseteq X \times Y$. Here $\Dd{tr}$ is the training set of $f_\theta \in \cal F$ (not required if $f_\theta$ is already trained). Following the intuition above, $\Dd{val}$  is used to train another model $g_\psi$ on the distribution $\Ss(f_\theta, \Dd{val}_{X\setminus{Z}}) \times \Dd{val}_Z$ where the predictive features are in the explanation distribution $\Ss(f_\theta, \Dd{val}_{X\setminus{Z}})$ (excluding $Z$) and the target feature is the  protected feature $Z$.  The model belongs to a family $\cal G$, possibly different from $\cal F$. The parameter $\psi$ optimizes a loss function $\ell$:
\begin{gather}\label{eq:fairDetector}
\psi = \argmin_{\tilde{\psi}} \sum_{(x, z) \in \Dd{val} } \ell( g_{\tilde{\psi}}(\Ss(f_\theta,x)) , z )
\end{gather}
Finally, we use $\Dd{te}$ for testing the approach and for comparison with baselines.  To evaluate whether there is a fairness violation or not, we perform a statistical test of independence based on the AUC. See the workflow of Figure \ref{fig:workflow} for a visualization of the process.

Besides detecting and measuring fairness violations in machine learning models, a common desideratum is to understand what are the specific features driving the discrimination. Using the \enquote{Equal Treatment Inspector} as an auditor method that aims to depict and quantify possible fairness violations does not 
only report a metric but provides information on \textit{which features are the cause of the un-equal treatment}. We propose to solve this issue by applying explainable AI techniques to the Inspector. The \enquote{Equal Treatment Inspector} can provide different types of explanations, re-purposing the goal of the traditional explainable AI field, from understanding the model predictions to accounting for the reasons of un-equal treatment.

\clearpage
\section{Theoretical Analysis}\label{sec:theory}

Throughout this section, we assume an exact calculation of the Shapley values $\Ss(f_\theta, x)$ for instance $x$, possibly for the observational and interventional variants (see (\ref{def:val:obs},\ref{def:val:int}) in the foundations Chapter~\ref{sec:xai.foundations}). In the experimental section, we will use non-IID data and non-linear models.


\subsection{Equal Treatment Given Shapley Values of Protected Attribute}\label{sec:expSpaceIndependence}

Can we measure Equal Treatment by looking only at the Shapley value of the protected feature? The following result considers a linear model (with unknown coefficients) over \textit{independent} features. In such a very simple case, resorting to Shapley values leads to an exact test of both Demographic Parity and Equal Treatment, which turn out to coincide. In the following, we write $\mathit{distinct}(\D_X{}, i)$ for the number of distinct values in the $i$-th feature of dataset $\D_X{}$, and $\Ss(f_\beta, \D_X{})_i \equiv 0$ if the Shapley values of the $i$-th feature are all $0$'s.

\begin{lemma}\label{lemma:1} Consider a linear model $f_\beta(x) = \beta_0 + \sum_j \beta_j \cdot x_j$. Let $Z$ be the $i$-th feature, i.e. $Z = X_i$, and let $\D_X{}$ be such that $\mathit{distinct}(\D_X{}, i)>1$.
If the features in $X$ are independent, then $\Ss(f_\beta, \D_X{})_i \equiv 0 \Leftrightarrow f_\beta(X) \perp Z \Leftrightarrow \Ss(f_\beta,X) \perp Z$.
\end{lemma}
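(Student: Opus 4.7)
The plan is to leverage the closed-form expression for Shapley values on linear models with independent features that is stated in the Foundations chapter (``Linear Models and IID Data''): namely, $\Ss_j(f_\beta, x) = \beta_j(x_j - \mu_j)$ with $\mu_j = E[X_j]$. This reduces all three conditions to a statement about $\beta_i$, so the proof becomes a three-way reduction to ``$\beta_i = 0$''.

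First I would establish that $\Ss(f_\beta, \D_X)_i \equiv 0 \Longleftrightarrow \beta_i = 0$. The closed form gives $\Ss_i(f_\beta, x) = \beta_i(x_i - \mu_i)$ for every $x \in \D_X$, so the ``$\equiv 0$'' condition amounts to $\beta_i(x_i - \mu_i) = 0$ for all rows of $\D_X$. The hypothesis $\mathit{distinct}(\D_X, i) > 1$ guarantees that at least one row has $x_i \neq \mu_i$ (otherwise all entries would collapse to the single value $\mu_i$), so $\beta_i$ must vanish. The converse is immediate.

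Next I would prove $\beta_i = 0 \Longleftrightarrow f_\beta(X) \perp Z$. Decompose $f_\beta(X) = \beta_i Z + W$ where $W = \beta_0 + \sum_{j \neq i} \beta_j X_j$ depends only on $X_{-i}$, which is independent of $Z = X_i$ by assumption. If $\beta_i = 0$, then $f_\beta(X) = W$ is a function of variables independent of $Z$, hence $f_\beta(X) \perp Z$. For the converse, note that $\mathit{distinct}(\D_X, i) > 1$ implies $Z$ is non-degenerate in the underlying distribution (a constant random variable could not produce two distinct sample values). Then $E[f_\beta(X) \mid Z] = \beta_i Z + E[W]$ is a non-constant function of $Z$ whenever $\beta_i \neq 0$, contradicting independence.

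Finally I would show $\beta_i = 0 \Longleftrightarrow \Ss(f_\beta, X) \perp Z$. By the closed form, the $i$-th coordinate $\Ss_i(f_\beta, X) = \beta_i(Z - \mu_i)$ is a (deterministic) function of $Z$, while for $j \neq i$ the coordinate $\Ss_j(f_\beta, X) = \beta_j(X_j - \mu_j)$ is a function of $X_j$ alone; by the independence assumption on $X$, these are jointly independent of $Z$. Hence the whole vector $\Ss(f_\beta, X)$ is independent of $Z$ iff its $i$-th coordinate is, which (by the same non-degeneracy argument) holds iff $\beta_i = 0$. The main obstacle is a pedantic one: reconciling the dataset-level hypothesis $\mathit{distinct}(\D_X, i) > 1$ with the distribution-level claim of non-degeneracy needed for the ``$\Rightarrow$'' directions of the last two equivalences; once this bridge is made, the rest is algebra on the closed form.
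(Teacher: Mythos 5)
Your proposal is correct and follows essentially the same route as the paper's proof: invoke the closed form $\Ss_i(f_\beta,x)=\beta_i(x_i-E[X_i])$ for linear models with independent features, use $\mathit{distinct}(\D_X,i)>1$ to reduce the first condition to $\beta_i=0$, and then show the other two conditions are equivalent to $\beta_i=0$ via independence of the features (what the paper calls ``propagation of independence''). You merely spell out the details the paper leaves implicit, including the dataset-versus-distribution non-degeneracy bridge.
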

\begin{proof}
It turns out $\Ss(f_\beta, x)_i = \beta_i \cdot (x_i - E[X_i])$. This holds in general for the interventional variant (\ref{def:val:int}), and assuming independent features, also for the observational
variant (\ref{def:val:obs})
\citep{DBLP:journals/ai/AasJL21}. Since $\mathit{distinct}(\D_X{}, i)>1$, we have that $\Ss(f_\beta, \D_X{})_i \equiv 0$ iff  $\beta_i = 0$. By independence of $X$, this is equivalent to $f_\beta(X) \perp X_i$, i.e., $f_\beta(X) \perp Z$. Moreover, by the propagation of independence, this is also equivalent to $\Ss(f_\beta,X) \perp Z$.
\end{proof}

However, the result does not extend to the case of dependent features. 

\begin{example}
Consider $Z = X_2 = X_1^2$, and the linear model $f_\beta(x_1, x_2) = \beta_0 + \beta_1 \cdot x_1$ with $\beta_1 \neq 0$ and $\beta_2 = 0$, i.e., the protected feature is not used by the model. In the interventional variant, theuninformativeness property implies that $\Ss(f_\beta, x)_2 = 0$. However, this does not mean that $Z = X_2$ is independent of the output because $f_\beta(X_1, X_2) = \beta_0 + \beta_1 \cdot X_1 \not \perp X_2$. In the observational variant, \cite{DBLP:journals/ai/AasJL21} show that:
\[
val(T) = \sum_{i \in N\setminus T} \beta_i \cdot E[X_i| X_T = x^{\star}_T] + \sum_{i \in T} \beta_i \cdot x^{\star}_i
\]
\noindent from which, we calculate:
$\Ss(f_\beta, x^{\star})_2 = \frac{\beta_1}{2} E[X_1|X_2 =x^{\star}_2]$.
We have $\Ss(f_\beta, \D_X)_2 \equiv 0$ iff $E[X_1|x_2 =x^{\star}_2] = 0$ for all $x^{\star}$ in $\D_X{}$. For the  marginal distribution $P(X_1=v) = 1/4$ for $v=1, -1, 2, -2$, and considering that $X_2=X_1^2$, it holds that $E[X_1|x_2 =v] = 0$ for all $v$. Thus $\Ss(f, \D_X)_2 \equiv 0$. However, again $f_\beta(X_1, X_2) = \beta_0 + \beta_1 \cdot X_1 \not \perp X_2$.
\end{example}
The counterexample shows that focusing only on the Shapley values of the protected feature is not a viable way to prove Demographic Parity of a model -- and, a fortiori, neither to prove Equal Treatmentof the model, as will show in Lemma \ref{lemma:inc}.


\subsection{Equal Treatment vs Demographic Parity vs Fairness of the Input}\label{subsec:ETvsEOvsInd}



\label{sec:outcome-fairness}
We start by observing that \emph{equal treatment} (independence of the explanation distribution from the protected attribute) is a sufficient condition fordemographic parity (independence of the prediction distribution from the protected attribute).

\begin{lemma}\label{lemma:inc} If $\Ss(f_\theta, X) \perp Z$ then $f_\theta(X) \perp Z$.
\end{lemma}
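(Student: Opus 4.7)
The plan is to prove this lemma by invoking the efficiency property of Shapley values stated in the foundations. Efficiency gives us
\begin{equation*}
\sum_{j \in N} \Ss_j(f_\theta, x) \;=\; f_\theta(x) - E[f_\theta(X)]
\end{equation*}
for every instance $x$. Rearranging, this means that $f_\theta(X)$ can be written as $h(\Ss(f_\theta, X))$, where $h: \mathbb{R}^p \to \mathbb{R}$ is the deterministic (measurable) map $h(s_1,\ldots,s_p) = E[f_\theta(X)] + \sum_j s_j$. Thus the prediction is a fixed measurable function of the explanation vector, with no additional randomness.

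The proof then follows from a standard fact in probability: if $A \perp B$ and $A' = h(A)$ for some measurable $h$, then $A' \perp B$. Concretely, for any measurable sets $E \subseteq \mathbb{R}$ and $F \subseteq \mathrm{dom}(Z)$,
\begin{equation*}
P(f_\theta(X) \in E, Z \in F) \;=\; P(\Ss(f_\theta,X) \in h^{-1}(E),\, Z \in F),
\end{equation*}
which by the hypothesis $\Ss(f_\theta, X) \perp Z$ equals $P(\Ss(f_\theta,X) \in h^{-1}(E)) \cdot P(Z \in F) = P(f_\theta(X) \in E) \cdot P(Z \in F)$. Hence $f_\theta(X) \perp Z$, i.e.\ Demographic Parity holds.

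The argument is short and essentially a one-liner once efficiency is in hand; there is no real obstacle, only a mild subtlety to flag. Efficiency holds for both the interventional and observational Shapley variants used in this thesis (the constant $E[f_\theta(X)]$ plays no role in the independence argument, and the sum is taken over all features, including the protected one if it is in $X$). One should also note that the converse direction fails in general, which is precisely what motivates ET as a strictly stronger notion than DP and will be exploited in the subsequent discussion of the paper.
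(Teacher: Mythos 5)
Your proof is correct and follows essentially the same route as the paper's: both invoke the efficiency property to express $f_\theta(X)$ as a deterministic function (the sum of the Shapley components plus the constant $E[f_\theta(X)]$) of $\Ss(f_\theta,X)$, and then propagate independence through that function. Your version merely spells out the measure-theoretic justification that the paper summarizes as \enquote{propagation of independence}.
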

\begin{proof}
By the propagation of independence in probability distributions, the premise implies $(\sum_i \Ss_i(f_\theta, X) + c) \perp Z$ where $c$ is any constant. By setting $c=E[f(X)]$ and by the efficiency property, we have the conclusion.
\end{proof}

Therefore, a Demographic Parity violation (on the prediction distribution) is also an Equal Treatment violation (in the explanation distribution). Equal Treatment accounts for a stricter notion of fairness. The other direction does not hold. We can have dependence of $Z$ from the explanation features, but the sum of such features cancels out, resulting in perfect Demographic Parity on the prediction distribution. This issue is also known as Yule's effect~\citep{DBLP:conf/aaai/Ruggieri0PST23}.

\begin{example}\label{ex42} Consider the model $f(x_1, x_2) = x_1 + x_2$. Let $Z \sim Ber(0.5)$, $A \sim U(-3, -1)$, and $B \sim N(2, 1)$ be independent, and let us define:
\[ X_1 = A \cdot Z + B \cdot (1-Z) \quad X_2 = B \cdot Z + A \cdot (1-Z)\]
We have $f(X_1, X_2) = A + B \perp Z$ since $A, B, Z$ are independent.
Let us calculate $\Ss(f, X)$ in the two cases $Z=0$ and $Z=1$. If $Z=0$, we have $f(X_1, X_2) = B + A$, and then $\Ss(f, X)_1 = B-E[B] = B-2 \sim N(0, 1)$ and $\Ss(f, X)_2 = A-E[A] = A+2 \sim U(-1, 1)$. Similarly, for $Z=1$, we have $f(X_1, X_2) = A + B$, and then $\Ss(f, X)_1 = A-E[A]=A+2 \sim U(-1, 1)$ and $\Ss(f, X)_2 = B-E[B] = B-2 \sim N(0, 1)$. This shows:
\[ P(\Ss(f, X) | Z=0) \neq P(\Ss(f, X) | Z=1)\]
and then $\Ss(f, X) \not \perp Z$. Notice this example holds both for the interventional and the observational cases, as we exploited Shapley values of a linear model over independent features, namely $A, B, Z$.
\end{example}


Statistical independence between the input $X$ and the protected attribute $Z$, i.e.,  $X \perp Z$, is another fairness notion. It targets fairness of the (input) datasets, disregarding the model $f_\theta$. For fairness-aware training algorithms, which are able not to (directly or indirectly) rely on $Z$, violation of such a notion of fairness does not imply Equal Treatmentviolation nor Demographic Parity violation.

\begin{example}
Let $X = X_1,X_2,X_3$ be independent features 
such that $E[X_1] = E[X_2] = E[X_3] = 0$, and $X_1, X_2 \perp Z$, and $X_3 \not \perp Z$. The target feature is defined as $Y = X_1 + X_2$, hence it is also independent from $Z$. Assume a linear regression model $f_\beta(x_1, x_2,x_3) = \beta_1 \cdot x_1 + \beta_2 \cdot x_2 + \beta_3 \cdot x_3$ trained from a sample data from $(X, Y)$ with $\beta_1, \beta_2 \approx 1$ and $\beta_3 \approx 0$. Intuitively, this occurs when a number of features are collected to train a classifier without a clear understanding of which of them contributes to the prediction. It turns out that $X \not \perp Z$ but, for $\beta_3 = 0$ (which can be obtained by some fairness regularization method \citep{DBLP:conf/icdm/KamishimaAS11}), we have $f_\beta(X_1, X_2, X_3) = \beta_1 \cdot X_1 + \beta_2 \cdot X_2 \perp Z$. By reasoning as in the proof of Lemma \ref{lemma:1}, we have $\Ss(f_\beta, X) = (\beta_1 \cdot X_1, \beta_2 \cdot X_2, 0)$ and then $\Ss(f_\beta, X) \perp Z$. This holds both in the interventional and in the observational variants.
\end{example}

The above represents an example where the input data depends on the protected feature, but the model and the explanations are independent. 

\subsection{Statistical Independence Test via Classifier AUC Test}\label{sec:stat.independence}

In this subsection, we introduce a statistical test of independence based on the AUC of a binary classifier. The test of $W \perp Z$ is stated in general form for multivariate random variables $W$ and a binary random variable $Z$ with $dom(Z) = \{0, 1\}$. In the next subsection, we will instantiate it to the case $W = \Ss(f_\theta, X)$.

Let $\D = \{ (w_i, z_i) \}_{i=1}^n$ be a dataset of realizations of the random sample $(W, Z)^n \sim \mathcal{F}^n$ where $\mathcal{F}$ is unknown.
The independence $W \perp Z$ can be tested via a two-sample test. In fact,
we have $W \perp Z$ iff $P(W|Z)=P(W)$ iff $P(W|Z=1) = P(W|Z=0)$. We test whether the positives and negatives instances in $\D$ are drawn from the same distribution by a novel two-sample test, which does not require permutation of data nor equal proportion of positive and negatives as in \cite[Sections 2 and 3]{DBLP:conf/iclr/Lopez-PazO17}. We rely on a probabilistic classifier $f: W \rightarrow [0, 1]$, for which $f(w)$ estimates $P(Z=1|W=w)$, and on its AUC:
\begin{gather}\label{eq:auc}
AUC(f) = E_{(W,Z), (W',Z') \sim \mathcal{F}}[I( (Z-Z')(f(W)-f(W')) > 0) + \nicefrac{1}{2} \cdot I(f(W)=f(W')) | Z \neq Z'] 
\end{gather}

Under the null hypothesis $H_0: W \perp Z$, we have $AUC(f)=\nicefrac{1}{2}$. 
\begin{lemma}\label{lemma:test}
If $W \perp Z$ then  $AUC(f)=\nicefrac{1}{2}$ for any classifier $f$.
\end{lemma}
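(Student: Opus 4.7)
The plan is to unfold the AUC expression and exploit independence of $f(W)$ from $Z$ to reduce the right-hand side to a symmetric functional of two i.i.d.\ copies. First I would note that because $f$ is a deterministic function, $W \perp Z$ implies $f(W) \perp Z$, and because $(W,Z)$ and $(W',Z')$ are i.i.d., the joint random vector $(f(W), f(W'))$ is independent of the pair $(Z, Z')$. In particular, the conditioning event $\{Z \neq Z'\}$ is independent of both the indicator $I(f(W)=f(W'))$ and of the pair $(f(W), f(W'))$ itself.

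Next I would decompose the conditional expectation over the two subcases of $Z \neq Z'$. Since $(Z, Z')$ is exchangeable, $P(Z=1, Z'=0 \mid Z \neq Z') = P(Z=0, Z'=1 \mid Z \neq Z') = \nicefrac{1}{2}$. Using the independence from step one, conditional on $(Z,Z')=(1,0)$ the distribution of $(f(W), f(W'))$ is just its unconditional one; similarly for $(0,1)$. Thus
\begin{align*}
P\bigl((Z-Z')(f(W)-f(W')) > 0 \,\big|\, Z \neq Z'\bigr)
&= \tfrac{1}{2} P(f(W) > f(W')) + \tfrac{1}{2} P(f(W) < f(W')).
\end{align*}
Since $W$ and $W'$ are i.i.d., $f(W)$ and $f(W')$ are exchangeable, so $P(f(W) > f(W')) = P(f(W) < f(W'))$, and this common value equals $(1-q)/2$ where $q = P(f(W)=f(W'))$.

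Finally I would handle the tie term: by the independence noted above, $E[I(f(W)=f(W')) \mid Z \neq Z'] = q$. Substituting into the definition (\ref{eq:auc}),
\begin{align*}
AUC(f) = \tfrac{1-q}{2} + \tfrac{1}{2} \cdot q = \tfrac{1}{2},
\end{align*}
which is the claim. The only subtle point is the correct bookkeeping of the tie-breaking $\nicefrac{1}{2}$ factor together with the conditioning on $Z \neq Z'$; once one observes that $(f(W),f(W')) \perp (Z,Z')$ under $W \perp Z$, everything reduces to the standard exchangeability argument, so I do not foresee a genuine obstacle beyond careful notation.
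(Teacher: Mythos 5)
Your proof is correct, but it takes a genuinely different route from the paper. You argue directly from the definition (\ref{eq:auc}): you observe that $W \perp Z$ together with the i.i.d.\ sampling gives $(f(W),f(W')) \perp (Z,Z')$, split the conditioning event $\{Z \neq Z'\}$ into its two equally likely orderings, and use exchangeability of $f(W)$ and $f(W')$ plus careful accounting of the tie mass $q$ to land exactly on $\nicefrac{1}{2}$. The paper instead proceeds indirectly: it first establishes the sandwich $AUC(f_{opt}) \geq AUC(f) \geq 1 - AUC(f_{opt})$, where $f_{opt}(w)=P(Z=1|W=w)$ is the Bayes optimal classifier (the upper bound citing the optimality of the Bayes ranking, the lower bound via the flipped classifier $\bar{f}=1-f$), and then notes that under independence $f_{opt}$ is constant, so all comparisons are ties and $AUC(f_{opt})=\nicefrac{1}{2}$, forcing $AUC(f)=\nicefrac{1}{2}$ for every $f$. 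Your argument is more elementary and self-contained, since it does not invoke the nontrivial external fact that the Bayes optimal classifier maximizes AUC; the paper's argument buys the two-sided bound (\ref{eq:aucineq}) as a reusable intermediate result. One cosmetic point common to both proofs: the conditional expectation is only well defined when $P(Z \neq Z')>0$, i.e.\ $Z$ is non-degenerate, which is implicitly assumed throughout.
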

\begin{proof}
Let us recall the definition of the Bayes Optimal classifier $f_{opt}(w) = P(Z=1|W=w)$.
For any classifier $f$, we have:
\begin{gather}\label{eq:aucineq} 
AUC(f_{opt}) \geq AUC(f) \geq 1 - AUC(f_{opt})
\end{gather}
The first bound $AUC(f_{opt}) \geq AUC(f)$ follows because the Bayes Optimal classifier minimizes the Bayes risk~\citep{DBLP:conf/ijcai/GaoZ15}.
Assume the second bound does not hold, i.e., for some $f$ we have $AUC(f_{opt}) < 1 - AUC(f)$. Consider the classifier $\bar{f}(w) = 1-f(w)$. We have $AUC(\bar{f}) \geq 1-AUC(f)$, and then $\bar{f}$ would contradict the first bound because $AUC(f_{opt}) < AUC(\bar{f})$.

If $W \perp Z$, then $P(Z=1|W=w) = P(Z=1)$, and then $f_{opt}(w)$ is constant. By (\ref{eq:auc}), this implies $AUC(f_{opt})=\nicefrac{1}{2}$.
By (\ref{eq:aucineq}), this implies
$AUC(f)=\nicefrac{1}{2}$ for any classifier.
\end{proof}

As a consequence, any statistics to test $AUC(f)=\nicefrac{1}{2}$ can be used for testing $W \perp Z$. 
%
A classical choice is to resort to the Wilcoxon–Mann–Whitney test, which, however, assumes that the distributions of scores for positives and negatives have the same shape. Better alternatives include the Brunner–Munzel test \citep{DBLP:journals/csda/NeubertB07} and the Fligner–Policello test \citep{fligner1981robust}. The former is preferable, as the latter assumes that the distributions are symmetric.

\subsection{Testing for Equal Treatment via an Inspector}

We instantiate the previous AUC-based method for testing independence to the case of testing for Equal Treatment via an Equal TreatmentInspector.

\begin{theorem}\label{thm:main}
Let $g_\psi: \Ss(f_\theta, X) \rightarrow [0,1]$ be an \enquote{Equal Treatment Inspector} for the model $f_{\theta}$, and $\alpha$ a significance level. We can test the null hypothesis $H_0: \Ss(f_\theta, X) \perp Z$ at $100 \cdot (1-\alpha)\%$ confidence level using a test statistics of $AUC(g_\psi) = \nicefrac{1}{2}$. 
\end{theorem}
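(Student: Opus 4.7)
The plan is to derive the theorem as a direct corollary of Lemma~\ref{lemma:test} combined with a standard nonparametric two-sample AUC test. First I would instantiate Lemma~\ref{lemma:test} with $W = \Ss(f_\theta, X)$ and the classifier $f = g_\psi$, which immediately yields that under $H_0: \Ss(f_\theta, X) \perp Z$ the population AUC satisfies $AUC(g_\psi) = \nicefrac{1}{2}$, regardless of how $g_\psi$ was fit. This converts the independence hypothesis into a one-sided (or two-sided) hypothesis about a single scalar parameter.

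Next I would specify the test statistic. Given the held-out set $\Dd{te}$ partitioned into positives $\{w_i : z_i=1\}$ of size $n_1$ and negatives $\{w_j : z_j=0\}$ of size $n_0$, form the empirical AUC
\begin{equation}
\widehat{AUC}(g_\psi) = \frac{1}{n_0 n_1}\sum_{i,j}\left[\mathbf{1}(g_\psi(w_i) > g_\psi(w_j)) + \tfrac{1}{2}\mathbf{1}(g_\psi(w_i)=g_\psi(w_j))\right],
\end{equation}
which is an unbiased estimator of $AUC(g_\psi)$ and coincides with the Mann--Whitney $U$ statistic normalized by $n_0 n_1$. Under $H_0$, by Lemma~\ref{lemma:test}, $E[\widehat{AUC}(g_\psi)] = \nicefrac{1}{2}$, so I would invoke the Brunner--Munzel theorem (as noted just before the theorem) which gives that
\begin{equation}
T = \frac{\widehat{AUC}(g_\psi) - \nicefrac{1}{2}}{\hat{\sigma}/\sqrt{n_0+n_1}}
\end{equation}
is asymptotically standard normal (or, in small samples, approximately $t$-distributed) under $H_0$, without requiring equal-shape or symmetry assumptions on the score distributions conditional on $Z$. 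The decision rule is then to reject $H_0$ when $|T| > z_{1-\alpha/2}$, yielding a test of nominal level $\alpha$ and hence a $100\cdot(1-\alpha)\%$ confidence procedure.

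The key subtlety, and the step I expect to be the main obstacle to state carefully, is ensuring that the asymptotic null distribution is valid. Since $g_\psi$ is itself trained from data (cf.~equation~\ref{eq:fairDetector}), one must emphasize that the evaluation of $\widehat{AUC}(g_\psi)$ is performed on $\Dd{te}$, which is disjoint from $\Dd{val}$ used to fit $\psi$; conditional on $\Dd{val}$, the classifier $g_\psi$ is a fixed measurable function, so Lemma~\ref{lemma:test} applies verbatim and the Brunner--Munzel asymptotics go through. I would close the argument by remarking that this framing makes the test agnostic to the family $\mathcal{G}$ of inspectors, trading statistical power (higher if $g_\psi$ is close to the Bayes optimal separator in~(\ref{eq:aucineq})) for validity, which is guaranteed by the split.
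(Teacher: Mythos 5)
Your proposal is correct and takes essentially the same route as the paper: the paper's proof is a one-line instantiation of Lemma~\ref{lemma:test} with $W = \Ss(f_\theta, X)$ and $f = g_\psi$, exactly your first step. The additional machinery you supply (the empirical Mann--Whitney estimator, the Brunner--Munzel asymptotics, and the observation that evaluating on $\Dd{te}$ disjoint from $\Dd{val}$ makes $g_\psi$ a fixed function so the lemma applies) is left implicit in the paper's surrounding discussion, and your explicit treatment of the data-splitting point is a genuine improvement in rigor.
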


\begin{proof}
Under $H_0$, by Lemma \ref{eq:auc} with $W = \Ss(f_\theta, X)$ and $f = g_\psi$, we have $AUC(g_\psi) = \nicefrac{1}{2}$. 
\end{proof}

Results of such a test can include $p$-values of the adopted test for $AUC(g_\psi) = \nicefrac{1}{2}$. Alternatively, confidence intervals for $AUC(g_\psi)$ can be reported, as returned by the Brunner–Munzel test or by the methods  \citep{delong1988comparing,DBLP:conf/nips/CortesM04,gonccalves2014roc}.

\subsection{Statistical Independence Test via Classifier AUC Test}\label{app:stat.independence.exp}

We complement the experiments of Section \ref{sec:experiments} by reporting in Table \ref{tab:auc.stats} the results of the C2ST for group pair-wise comparisons. As discussed in Section \ref{sec:stat.independence}, we perform the statistical test $H_0: AUC=\nicefrac{1}{2}$ of the \enquote{Equal Treatment Inspector} using a Brunner-Munzel one tailed test against $H_1: AUC>\nicefrac{1}{2}$ as implemented by~\citet{2020SciPy-NMeth}. 
Table \ref{tab:auc.stats} reports the empirical AUC on the test set, the confidence intervals at 95\% confidence level (columns \enquote{Low} and \enquote{High}), and the p-value of the test.  
The \enquote{Random} row regards a randomly assigned group and represents a baseline for comparison. The statistical tests clearly show that the AUC is significantly different from $\nicefrac{1}{2}$, also when correcting for multiple comparison tests.

\begin{table}[ht]
\centering
\caption{Results of the C2ST on the \enquote{Equal Treatment Inspector}. 
}\label{tab:auc.stats}
\begin{tabular}{c|ccccc}
\textit{\textbf{Pair}} & \textbf{AUC} & \textbf{Low} & \textbf{High} & \textbf{pvalue} & \textbf{Test Statistic} \\ \hline
\textit{Random}        & 0.501        & 0.494        & 0.507         & 0.813           & 0.236              \\
\textit{White-Other}   & 0.735        & 0.731        & 0.739         & $< 2.2e\text{-}16$             & 97.342             \\
\textit{White-Black}   & 0.62         & 0.612        & 0.627         & $< 2.2e\text{-}16$            & 27.581             \\
\textit{White-Mixed}   & 0.615        & 0.607        & 0.624         & $< 2.2e\text{-}16$             & 23.978             \\
\textit{Asian-Other}   & 0.795        & 0.79         & 0.8           & $< 2.2e\text{-}16$             & 107.784            \\
\textit{Asian-Black}   & 0.667        & 0.659        & 0.676         & $< 2.2e\text{-}16$             & 38.848             \\
\textit{Asian-Mixed}   & 0.644        & 0.634        & 0.653         & $< 2.2e\text{-}16$             & 28.235             \\
\textit{Other-Black}   & 0.717        & 0.708        & 0.725         & $< 2.2e\text{-}16$             & 48.967             \\
\textit{Other-Mixed}   & 0.697        & 0.688        & 0.707         & $< 2.2e\text{-}16$             & 39.925             \\
\textit{Black-Mixed}   & 0.598        & 0.586        & 0.61          & $< 2.2e\text{-}16$             & 15.451            
\end{tabular}
\end{table}
\begin{figure}[ht]
\centering
\includegraphics[width=.8\textwidth]{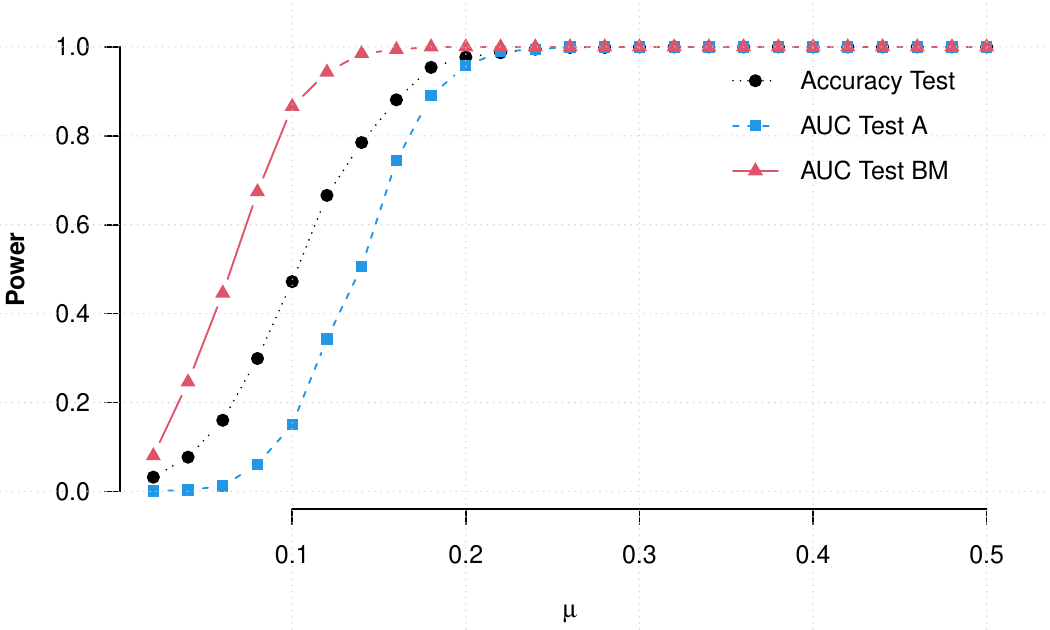}\hfill
\includegraphics[width=.8\textwidth]{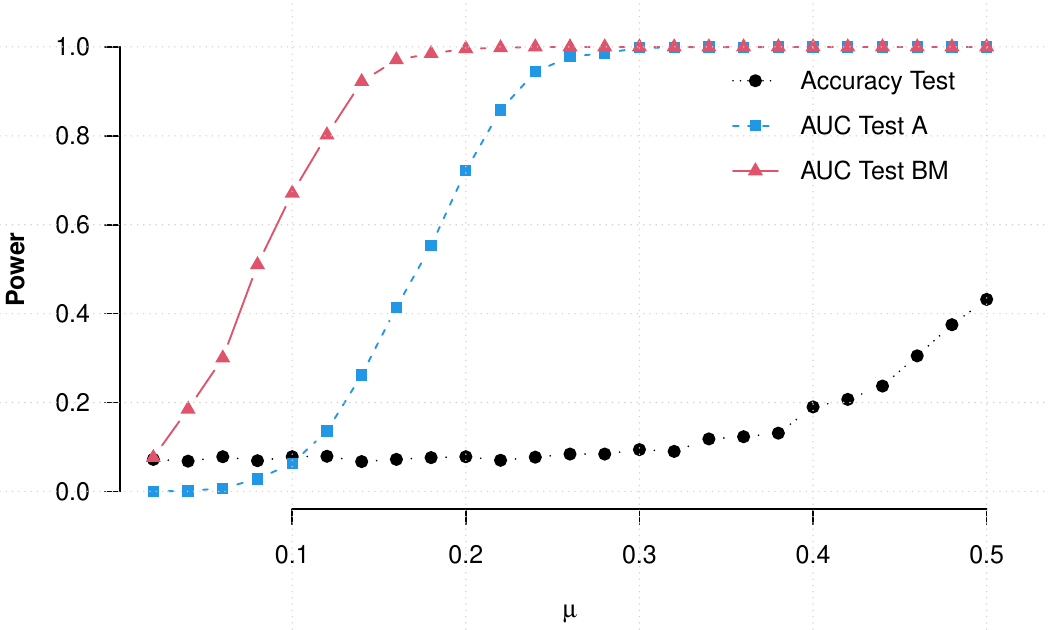}
\caption{Comparing the power (the higher, the better) of C2ST based on AUC with Brunner-Munzel test (AUC Test BM) vs Accuracy vs AUC with an asymptotic normal approximation of the Wilcoxon–Mann–Whitney statistics (AUC Test A). Upper: balanced groups  ($P(Z=1)=0.5$). Lower: unbalanced groups ($P(Z=1)=0.2$).}
\label{fig:power}
\end{figure}

We also compare the power of the C2ST based on the AUC using the Brunner-Munzel test against the two-sample test of \citet{DBLP:conf/iclr/Lopez-PazO17}, which is based on accuracy, and against the AUC test of \citet{chakravarti2023model}, which is based on the asyntotic normal approximation of the Wilcoxon–Mann–Whitney statistics.
We generate synthetic datasets from $\mathbf{X} \times Z$, where $Z \sim Ber(0.5)$ (balanced groups) or $Z \sim Ber(0.2)$ (unbalanced groups),
and $\mathbf{X} = (\mathbf{X}_1, \mathbf{X}_2)$ with positives ($Z=1$) distributed as $N( \begin{bmatrix}\mu \ \mu\\ \end{bmatrix}, \mathbf{\Sigma})$ and negatives ($Z=0$) distributed as $N( \begin{bmatrix}-\mu \ -\mu\\ \end{bmatrix}, \mathbf{\Sigma})$, where $\mathbf{\Sigma} = \begin{bmatrix}1 & 0.5 \\ 0.5 & 1 \end{bmatrix}$. Thus, the larger the parameter $\mu$, the easier is to distinguish the distributions of positives and negatives. Figure \ref{fig:power} reports the power\footnote{Probability of rejecting $H_0$ when it does not hold.} of the three tests using in all cases a logistic regression classifier. The power is
estimated by 1,000 runs for each of the $\mu$'s ranging from $0.005$ to $0.5$. The figure highlights that, under such a setting, testing the AUC using the Brunner-Munzel test achieves better power than using accuracy or using an asymptotic test. Our approach exhibits the best power, and the difference is higher in the case groups are unbalanced.

\subsection{Explaining Un-Equal Treatment}\label{app:xai.eval}

The following example showcases one of our main contributions: detecting \textit{the sources} of un-equal treatment through interpretable by-design (linear) inspectors. Here, we assume that the model is also linear. In the Appendix \ref{app:xai.eval}, we will experiment with non-linear models.

\begin{example} Let $X = X_1,X_2,X_3$ be independent features 
such that $E[X_1] = E[X_2] = E[X_3] = 0$, and $X_1, X_2 \perp Z$, and $X_3 \not \perp Z$. 
Given a random sample of i.i.d.~observations from $(X, Y)$, a linear model $f_\beta(x_1, x_2, x_3) = \beta_0 + \beta_1 \cdot x_1 + \beta_2 \cdot x_2 + \beta_3 \cdot x_3$ can be built by OLS (Ordinary Least Square) estimation, possibly with $\beta_1, \beta_2, \beta_3 \neq 0$. By reasoning as in the proof of Lemma \ref{lemma:1}, $\Ss(f_\beta, x)_i = \beta_i \cdot x_i$. Consider now a linear Equal TreatmentInspector $g_\psi(s) = \psi_0 + \psi_1 \cdot s_1 + \psi_2 \cdot s_2+\psi_3 \cdot s_3$, which can be written in terms of the $x$'s as: $g_\psi(x) = \psi_0 + \psi_1 \cdot \beta_1 \cdot x_1 + \psi_2 \cdot \beta_2 \cdot x_2+\psi_3 \cdot \beta_3 \cdot x_3$. By OLS estimation properties, we have $\psi_1 \approx cov(\beta_1 \cdot X_1, Z)/var(\beta_1 \cdot X_1) = cov(X_1, Z)/(\beta_1 \cdot var(X_1))  = 0$ and analogously $\psi_2 \approx 0$. Finally, $\psi_3 \approx cov(X_3, Z)/(\beta_3 \cdot var(X_3)) \neq 0$. In summary,  the coefficients of $g_\psi$ provide information about which feature contributes (and how much it contributes) to the dependence between the explanation $\Ss(f_\beta, X)$ and the protected feature $Z$. Notice that also $f_\beta(X) \not \perp Z$, but we cannot explain which features contribute to such a dependence by looking at $f_\beta(X)$, since $\beta_i \approx cov(X_i, Y)/var(X_i)$ can be non-zero also for $i = 1, 2$.
\end{example}

\subsection{Using AUC as Classifier Two-Sample Test evaluation metric}\label{et.auc.metric}

We complement the above results of the experimental with a further experiment relating the correlation hyperparameter $\gamma$ to the coefficients of an explainable Equal Treatmentinspector. We consider a synthetic dataset with one more feature, by drawing $10,000$ samples from a  $X_1 \sim N(0, 1)$, $X_2 \sim N(0, 1)$, and $(X_3,X_5)$ and $(X_4,X_5)$ following bivariate normal distributions $N\left(\begin{bmatrix}0 \ 0 \end{bmatrix},\begin{bmatrix}1 & \gamma \ \gamma & 1 \end{bmatrix}\right)$ and $N\left(\begin{bmatrix}0 \ 0 \end{bmatrix},\begin{bmatrix}1 & \gamma0.5 \ \gamma0.5 & 1 \end{bmatrix}\right)$, respectively. We define the binary protected feature $Z$ with values $Z=1$ if $X_5 > 0$ and $Z=0$ otherwise. 
As in Section \ref{exp:synthetic}, we consider two experimental scenarios. In the first scenario, the indirect case, we have unfairness in the data and in the model. The targe feature is $Y = \sigma(X_1 + X_2 + X_3 + X_4)$, where $\sigma$ is the logistic function. In the second scenario, the uninformative case, we have unfairness in the data and fairness in the model. The target feature is $Y = \sigma(X_1 + X_2)$.

Figure \ref{fig:xai.coef} shows how the coefficients of the inspector $g_{\psi}$ vary with correlation $\gamma$ in both scenarios. In the indirect case, coefficients for $\Ss(f_{\theta}, X_1)_1$ and $\Ss(f_{\theta}, X_1)_2$  correctly attributes zero importance to such variables, while coefficients for $\Ss(f_{\theta}, X_1)_3$ and $\Ss(f_{\theta}, X_1)_4$ grow linearly with $\gamma$, and with the one for $\Ss(f_{\theta}, X_1)_3$ with higher slope as expected. In the uninformative case, coefficients are correctly zero for all variables.

\begin{figure}[ht]
\centering
\includegraphics[width=0.8\textwidth]{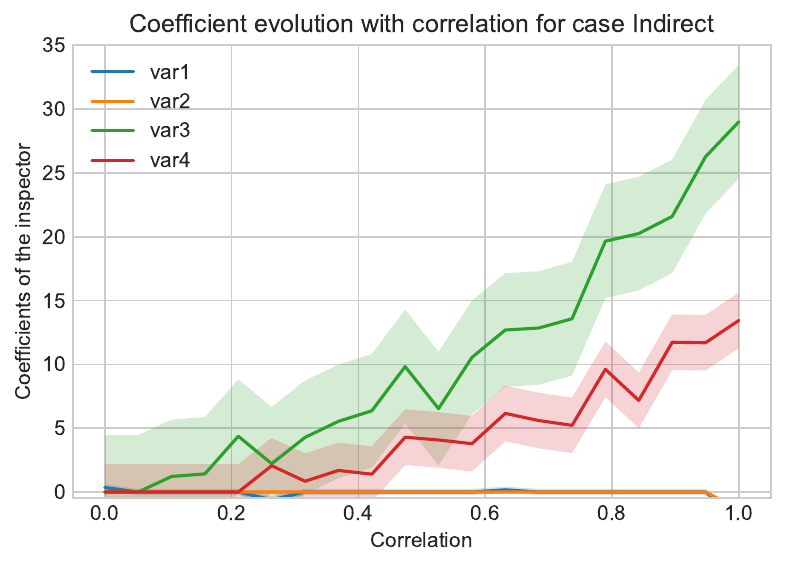}
\includegraphics[width=0.8\textwidth]{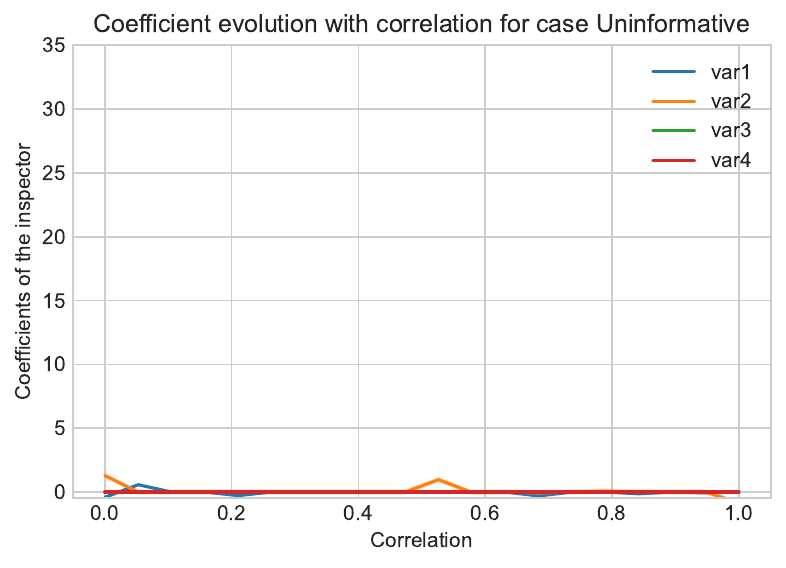}
\caption{Coefficient of $g_{\psi}$ over $\gamma$ for synthetic datasets in two experimental scenarios.}
\label{fig:xai.coef}
\end{figure}
\clearpage
\section{Experiments}

In our experiments, we aim to rigorously assess equal treatment by employing a methodology that includes:
\begin{itemize}

    \item Experiments on synthetic data in Section~\ref{et.exp.synt}

    \item Different types of Shapley values estimation in Section~\ref{et.exp.TrueModel.TrueData} 

    \item Using LIME as Local Feature Attribution Method~\ref{et.app:LIME} 
        
    \item  Compare AUC vs accuracy for the Classifier Two Sample independence test ~\ref{et.auc.metric}

    \item  Experiments natural datasets in Section~\ref{et.exp.real}
    \item Systematically varying the model $f$ in Section~\ref{subsec:EstimatorInspectorVariations}
    
    \item Systematically varying the parameters $\theta$ in Section~\ref{subsec:EstimatorInspectorVariations}
    
    \item Extend the comparison against Demographic Parity in Section~\ref{app:stat.DP.ET}

\end{itemize}

\subsection{Comparison methods and baselines:}
We evaluate the "Equal Treatment Inspector" that does Classifier Two-Sample Test on the explanation, $g_\psi$(eq. \ref{eq:fairDetector}),against the same test on other distributions: input data distributions $g_\Upsilon$ (also known as bias on the data), prediction distributions $g_\upsilon$(also known as demographic parity) and a combination of both $g_\phi$ (see Equation~\ref{eq:equal:c2st.all.dist}). These experiments are grounded on previously discussed theory (Section ~\ref{sec:theory}):

\begin{gather}
    \Upsilon = \argmin_{\tilde{\Upsilon}} \sum_{(x, z) \in \Dd{val}} \ell( g_{\tilde{\Upsilon}}(\textcolor{blue}{x}) , z) 
\end{gather}
\begin{gather}
    \upsilon = \argmin_{\tilde{\upsilon}} \sum_{(x, z) \in \Dd{val}} \ell( g_{\tilde{\upsilon}}(\textcolor{blue}{f_\theta(x)}) , z )
\end{gather}
\begin{gather}
    \phi = \argmin_{\tilde{\phi}} \sum_{(x, z) \in \Dd{val}} \ell( g_{\tilde{\phi}}(\textcolor{blue}{{f_\theta(x),x}}) , z )
\end{gather}\label{eq:equal:c2st.all.dist}
\clearpage
\subsection{Experiments with Synthetic Data}\label{et.exp.synt}

\textbf{Dataset:} To generate a synthetic dataset for both cases, we first draw $10,000$ samples from a  normal distribution $X_1 \sim N(0,1), X_2 \sim N(0,1), (X_3,X_4) \sim  N\left(\begin{bmatrix}0  \\ 0 \end{bmatrix},\begin{bmatrix}1 & \gamma \\ \gamma & 1 \end{bmatrix}\right)$. We then define a binary feature $Z$ with values $1\quad\texttt{if} \quad X_4>0,\quad \texttt{else}\quad 0$. We compare the fairness auditing methods while increasing the correlation $\gamma = r(X_3, Z)$ from 0 to 1. In both experimental scenarios below, our model $f_\beta$, is a function over the domain of the features $X_1, X_2, X_3$.

\textbf{Experimental scenarios:}\\
\textbf{Indirect Case: }\textit{Bias in the data and un-equal treatment model.} There is a demographic parity violation in the input data that is learned by the model. 
The predictor's domain is $(X_1, X_2,X_3)$, and the features are independent of each other. The protected attribute is $Z$ (binary-valued) and its correlation with the predictor's domain $(X_3, Z)$ is parameterized by $\gamma(X_3, Z)$, allowing to adjust for discrimination. To generate the synthetic demographic parity violation in the model we create the target $Y = \sigma(X_1 + X_2 + X_3)$, where $\sigma$ is the logistic function.

\textbf{Uninformative Case: }\textit{Bias on the data but equal treatment model.} The demographic parity violation on the input data remains the same but the relationship of the target variable changes, it is now independent of the protected attribute.
The target function is defined as $Y = \sigma(X_1 + X_2)$, and, the $\gamma$ parameter allows adjusting for discrimination in the training data even if the model does not capture it. The target is then  independent of the protected attribute, $Y \perp X_3 \Rightarrow Y \perp Z \Rightarrow f_\beta \perp Z$.

\begin{figure*}[ht]
\centering
\includegraphics[width=.8\textwidth]{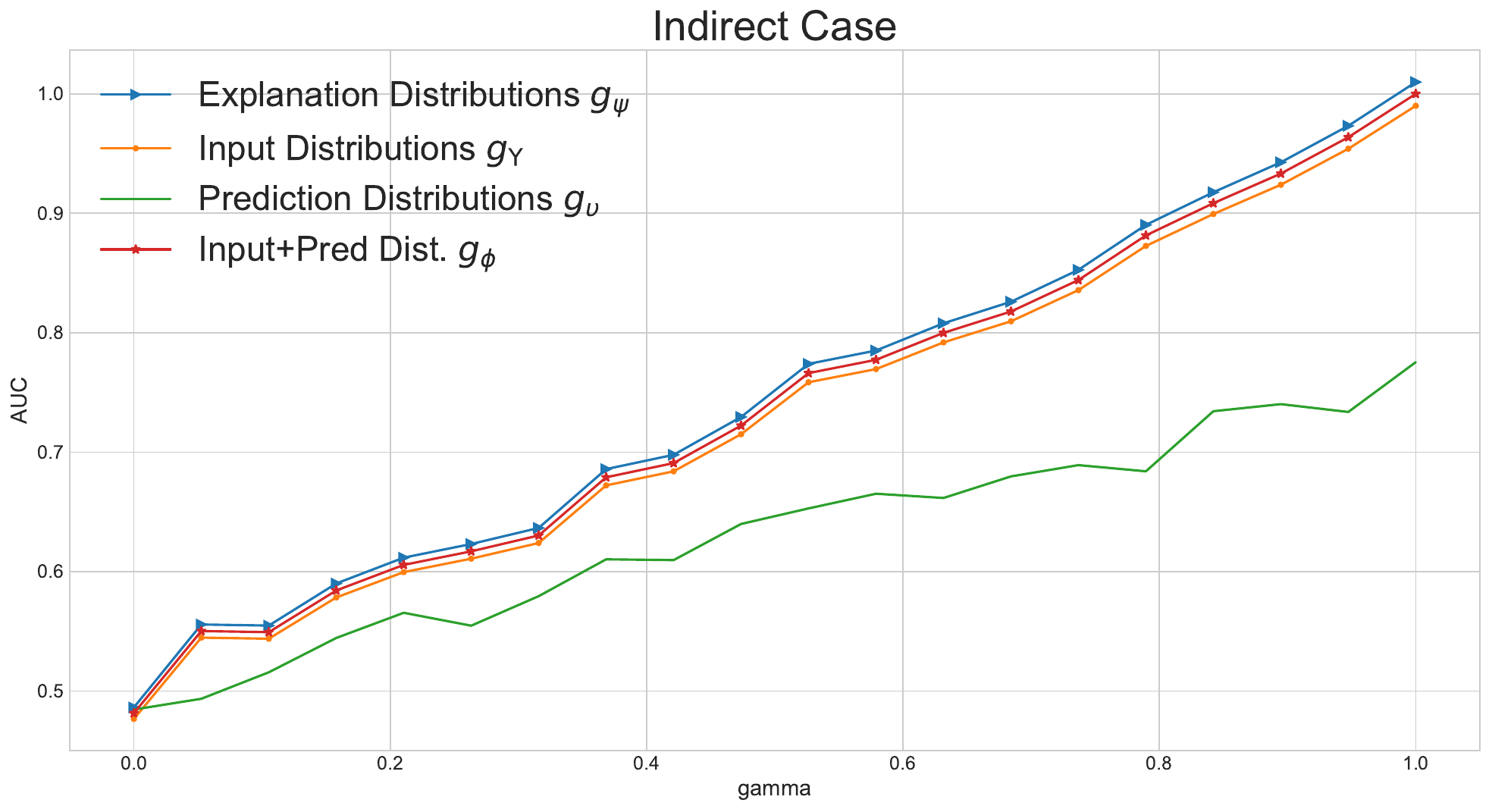}
\includegraphics[width=.8\textwidth]{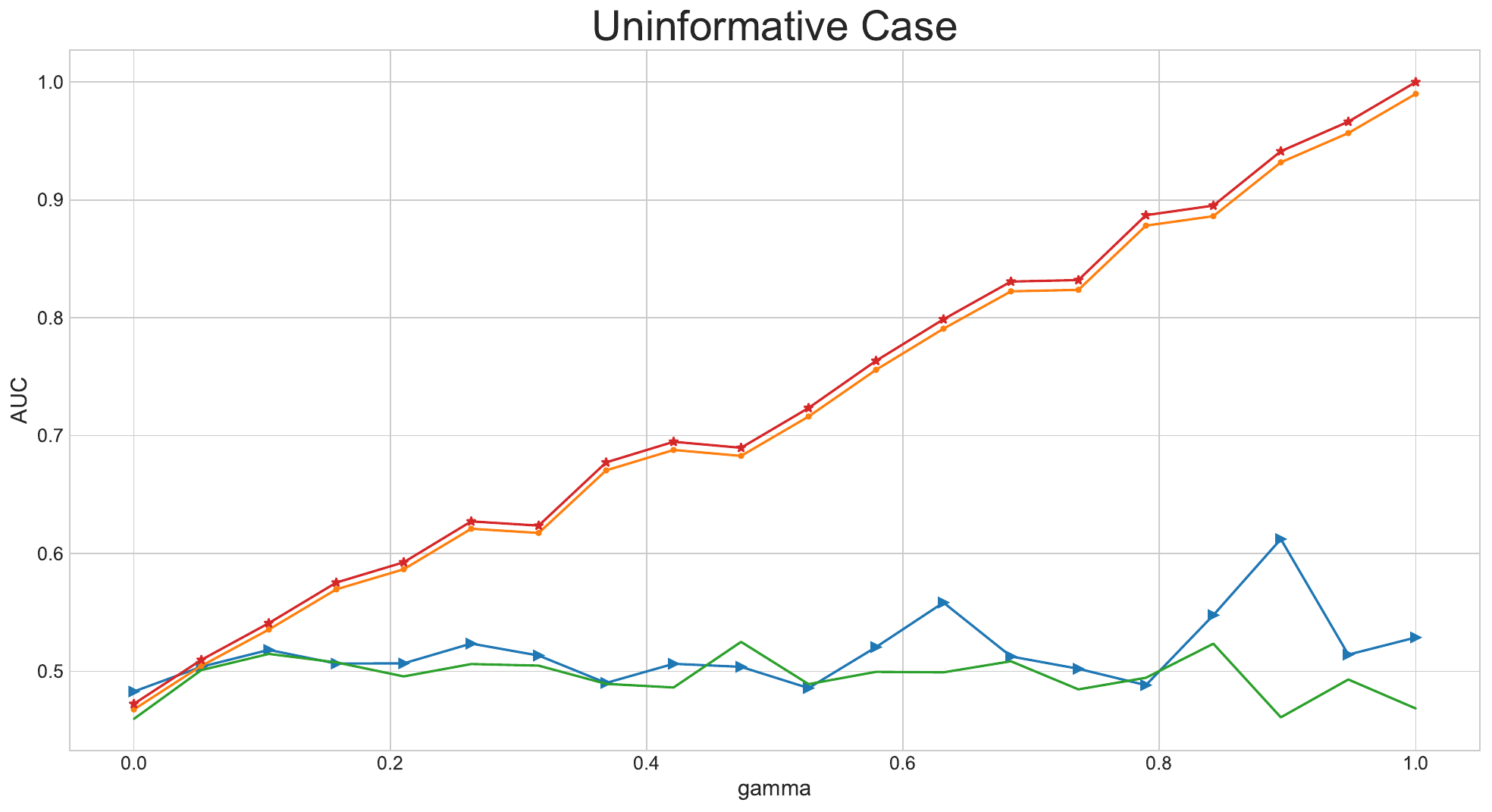}
\caption{In the figure above, \enquote{Indirect case}: All distributions capture this fairness violation; only the prediction distributions are less sensitive due to their low dimensionality. In the figure below, \enquote{Uninformative case}, only the explanation distributions and prediction distributions detect that the model is non-discriminant, input data flags a false positive detection.}\label{fig:equal:fairSyn}

\end{figure*}

In Figure \ref{fig:equal:fairSyn} we present and compare the different experiments on synthetic data. Overall, we find that learning on the explanations distributions captures un-equal treatment of the model in both situations. We say that a method is \textit{Accountable} if the feature attributions identified are the ones that indeed contribute towards the synthetically generated discrimination for both methods, see Appendix~\ref{app:xai.eval} for the experiment.

\subsubsection{Alternative Feature Attribution Explanation Methods: Shapley Value Calculations
True to the Model or True to the Data?}\label{et.exp.TrueModel.TrueData}
The \enquote{Equal Treatment Inspector} proposed in this work relies on the explanation distributions that satisfy efficiency and uninformative theoretical properties. 
We have used the Shapley values as an explainable AI method that satisfies these properties. 
A variety of (current) papers discusses the application of Shapley values, for feature attribution in machine learning models~\cite{DBLP:journals/kais/StrumbeljK14,DBLP:journals/natmi/LundbergECDPNKH20,lundberg2018explainable}. 
However, the correct way to connect a model to a coalitional game, which is the central concept of Shapley values, is a source of controversy, with two main approaches $(i)$ an interventional \citep{DBLP:journals/ai/AasJL21,DBLP:conf/nips/FryeRF20,Zern2023Interventional} or $(ii)$ an observational formulation of the conditional expectation~\citep{DBLP:conf/icml/SundararajanN20,DBLP:conf/sp/DattaSZ16,DBLP:journals/corr/abs-1911-00467}.

In the following experiment, we compare what are the differences between estimating the Shapley values using one or the other approach.  We benchmark this experiment on the four prediction tasks based on the US census data~\cite{DBLP:conf/nips/DingHMS21} and using the \enquote{Equal Treatment Inspector}, where both the model $f_\theta(X)$ and $g_\psi(\Ss(f_\theta,X))$ are linear models. We will calculate the Shapley values using the SHAP linear explainer. \footnote{\url{https://shap.readthedocs.io/en/latest/generated/shap.explainers.Linear.html}}

The comparison depends on a feature perturbation hyperparameter: whether the approach to compute the SHAP values is either \textit{interventional} or \textit{correlation dependent}. The interventional SHAP values break the dependence structure between features in the model to uncover how the model would behave if the inputs are changed (as it was an intervention). 
This option is said to stay \enquote{true to the model} meaning it will only give allocation credit to the features that are actually used by the model.

On the other hand, the full conditional approximation of the SHAP values respects the correlations of the input features. If the model depends on one input that is correlated with another input, then both get some credit for the model’s behaviour. 
This option is said to say \enquote{true to the data}, meaning that it only considers how the model would behave when respecting the correlations in the input data~\cite{DBLP:journals/corr/ShapTrueModelTrueData}.

In our case, we will measure the difference between the two approaches by looking at the linear coefficients of the model $g_\psi$ and comparing the performance of predicting protected attributes, for this case only between White-Other.

\begin{table}[ht]
\centering
\caption{AUC comparison of the \enquote{Explanation Shift Detector} between estimating the Shapley values between the interventional and the correlation-dependent approaches for the four prediction tasks based on the US census dataset~\cite{DBLP:conf/nips/DingHMS21}. The $\%$ character represents the relative difference. The performance differences are negligible.}\label{tab:t2mt2d.auc}
\begin{tabular}{l|llc}
           & Interventional                                                          & Observational & \%           \\ \hline
Income      & 0.736438                                                                & 0.736439              & 1.1e-06 \\
Employment  & 0.747923                                                                & 0.747923              & 4.44e-07 \\
Mobility    & 0.690734                                                                & 0.690735              & 8.2e-07 \\
Travel Time & 0.790512 & 0.790512              & 3.0e-07
\end{tabular}
\end{table}

\begin{table}[ht]
\caption{Linear regression coefficients comparison of the \enquote{Explanation Shift Detector} between estimating the Shapley values between the interventional and the correlation-dependent approaches for one of the US census based prediction tasks (ACS Income). The $\%$ character represents the relative difference. The coefficients show negligible differences between the calculation methods}\label{tab:t2mt2d.coefficients}
\centering
\begin{tabular}{l|rrr}

                & \multicolumn{1}{l}{Interventional} & \multicolumn{1}{l}{Observational} & \multicolumn{1}{c}{\%} \\ \hline
Marital         & 0.348170                           & 0.348190                        & 2.0e-05                 \\
Worked Hours    & 0.103258                           & -0.103254                       & 3.5e-06                 \\
Class of worker & 0.579126                           & 0.579119                        & 6.6e-06                 \\
Sex             & 0.003494                           & 0.003497                        & 3.4e-06                 \\
Occupation      & 0.195736                           & 0.195744                        & 8.2e-06                 \\
Age             & -0.018958                          & -0.018954                       & 4.2e-06                 \\
Education       & -0.006840                          & -0.006840                       & 5.9e-07                 \\
Relationship    & 0.034209                           & 0.034212                        & 2.5e-06                

\end{tabular}
\end{table}

Tables \ref{tab:t2mt2d.auc} and \ref{tab:t2mt2d.coefficients} compare the effects of using interventional and correlation-dependent approaches to train the \enquote{Explanation Shift Detector}. Table \ref{tab:t2mt2d.auc} presents the AUC values for the four prediction tasks in the US Census dataset, showing negligible performance differences between the two methods. Table \ref{tab:t2mt2d.coefficients} provides a comparison of linear regression coefficients for one prediction task (ACS Income), illustrating similarly minimal variation between the approaches. While the two methods differ theoretically, the observed differences are negligible both in the resulting AUC scores and in the linear regression coefficients for explaining the protected characteristic. This experiment underscores that, although estimation methods for Shapley values may diverge at the individual sample level, these discrepancies largely vanish when evaluating distributions of SHAP values, highlighting the robustness of the proposed approach.

\subsubsection{Alternative Feature Attribution Explanation Methods: LIME as an Alternative to Shapley Values}\label{et.app:LIME}

The definition of ET (Def. \ref{def:et}) is parametric in the explanation function. We used Shapley values for their theoretical advantages (see Appendix~\ref{sec:xai.foundations}).
Another widely used feature attribution technique is 
LIME (Local Interpretable Model-Agnostic Explanations). The intuition behind LIME is to create a local linear model that approximates the behavior of the original model in a small neighbourhood of the instance to explain~\citep{ribeiro2016why,ribeiro2016modelagnostic}, whose mathematical intuition is very similar to the Taylor/Maclaurin series. 
This section discusses the differences in our approach when adopting LIME instead of the SHAP implementation of Shapley values. First of all, LIME has certain drawbacks:

\begin{itemize}
    \item \textbf{Computationally Expensive:} Its current implementation is more computationally expensive than current SHAP implementations such as TreeSHAP~\citep{DBLP:journals/natmi/LundbergECDPNKH20}, Data SHAP \citep{DBLP:conf/aistats/KwonRZ21,DBLP:conf/icml/GhorbaniZ19}, or Local and Connected SHAP~\citep{DBLP:conf/iclr/ChenSWJ19}. This problem is exacerbated when producing explanations for multiple instances (as in our case). In fact, LIME requires sampling data and fitting a linear model, which is a computationally more expensive approach than the aforementioned model-specific approaches to SHAP. A comparison of the execution  time is reported in the next sub-section.
    
    \item \textbf{Local Neighborhood:} The randomness in the calculation of local neighbourhoods can lead to instability of the LIME explanations. 
    Works including \citep{DBLP:conf/aies/SlackHJSL20,alvarez2018towards,adebayo2018sanity} highlight that several types of feature attributions explanations, including LIME, can vary greatly.
    
    \item \textbf{Dimensionality:} LIME requires, as a hyperparameter, the number of features to use for the local linear model. For our method, all the features in the explanation distribution should be used. However, linear models suffer from the curse of dimensionality. In our experiments, this is not apparent, since our synthetic and real datasets are low-dimensional.  
\end{itemize}

\begin{figure}[ht]
\centering
\includegraphics[width=.8\textwidth]{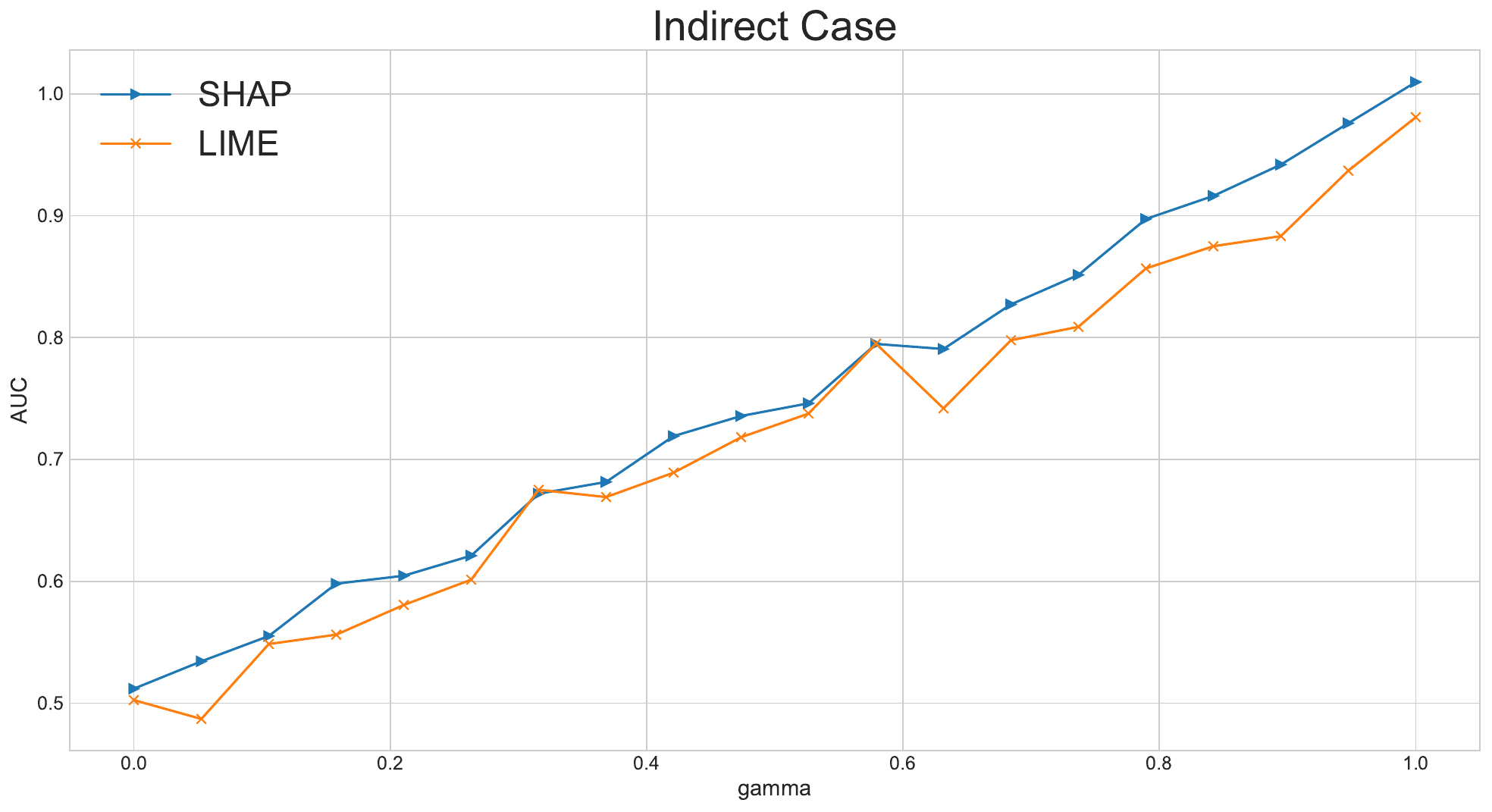}\hfill
\includegraphics[width=.8\textwidth]{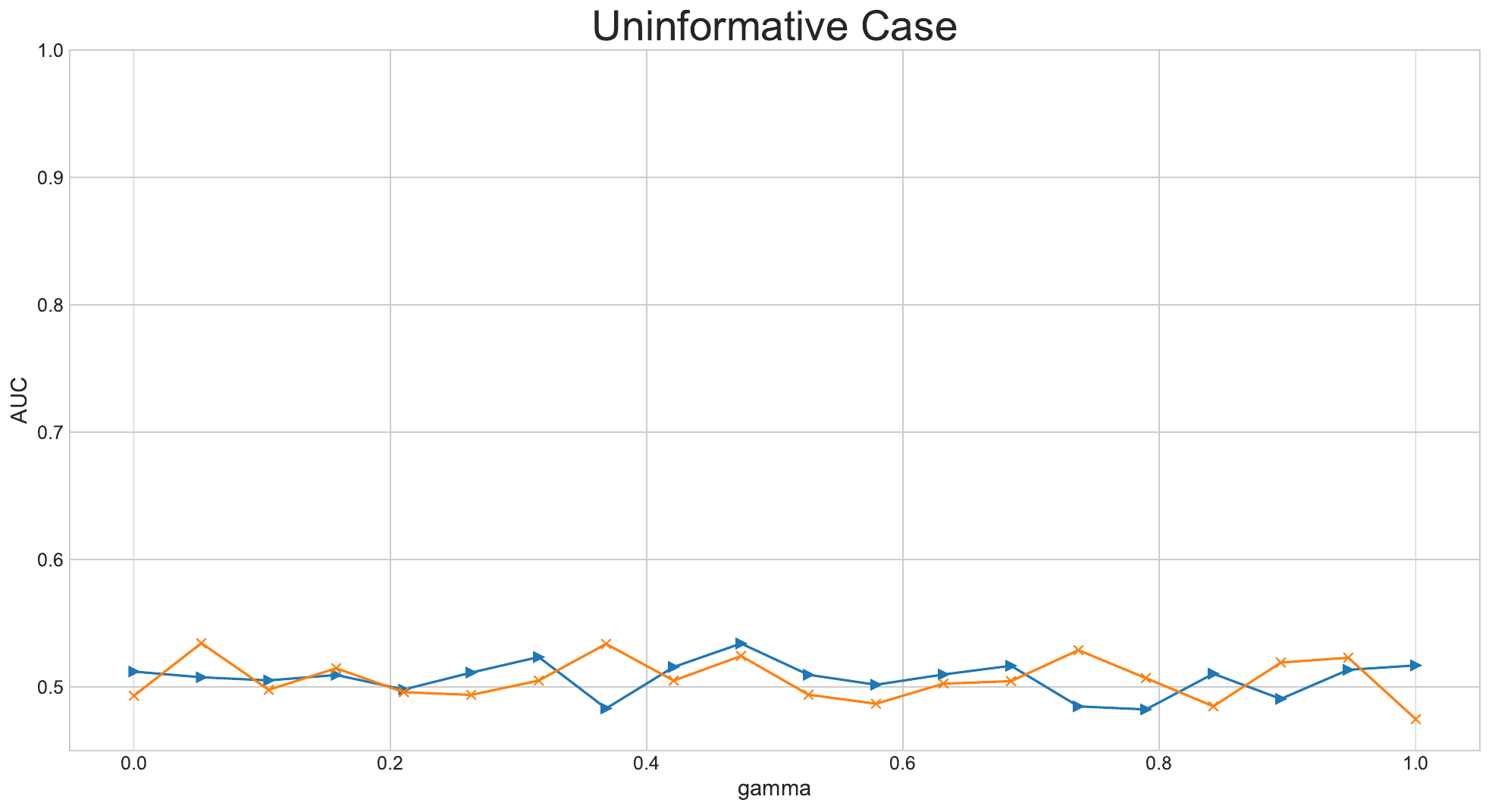}
\caption{AUC of the ET inspect using SHAP vs using LIME.}
\label{fig:lime.et}
\end{figure}

Figure \ref{fig:lime} compares the AUC of the Equal Treatment inspector using SHAP and LIME as explanation functions over the synthetic dataset of Section \ref{et.exp.synt}. In both scenarios (indirect case and uninformative case), the two approaches have similar results.

\clearpage
\section{Experiments on Real Data}\label{et.exp.real}

We experiment here with datasets derived from the ACS data\footnote{ACS PUMS documentation: \url{https://www.census.gov/programs-surveys/acs/microdata/documentation.html}} \citep{DBLP:conf/nips/DingHMS21},. The fairness notions are tested against all pairs of groups from the protected attribute \enquote{Race}. 

We divide a dataset into three equal splits $\{\Dd{tr},\Dd{val},\Dd{te} \} \subseteq \D$ and select our protected attribute, $Z$, to be a feature that indicates the ethnicity of an individual. We train our model $f_\beta$ on $\{X^{tr},Y^{tr}\}$ 
and, the \enquote{Equal Treatment Inspector} $g_\psi$ on $\{\Ss(f_\beta,X^{val}),Z^{val}\}$. Both methods are evaluated on $\{X^{te},Z^{te},y^{te}\}$. For the type of models, $f_\beta$, as we are in tabular data we focus on we choose $f_\theta$ to be a \texttt{xgboost}\cite{DBLP:conf/kdd/ChenG16}  that achieve state-of-the-art model performance~\cite{DBLP:conf/nips/GrinsztajnOV22,DBLP:journals/corr/abs-2101-02118,BorisovNNtabular},and for the inspector $g_\psi$ a logistic regression. The final explanations are given by the coefficients of the logistic regression. We compare the AUC performances of several inspectors: $g_\psi$ (see Eq. \ref{eq:fairDetector}) for Equal Treatment (see Def. \ref{def:et}), $g_v$  for Demographic Parity (see Def. \ref{def:dp}), $g_\Upsilon$ for fairness of the input (i.e., $X \perp Z$ as discussed in Section \ref{subsec:ETvsEOvsInd}), and a combination  $g_\phi$ of the last two inspectors to test $f_\theta(X), X \perp Z$.

\subsection{Equal Treatment vs Demographic Parity: ACS Census}
\subsubsection{ACS Income}

Figure~\ref{fig:xaifolks} (above) shows the AUC performances of the Equal Treatment inspector $g_{\psi}$ and the DT inspector $g_v$. The standard deviation of the AUC is calculated over $30$ bootstrap runs, each one splitting the data into $\nicefrac{1}{3}$ for training the model, $\nicefrac{1}{3}$ for training the inspectors and $\nicefrac{1}{3}$ for testing them. In the Secion~\ref{app:stat.independence.exp}, the results of the C2ST test of Section~\ref{sec:expSpaceIndependence} are reported. The AUCs for the EP inspectors are greater than for the Demographic Parity inspectors, as expected due to Lemmma \ref{lemma:inc}.

\begin{figure*}[ht]
\centering
\includegraphics[width=.8\textwidth]{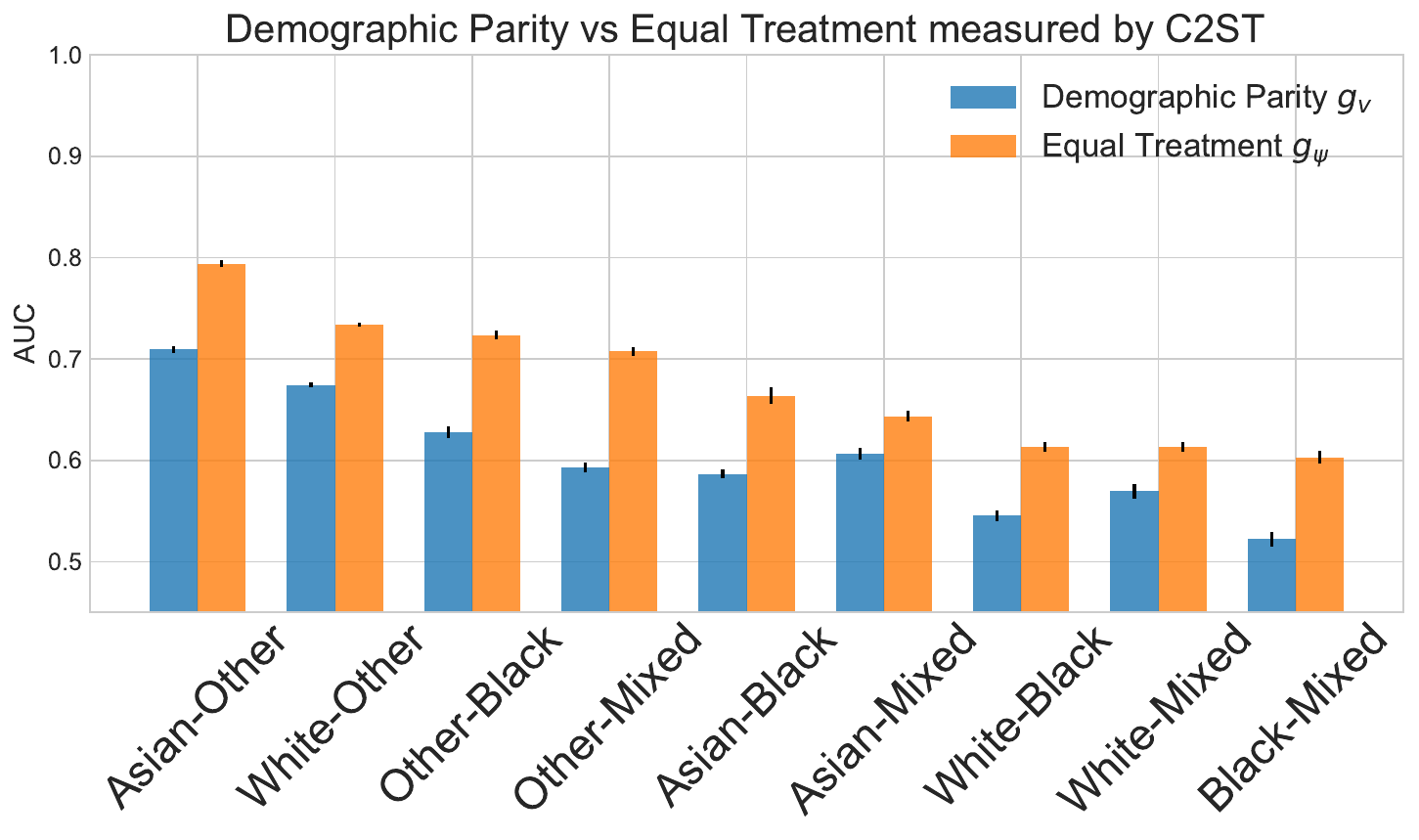}
\includegraphics[width=.8\textwidth]{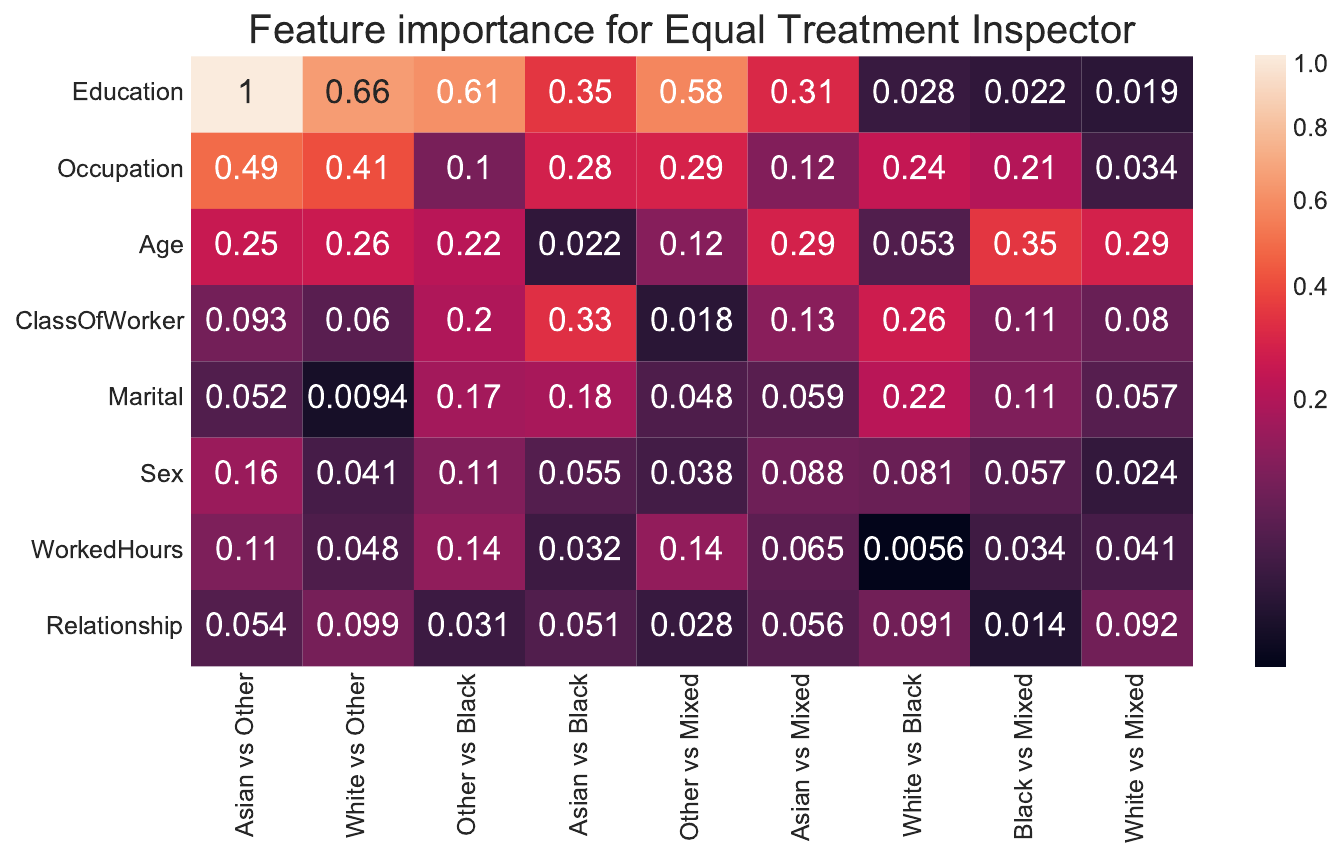}
\caption{In the figure abobe, a comparison of Equal Treatment and Demographic Parity measures on the US Income data. The AUC range for Equal Treatment is notably wider, and aligning with the theoretical section, there are indeed instances where Demographic Parity fails to identify discrimination that Equal Treatment successfully detects. For a detailed statistical analysis, please refer to Appendix~\ref{app:stat.DP.ET}. The figure below provides insight into the influential features contributing to unequal treatment. Higher feature values correspond to a greater likelihood of these features being the underlying causes of unequal treatment.}\label{fig:xaifolks}
\end{figure*}

Figure~\ref{fig:xaifolks} (below) shows the Wasserstein distance between the coefficients of the linear regressor $g_{\psi}$ compared to a baseline where groups are assigned at random in the input dataset. This feature importance post-hoc explanation method provides insights into the impact of different features as sources of unfairness. We observe \enquote{Education} as a highly discriminatory proxy while the role of the feature \enquote{Worked Hours Per Week} is less relevant. This allows us to identify areas where adjustments or interventions may be needed to move closer to the ideal of equal treatment.

\subsubsection{ACS Employment}
The goal of this task is to predict whether an individual is employed. Figure \ref{fig:xai.employment} shows a low
DP violation, compared to the other prediction tasks based on the US census dataset. The AUC of the \enquote{Equal Treatment Inspector} is ranging from $0.55$ to $0.70$. For Asian vs Black un-equal treatment, we see a significant variation of the AUC, indicating that the method achieves different values on the bootstrapping folds. 
Looking at the features driving the Equal Treatment violation, we see particularly high values when comparing \enquote{Asian} and \enquote{Black} populations, and for features \enquote{Citizenship} and \enquote{Employment}. 
On average, the most important features across all group comparisons are also \enquote{Education} and \enquote{Area}. Interestingly, features such as \enquote{difficulties on the hearing or seeing}, do not play a role.

\begin{figure*}[ht]
\centering
\includegraphics[width=.8\textwidth]{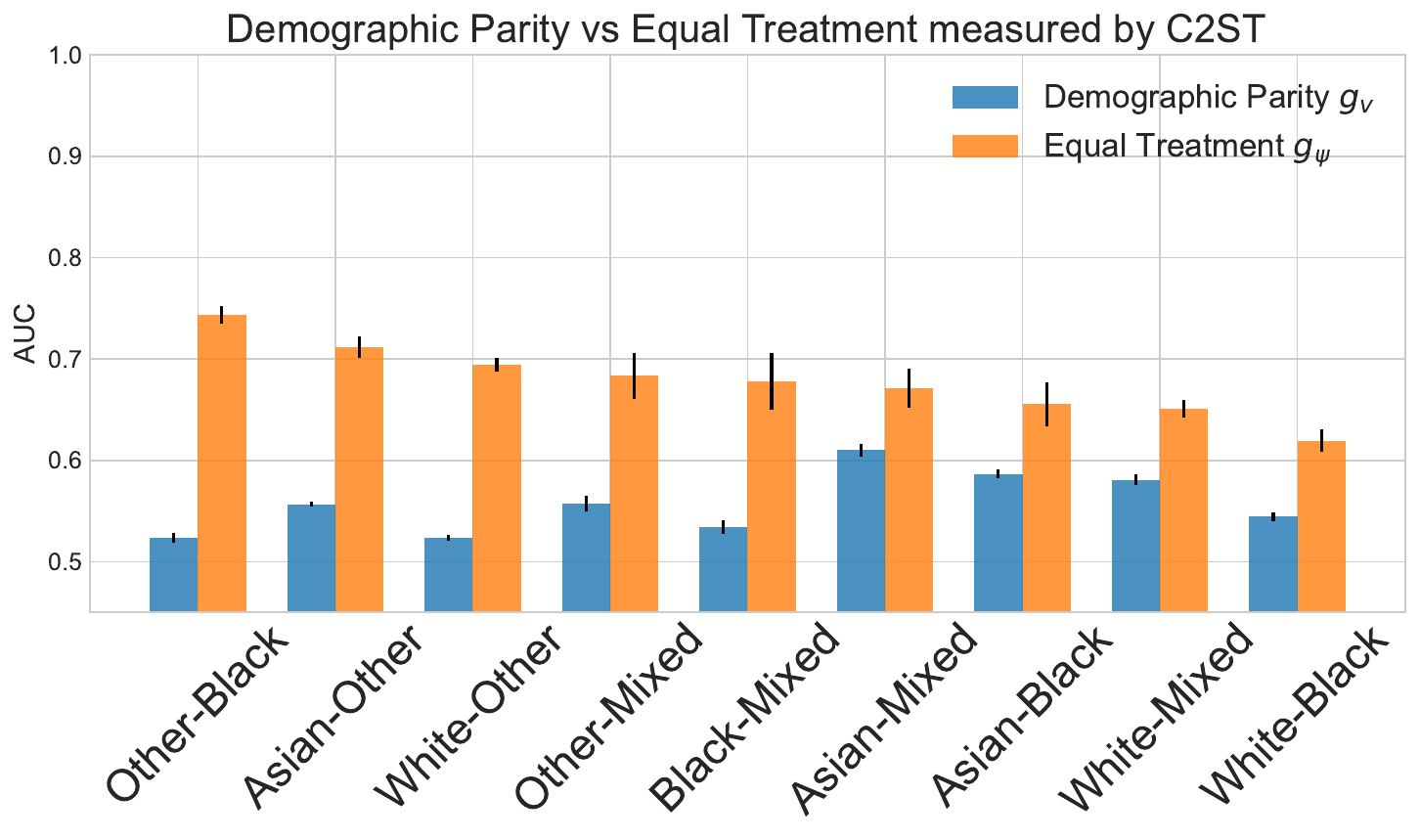}\hfill
\includegraphics[width=.8\textwidth]{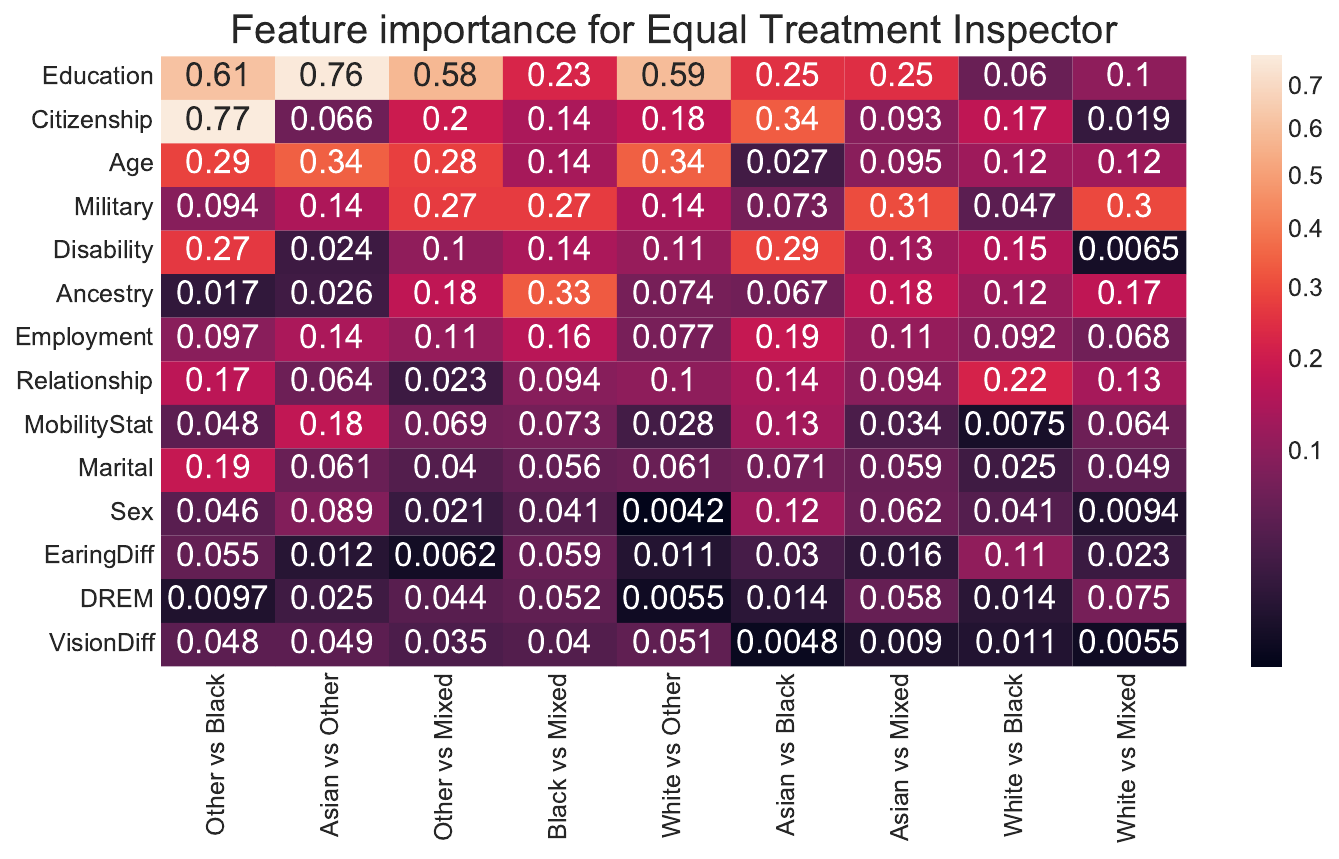}
\caption{Above: AUC of the inspector for Equal Treatment and DP, over the district of California 2014 for the ACS Employment dataset. Below: contribution of features to the Equal Treatment inspector performance.}
\label{fig:xai.employment}
\end{figure*}

\subsubsection{ACS Travel Time}

The goal of this task is to predict whether an individual has a commute to work that is longer than 20 minutes. The threshold of 20 minutes was chosen as it is the US-wide median travel time to work based on 2018 data. Figure \ref{fig:xai.traveltime} shows an AUC for the Equal Treatment inspector in the range of $0.50$ to $0.60$. 
By looking at the features, they highlight different Equal Treatment drivers depending on the pair-wise comparison made. 
In general, the features \enquote{Education}, \enquote{Citizenship} and \enquote{Area} are the those with the highest difference. Even though for Asian-Black pairwise comparison \enquote{Employment} is also one of the most relevant features.

\begin{figure*}[ht]
\centering
\includegraphics[width=.8\textwidth]{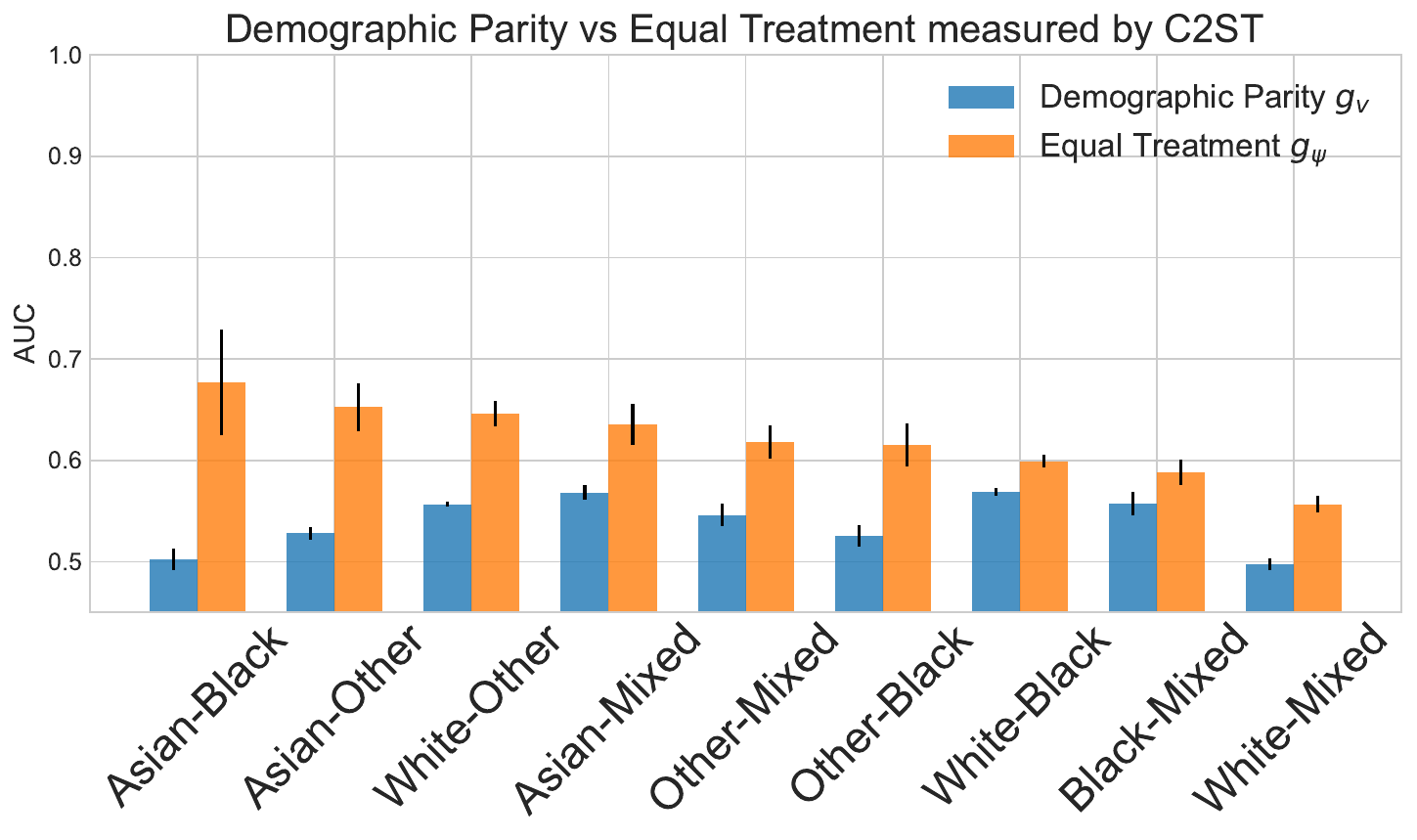}\hfill
\includegraphics[width=.8\textwidth]{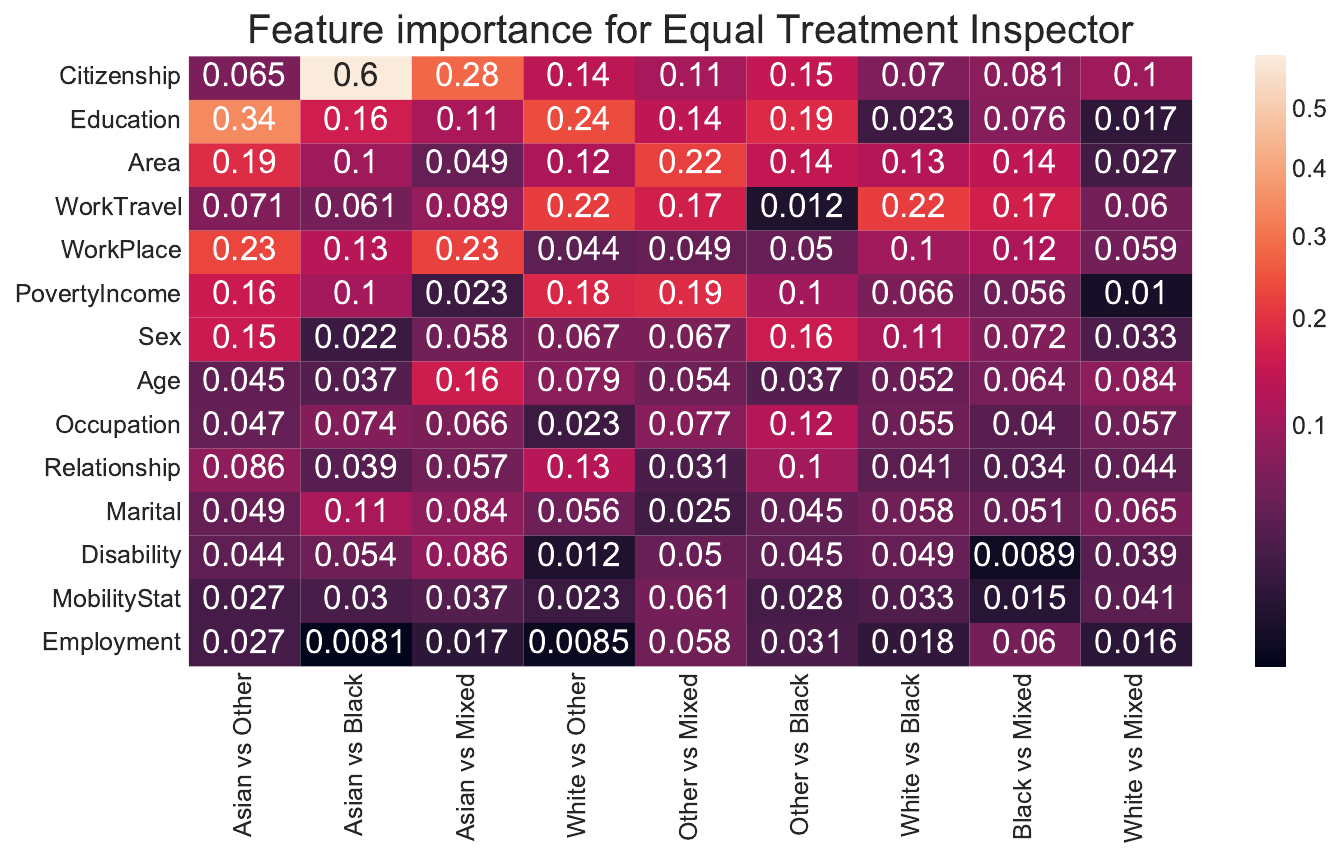}
\caption{Above AUC of the inspector for Equal Treatment and DP, over the district of California 2014 for the ACS Travel Time dataset. Below: contribution of features to the Equal Treatment inspector performance.}
\label{fig:xai.traveltime}
\end{figure*}

\subsubsection{ACS Mobility}

The goal of this task is to predict whether an individual had the same residential address one year ago, only including individuals between the ages of 18 and 35. This filtering increases the difficulty of the prediction task, as the base rate of staying at the same address is above $90\%$ for the general population~\citep{DBLP:conf/nips/DingHMS21}. Figure \ref{fig:xai.mobility} show an AUC of the Equal Treatment inspector in the range of $0.55$ to $0.80$. 
By looking at the features, they highlight different sources of the Equal Treatment violation depending on the group pair-wise comparison. 
In general, the feature \enquote{Ancestry}, i.e. ``ancestors' lives with details like where they lived, who they lived with, and what they did for a living", plays a high relevance when predicting Equal Treatment violation.

\begin{figure*}[ht]
\centering
\includegraphics[width=.8\textwidth]{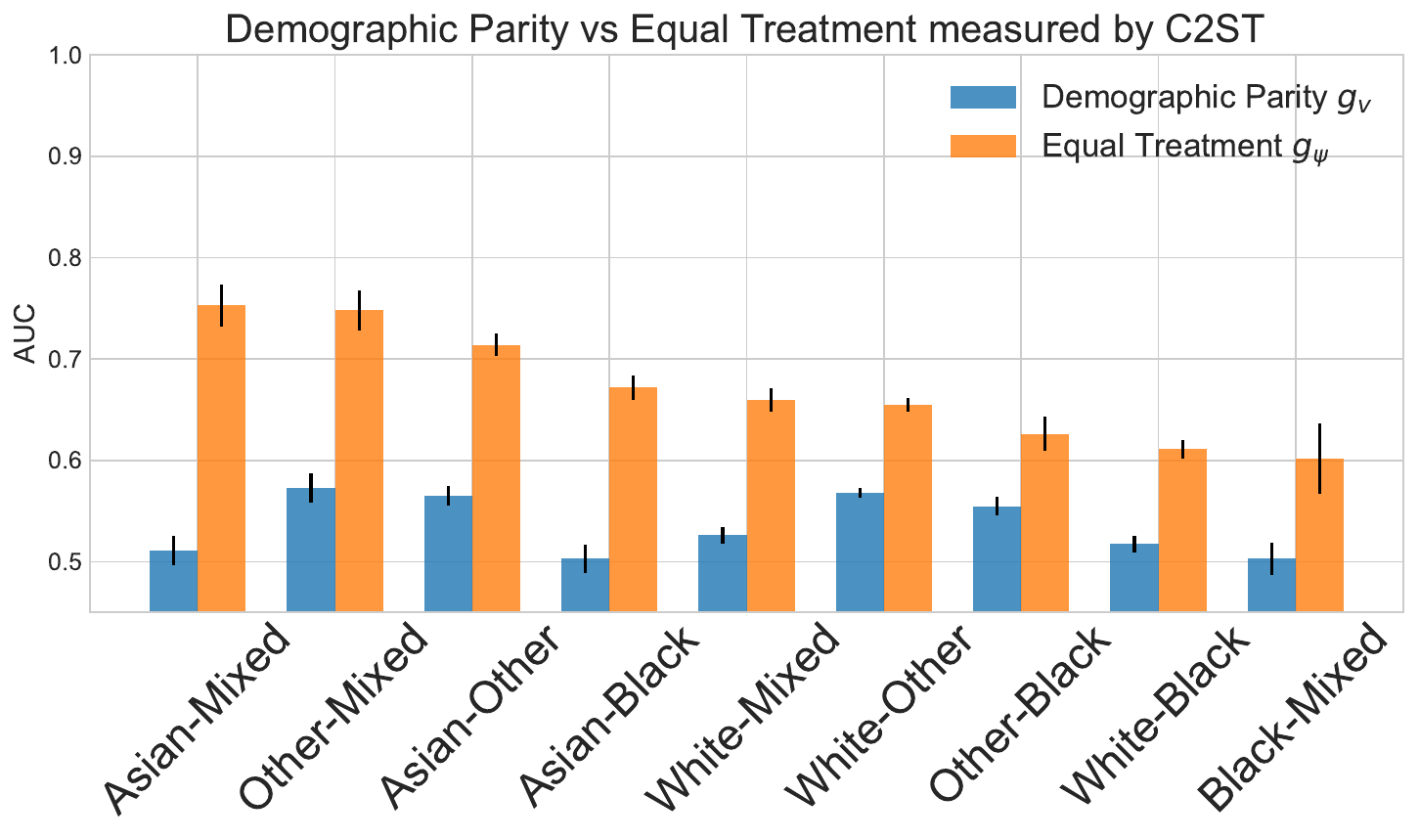}\hfill
\includegraphics[width=.8\textwidth]{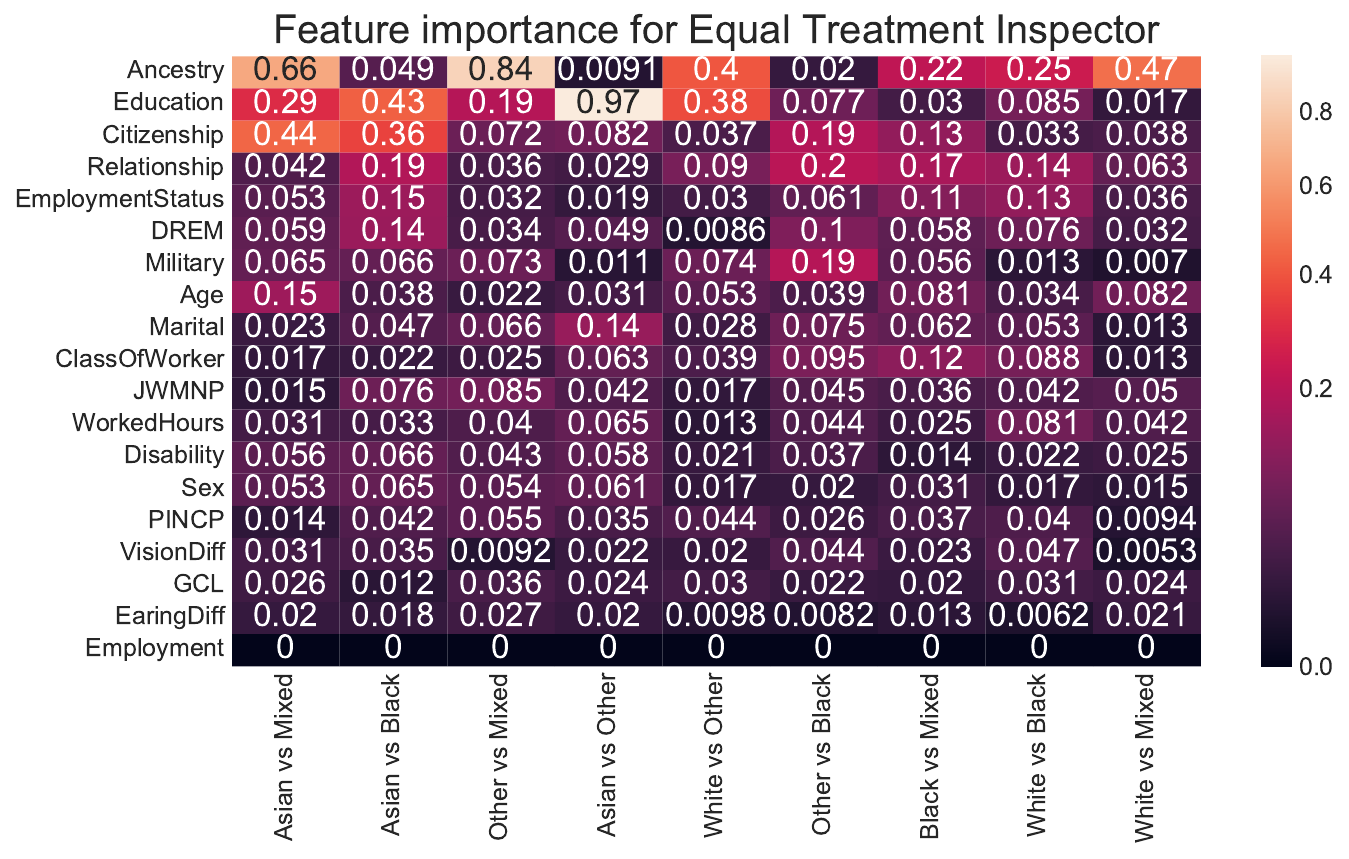}
\caption{Above: AUC of the inspector for Equal Treatment and DP, over the district of California 2014 for the ACS Mobility dataset. Below: contribution of features to the Equal Treatment inspector performance.}
\label{fig:xai.mobility}
\end{figure*}

\subsection{Quantitative Analysis with Demographic Parity}\label{app:stat.DP.ET}

So far, we measured Equal Treatment and Demographic Parity fairness using the AUC of an inspector, $g_{\psi}$ and $g_v$, respectively. For DP, however, other probability density distance metrics can be considered, including the p-value of the Kolmogorov–Smirnov (KS) test and the Wasserstein distance. Table \ref{table:dist} reports all such distances in the format ``mean $\pm$ stdev" calculated over 100 random sampled datasets. 
The pairs of group comparisons are sorted by descending AUC values. 
We highlight in red values below the threshold of $0.05$ for the KS test, of $0.55$ for the AUC of the C2ST, and of $0.05$ for the Wasserstein distance.  
They represent cases where Equal Treatment violation occurs, but no Demographic Parity violation is measured (with different metrics).

\begin{table}[ht]
\small
\caption{Comparison ofEqual Treatment and Demographic Parity measured in different ways. The cases of equal Treatment violation but no demographic parity violation are highlighted in red.}\label{table:dist}
\begin{tabular}{c|c|c|ccc}
Pair        & Data       & Equal treatment  & \multicolumn{3}{c}{Demographic Parity}                 \\ \hline
            &            & C2ST(AUC)        & C2ST(AUC)             & KS(pvalue)         & Wasserstein      \\ \hline
Asian-Other &  Income & $0.794 \pm 0.004$ & $0.709 \pm 0.004$ & $0.338 \pm 0.007$ & $0.256 \pm 0.004$ \\
White-Other &  Income & $0.734 \pm 0.002$ & $0.675 \pm 0.003$ & $0.282 \pm 0.003$ & $0.209 \pm 0.002$ \\
Other-Black &  Income & $0.724 \pm 0.004$ & $0.628 \pm 0.006$ & $0.216 \pm 0.007$ & $0.143 \pm 0.004$ \\
Other-Mixed &  Income & $0.707 \pm 0.005$ & $0.593 \pm 0.005$ & $0.169 \pm 0.006$ & $0.117 \pm 0.004$ \\
Asian-Black &  Income & $0.664 \pm 0.008$ & $0.587 \pm 0.004$ & $0.142 \pm 0.005$ & $0.111 \pm 0.004$ \\
Asian-Mixed &  Income & $0.644 \pm 0.005$ & $0.607 \pm 0.006$ & $0.159 \pm 0.008$ & $0.128 \pm 0.006$ \\
White-Mixed &  Income & $0.613 \pm 0.005$ & $0.546 \pm 0.005$ & $0.082 \pm 0.004$ & \textcolor{red}{$0.058 \pm 0.002$} \\
White-Black &  Income & $0.613 \pm 0.005$ & $0.57 \pm 0.007$ & $0.113 \pm 0.008$ & $0.08 \pm 0.006$ \\
Black-Mixed &  Income & $0.603 \pm 0.006$ & \textcolor{red}{$0.523 \pm 0.007$} & \textcolor{red}{$0.055 \pm 0.007$} & \textcolor{red}{$0.023 \pm 0.004$} \\ \hline
Asian-Black &  TravelTime & $0.677 \pm 0.052$ & \textcolor{red}{$0.502 \pm 0.011$} & \textcolor{red}{$0.021 \pm 0.009$} & \textcolor{red}{$0.01 \pm 0.003$} \\
Asian-Other &  TravelTime & $0.653 \pm 0.024$ & \textcolor{red}{$0.528 \pm 0.006$} & \textcolor{red}{$0.053 \pm 0.011$} & \textcolor{red}{$0.027 \pm 0.004$} \\
Asian-Mixed &  TravelTime & $0.647 \pm 0.013$ & $0.557 \pm 0.003$ & $0.096 \pm 0.004$ & \textcolor{red}{$0.045 \pm 0.002$} \\
White-Other &  TravelTime & $0.636 \pm 0.02$ & $0.568 \pm 0.007$ & $0.107 \pm 0.01$ & $0.06 \pm 0.005$ \\
Other-Mixed &  TravelTime & $0.618 \pm 0.017$ & \textcolor{red}{$0.546 \pm 0.011$} & $0.079 \pm 0.012$ & \textcolor{red}{$0.043 \pm 0.006$} \\
Other-Black &  TravelTime & $0.615 \pm 0.021$ & \textcolor{red}{$0.526 \pm 0.011$} & \textcolor{red}{$0.049 \pm 0.014$} & \textcolor{red}{$0.026 \pm 0.006$} \\
White-Black &  TravelTime & $0.599 \pm 0.006$ & $0.569 \pm 0.004$ & $0.12 \pm 0.006$ & \textcolor{red}{$0.057 \pm 0.003$} \\
Black-Mixed &  TravelTime & $0.588 \pm 0.012$ & $0.557 \pm 0.012$ & $0.098 \pm 0.015$ & \textcolor{red}{$0.0557 \pm 0.001$} \\
White-Mixed &  TravelTime & $0.557 \pm 0.008$ & \textcolor{red}{$0.497 \pm 0.006$} & \textcolor{red}{$0.016 \pm 0.004$} & \textcolor{red}{$0.006 \pm 0.002$} \\ \hline
Other-Black &  Employment & $0.744 \pm 0.008$ & \textcolor{red}{$0.524 \pm 0.005$} & \textcolor{red}{$0.036 \pm 0.005$} & \textcolor{red}{$0.036 \pm 0.004$} \\
Asian-Other &  Employment & $0.711 \pm 0.011$ & $0.557 \pm 0.003$ & $0.066 \pm 0.004$ & $0.066 \pm 0.003$ \\
White-Other &  Employment & $0.695 \pm 0.007$ & \textcolor{red}{$0.524 \pm 0.003$} & $0.019 \pm 0.005$ & $0.019\pm 0.002$ \\
Other-Mixed &  Employment & $0.683 \pm 0.022$ & $0.557 \pm 0.008$ & $0.083 \pm 0.005$ & $0.083 \pm 0.003$ \\
Black-Mixed &  Employment & $0.678 \pm 0.028$ & \textcolor{red}{$0.534 \pm 0.007$} & \textcolor{red}{$0.049 \pm 0.007$} & \textcolor{red}{$0.048 \pm 0.004$} \\
Asian-Mixed &  Employment & $0.671 \pm 0.019$ & $0.61 \pm 0.006$ & $0.0144 \pm 0.006$ & $0.145 \pm 0.004$ \\
Asian-Black &  Employment & $0.655 \pm 0.021$ & $0.587 \pm 0.004$ & $0.106 \pm 0.006$ & $0.106 \pm 0.004$ \\
White-Mixed &  Employment & $0.651 \pm 0.009$ & $0.581 \pm 0.006$ & $0.095 \pm 0.004$ & $0.095 \pm 0.003$ \\ 
White-Black &  Employment & $0.619 \pm 0.011$ & \textcolor{red}{$0.544 \pm 0.004$} & \textcolor{red}{$0.049 \pm 0.003$} & \textcolor{red}{$0.049 \pm 0.002$} \\\hline
Asian-Mixed &  Mobility & $0.753 \pm 0.02$ & \textcolor{red}{$0.511 \pm 0.014$} & \textcolor{red}{$0.04 \pm 0.012$} & \textcolor{red}{$0.014\pm 0.006$} \\
Other-Mixed &  Mobility & $0.748 \pm 0.02$ & $0.573 \pm 0.015$ & $0.113 \pm 0.017$ & \textcolor{red}{$0.062 \pm 0.009$} \\
Asian-Other &  Mobility & $0.714 \pm 0.011$ & $0.565 \pm 0.01$ & $0.114 \pm 0.011$ & \textcolor{red}{$0.054 \pm 0.005$} \\
Asian-Black &  Mobility & $0.672 \pm 0.012$ & \textcolor{red}{$0.503 \pm 0.014$} & \textcolor{red}{$0.032 \pm 0.011$} & \textcolor{red}{$0.012 \pm 0.004$} \\
Other-Black &  Mobility & $0.66 \pm 0.012$ & \textcolor{red}{$0.526 \pm 0.009$} & \textcolor{red}{$0.044 \pm 0.009$} & \textcolor{red}{$0.02 \pm 0.004$} \\
White-Mixed &  Mobility & $0.655 \pm 0.007$ & $0.568 \pm 0.005$ & $0.105 \pm 0.007$ & \textcolor{red}{$0.044 \pm 0.003$} \\ 
White-Other &  Mobility & $0.626 \pm 0.017$ & $0.555 \pm 0.009$ & $0.091 \pm 0.01$ & \textcolor{red}{$0.046\pm 0.005$} \\
White-Black &  Mobility & $0.611 \pm 0.009$ & \textcolor{red}{$0.518 \pm 0.008$} & \textcolor{red}{$0.043 \pm 0.008$} & \textcolor{red}{$0.017 \pm 0.004$} \\
Black-Mixed &  Mobility & $0.602 \pm 0.035$ & \textcolor{red}{$0.503 \pm 0.016$} & \textcolor{red}{$0.031 \pm 0.013$} & \textcolor{red}{$0.012 \pm 0.006$} \\\hline
\end{tabular}
\end{table}

\subsection{Varying Estimator and Inspector}\label{subsec:EstimatorInspectorVariations}

We vary here the model $f_{\theta}$ and the inspector $g_{\psi}$ over a wide range of well-known classification algorithms. 
Table~\ref{tab:benchmark} shows that the choice of model and inspector impacts on the measure of Equal Treatment, namely the AUC of the inspector.  

By Theorem \ref{thm:main}, the larger the AUC of any inspector the smaller is the $p$-value of the null hypothesis $\Ss(f_\theta, X) \perp Z$. Therefore, inspectors able to achive the best AUC should be considered. Weak inspectors have lower probability of rejecting the null hypothesis when it does not hold.

\begin{table}[ht]\centering
\begin{tabular}{r|ccccc}
\multicolumn{1}{l|}{}       & \multicolumn{5}{c}{\textbf{Model $f_\theta$}}                                                     \\ \cline{2-6} 
\textbf{Inspector $g_\psi$}    & \textbf{DecisionTree} & \textbf{SVC} & \textbf{Logistic Reg.} & \textbf{RF} & \textbf{XGB} \\ \hline
\textbf{DecisionTree} & 0.631                 & 0.644        & 0.644                  & 0.664                  & 0.634        \\
\textbf{KNN}   & 0.737                 & 0.754        & 0.75                   & 0.744                  & 0.751        \\
\textbf{Logistic Reg.} & 0.767                 & 0.812        & 0.812                  & 0.812                  & 0.821        \\
\textbf{MLP}      & 0.786                 & 0.795        & 0.795                  & 0.813                  & 0.804        \\
\textbf{RF}       & 0.776                 & 0.782        & 0.781                  & 0.758                  & 0.795        \\
\textbf{SVC}                & 0.743                 & 0.807        & 0.807                  & 0.790                   & 0.810        \\
\textbf{XGB}      & 0.775                 & 0.780         & 0.780                   & 0.789                  & 0.790       
\end{tabular}
\caption{AUC of theEqual Treatmentinspector for different combinations of models and inspectors.}\label{tab:benchmark.et}
\end{table}

\subsection{Hyperparameters Evaluation}\label{app:hyperparameter}

This section presents an extension to our experimental setup, where we increase the model complexity by varying the model hyperparameters. We use the US Income dataset for the population of the CA14 district. 
We consider three models for $f_{\theta}$: Decision Trees, Gradient Boosting, and Random Forest. For the Decision Tree models, we vary the depth of the tree, while for the Gradient Boosting and Random Forest models, we vary the number of estimators. Shapley values are calculated by means of the TreeExplainer algorithm \citep{DBLP:journals/natmi/LundbergECDPNKH20}. For theEqual Treatmentinspector $g_{\psi}$, we consider logistic regession, and XGB.

Figure \ref{fig:xai.hyper} shows that less complex models, such as Decision Trees with maximum depth 1 or 2, are also less unfair. However, as we increase the model complexity, the unequal treatment of the model becomes more pronounced, achieving a plateau when the model has enough complexity. Furthermore, when we compare the results for different Equal Treatment inspectors, we observe minimal differences (note that the y-axis takes different ranges).

\begin{figure}[ht]
\centering
\includegraphics[width=.8\textwidth]{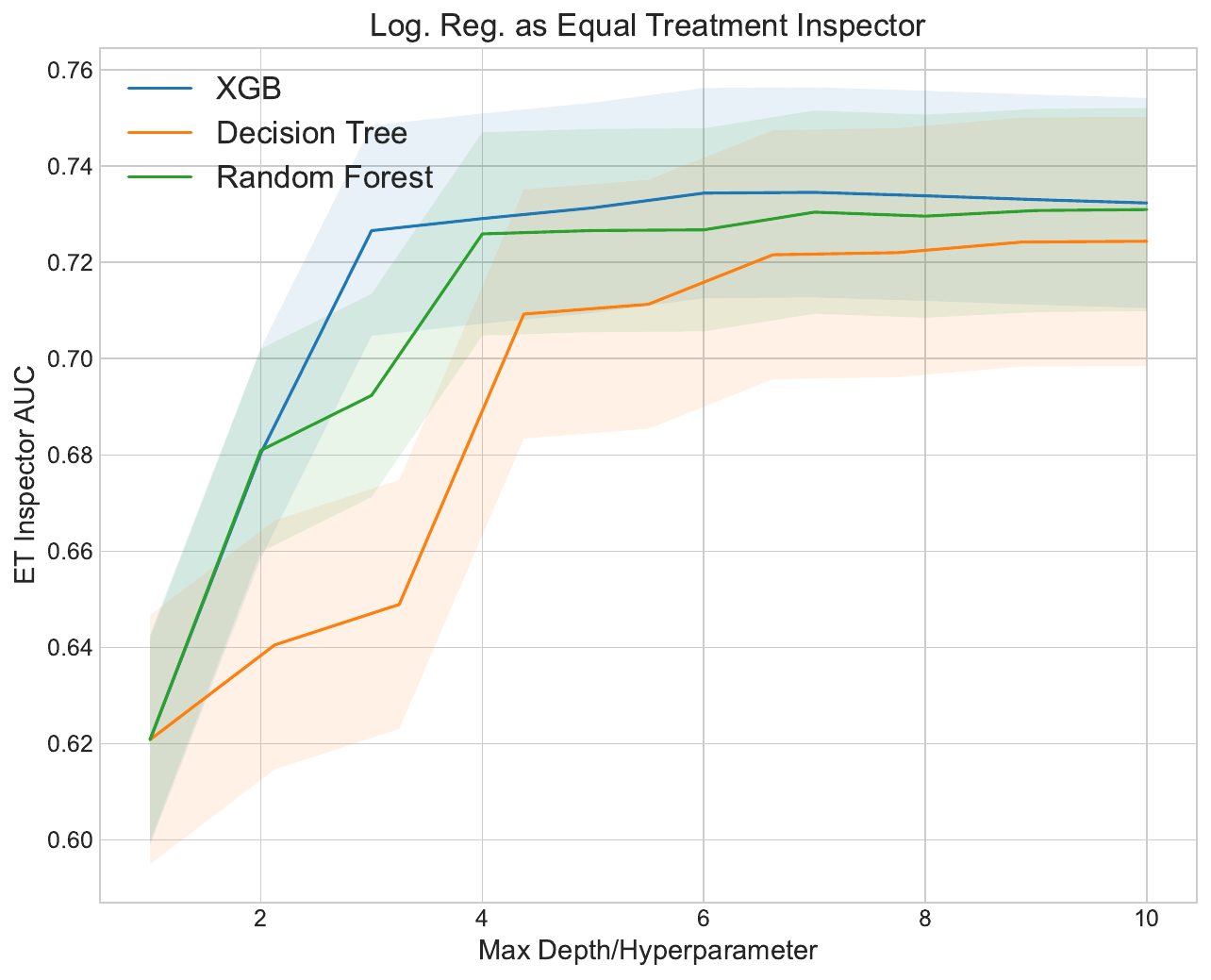}\hfill
\includegraphics[width=.8\textwidth]{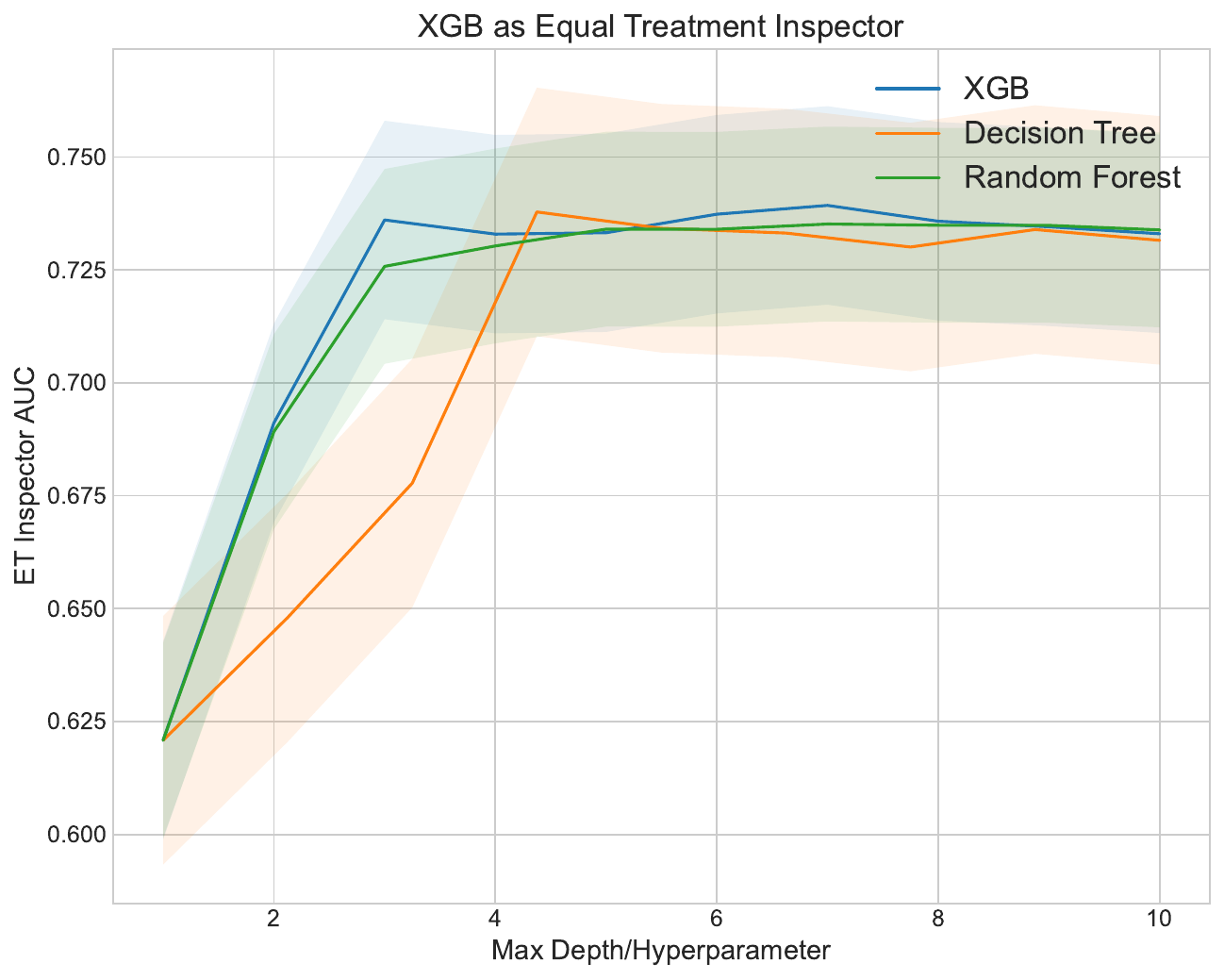}
\caption{AUC of the inspector for ET, over the district of CA14 for the US Income dataset. The model $f_\theta$ is a decision tree based algorithm, that we modify the maximum depth (x-axis), while the model $g_\psi$ varies in each of the images. The image above shows the results using a logistic regression detector, while the image below uses an XGBoost detector. By changing the model, we can see that simpler models (decision tree with depth 1 or 2) are less discriminative, while when increasing the model complexity, the unequal treatment of the model starts taking higher values.}
\label{fig:xai.hyper}
\end{figure}
\clearpage
\section{Related Work on Measuring Equal Treatment}

\subsection{Fairness and Auditing}
\enquote{Audits are tools for interrogating complex processes, often to determine whether they comply with company policy, industry standards or regulations}~\citep{DBLP:conf/fat/RajiSWMGHSTB20}. Algorithmic fairness audits are closely linked to audits studies as understood in the social sciences, with a strong emphasis on social justice~\citep{DBLP:conf/eaamo/VecchioneLB21}. A recent survey on public algorithmic audit identified four categories of \enquote{problematic machine behaviour} that can be unveiled by audit studies: discrimination, distortion, exploitation and misjudgement \citep{DBLP:journals/pacmhci/Bandy21,liu2012enterprise}. This taxonomy is highly helpful when putting forward auditing studies and also relevant for this work: our \enquote{Equal Treatment Inspector} can be seen as a tool to help understand discrimination in machine learning models, thus, falling into the first category.

Selecting a measure to compare fairness between two sensitive groups has been a highly discussed topic, where results such as~\citep{DBLP:journals/bigdata/Chouldechova17,DBLP:conf/nips/HardtPNS16,DBLP:conf/innovations/KleinbergMR17}, have highlighted the impossibility to satisfy simultaneously three type of fairness measures: demographic parity~\citep{DBLP:conf/innovations/DworkHPRZ12}, equalized odds~\citep{DBLP:conf/nips/HardtPNS16}, and predictive parity~\citep{DBLP:conf/kdd/Corbett-DaviesP17,DBLP:journals/corr/abs-2102-08453,wachter2020bias}. Previous work has relied on the notion of equal outcomes by measuring and calculating demographic and statistical parity on the model predictions \citep{kearns2018preventing}.
The arguments of ~\cite{DBLP:conf/aies/SimonsBW21} provide background and motivation towards interpretations of equal treatment as blindness and discuss its importance in practical policy applications.

\subsection{Explainability for Fair Supervised Learning}

The intersection of fairness and explainable AI has been an active topic in recent years. The most similar work is ~\citep{lundberg2020explaining} where they apply Shapley values to statistical parity.
Our work continues in-depth on this research line by formalizing the explanation distribution, deriving theoretical guarantees, and proposing novel methodology. Specifically, our approach allows for comparison across different protected groups. We also introduce the concept of accountability, which refers to the ability of the algorithm to provide insights into the sources of \emph{equal treatment} violation of the model. Additionally, we evaluate our method on multiple datasets and synthetic examples, demonstrating its effectiveness in identifying and mitigating equal treatment disparities in machine learning models.

A recent line of work assumes knowledge about causal relationships between random variables. For example,~\cite {DBLP:conf/fat/GrabowiczPM22} presents a post-processing method based on Shapley values aiming to detect and nullify the influence of a protected attribute on the output of the system. They assume known direct causal links from the data to the protected attribute and no measured confounders. Our work does not rely on causal graphs knowledge but exploits the Shapley values' theoretical properties to obtain fairness model auditing guarantees. 

~\cite{DBLP:conf/ssci/StevensDVV20} presents an approach to explaining fairness based on adapting the Shapley value function to explain model unfairness. They also introduce a new meta-algorithm  that considers the problem of learning an additive perturbation to an existing model in order to impose fairness. 
In our work, we do not adopt the Shapley value function. Instead, we use the theoretical Shapley properties to provide fairness auditing guarantees. Our \textit{Equal Treatment Inspector} is not perturbation-based but uses Shapley values to project the model to the explanation distribution, and then measures \emph{un-equal treatment}. It also allows us to pinpoint what are the specific features driving this violation.

In ~\cite{DBLP:conf/fat/GrabowiczPM22}, the authors present a post-processing method based on Shapley values aiming to detect and nullify the influence of a protected attribute on the output of the system. For this, they assumed there are direct causal links from the data to the protected attribute and that there are no measured confounders. Our work does not use causal graphs but exploits the theoretical properties of the Shapley values to obtain fairness model auditing guarantees.

Instead of using feature attribution explanation, other works have researched fairness using other explainability techniques such as counterfactual explanations \citep{DBLP:conf/nips/KusnerLRS17,DBLP:conf/cogmi/ManerbaG21,DBLP:conf/aies/MutluYG22}.
We don't focus on counterfactual explanations but on feature attribution methods that allow us to measure unequal feature contribution to the prediction. Further work can be envisioned by applying explainable AI techniques to the \enquote{Equal Treatment Inspector} or constructing the explanation distribution out of other techniques.

\subsection{Classifier Two-Sample Test}
The use of classifiers to obtain statistical tests or measure independence between two distributions has been previously explored in the literature \citep{DBLP:conf/iclr/Lopez-PazO17}. Specifically, the first authors introduced the use of \enquote{Classifier Two-Sample Tests (C2ST)} to represent the data that returns an interpretable unit test statistic, allowing it to measure how two distributions differ from each other. Their approach establishes the main theoretical properties, compares performance against state-of-the-art methods, and outlines applications of C2ST. 
Similar work of~\cite{DBLP:conf/icml/LiuXL0GS20} propose a kernel-based to two-sample
tests classification, aiming to determine whether two samples are drawn from the same underlying distribution. 
Alike work has also been used in Kaggle competitions under the name of \enquote{Adversarial Validation}~\citep{kaggleAdversarial,howtowinKaggle}, a technique which aims to detect which features are distinct between train and leaderboard datasets to avoid possible leaderboard shakes. 
Our work builds on previous approaches by adopting their interpretable power test analysis and theoretical properties. Still, we focus on testing for fairness notions of equal treatment in the explanation distribution. As the test statistic for our approach, we use the Area Under the Curve (AUC) of the \enquote{Equal Treatment Inspector}.

Another related work in the literature is by \cite{DBLP:journals/corr/EdwardsS15}, who focuses on removing statistical parity from images by
using an adversary that tries to predict the relevant sensitive variable from the model representation and censoring the learning of the representation of the model and data on images and neural networks. While methods for images or text data are often developed specifically for neural networks and cannot be directly applied to traditional machine learning techniques, we focus on tabular data where techniques such as gradient boosting decision trees achieve state-of-the-art model performance \citep{DBLP:conf/nips/GrinsztajnOV22,DBLP:journals/corr/abs-2101-02118,BorisovNNtabular}. 
Furthermore, our model and data projection into the explanation distributions leverages Shapley value theory to provide fairness auditing guarantees. In this sense, our work can be viewed as an extension of their work, both in theoretical and practical applications.

\section{Comparison with Related Fairness Metrics}

In the main body of the paper, we have made an explicit comparison against demographic parity. In this section, we compare our proposed notion and model for equal treatment with respect to other metrics proposed (see Chapter~\ref{ch:foundations}).
\begin{table}[ht]
\tiny
\centering
\caption{Fairness metrics alignment with \emph{Equal Treatment} criteria. Requirements are derived from a use case explained in Section~\ref{examples:use.case2}, and metrics are further discussed and defined in Section~\ref{sec:foundations.fairness}}
\begin{tabular}{c|c|c|c|c|c}
                             \textbf{Metric} & \begin{tabular}[c]{@{}c@{}}\textbf{Group} \\ \textbf{Discrimination} \\\emph{(R1)}\end{tabular} &\begin{tabular}[c]{@{}c@{}}\textbf{Unlabelled} \\ \textbf{Data} \\\emph{(R2)}\end{tabular} & \begin{tabular}[c]{@{}c@{}}\textbf{No Background}\\\textbf{Knowledge}\\ \emph{(R3)}\end{tabular} & \begin{tabular}[c]{@{}c@{}}\textbf{Proxy} \\ \textbf{Discrimination} \\ \emph{(R4)}\end{tabular} & \begin{tabular}[c]{@{}c@{}}\textbf{Explanation}\\ \textbf{Capabilities} \\ \emph{(R5)}\end{tabular}   \\ \hline
Equal Treatment              & $\tikzcmark$& $\tikzcmark$                                                    & $\tikzcmark$                                                       & $\tikzcmark$             & $\tikzcmark$ \\
Demographic Parity           & $\tikzcmark$& $\tikzcmark$                                                    & $\tikzcmark$                                                       & $\tikzxmark$             & $\tikzxmark$ \\
Equal Opportunity            & $\tikzcmark$& $\tikzxmark$                                                    & $\tikzcmark$                                                       & $\tikzxmark$             & $\tikzxmark$ \\
Treatment Equality           & $\tikzcmark$& $\tikzxmark$                                                    & $\tikzcmark$                                                       & $\tikzxmark$             & $\tikzxmark$ \\
Feature Importance Disparity& $\tikzcmark$& $\tikzcmark$                                                    & $\tikzcmark$                                                       & $\tikzxmark$             & $\tikzcmark$\\
Counterfactual Fairness               & $\tikzcmark$& $\tikzcmark$                                                    & $\tikzxmark$                                                       & $\tikzcmark$             & $\tikzcmark$\\
Fairness-Unawareness               & $\tikzcmark$& $\tikzcmark$                                                    & $\tikzcmark$                                                       & $\tikzxmark$             & $\tikzxmark$\\
Individual Fairness               & $\tikzxmark$& $\tikzcmark$                                                    & $\tikzcmark$                                                       & $\tikzcmark$             & $\tikzcmark$
\end{tabular}\label{tab:metrics}
\end{table}

\

\subsection{Equal Opportunity/Equalized Odds}

\begin{gather}
\text{TPR} = \frac{TP}{TP + FN}\\
\text{EOF}_{z}= \text{TPR}_z - \text{TPR}
\end{gather}

A negative value in Equal Opportunity is due to the worse ability of a ML model to find actual True Positives for the protected group in comparison with the reference group. The reliance on true positive rates in formulating Equal Opportunity presents an additional challenge involving the calculation of False Negatives. In the context of the loan scenario, a False Negative occurs when a loan is denied to someone capable of repayment, but acquiring this data may not be feasible. An alternative approach involves maintaining a holdout set of randomly selected users to whom loans are uniformly granted, allowing for monitoring in that controlled environment.

Nevertheless, the implementation of a holdout set comes with potential economic and societal implications. Careful consideration is needed, as this approach may impact both financial outcomes and broader societal dynamics.

A similar logic can be applied to the other  Equalized Odds, that also relies on having access to labeled data,hence it does not meet \emph{(R2)}.

From the AI Alignment perspective, the alignment of both metrics—Equal Opportunity and Equalized Odds—with egalitarian ideals is evident, as they prioritize access to good outcomes through error ratios. These metrics also demonstrate a parallel alignment with utilitarian principles by focusing on minimising disparate errors, emphasizing a utilitarian approach to optimizing overall societal welfare.

In contrast, our conceptualization of equal treatment diverges in its alignment, finding resonance with distributive justice principles rooted in liberal ideology and deontological ethical considerations. This divergence reflects multiple perspectives and values and the nature of ethical foundations and justice paradigms. While Equal Opportunity and Equalized Odds lean towards utilitarian and egalitarian perspectives, our equal treatment notion is grounded in a more liberal-oriented distributive justice framework, emphasizing fairness and individual rights.

This diversity in alignment underscores, as expressed in the AI alignment foundation in Chapter~\ref{ch:foundations} the multiple values of society and the challenge of defining them mathematically.

\subsection{Treament Equality}

\begin{gather}
    \frac{FP_z}{FN_z}=\frac{FP}{FN}
\end{gather}

From a technical perspective the challenge relies on obtaining false positive and false negative data, specially the later one as we have discussed in the previous section,hence it does not meet \emph{(R2)}.  

Regarding AI alignment, the original authors do not specify the principle or value with which the notion should align, and this clarity is absent from their definition. An illustrative sentence is \enquote{\textit{men and women are being treated differently by the algorithm}}. However, this statement may not be entirely accurate, as the algorithm may treat men and women equally while exhibiting different error rates. For instance, a constant classifier that grants loans to everyone treats everyone equally. Still, if males and females have distinct patterns of loan repayment, the classifier will yield different errors, resulting in distinct $\frac{FP}{FN}$ values. 

Then this metric of \enquote{treament equality} is very distinct from our proposed notion due to not belonging to a clear distributive justice perspective and the technical implementation. Identically to equal opportunity, it's a utility measure; therefore, it is utilitarian rather than our notion that is based on principles rather than consequences, deontological.

\subsection{Demographic Parity}

Demographic parity, is not able to detect proxy discrimination \emph{(R3)} nor capable to account for the sources behind discrimination \emph{(R4)}, as we have discussed in the main body of the chapter.

Thus, our notion of equal treatment is an improvement with respect to demographic parity in the two dimensions that we have discussed in the foundation following \cite{DBLP:journals/mima/Gabriel20}: \emph{(i)} better AI alignment with philosophical notions and \emph{(ii)} translated as a stronger and more sensitive metric. 

\subsection{Individual Fairness}

We say that a model achieves individual fairness w.r.t. two individual samples if:

\begin{gather}
d (f(x),f(x')) \leq \mathcal{L} \cdot d(x,x') \\ \forall x, x' \in X
\end{gather}

From a philosophical perspective, individual fairness is closest to the liberal and deontological  point of view. However, it fails the requirements of liberalist arguments. Liberalism argues for meritocracy, i.e.\ disparate treatment may be considered fair if it is based on varying efforts or preferences of individuals but unfair if it is based on protected characteristics that individuals did not choose. E.g.\ it's fair to hire someone because of better grades, but it is unfair if these grades depend on ethnicity. 
The definition of individual fairness does not consider protected characteristics or proxies thereof failing research requirement \emph{(R1)} of group discrimination.. 
While individual fairness may be easily combined with blindness or adjusted definitions of similarities~\citep{DBLP:conf/aies/Fleisher21,DBLP:conf/iclr/YurochkinBS20}, the treatment of proxies to protected characteristics is hard to avoid, rendering the alignment of an AI with distributive justice values difficult.  
Therefore, individual fairness is not a popular metric --- unlike group fairness metrics~\citep{DBLP:conf/aies/Fleisher21}.

\subsection{Counterfactual Fairness}

Counterfactual fairness, as defined by~\cite{DBLP:conf/nips/KusnerLRS17}:

\begin{definition}\textit{Counterfactual Fairness}
We say that the model $f_\theta$  is counterfactually fair if under any context $X = x$ and $A = a$
\begin{gather}
    P(f_\theta(x_{Z=z}) = y | X = x, Z = z) = P(f_\theta(x_{Z=z'})  = y | X = x, Z = z)\\
    \forall y \in Y \quad \forall z \in Z \quad \forall x \in X
\end{gather}
\end{definition}

\citet{DBLP:conf/aaai/RosenblattW23} have shown that: (i) an algorithm that satisfies counterfactual fairness also satisfies demographic parity; and (ii) all algorithms that satisfy demographic parity can be modifed to satisfy counterfactual fairness.
These results conclude that counterfactual fairness is equivalent to demographic parity, therefore failing the same requirements \emph{(R4)}, proxy discrimination and \emph{(R5)} explanation capabilities.  Moreover, this fairness definition fails on our requirement \emph{(R3)}, of not needing background knowledge -- since counterfactuals can only be computed for a given causal graph.

\subsection{Fairness Through Unawareness}

This metric was initially proposed as a baseline by \cite{DBLP:conf/aaai/Grgic-HlacaZGW18}

\begin{definition}\textit{(Fairness Through Unawareness}. An algorithm is fair if any protected
attributes $Z$ are not explicitly used in decision-making.
\end{definition}

Any mapping $f: \mathbf{X} \rightarrow Y$ for which $Z \not\in \mathbf{X}$ satisfies such a definition. It has a clear shortcoming as features in $\mathbf{X}$ can contain discriminatory information, known as proxy features for discrimination.
Further, the research requirement of detecting \emph{(R4) \textit{Proxy discrimination}} is not met. One of the main contributions of our notion of equal treatment is that it captures statistical relations of all the features with the protected attribute. In our approach, the model does not consider the protected attribute to be present in the covariates $X$, therefore implying fairness through unawareness. Furthermore, in Section~\ref{sec:expSpaceIndependence} of the main body, we discuss the limitations of analyzing the Shapley values of the protected attribute. 

\subsubsection{Decomposition Method Specific of Demographic Parity.}\label{app:relatedwork.Lundberg}

We formally compare our approach to the prior work of \citet{lundberg2020explaining} and the related SHAP Python package documentation. The authors addressed DP using SHAP value estimation.  This brief workshop paper emphasizes the importance of \enquote{decomposing a fairness metric among each of a model’s inputs to reveal which input features may be driving any observed fairness disparities}.
In terms of statistical independence, the approach can be rephrased as decomposing $f_\theta(\mathbf{X}) \perp Z$ by examining $S(f_\theta,\mathbf{X})_i \perp Z$ individually for $i \in [1, p]$. 
Actually, the paper limits itself to consider only differences of means, namely testing for $E[\Ss(f_\theta,\mathbf{X})_i|Z=1] \neq E[\Ss(f_\theta,\mathbf{X})_i|Z=0]$. 
However, the method proposed by \citet{lundberg2020explaining} is not sufficient nor necessary to prove DP, as shown next.

\begin{lemma}\label{lemma:lundberg}
$f_\theta(\mathbf{X}) \perp Z$ is neither implied by nor it implies ($\Ss(f_\theta,\mathbf{X})_i \perp Z$ for $i \in [1, p]$).
\end{lemma}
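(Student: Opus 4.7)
The goal is to refute both directions of the implication by explicit counterexamples. For the ``$f_\theta(\mathbf{X})\perp Z$ does not imply $\Ss(f_\theta,\mathbf{X})_i\perp Z$ for all $i$'' direction, I would simply invoke Example~\ref{ex42} already in the paper: there $f(X_1,X_2)=A+B\perp Z$ because $A,B,Z$ are mutually independent, yet the conditional distribution of $\Ss(f,\mathbf{X})_1$ is $N(0,1)$ when $Z=0$ and $U(-1,1)$ when $Z=1$, so $\Ss(f,\mathbf{X})_1\not\perp Z$. This immediately refutes one direction.

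For the opposite direction, my plan is to construct a small pairwise-versus-joint independence example. Take $Z$ and $X_1$ independent and uniform on $\{-1,1\}$, and set $X_2=X_1\cdot Z$. A direct check shows that $X_2\sim\mathrm{Unif}\{-1,1\}$ with $X_2\perp Z$ (conditioning on either value of $Z$ only flips the sign of $X_1$, leaving the distribution unchanged), so both features are marginally independent of $Z$ even though $(X_1,X_2,Z)$ is not jointly independent. For the linear model $f_\beta(x_1,x_2)=x_1+x_2$, the interventional Shapley formula $\Ss_i(f_\beta,x)=\beta_i(x_i-E[X_i])$ yields $\Ss_1=X_1$ and $\Ss_2=X_2$, each independent of $Z$. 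However, $f_\beta(X_1,X_2)=X_1(1+Z)$, which equals $2X_1\in\{-2,2\}$ when $Z=1$ and equals $0$ when $Z=-1$, so $f_\beta(\mathbf{X})\not\perp Z$. Combining both examples proves the lemma.

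The main subtlety I expect to have to handle carefully is the choice of Shapley variant in the second construction. Because $X_1$ and $X_2$ are dependent, one cannot invoke Lemma~\ref{lemma:1} directly, and I should spell out that for the interventional variant the identity $\Ss_i(f_\beta,x)=\beta_i(x_i-E[X_i])$ holds for any linear model regardless of feature dependence (out-of-coalition features are marginalized under their unconditional distribution). The observational variant would require a separate calculation, but the lemma is a non-implication statement, so exhibiting the failure for the interventional variant (the default in SHAP) is sufficient. The conceptual takeaway worth emphasizing in the write-up is that componentwise independence is strictly weaker than joint independence, and the efficiency property $\sum_i \Ss_i(f_\theta,x)=f_\theta(x)-E[f_\theta(\mathbf{X})]$ translates \emph{joint} independence of the Shapley vector, not its marginal independences, into independence of the prediction, exposing exactly the gap that invalidates the decomposition approach of~\cite{lundberg2020explaining}.
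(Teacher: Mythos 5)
Your proof is correct and follows essentially the same route as the paper: Example~\ref{ex42} disposes of one direction, and a parity-type construction (pairwise independence of $(X_1,X_2,Z)$ without joint independence) disposes of the other, exactly mirroring the paper's own counterexample $f_\theta(\mathbf{X})=\mathbf{X}_1-\mathbf{X}_2$ with $Z$ the indicator of $\mathbf{X}_1=\mathbf{X}_2$. One small correction to your closing commentary: in your construction $X_1$ and $X_2$ are in fact independent of each other (since $X_2=X_1Z$ with $Z\perp X_1$ both uniform on $\{-1,1\}$, the triple is pairwise independent), so the linear-model identity $\Ss_i(f_\beta,x)=\beta_i(x_i-E[X_i])$ holds for both the interventional and the observational variants and no caveat about the choice of Shapley variant is actually needed.
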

\begin{proof}
Consider $f_\theta(\mathbf{X}) = \mathbf{X}_1 - \mathbf{X}_2$ with $\mathbf{X}_1,\mathbf{X}_2 \sim \texttt{Ber}(0.5)$ and $Z=1$ if $\mathbf{X}_1=\mathbf{X}_2$, and $Z=0$ otherwise. Hence $Z\sim \texttt{Ber}(0.5)$. We have $\Ss(f_\theta,\mathbf{X}_1) = \mathbf{X}_1 \perp Z$ and $\Ss(f,\mathbf{X}_2) = -\mathbf{X}_2 \perp Z$. However, $f_\theta(\mathbf{X}) \not\perp Z$, e.g., $P(Z=0|f_\theta(\mathbf{X})=0) = 1 \neq 0.5$. Example \ref{ex42} illustrates a case where $f_\theta(\mathbf{X}) \perp Z$ yet $\Ss(f_\theta, \mathbf{X})_1$ and $\Ss(f_\theta,\mathbf{X})_2$ are not independent of $Z$. 
\end{proof}

Our approach to ET considers the independence of the \textit{multivariate} distribution of $\Ss(f,\mathbf{X})$ with respect to $Z$, rather than the independence of each marginal distribution $\Ss(f_\theta, \mathbf{X})_i \perp Z$. With our definition, we obtain a sufficient condition for DP, as shown in Lemma \ref{lemma:inc}.

\citet{DBLP:journals/corr/abs-2303-01704} follows a similar approach but from the perspective of discovering which subgroup $z \in Z$ exhibits the largest importance disparity relative to a feature $\mathbf{X}_i$. 

\begin{definition}\textit{(Feature Importance Disparity)}
Assume $Z$ is discrete, not necessarily binary. The feature importance disparity of $z \in Z$ relatively to a feature $\mathbf{X}_i$ is defined as:
\begin{equation*}
    \texttt{FID}(z, i) = |E[\Ss(f_\theta,\mathbf{X})_i|Z=z] - E[\Ss(f_\theta,\mathbf{X})_i]|
\end{equation*}  
\end{definition}

From an alignment perspective, the authors of this method don't clarify which equality philosophical criteria the proposed metric measures. Moreover, assuming DP as the reference notion, the method shares the same pitfalls illustrated by the lemma above for \citet{lundberg2020explaining}.
From the requirements perspective,  the definition does not meet \emph{(R4)} proxy discrimination detection due to using the expected value to measure distribution differences.

\clearpage
\section{Discussion}
\textbf{Alignment of AI Methodology with Equal Treatment Specifications}

The methodology's alignment with equal treatment is  structured to comply with the stipulated requirements from the identified Table~\ref{tab:requirements}.  Each aspect of our method addresses mandates from this table, ensuring the alignment of our AI tools with the foundational principles of liberal political philosophy and deontological ethics.

\begin{itemize}
\item \textbf{ET-01 Unlabeled Data:} Our methodology analyses the model behaviour disparities without reliance on labelled data, thereby resonating with both the deontological and liberal perspectives of assessing fairness in the decisions over errors. 

\item \textbf{ET-02 Group Discrimination:} Our metric aligns with ethical standards and societal expectations by emphasizing equal treatment and addressing discrimination based on characteristics non-chosen at birth.

\item \textbf{ET-03 Measuring Proxy Discrimination:} Our analytical tools are designed to identify and address proxy discrimination, where seemingly neutral features could indirectly generate bias against protected groups. 

\item \textbf{ET-04 Explanation Capability:} Through the implementation of explanation capabilities, our metric offers transparency regarding how decisions are made, providing theoretical and empirical insights into the factors driving any detected biases, hence empowering end-users with the knowledge to understand and rectify potential discrimination.

\item \textbf{ET-05 Documentation:} Our software and documentation are Python-based hosted in \url{https://explanationspace.readthedocs.io/}. This aligns with the requirement for accessibility and clarity. Documentation supports the community of developers and researchers, facilitating understanding, deployment, and further innovation.

\item \textbf{ET-06 Reproducibility:} Our metric is implemented in an open source python package \texttt{explanationspace} fosters the reproducibility of fairness evaluations, enabling comparisons across different models or iterations thereof. 

\item \textbf{ET-07 Evaluation:} Providing extensive tutorials and feedback mechanisms by \texttt{GitHub} in \url{https://github.com/cmougan/explanationspace} enriches the user experience, promoting active community participation. 
\end{itemize}

\section{Chapter Conclusions}

In this chapter of the thesis, we have introduced a novel approach for fairness in machine learning by measuring \emph{equal treatment}. While related work
reasoned over model predictions to measure  \emph{equal outcomes}, our notion of {equal treatment} is more fine-grained, accounting for the usage of attributes by the model via explanation distributions. Consequently, {equal treatment} implies {equal outcomes}, but the converse is not necessarily true, which we confirmed both theoretically and experimentally.

This paper also seeks to improve the understanding of how theoretical concepts of fairness from liberalism-oriented political philosophy and moral deontology aligns with technical measurements. Rather than merely comparing one social group to another based on disparities within decision distributions, our concept of equal treatment takes into account differences through the explanation distribution of all non-protected attributes, which often act as proxies for protected characteristics. Implications warrant further techno-philosophical discussions. Implications warrant further techno-philosophical discussions.


\chapter{How Did Distribution Shift Impact the Model?}\label{ch:exp.shift}

Machine Learning (ML) theory provides means to forecast the quality of ML models on unseen data, provided that this data is sampled from the same distribution as the data used to train and evaluate the model. If unseen data is sampled from a different distribution than the training data, model quality may deteriorate, making monitoring how the model's behavior changes crucial.

Recent research has highlighted the impossibility of reliably estimating the performance of machine learning models on unseen data sampled from a different distribution in the absence of further assumptions about the nature of the shift~\cite{DBLP:journals/jmlr/Ben-DavidLLP10,DBLP:conf/icml/LiptonWS18,garg2022leveraging}. State-of-the-art techniques attempt to model statistical distances between the distributions of the training and unseen data~\cite{continual_learning,clouderaff} or the distributions of the model predictions~\cite{garg2022leveraging,garg2021ratt,lu2023predicting}. However, these measures of \emph{distribution shifts} only partially relate to changes of interaction between new data and trained models or they rely on the availability of a causal graph or types of shift assumptions, which limits their applicability. Thus, it is often necessary to go beyond detecting such changes and understand how the feature attribution changes~\citep{DBLP:conf/kdd/KenthapadiLNS22, haug2022change,mougan2022monitoring,continual_learning}.

The field of explainable AI has emerged as a way to understand model decisions ~\cite{xai_concepts,molnar2019} and interpret the inner workings of ML models~\cite{guidotti_survey}. The core idea of this paper is to go beyond the modeling of distribution shifts and monitor for \emph{explanation shifts} to signal a change of interactions between learned models and dataset features in tabular data. We newly define explanation shift as the statistical comparison between how predictions from training data are explained and how predictions on new data are explained. In summary, our contributions are:
\begin{itemize}

    \item We propose measures of explanation shifts as a key indicator for investigating the interaction between distribution shifts and learned models.

    \item We define an \textit{Explanation Shift Detector} that operates on the explanation distributions allowing for more sensitive and explainable changes of interactions between distribution shifts and learned models.
    
    \item We compare our monitoring method that is based on explanation shifts with methods that are based on other kinds of distribution shifts. We find that monitoring for explanation shifts results in  more sensitive indicators for varying model behavior.

    \item We release an open-source Python package \texttt{skshift}, which implements our \enquote{\textit{Explanation Shift Detector}}, along usage tutorials for reproducibility.
    
\end{itemize}

\section{A Model for Explanation Shift Detection}
\label{sec:Detector}

Our model for explanation shift detection is sketched in Fig.~\ref{fig:workflow}. We define it step-by-step as follows:

\begin{definition}[Explanation distribution]
An explanation function $\mathcal{S}:F\times \mbox{dom}(X)\to \mathbb{R}^p$ maps a model $f_\theta$  and data $x\in \mathbb{R}^p$ to a vector of attributions $\mathcal{S}(f_\theta, x)\in \mathbb{R}^p$. We call $\Ss(f_\theta, x)$ an explanation.
We write $\Ss(f_\theta, \mathcal{D})$ to refer to the empirical \emph{explanation distribution} generated by $\{\Ss(f_\theta, x) | x\in \mathcal{D}\}$. \label{def:explanationdistribution}
\end{definition}

We use local feature attribution methods SHAP and LIME as explanation functions $\Ss$.
\begin{definition}[Explanation shift]
Given a model $f_\theta$ learned from  $\Dd{tr}$, explanation shift with respect to the model $f_\theta$ occurs if $\Ss(f_\theta,\Dd{new}_X) \nsim \Ss(f_\theta,\Dd{tr}_X)$.
\end{definition}
\begin{definition}[Explanation shift metrics]
Given a measure of statistical distances $d$,
 {explanation shift} is measured as the distance between two explanations of the model $f_\theta$ 
by  $d(\mathcal{S}(f_\theta, \Dd{tr}_X),\mathcal{S}(f_\theta, \Dd{new}_X))$.
\end{definition}




We follow Lopez et al.~\cite{DBLP:conf/iclr/Lopez-PazO17} to define an explanation shift metrics based on a two-sample test classifier. We proceed as depicted in Figure~\ref{fig:workflow}.
To counter overfitting, given the model $f_\theta$  trained on $\D^{\text{tr}}$, we compute explanations $\{\Ss(f_\theta,x)| x \in \D_X^{\text{val}}\}$ on an in-distribution validation data set  $\D_X^{\text{val}}$. 
Given a dataset $\D_X^{new}$, for which the status of in- or out-of-distribution is unknown, we compute its explanations $\{\Ss(f_\theta,x)| x\in \D_X^{\text{new}}\}$.
Then, we construct a two-samples dataset $E=\{(S(f_\theta,x),a_x)|x\in \D_X^{\text{val}},a_x=0\}\cup \{(S(f_\theta,x),a_x)|x\in \D_X^{\text{new}},a_x=1\}$ and  we train a discrimination model $g_\psi:R^p \rightarrow\{0,1\}$  on  $E$, to predict if an explanation should be classified as in-distribution (ID) or out-of-distribution (OOD):  

\begin{gather}\label{eq:explanationShift}
\psi = \argmin_{\tilde{\psi}} \sum_{x\in \D_X^{\text{val}}\cup \D_X^{\text{new}}} \ell( g_{\tilde{\psi}}(\Ss(f_\theta,x)) , 
a_x ),
\end{gather}

where $\ell$ is a classification loss function (e.g.\ cross-entropy). 
$g_\psi$ is our two-sample test classifier, based on which AUC yields a test statistic that measures the distance between the $D^{tr}_X$ explanations and the explanations of new data $D^{new}_X$.

Explanation shift detection allows us to detect \emph{that} a novel dataset $D^{new}$ changes the model's behavior. Beyond recognizing explanation shift, using feature attributions for the model $g_\psi$, we can interpret \emph{how} the features of the novel dataset $D^{new}_X$ interact differently with model $f_\theta$ than the features of the validation dataset $D^{val}_X$. These features are to be considered for model monitoring and for classifying new data as out-of-distribution.



\begin{figure}[ht]
    \centering
    \includegraphics[width=1\columnwidth]{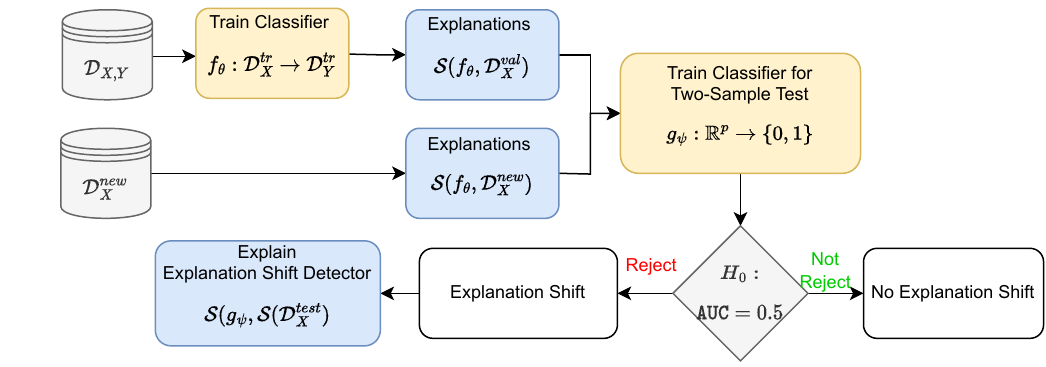}
    \caption{{Our model for explanation shift detection}. The model $f_\theta$ is trained on $\Dd{tr}$ implying explanations for distributions $\Dd{val}_X, \Dd{new}_X$. The AUC of the two-sample test classifier $g_\psi$ decides for or against explanation shift. If an explanation shift occurred, it could be explained which features of the $\Dd{new}_X$ deviated in $f_\theta$ compared to $\Dd{val}_X$.} 
    \label{fig:workflow}
\end{figure}

\section{Relationships between Common Distribution Shifts and Explanation Shifts}\label{sec:math.analysis}
This section analyses and compares data shifts, prediction shifts, with explanation shifts.

\subsection{Explanation Shift vs Data Shift}\label{subsec:explanationShiftMethods}

\subsubsection{Covariate Shift}
One type of distribution shift that is challenging to detect comprises cases where the univariate distributions for each feature $j$ are equal between the source $\Dd{tr}_X$ and the unseen dataset $\Dd{new}_X$, but where interdependencies among different features change.  Multi-covariance statistical testing is a hard taks with high sensitivity that can lead to false positives. The following example demonstrates that Shapley values account for co-variate interaction changes while a univariate statistical test will provide false negatives. \\

\begin{example}
\textit{(\textbf{Covariate Shift})\label{ex:covariate}
Let $D^{tr} \sim   N\left(\begin{bmatrix}\mu_{1}  \\ \mu_{2} \end{bmatrix},\begin{bmatrix}\sigma^2_{X_1} & 0 \\0 & \sigma^2_{X_2} \end{bmatrix}\right) \times Y$.
We fit a linear model $f_\theta(x_1,x_2) = \gamma + a\cdot x_1 + b \cdot x_2.$  If 
$\D_X^{new} \sim N\left(\begin{bmatrix}\mu_{1}  \\ \mu_{2} \end{bmatrix},\begin{bmatrix} \sigma^2_{X_1} & \rho\sigma_{X_1}\sigma_{X_2}  \\ \rho\sigma_{X_1}\sigma_{X_2} & \sigma^2_{X_2}\end{bmatrix}\right)$, then 
$\PP(\D_{X_1}^{tr})$ and $\PP(\D_{X_2}^{tr})$ are identically distributed with $\PP(\D_{X_1}^{new})$ and $\PP(\D_{X_2}^{new})$, respectively, while this does not hold for the corresponding $\Ss_j(f_\theta,\D_X^{tr})$ and $\Ss_j(f_\theta,\D_X^{new})$. 
}
\end{example}

\begin{gather}
\Ss_1(f_\theta,x) = a(x_1 - \mu_1)\\
\Ss_1(f_\theta,x^{new}) =\\
=\frac{1}{2}[\mathrm{val}(\{1,2\}) - \mathrm{val}(\{2\})] + \frac{1}{2}[\mathrm{val}(\{1\}) - \mathrm{val}(\emptyset)] \\
\mathrm{val}(\{1,2\}) = E[f_\theta|X_1=x_1, X_2=x_2] = a x_1 + b x_2\\
\mathrm{val}(\emptyset) = E[f_\theta]= a \mu_1 + b  \mu_2 \\
\mathrm{val}(\{1\}) = E[f_\theta(x) | X_1 = x_1] +b\mu_2 \\
\mathrm{val}(\{1\}) = \mu_1 +\rho \frac{\rho_{x_1}}{\sigma_{x_2}}(x_1-\sigma_1)+b \mu_2\\
\mathrm{val}(\{2\}) = \mu_2 +\rho \frac{\sigma_{x_2}}{\sigma_{x_1}}(x_2-\mu_2)+a\mu_1 \\
\Rightarrow \Ss_1(f_\theta,x^{new})\neq a(x_1 - \mu_1)
\end{gather}

\subsubsection{Uninformative Features}

False positives frequently occur in out-of-distribution data detection when a statistical test recognizes differences between a source distribution and a new distribution, thought the differences do not affect the model behavior~\cite{grinsztajn2022why,DesignMLSystems}. Shapley values satisfy the \emph{Uninformativeness} property, where a feature $j$ that does not change the predicted value  has a Shapley value of $0$ (Property~\ref{eq:SHAP-uninformativeness}).

\begin{example}
\textbf{\textit{Unused features}}\textit{
Let $\Dd{tr}=\{(x_0^{tr},y_0^{tr})\ldots, (x_n^{tr},y_n^{tr})\}$  
with predictors $\D_{X}^{tr} \sim X_1 \times X_2 \times X_3  = N(\mu ,\mathrm{diag}(c))$, and create a synthetic target $y_i=a_0 + a_1 \cdot x_{i,1} + a_2 \cdot x_{i,2} + \epsilon$ that is independent of $X_3$. As new data we have $\D_{X}^{new}\sim X_1^{new} \times X_2^{new} \times X_3^{new} = N(\mu,\mathrm{diag}(c'))$, where $\mathrm{c}$ and $\mathrm{c'}$ are an identity matrix of order three and $\mu = (\mu_1,\mu_2,\mu_3)$. We train a linear regression $f_\theta$ on $\D_{X,Y}^{tr}$, with coefficients $a_0,a_1,a_2,a_3$. Then if $\mu_3'\neq \mu_3$ or $c_3' \neq c_3$, then $\PP(\D_{X_3})$ can be different from $\PP(\D_{X_3}^{new})$ but $\Ss_3(f_\theta, \D_X^{tr}) = \Ss_3(f_\theta,\D_X^{new})$}
\end{example}
\begin{gather}
\D_{X_3}\sim N(\mu_3,c_3),\D_{X_3}^{new} \sim N(\mu_3^{'}, c_3^{'})\\
\mathrm{If} \quad  \mu_3^{'}\neq \mu_3 \quad\mathrm{or} \quad c_3^{'}\neq c_3 \rightarrow P(X_3)\neq P(X_3^{new})\\
\Ss(f_\theta,X) = \left(\begin{bmatrix} a_1(X_1 - \mu_1)  \\a_2(X_2 - \mu_2)  \\a_3(X_3 - \mu_3)   \end{bmatrix} \right) = \left(\begin{bmatrix} a_1(X_1 - \mu_1)  \\a_2(X_2 - \mu_2)  \\0   \end{bmatrix} \right)\\
\Ss_3(f_\theta,\D_X)= \Ss_3(f_\theta, \D_X^{new})
\end{gather}

\subsection{Explanation Shift vs Prediction Shift}\label{sec:exp.vs.pred}

Analyses of the explanations  detect distribution shifts that interact with the model. In particular, if  a prediction shift occurs, the explanations produced are also shifted. 
\begin{prop}
    Given a model $f_\theta:\D_X \to \D_Y$. If $f_\theta(x')\neq f_\theta(x)$, then $\Ss(f_\theta,x') \neq \Ss(f_\theta,x)$.
\end{prop}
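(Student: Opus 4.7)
The plan is to prove the contrapositive: if $\Ss(f_\theta, x') = \Ss(f_\theta, x)$, then $f_\theta(x') = f_\theta(x)$. This reduces the proposition to a direct application of the efficiency property of Shapley values, which was established in the foundations (Equation~\ref{eq:SHAP-efficiency}) as one of the defining theoretical properties of the explanation function $\Ss$ used throughout the chapter.

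The key step is to sum both sides of the assumed componentwise equality of explanation vectors. First I would assume $\Ss(f_\theta, x') = \Ss(f_\theta, x)$, which gives $\Ss_j(f_\theta, x') = \Ss_j(f_\theta, x)$ for every $j \in N$. Summing over the feature index $j$ and then invoking efficiency on each side yields
\[
f_\theta(x') - E[f_\theta(X)] \;=\; \sum_{j \in N} \Ss_j(f_\theta, x') \;=\; \sum_{j \in N} \Ss_j(f_\theta, x) \;=\; f_\theta(x) - E[f_\theta(X)].
\]
Since $E[f_\theta(X)]$ is a fixed constant independent of the input instance, it cancels from both sides, leaving $f_\theta(x') = f_\theta(x)$. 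Contraposing this implication gives exactly the stated proposition.

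There is no real obstacle beyond ensuring that the explanation function satisfies efficiency; the proof is essentially a one-line consequence of that property, and it is precisely why Section~\ref{sec:xai.foundations} emphasised efficiency as a requirement. I would close by noting that the converse does not hold (equal predictions need not imply equal Shapley vectors, since the decomposition into feature contributions can differ), which is what makes explanation shift a strictly finer monitoring signal than prediction shift and motivates the empirical comparison in Section~\ref{sec:exp.vs.pred}.
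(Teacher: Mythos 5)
Your proof is correct and matches the paper's own argument: the paper likewise derives the result as a one-line consequence of the efficiency property, observing that if the predictions differ then the sums of the Shapley components differ, so at least one component must differ. Your contrapositive phrasing is logically identical, and your closing remark about the converse failing is exactly the point the paper makes immediately afterwards with its explicit counterexample.
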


By efficiency property of the Shapley values~\cite{DBLP:journals/ai/AasJL21} (equation (\eqref{eq:SHAP-efficiency})), if  the prediction between two instances is different, then they differ in at least one component of their explanation vectors. 

The opposite direction does not always hold:
\begin{example}\textit{(\textbf{Explanation shift not affecting prediction distribution}) Given $\mathcal{D}^{tr}$ is generated from $(X_1 \times X_2 \times Y), X_1 \sim U(0,1), X_2 \sim U(1,2), Y = X_1+X_2+\epsilon$ and thus the optimal model is $f(x)=x_1+x_2$. If $\mathcal{D}^{new}$ is generated from $X_1^{new}\sim U(1,2), X_2^{new}\sim U(0,1),\quad Y^{new} = X_1^{new}+X_2^{new}+\epsilon$, the prediction distributions are identical $f_\theta(\mathcal{D}^{tr}_X),f_\theta(\mathcal{D}^{new}_X)\sim U(1,3)$, but explanation distributions are different $S(f_\theta,\mathcal{D}^{tr}_X)\nsim S(f_\theta,\mathcal{D}^{new}_X)$, because $\Ss_i(f_\theta,x) = \alpha_i \cdot x_i$.
}
\end{example}
Thus, an explanation shift does not always imply a prediction shift.

\subsection{Explanation Shift vs Concept Shift}\label{sec:conceptShift}
One of the most challenging types of distribution shift to detect are cases where distributions are equal between source and unseen data-set $\PP(\D_X^{tr}) = \PP(\D_X^{new})$ and the target variable  $\PP(\D_Y^{tr}) = \PP(\D_Y^{new})$ and what changes are the relationships that features have with the target $\PP(\D_Y^{tr}|\D_X^{tr}) \neq  \PP(\D_Y^{new}|\D_X^{new})$, this kind of distribution shift is also known as concept drift or posterior shift~\citep{DesignMLSystems} and is especially difficult to notice, as it requires labeled data to detect. The following example compares how the explanations change for two models fed with the same input data and different target relations.

Concept shift comprises cases where the covariates retain a given distribution, but their relationship with the target variable changes (cf. Section~\ref{subsec:typesShift}). This example shows the negative result that  concept shift cannot be indicated by the detection of explanation shift.

\begin{example}

\textbf{ \textit{Concept shift}}\textit{
Let $\D_X = (X_1,X_2) \sim N(\mu,I)$, and $\D_X^{new}= (X^{new}_1,X^{new}_2) \sim N(\mu,I)$, where $I$ is an identity matrix of order two and $\mu = (\mu_1,\mu_2)$. We now create two synthetic targets $Y=a + \alpha \cdot X_1 + \beta \cdot X_2 + \epsilon$ and $Y^{new}=a + \beta \cdot X_1 + \alpha \cdot X_2 + \epsilon$. Let $f_\theta$ be a linear regression model trained on $f_\theta:\D_X\rightarrow \D_Y)$ and $h_\phi$ another linear model trained on $h_\phi:\D_X^{new}\rightarrow \D_Y^{new})$. Then $\PP(f_\theta(X)) = \PP(h_\phi(X^{new}))$, $P(X) = \PP(X^{new})$ but $\Ss(f_\theta,X)\neq \Ss(h_\phi, X)$}. 
\end{example}
\begin{gather}
X  \sim N(\mu,\sigma^2\cdot I), X^{new}\sim N(\mu,\sigma^2\cdot I)\\
\rightarrow P(\D_X) = P(\D_X^{new})\\
Y \sim a + \alpha N(\mu, \sigma^2) + \beta N(\mu, \sigma^2) + N(0, \sigma^{'2})\\
Y^{new} \sim a + \beta N(\mu, \sigma^2) + \alpha N(\mu, \sigma^2) + N(0, \sigma^{'2})\\
\rightarrow P(\D_Y) = P(\D_Y^{new})\\
\Ss(f_\theta,\D_X) = \left( \begin{matrix}\alpha(X_1 - \mu_1)  \\\beta(X_2-\mu_2) \end{matrix}\right) \sim \left(\begin{matrix}N(\mu_1,\alpha^2 \sigma^2)  \\N(\mu_2,\beta^2 \sigma^2) \end{matrix}\right)\\
\Ss(h_\phi,\D_X) =  \left( \begin{matrix}\beta(X_1 - \mu_1)  \\\alpha(X_2-\mu_2) \end{matrix}\right)\sim \left(\begin{matrix}N(\mu_1,\beta^2 \sigma^2)  \\N(\mu_2,\alpha^2 \sigma^2) \end{matrix}\right)\\
\mathrm{If} \quad \alpha \neq \beta \rightarrow \Ss(f_\theta,\D_X)\neq \Ss(h_\phi,\D_X)
\end{gather}

In general, concept shift cannot be detected because $\D_Y^{new}$ is unknown \cite{garg2022leveraging}. Some research studies have made specific assumptions about the conditional  $\frac{P(\D^{new})}{P(\D^{new}_X)}$ in order to monitor models and detect distribution shift~\cite{lu2023predicting,alvarez2023domain}.

\subsection{Analytical Experiments on Synthetic Data with Non-Linear Models}\label{app:exp.synthetic}

This experimental section explores the detection of distribution shifts in the previous synthetic examples but using non linear models.

\subsubsection{Detecting multivariate shift}\label{sec:multivariate}

Given two bivariate normal distributions $\D_X = (X_1,X_2) \sim  N\left(0,\begin{bmatrix}1 & 0 \\0& 1 \end{bmatrix}\right)$ and $\D_X^{new} = (X^{new}_1,X^{new}_2) \sim  N \left( 0,\begin{bmatrix}1 & 0.2 \\0.2 & 1 \end{bmatrix}\right)$, then, for each feature $j$ the underlying distribution is equally distributed between $\D_X$ and $\D_X^{new}$, $\forall j \in \{1,2\}: P(\D_{X_j}) = P(\D_{X_j}^{new})$, and what is different are the interaction terms between them. We now create a synthetic target $Y=X_1\cdot X_2 + \epsilon$ with $\epsilon \sim N(0,0.1)$ and fit a gradient boosting decision tree  $f_\theta(\D_X)$. Then we compute the SHAP explanation values for $\mathcal{S}(f_\theta,\D_X)$ and $\mathcal{S}(f_\theta,\D_X^{new})$

\begin{table}[ht]
\centering
\caption{Displayed results are the one-tailed p-values of the Kolmogorov-Smirnov test comparison between two underlying distributions. Small p-values indicate that compared distributions would be very unlikely  to be equally distributed. SHAP values correctly indicate the interaction changes that individual distribution comparisons cannot detect}\label{table:multivariate}

\begin{tabular}{c|cc}
Comparison                                 & \textbf{p-value} & \textbf{Conclusions} \\ \hline
$\PP(\D_{X_1})$, $\PP(\D_{X_1}^{new})$                        & 0.33                        & Not Distinct                         \\
$\PP(\D_{X_2})$, $\PP(\D_{X_2}^{new})$                        & 0.60                        & Not Distinct                          \\
$\Ss_1(f_\theta,\D_X)$, $\Ss_1(f_\theta,\D_X^{new})$ & $3.9\mathrm{e}{-153}$        & Distinct                              \\
$\Ss_2(f_\theta,\D_X)$, $\Ss_2(f_\theta,\D_X^{new})$ & $2.9\mathrm{e}{-148}$        & Distinct   
\end{tabular}
\end{table}

Having drawn $50,000$ samples from both $\D_X$ and $\D_X^{new}$, in Table~\ref{table:multivariate}, we evaluate whether changes in the input data distribution or in the explanations are able to detect changes in covariate distribution. For this, we compare the one-tailed p-values of the Kolmogorov-Smirnov test between the input data distribution and the explanations distribution.  Explanation shift correctly detects the multivariate distribution change that univariate statistical testing can not detect.

\subsubsection{Detecting concept shift}

As mentioned before, concept shift cannot be detected if new data comes without target labels. If new data is labelled, the explanation shift can still be a useful technique for detecting concept shifts.

Given a bivariate normal distribution  $\D_X = (X_1,X_2) \sim  N(1,I)$ where $I$ is an identity matrix of order two. We now create two synthetic targets $Y= X_1^2 \cdot X_2 + \epsilon$ and $Y^{new}=X_1 \cdot X_2^2 + \epsilon$ and fit two machine learning models $f_\theta:\D_X \rightarrow \D_Y)$ and $h_\Upsilon:\D_X \rightarrow \D_Y^{new})$. Now we compute the SHAP values for $\mathcal{S}(f_\theta,\D_X)$ and $\mathcal{S}(h_\Upsilon,\D_X)$
\begin{table}[ht]
\centering
\caption{Distribution comparison for synthetic concept shift. Displayed results are the one-tailed p-values of the Kolmogorov-Smirnov test comparison between two underlying distributions }\label{table:concept}
\begin{tabular}{c|c}
Comparison                                              & \textbf{Conclusions} \\ \hline
$\PP(\D_X)$, $\PP(\D_X^{new})$                                    & Not Distinct         \\
$\PP(\D_Y)$, $\PP(\D_Y^{new})$                                    & Not Distinct         \\
$\PP(f_\theta(\D_X))$, $\PP(h_\Upsilon(\D_X^{new}))$                  & Not Distinct         \\
$\PP(\Ss(f_\theta,\D_X)$), $\PP(\Ss(h_\Upsilon,\D_X))$                    & Distinct             \\
\end{tabular}
\end{table}

In Table~\ref{table:concept}, we see how the distribution shifts are not able to capture the change in the model behavior while the SHAP values are different. The \enquote{Distinct/Not distinct} conclusion is based on the one-tailed p-value of the Kolmogorov-Smirnov test with a $0.05$ threshold drawn out of $50,000$ samples for both distributions. As in the synthetic example, in table \ref{table:concept} SHAP values can detect a relational change between $\D_X$ and $\D_Y$, even if both distributions remain equivalent.

\subsubsection{Uninformative features on synthetic data}

To have an applied use case of the synthetic example from the methodology section, we create a three-variate normal distribution $\D_X = (X_1,X_2,X_3) \sim N(0,I_3)$, where $I_3$ is an identity matrix of order three. The target variable is generated  $Y=X_1\cdot X_2 + \epsilon$ being independent of $X_3$. For both, training and test data, $50,000$ samples are drawn. Then out-of-distribution data is created by shifting $X_3$, which is independent of the target, on test data $\D_{X_3}^{new}= \D_{X_3}^{te}+1$.

\begin{table}[ht]
\centering
\caption{Distribution comparison when modifying a random noise variable on test data. The input data shifts while explanations and predictions do not.}\label{table:unused}
\begin{tabular}{c|c}
Comparison                                              & \textbf{Conclusions} \\ \hline
$\PP(\D_{X_3}^{te})$, $\PP(\D_{X_3}^{new})$                                       & Distinct                \\
$f_\theta(\D_X^{te})$, $f_\theta(\D_X^{new})$                     & Not Distinct            \\
$\Ss(f_\theta,\D_X^{te})$, $\Ss(f_\theta,\D_X^{new})$                    & Not Distinct            \\
\end{tabular}
\end{table}

In Table~\ref{table:unused}, we see how an unused feature has changed the input distribution, but the explanation distributions and performance evaluation metrics remain the same. The \enquote{Distinct/Not Distinct} conclusion is based on the one-tailed p-value of the Kolmogorov-Smirnov test drawn out of $50,000$ samples for both distributions.

\subsubsection{Explanation shift that does not affect the prediction}

In this case we provide a situation when we have changes in the input data distributions that affect the model explanations but do not affect the model predictions due to positive and negative associations between the model predictions and the distributions cancel out producing a vanishing correlation in the mixture of the distribution (Yule's effect~\ref{sec:exp.vs.pred}).  

We create a train and test data by drawing $50,000$ samples from a bi-uniform distribution  $X_1 \sim U(0,1), \quad X_2 \sim U(1,2)$ the target variable is generated  by $Y = X_1+X_2$ where we train our model $f_\theta$. Then if out-of-distribution data is sampled from $X_1^{new}\sim U(1,2)$, $X_2^{new}\sim U(0,1)$

\begin{table}[ht]
\centering
\caption{Distribution comparison over how the change on the contributions of each feature can cancel out to produce an equal prediction (cf. Section \ref{sec:exp.vs.pred}), while explanation shift will detect this behaviour changes on the predictions will not.}\label{table:predShift}
\begin{tabular}{c|c}
Comparison                                              & \textbf{Conclusions} \\ \hline
$f(\D_X^{te})$, $f(\D_X^{new})$                                  & Not Distinct            \\
$\Ss(f_\theta,\D_{X_2}^{te})$, $\Ss(f_\theta,\D_{X_2}^{new})$                    & Distinct            \\
$\Ss(f_\theta,\D_{X_1}^{te})$, $\Ss(f_\theta,\D_{X_1}^{new})$                    & Distinct            \\
\end{tabular}
\end{table}

In Table~\ref{table:predShift}, we see how an unused feature has changed the input distribution, but the explanation distributions and performance evaluation metrics remain the same. The \enquote{Distinct/Not Distinct} conclusion is based on the one-tailed p-value of the Kolmogorov-Smirnov test drawn out of $50,000$ samples for both distributions.

\section{Empirical Evaluation}\label{sec:experiments}


We evaluate the effectiveness of explanation shift detection on tabular data by comparing it against methods from the literature, which are all based on discovering distribution shifts.
For this comparison, we systematically vary models $f$, model parametrizations $\theta$, and input data distributions $\D_X$.  We complement core experiments described in this section by adding further experimental results in the appendix that \emph{(i)} add details on experiments with synthetic data (Appendix~\ref{app:exp.synthetic}), \emph{(ii)} add experiments on further natural datasets (Appendix~\ref{app:real.data}), \emph{(iii)} exhibit a larger range of modeling choices (Appendix~\ref{app:modelingmethods}),and \emph{(iv)} contrast our SHAP-based method against the use of LIME, an alternative explanation approach (Appendix~\ref{app:LIME}). Core observations made in this section will only be confirmed and refined but not countered in the appendix.


\subsection{Baseline Methods and Datasets} 

\textbf{Baseline Methods.} We compare our method of explanation shift detection (Section~\ref{sec:Detector}) with several methods that aim to detect that input data is out-of-distribution: \emph{(B1)} statistical Kolmogorov Smirnov test on input data~\citep{DBLP:conf/nips/RabanserGL19}, \emph{(B2)} prediction shift detection by Wasserstein distance~\citep{lu2023predicting}, \emph{(B3)} NDCG-based test of feature importance between the two distributions~\citep{DBLP:conf/kdd/NigendaKZRTDK22}, \emph{(B4)} prediction shift detection by Kolmogorov-Smirnov test~\citep{continual_learning}, and \emph{(B5)} model agnostic uncertainty estimation~\citep{mougan2022monitoring,DBLP:conf/nips/KimXB20}. All 
Distribution Shift Metrics are scaled between 0 and 1. We also compare against Classifier Two-Sample Test~\citep{DBLP:conf/iclr/Lopez-PazO17} on different distributions as discussed in Section~\ref{sec:math.analysis}, viz.\   \emph{(B6)} classifier two-sample test on input distributions ($g_\phi$) following~\cite{barrabes2023adversarial} and \emph{(B7)} classifier two-sample test on the predictions distributions ($g_\Upsilon$): 
\begin{gather}
\phi = \argmin_{\tilde{\phi}} \sum_{x\in \D_X^{val}\cup \D_X^{new}} \ell( g_{\tilde{\phi}}(\textcolor{magenta}{x})), a_x )\\
\Upsilon = \argmin_{\tilde{\Upsilon}} \sum_{x\in \D_X^{val}\cup \D_X^{new}} \ell( g_{\tilde{\Upsilon}}(\textcolor{magenta}{f_\theta(x)}), a_x )
\end{gather}
\noindent\textbf{Datasets.} In the paper's main body, we base our comparisons on the UCI Adult Income dataset~\cite{Dua:2019} and on synthetic data. In the Appendix, we extend experiments to several other datasets, which confirm our findings: ACS Travel Time~\citep{ding2021retiring}, ACS Employment, Stackoverflow dataset~\citep{so2019}.



 
 
 

\subsection{Synthetic Data}\label{exp:synthetic}

Our first experiment on synthetic data showcases the two main contributions of our method: $(i)$ being more sensitive to changes in the model than prediction shift and input shift and $(ii)$ accounting for its drivers. We first generate a synthetic dataset $\D^{\rho}$, with a parametrized multivariate shift between $(X_1,X_2)$, where $\rho$ is the correlation coefficient, and an extra variable $X_3 = N(0,1)$ and generate our target $Y=X_1 \cdot X_2 + X_3$.
We train the $f_\theta$ on $\D^{tr,\rho=0}$ using a gradient boosting decision tree, while for $g_\psi : \Ss(f_\theta,\D_X^{val,\rho})\rightarrow \{0,1\},$ we train on different datasests with different values of $\rho$. For $g_\psi$ we use a logistic regression. In Appendix~\ref{app:modelingmethods}, we benchmark other models $f_\theta$ and detectors $g_\psi$.

\begin{figure*}[ht]
\centering
\includegraphics[width=.8\textwidth]{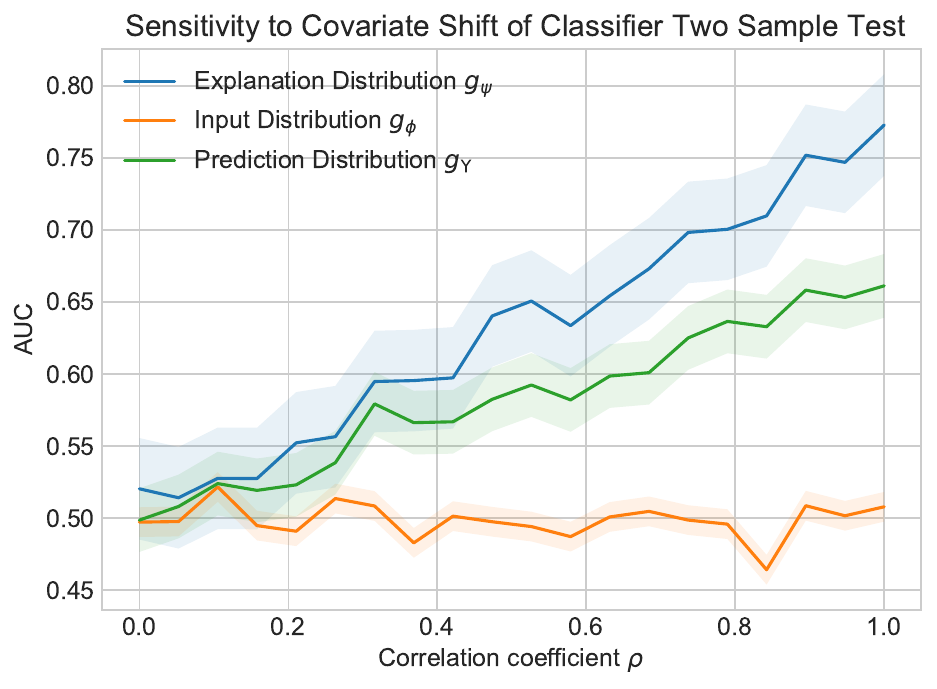}\hfill
\includegraphics[width=.8\textwidth]{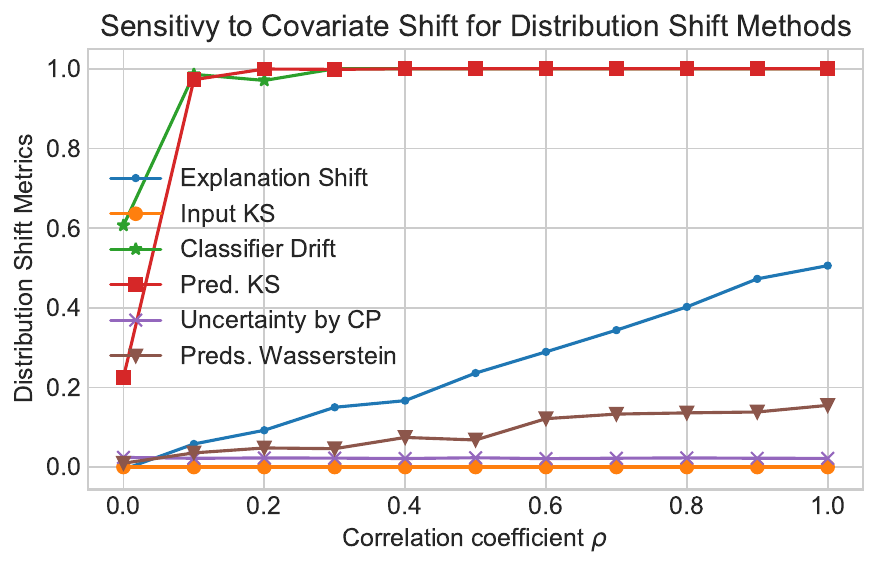}
\caption{In the figure above, we compare the Classifier Two-Sample Test on explanation distribution (ours) versus input distribution \emph{(B6)} and prediction distribution \emph{(B7)}. Explanation distribution shows the highest sensitivity. The figure below, related work comparison of distribution shift methods\emph{(B1-B5)}, as the experimental setup has a gradual distribution shift, good indicators should follow a progressive steady positive slope, following the correlation coefficient, as our method does. In Table~\ref{tab:corr.Synth} we provide a quantitative evaluation. }
\label{fig:sensitivity}
\end{figure*}

The above image in Figure \ref{fig:sensitivity} compares our approach against C2ST on input data distribution\emph{(B6)} and on the predictions distribution \emph{(B7)} different data distributions, for detecting multi-covariate shifts on different distributions. In our covariate experiment, we observed that using the explanation shift led to higher sensitivity towards detecting distribution shift. We interpret the results with the efficiency property of the Shapley values, which decomposes the vector $f_{\theta}(\D_X)$ into the matrix $\Ss(f_{\theta},\D_X)$. Moreover, we can identify the features that cause the drift by extracting the coefficients of $g_\psi$, providing global and local explainability.

The below image in Figure \ref{fig:sensitivity} features the same setup compared to the other out-of-distribution detection methods (B1-B5).  Table~\ref{tab:corr.Synth} quantitatively evaluates how the baselines correlate with the covariate correlation coefficient ($\rho$). One can see how our method behaves favourably compared to the others.

\begin{table}[ht]
    \centering
    \small
    \caption{Pearson Correlation of the correlation coefficient $\rho$ and baseline methods, extending Figure~\ref{fig:sensitivity}. Explanation Shift achieves better covariate shift detection on synthetic data.}

    \begin{tabular}{l c}
        \hline
        \textbf{Baseline} & \textbf{Pearson Correlation with $\rho$} \\
        \hline
        B1 Input KS & 0.01 \\
        B2 Prediction Wasserstein & 0.97 \\
        B3 Explanation NDCG & 0.52 \\
        B4 Prediction KS & 0.70 \\
        B5 Uncertainty & 0.26 \\
        B6 C2ST Input & 0.18 \\
        B7 C2ST Output & 0.96 \\
        (Ours) Explanation Shift & \textbf{0.99} \\
        \hline
    \end{tabular}
    \label{tab:corr.Synth}
\end{table}

\subsection{Novel Group Shift}\label{sec:sub:novel.covariate}

The distribution shift in this experimental setup is constituted by the appearance of a hitherto unseen group at prediction time (the group information is not present in the training features). We vary the ratio of presence of this unseen group in $\D_X^{new}$ data. The experiment is done with two $f_\theta$ models: a gradient-boosting decision tree and a logistic regression; for $g_\psi$, we use a logistic regression. Results are presented in Figure~\ref{fig:nove.group}, Figure~\ref{fig:shift.ndcg} and Table~\ref{tab:novel.group}.  Confidence intervals are extracted out of 10 bootstraps. Furthermore, we compare the performance of different algorithms for $f_\theta$ and $g_\psi$ in Appendix~\ref{app:varying}, and varying hyperparameters in Appendix~\ref{fig:shift.ndcg}.
\begin{figure*}[ht!]
\centering
\includegraphics[width=.6\textwidth]{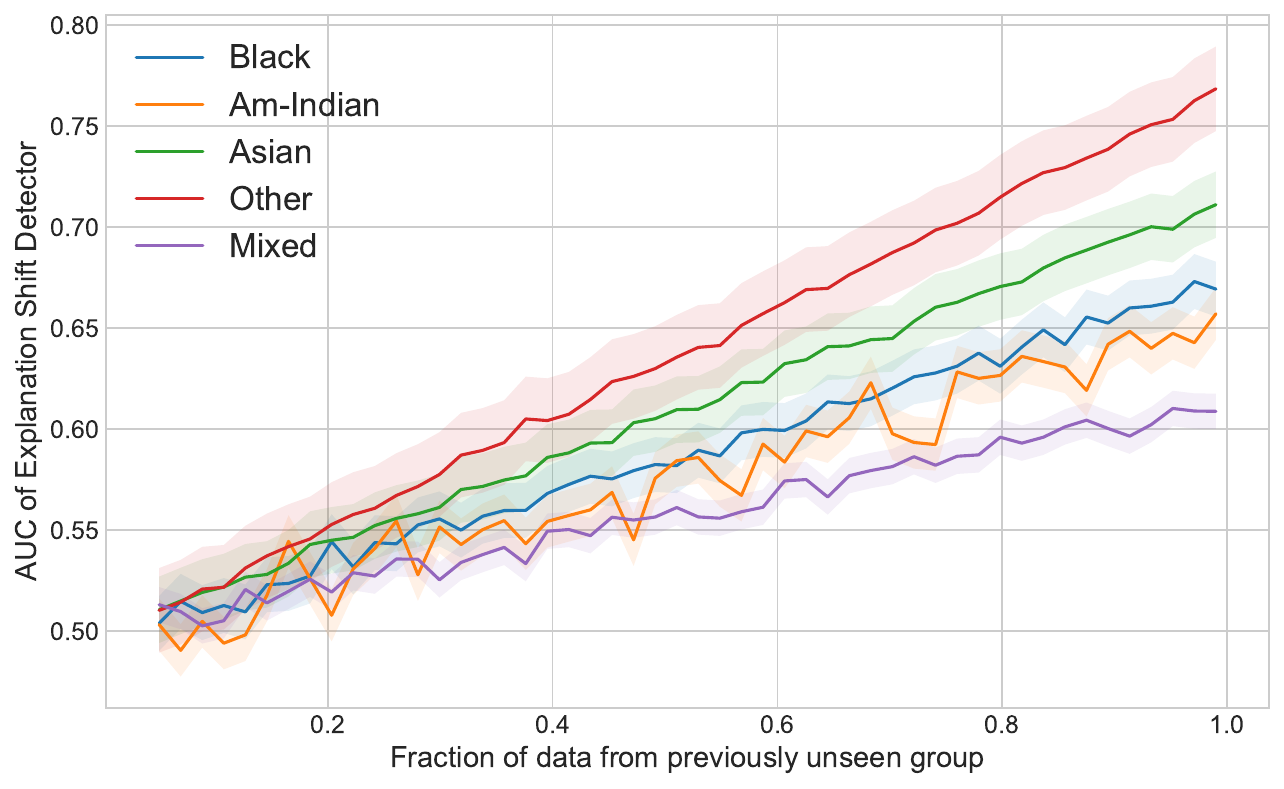}\hfill
\includegraphics[width=.6\textwidth]{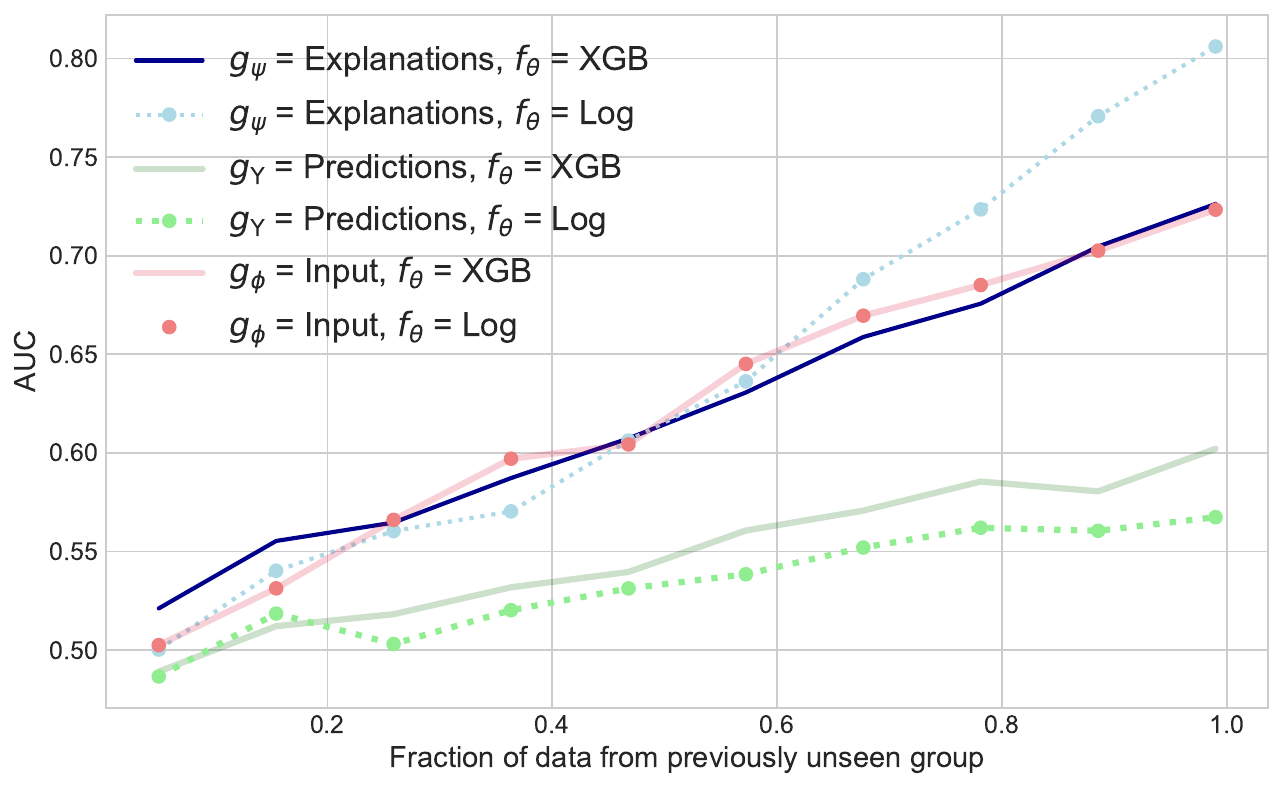}\hfill
\includegraphics[width=.6\textwidth]{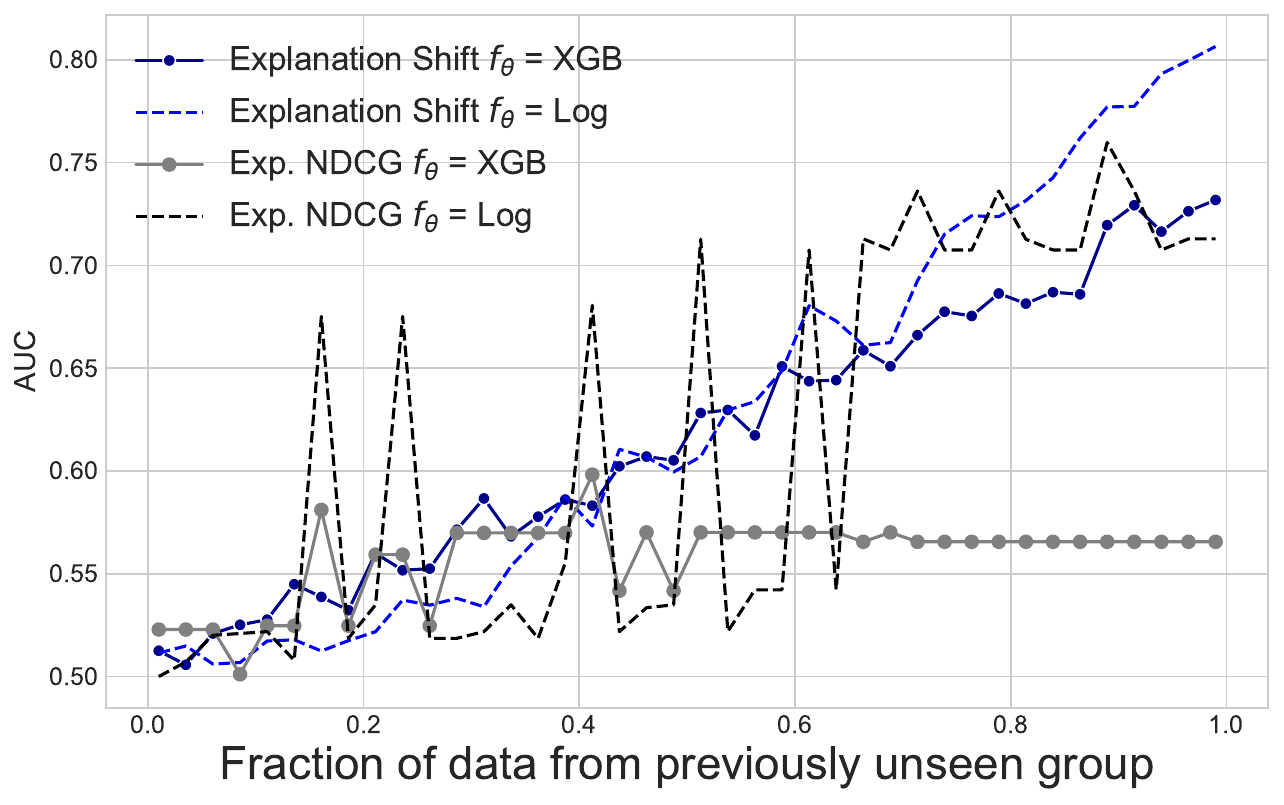}
\caption{Novel group shift experiment on the US Income dataset. Sensitivity (AUC) increases with the growing fraction of previously unseen social groups. \emph{Left figure}: The explanation shift indicates that different social groups exhibit varying deviations from the distribution on which the model was trained (White).
\emph{Middle Figure}: We vary the model $f_\theta$ by training it using both \texttt{XGBoost} (solid lines) and Logistic Regression (dots). The novel ethnicity group is Black. We compare Explanation Shift against C2ST on input \emph{(B6)} and output \emph{(B7)}. \emph{Right figure}: Comparison of Explanation Shift against Exp. NDCG  \emph{(B4)}.  We see how  monitoring method\emph{(B4)} is more unstable with a linear model, and with an \texttt{xgboost} it erroneously finds a horizontal asymptote. We don't compare against methods relying purely on input data such as \emph{(B1)} as we are changing the model, which they don't take into consideration.
}\label{fig:nove.group}
\end{figure*}

\begin{table}[ht]
\centering
\small
\caption{Pearson correlation between baselines and the ratio of presence of the unseen group. The \emph{Accountable} column relates to how well a method, like the Explanation Shift Detector, is underpinned by theoretical guarantees (e.g., Shapley value properties) and supported by empirical evidence (e.g., synthetic tests and real-world validations), as discussed in Section~\ref{sec:math.analysis}.}
\begin{tabular}{l|ccc}
\hline
\multirow{2}{*}{\textbf{Baseline}} & \multicolumn{2}{c}{\textbf{Pearson Correlation}} & \multirow{2}{*}{\textbf{Accountable}} \\
                                   & \textbf{$f_\theta = $ Log}    & \textbf{$f_\theta = $ XGB}   &                                       \\ \hline
B1 Input KS                        & $\mathbf{0.99\pm 0.01}$       & $\mathbf{0.99\pm 0.01}$      & \tikzxmark  \\ 
B2 Pred. Wass.                     & $0.95\pm 0.02$                & $\mathbf{0.98\pm 0.01}$      & \tikzxmark  \\ 
B3 NDCG                            & $0.37\pm 0.25$                & $0.81\pm 0.10$               & \tikzxmark  \\ 
B4 Pred. KS                        & $0.97\pm 0.02$                & $0.96\pm 0.01$               & \tikzxmark  \\ 
B5 Uncertainty                     & $0.73\pm 0.10$                & $0.74\pm 0.12$               & \tikzcmark  \\ 
B6 C2ST Input                      & $0.95\pm 0.03$                & $0.95\pm 0.03$               & \tikzxmark  \\ 
B7 C2ST Output                     & $0.67\pm 0.13$                & $0.96\pm 0.02$               & \tikzxmark  \\ 
Explanation Shift                  & $\mathbf{0.98\pm 0.01}$       & $\mathbf{0.98\pm 0.01}$      & \tikzcmark  \\ 
\end{tabular}\label{tab:novel.group}
\end{table}

\begin{figure*}[ht]
\centering
\includegraphics[width=.49\textwidth]{content/expShift/images/NewCategoryBenchmarkNDCGACSIncome.pdf}\hfill
\includegraphics[width=.49\textwidth]{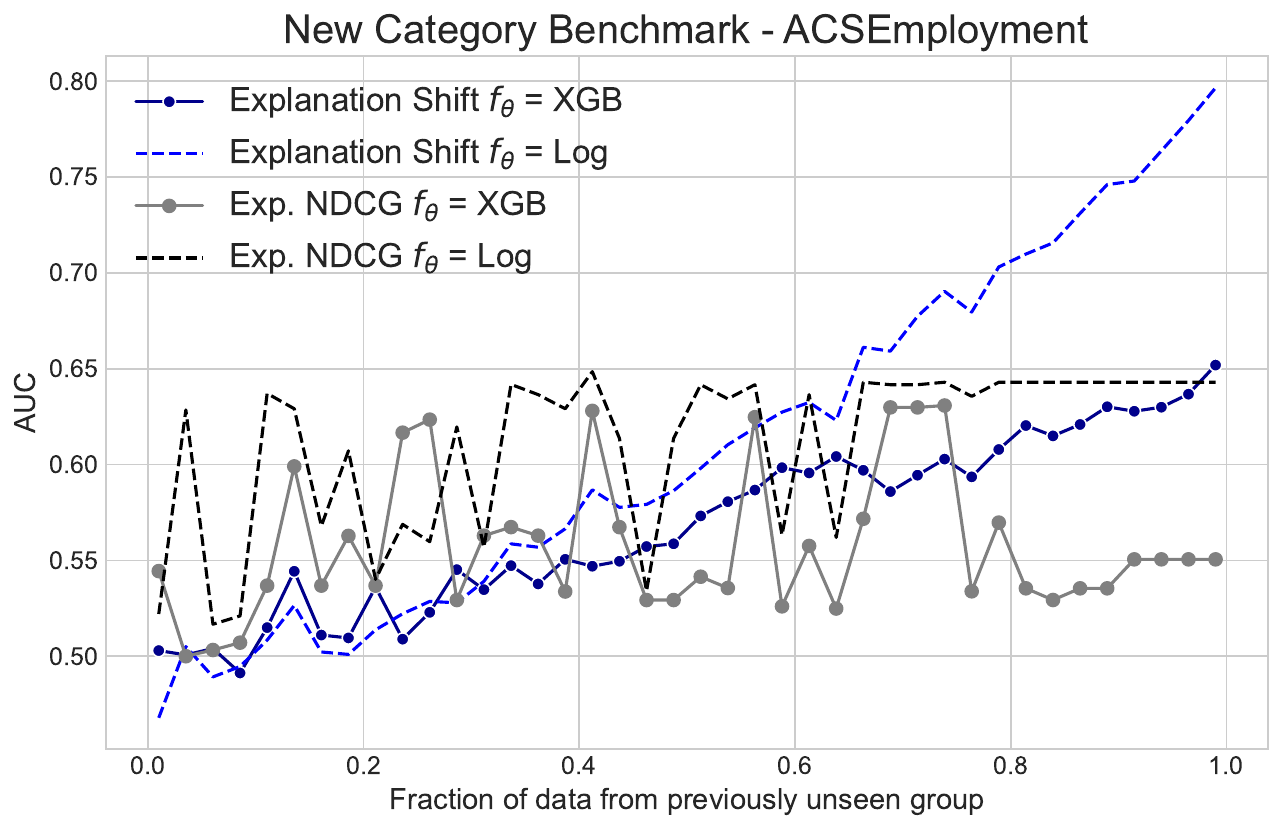}
\includegraphics[width=.49\textwidth]{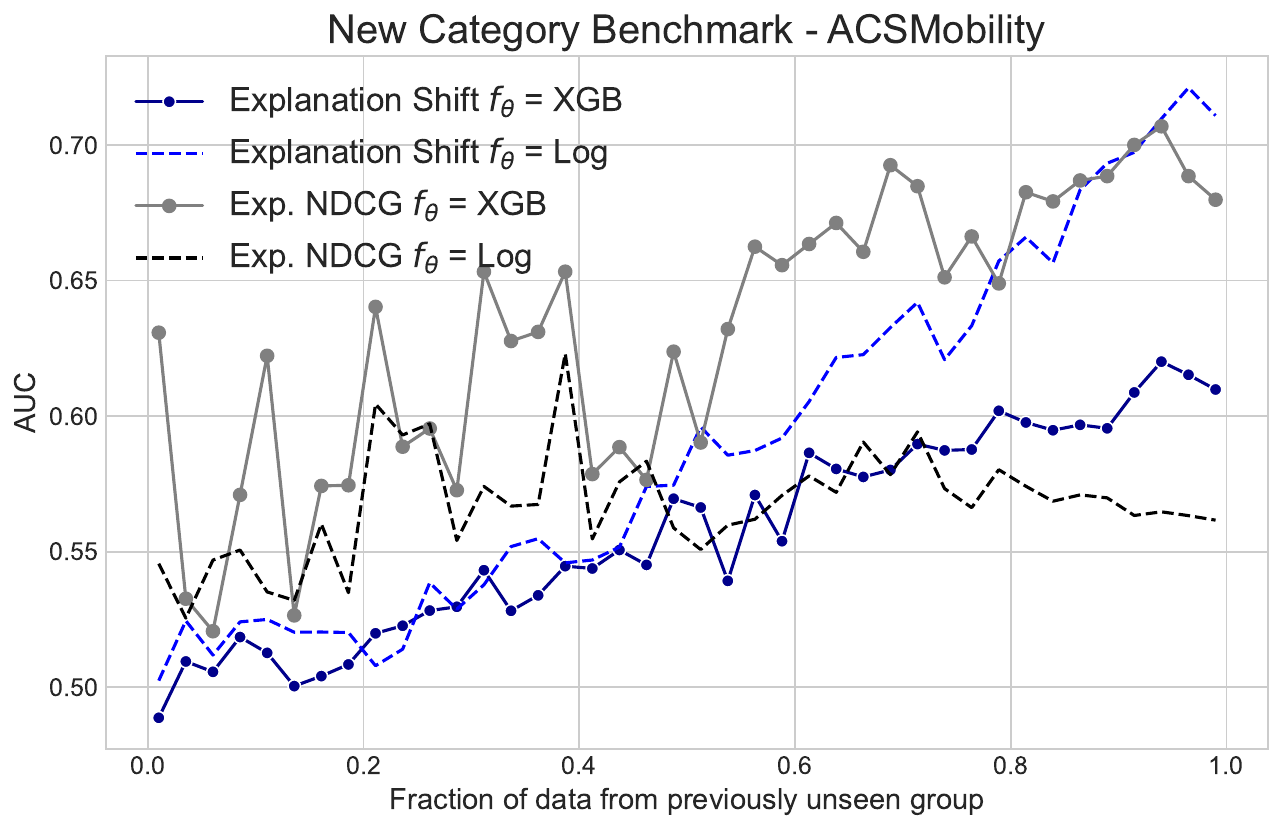}\hfill
\includegraphics[width=.49\textwidth]{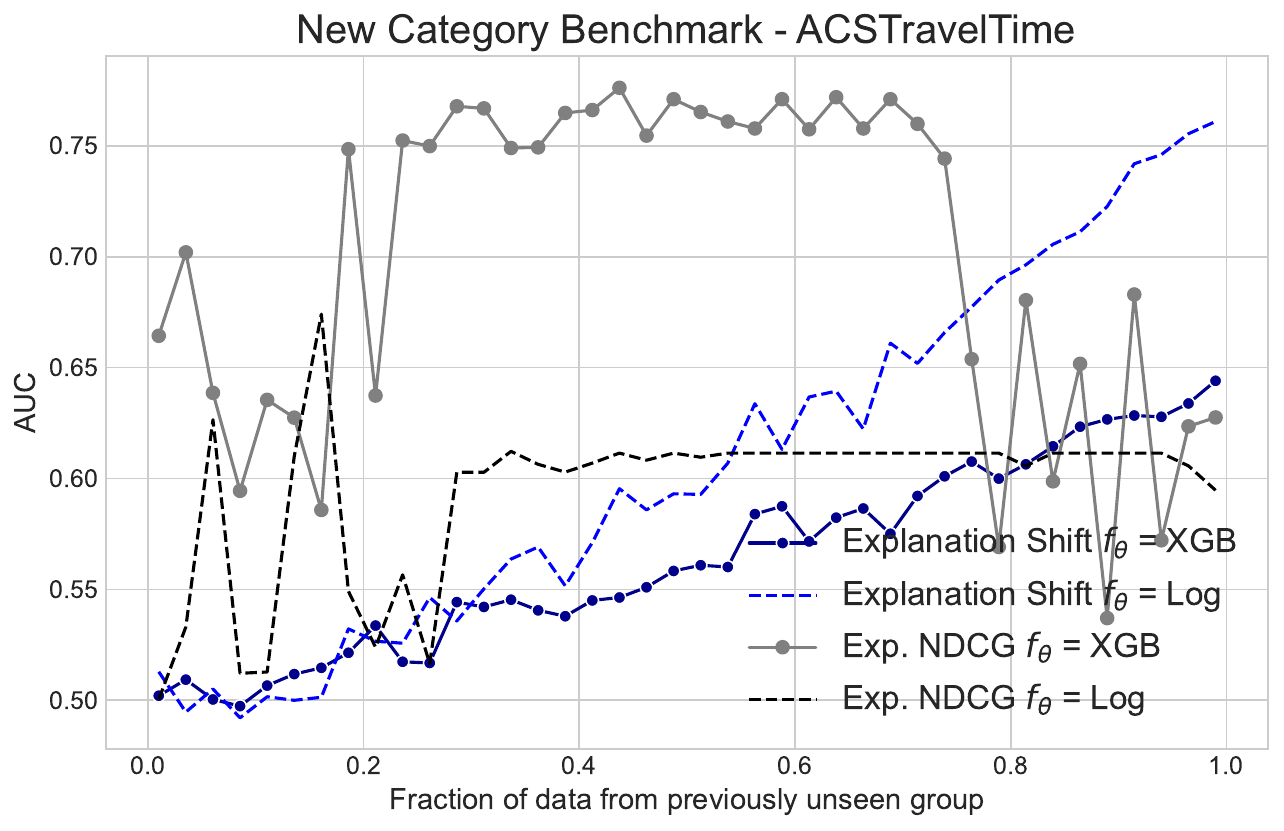}
\caption{Novel group shift experiment conducted on the 4 Datasets Comparison to NDCG. Sensitivity (AUC) increases as the proportion of previously unseen social groups grows.  As the experimental setup has a gradual distribution shift, ideal indicators should exhibit a steadily increasing slope. However, in all figures, NDCG exhibits saturation and instability. These observations align with the analysis presented in the synthetic experiment section, as discussed in Section~\ref{exp:synthetic} of the main paper}\label{fig:shift.ndcg}
\end{figure*}

This experiment further validates the effectiveness of the “Explanation Shift Detector” in addressing novel group changes in real-world datasets. The method demonstrates consistent performance across multiple datasets, providing valuable insights into the sensitivity of model behavior as previously unseen social groups constitute a larger portion of the prediction data. As shown in Figure~\ref{fig:shift.ndcg}, the proposed approach significantly outperforms Exp. NDCG ‘(B6)’ across four datasets. Unlike Exp. NDCG, which exhibits instability and frequently reaches horizontal asymptotes, the Explanation Shift Detector remains robust and adaptable. These limitations in Exp. NDCG arise from its reliance on feature importance rankings, which lack direct information about the value changes induced by the novel group. In contrast, the Explanation Shift Detector’s use of a Classifier Two-Sample Test on the distributions of explanations effectively captures these shifts, ensuring accurate and reliable detection of novel group effects on model behavior.

As a takeaway, the Explanation Shift Detector emerges as a highly reliable and accountable approach for detecting and adapting to novel group shifts. It outperforms traditional metrics like Exp. NDCG effectively captures distributional changes in explanations, making it an indispensable tool for robust model monitoring in diverse and evolving datasets.

\clearpage
\subsection{StackOverflow Survey Data}

In this extended experiment from novel covariate group, we evaluate the new unseen group in the new data over the StackOverflow dataset. As a training data country, we use the United States. The model $f_\theta$used is a gradient-boosting decision tree or logistic regression, and logistic regression is used for the detector. The results show that the AUC of the Explanation Shift Detector varies depending on the quantification of OOD explanations, and it shows more sensitivity concerning model variations than other state-of-the-art techniques.

The dataset used is the StackOverflow annual developer survey, with over 70,000 responses from over 180 countries examining aspects of the developer experience~\citep{so2019}. The data has high dimensionality, leaving it with $+100$ features after data cleansing and feature engineering. The goal of this task is to predict the total annual compensation.

\begin{figure}[ht]
\centering
\includegraphics[width=.49\textwidth]{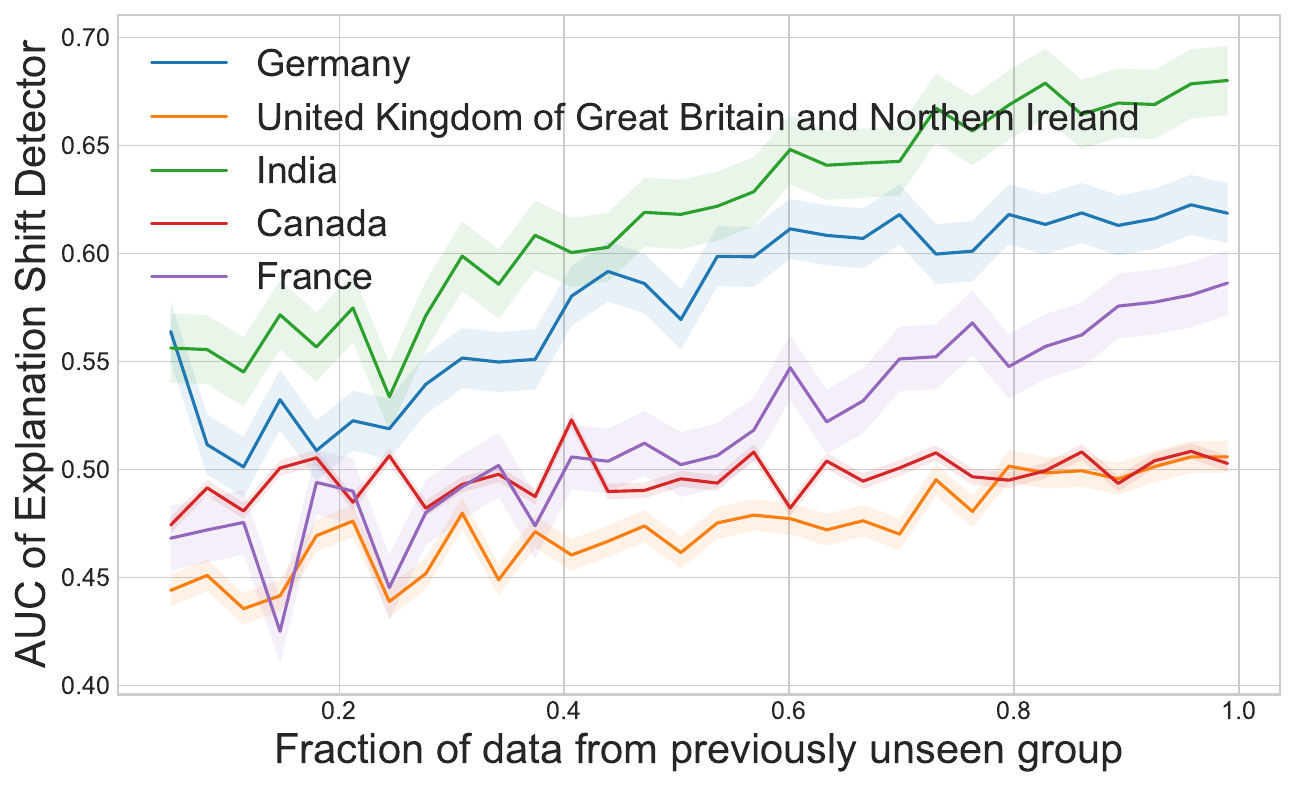}\hfill
\includegraphics[width=.49\textwidth]{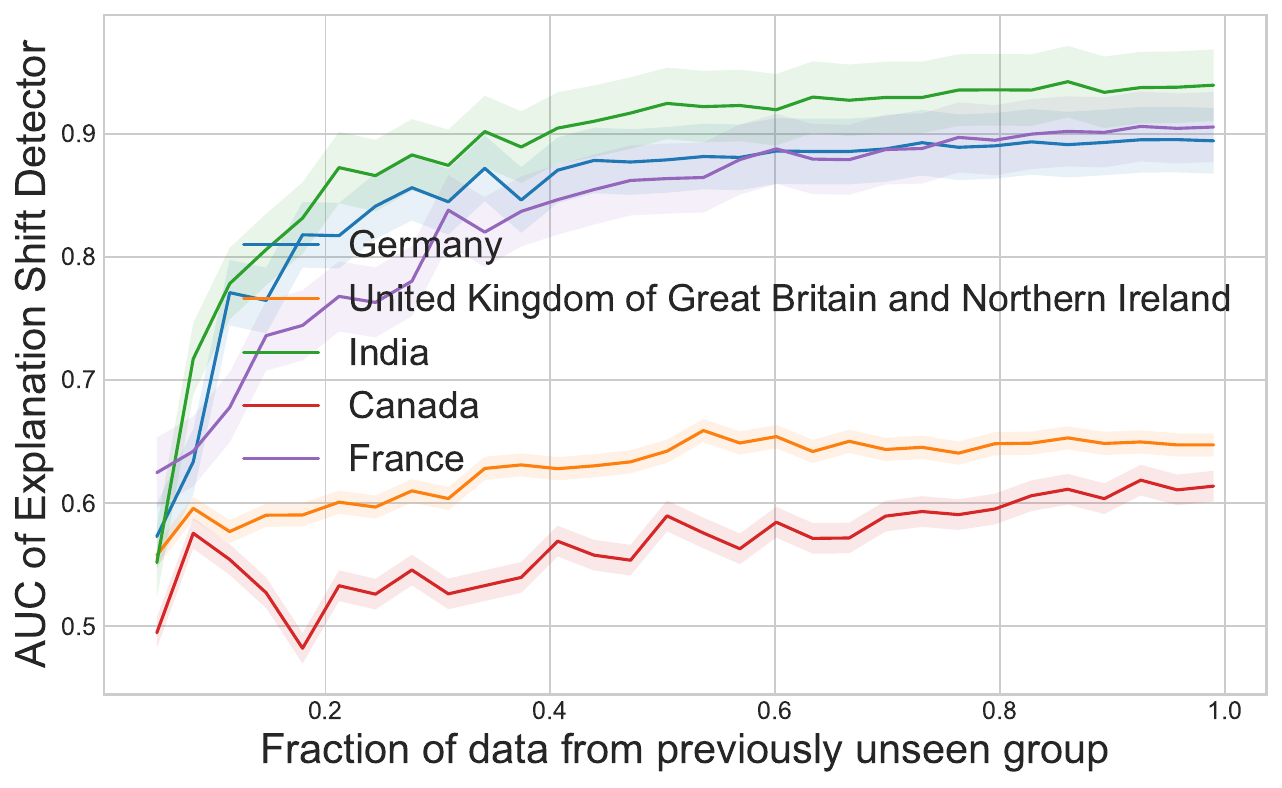}
\caption{Both images represent the AUC of the \textit{Explanation Shift Detector} for different countries on the StackOverflow survey dataset under novel group shift. In the left image, the estimator, $f_\theta$, is a gradient boosting decision tree; in the right image, for both cases the detector, $g_\psi$, is a logistic regression. By changing the type of estimator model, we can see how different types of models are affected differently for the same distribution shift}
\label{fig:xai.hyperSO.shift}
\end{figure}

\subsection{Geopolitical and Temporal Shift}\label{exp:geopolitical}
\begin{figure*}[ht]
\centering
\includegraphics[width=.8\textwidth]{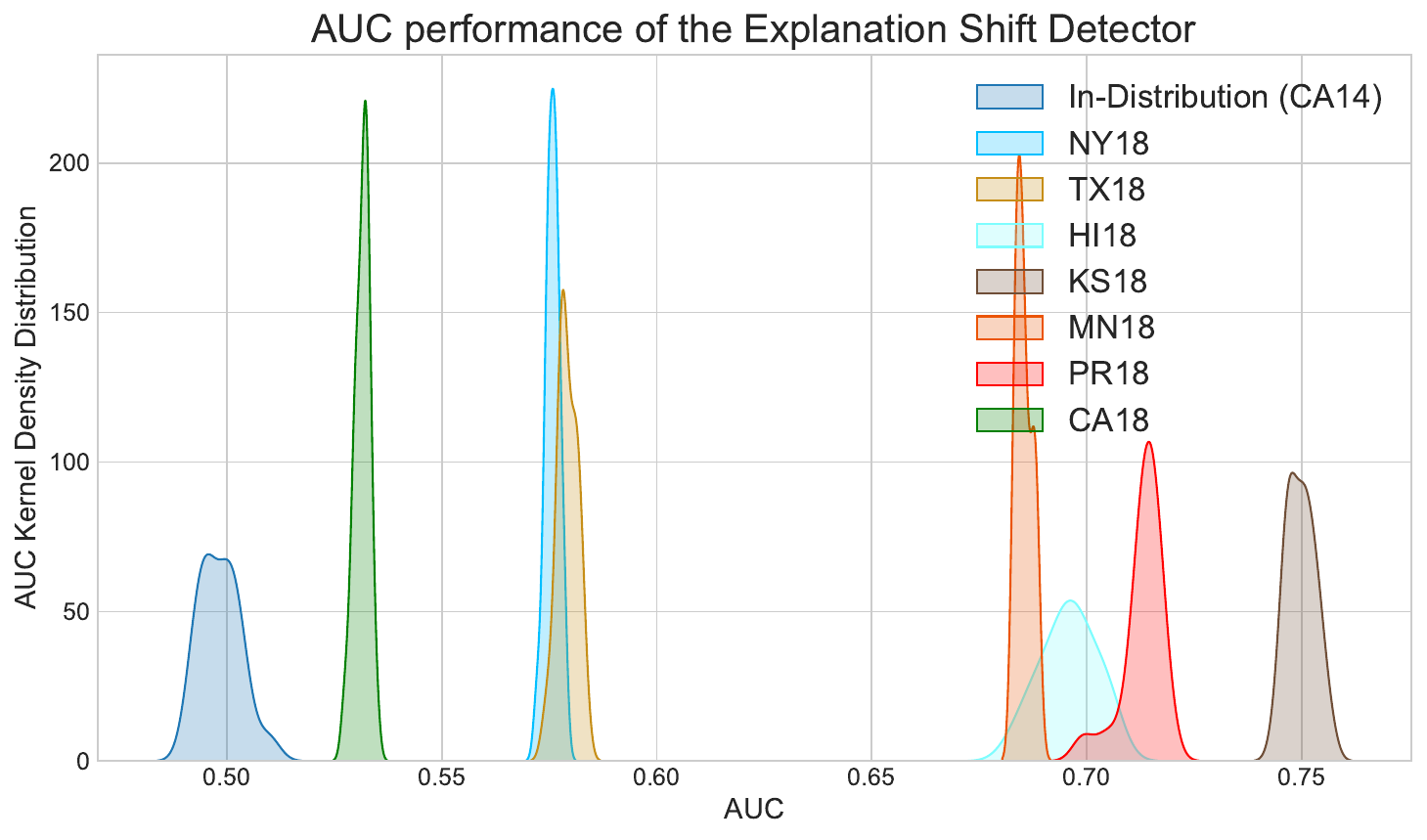}\hfill
\includegraphics[width=.8\textwidth]{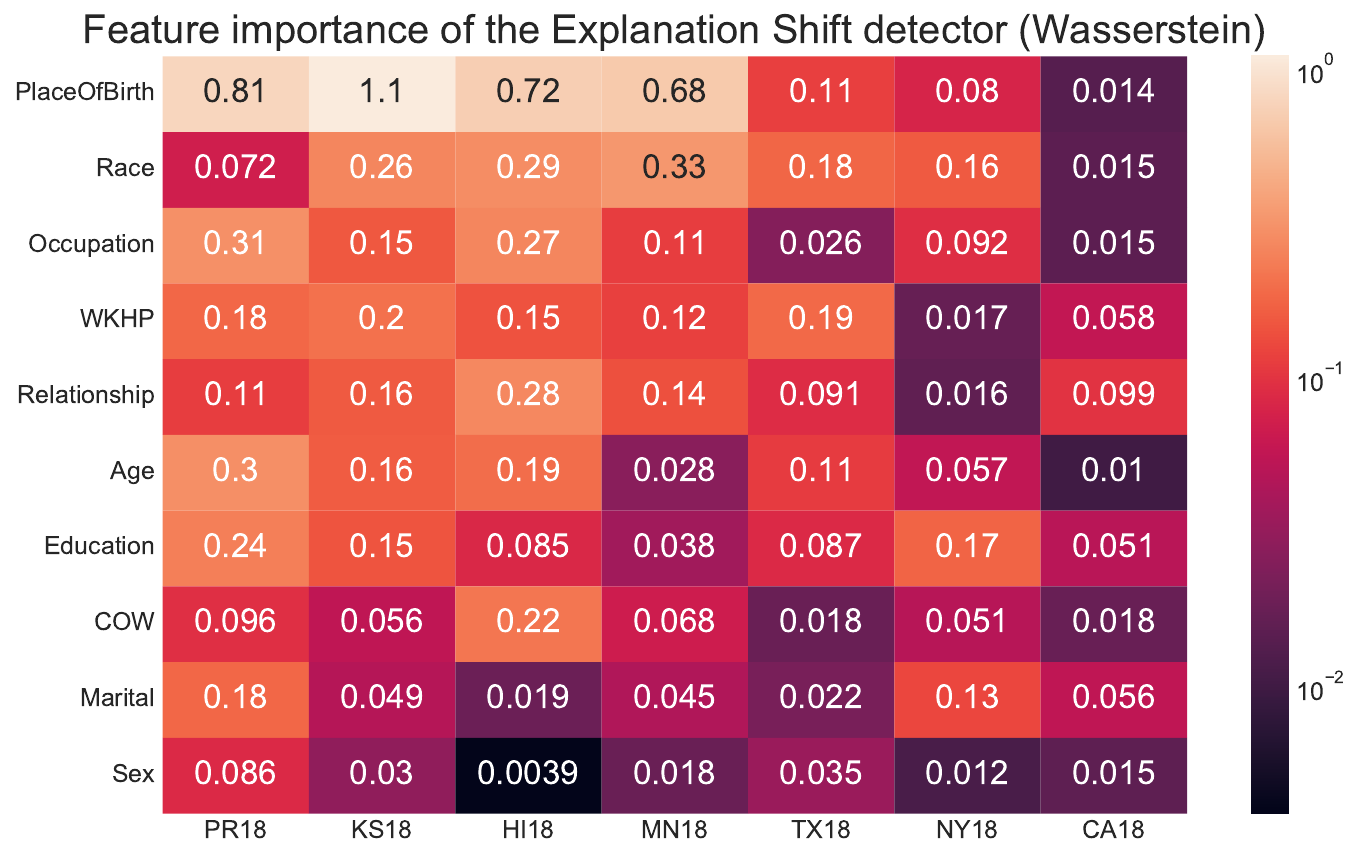}
\caption{In the above figure, a comparison of the performance of \textit{Explanation Shift Detector} in different states. In the below figure, the strength analysis of features driving the change in the model, on the y-axis are the features, and on the x-axis are the different states.  Explanation shifts allow us to identify how the distribution shift of different features impacted the model.
}
\label{fig:xai.income.shift}
\end{figure*}
In this section, we tackle a geopolitical and temporal distribution shift; for this, the training data $\D^{tr}$ for the model $f_\theta$ is composed of data from  California in 2014 and a $\D^{new}$ for each of the states in 2018. The model $g_\psi$ is trained each time on each state using only the $\D_X^{new}$ in the absence of the label, and a 50/50 random train-test split evaluates its performance. As models, we use \texttt{xgboost} as $f_\theta$ and logistic regression for the \textit{Explanation Shift Detector} $(g_\psi)$.

We hypothesize that the AUC of the Explanation Shift Detector on new data will be distinct from that on in-distribution data, primarily owing to the distinctive nature of out-of-distribution model explanations. Figure~\ref{fig:xai.income.shift} illustrates the performance of our method on different data distributions, where the baseline is a ID hold-out set of CA14. The AUC for CA18, where there is only a temporal shift, is the closest to the baseline, and the OOD detection performance is better in the rest of the states. The most disparate state is Puerto Rico (PR18).

Our next objective is to identify the features where the explanations differ between $\D_X^{tr}$ and  $\D_X^{new}$ data. To achieve this, we compare the distribution of linear coefficients of the detector between both distributions.
We use the Wasserstein distance as a distance measure, generating 1000 in-distribution bootstraps using a $63.2\%$ sampling fraction from California-14 and 1000 bootstraps from other states in 2018. In the below image of Figure~\ref{fig:xai.income.shift}, we observe that for PR18, the most crucial feature is the Place of Birth.

Furthermore, we conduct an across-task evaluation by comparing the performance of the \enquote{Explanation Shift Detector} on another prediction task in the Appendix~\ref{app:real.data}. Although some features are present in both prediction tasks, the weights and importance order assigned by the "Explanation Shift Detector" differ. One of this method's advantages is that it identifies differences in distributions and how they relate to the model.

\subsubsection{Further Experiments on Real Data}\label{app:real.data}
In this section, we extend the prediction task of the main body of the paper.  The methodology used follows the same structure. We start by creating a distribution shift by training the model $f_\theta$ in California in 2014 and evaluating it in the rest of the states in 2018, creating a geopolitical and temporal shift. The model $g_\theta$ is trained each time on each state using only the $X^{New}$ in the absence of the label, and its performance is evaluated by a 50/50 random train-test split. As models, we use a gradient boosting decision tree\citep{xgboost,catboost} for $f_\theta$, approximating the Shapley values by TreeExplainer \citep{lundberg2020local2global}, and using logistic regression for the \textit{Explanation Shift Detector}.

For further understanding of the meaning of the features the ACS PUMS data dictionary contains a comprehensive list of available variables \url{https://www.census.gov/programs-surveys/acs/microdata/documentation.html}.

\paragraph{ACS Employment}
The objective of this task is to determine whether an individual aged between 16 and 90 years is employed or not. The model's performance was evaluated using the AUC metric in different states, except PR18, where the model showed an explanation shift. The explanation shift was observed to be influenced by features such as Citizenship and Military Service. The performance of the model was found to be consistent across most of the states, with an AUC below 0.60. The impact of features such as difficulties in hearing or seeing was negligible in the distribution shift impact on the model. The left figure in Figure \ref{fig:xai.employment.shift} compares the performance of the Explanation Shift Detector in different states for the ACS Employment dataset.

\begin{figure*}[ht]
\centering
\includegraphics[width=.8\textwidth]{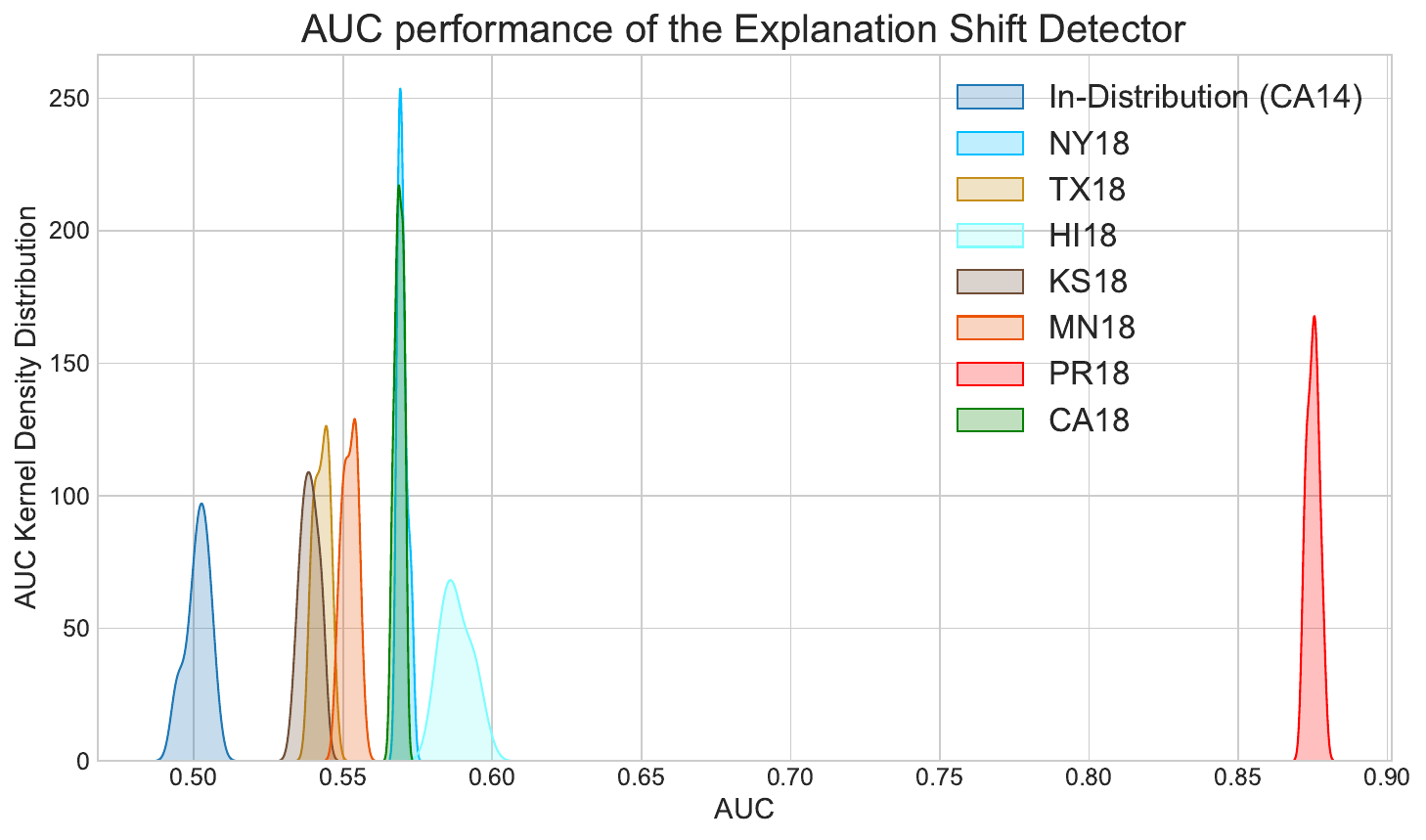}\hfill
\includegraphics[width=.8\textwidth]{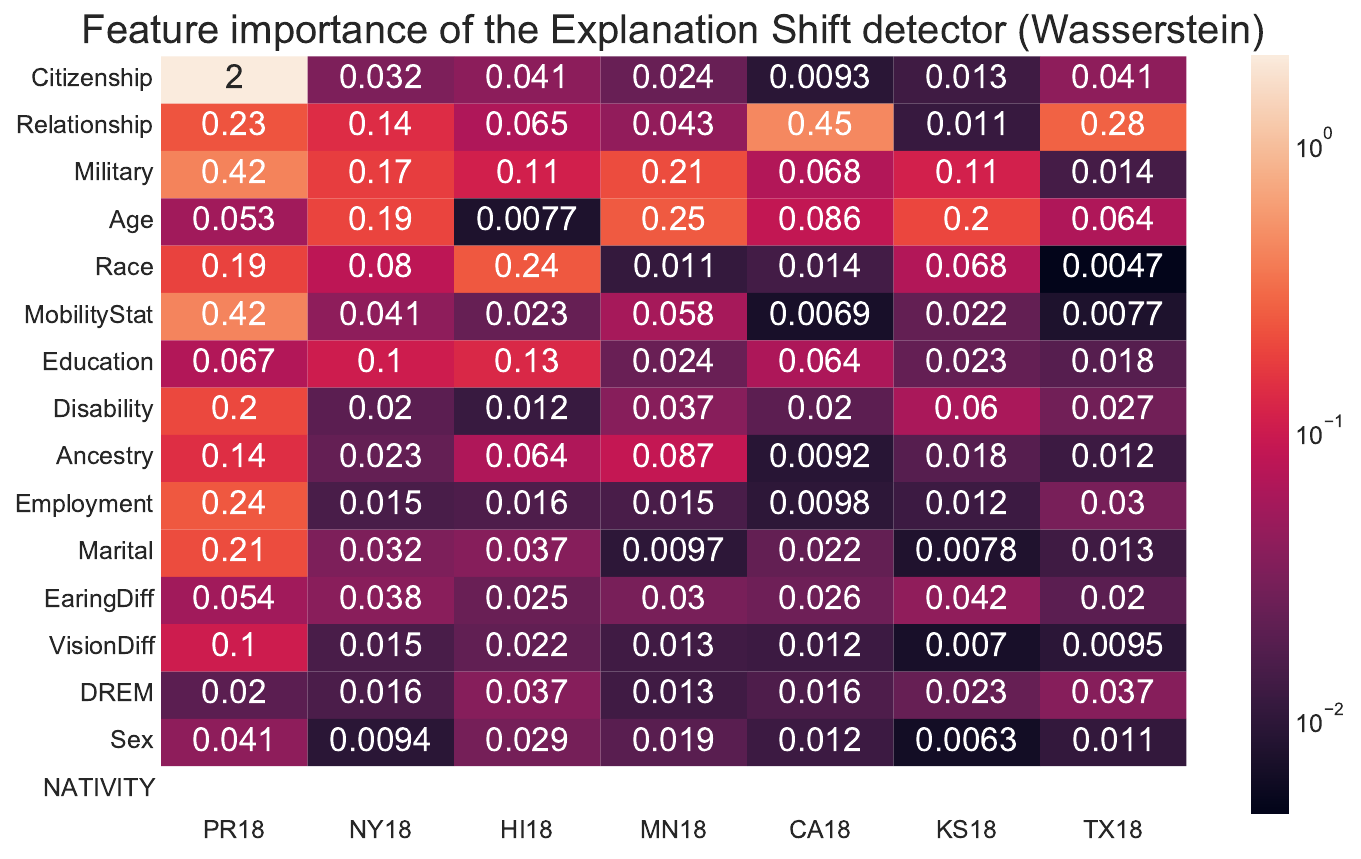}
\caption{The above figure shows a comparison of the performance of the Explanation Shift Detector in different states for the ACS Employment dataset. The below figure shows the feature importance analysis for the same dataset.}
\label{fig:xai.employment.shift}
\end{figure*}

Additionally, the feature importance analysis for the same dataset is presented in the below figure in Figure \ref{fig:xai.employment.shift}.

\paragraph{ACS Travel Time}

The goal of this task is to predict whether an individual has a commute to work that is longer than $+20$ minutes. For this prediction task, the results are different from the previous two cases; the state with the highest OOD score is $KS18$, with the \enquote{Explanation Shift Detector} highlighting features as Place of Birth, Race or Working Hours Per Week. The closest state to ID is CA18, where there is only a temporal shift without any geospatial distribution shift.
\begin{figure*}[ht]
\centering
\includegraphics[width=.8\textwidth]{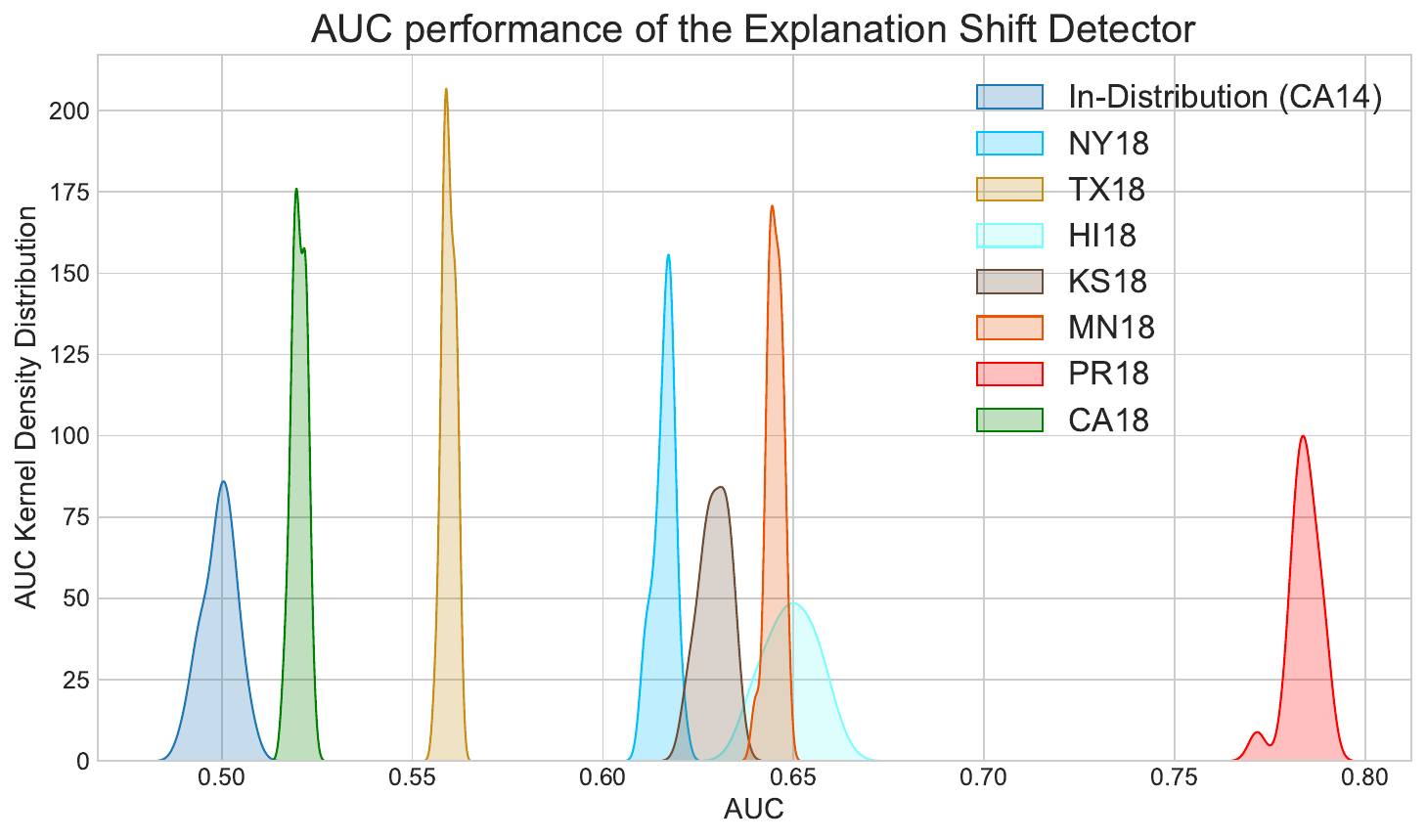}\hfill
\includegraphics[width=.8\textwidth]{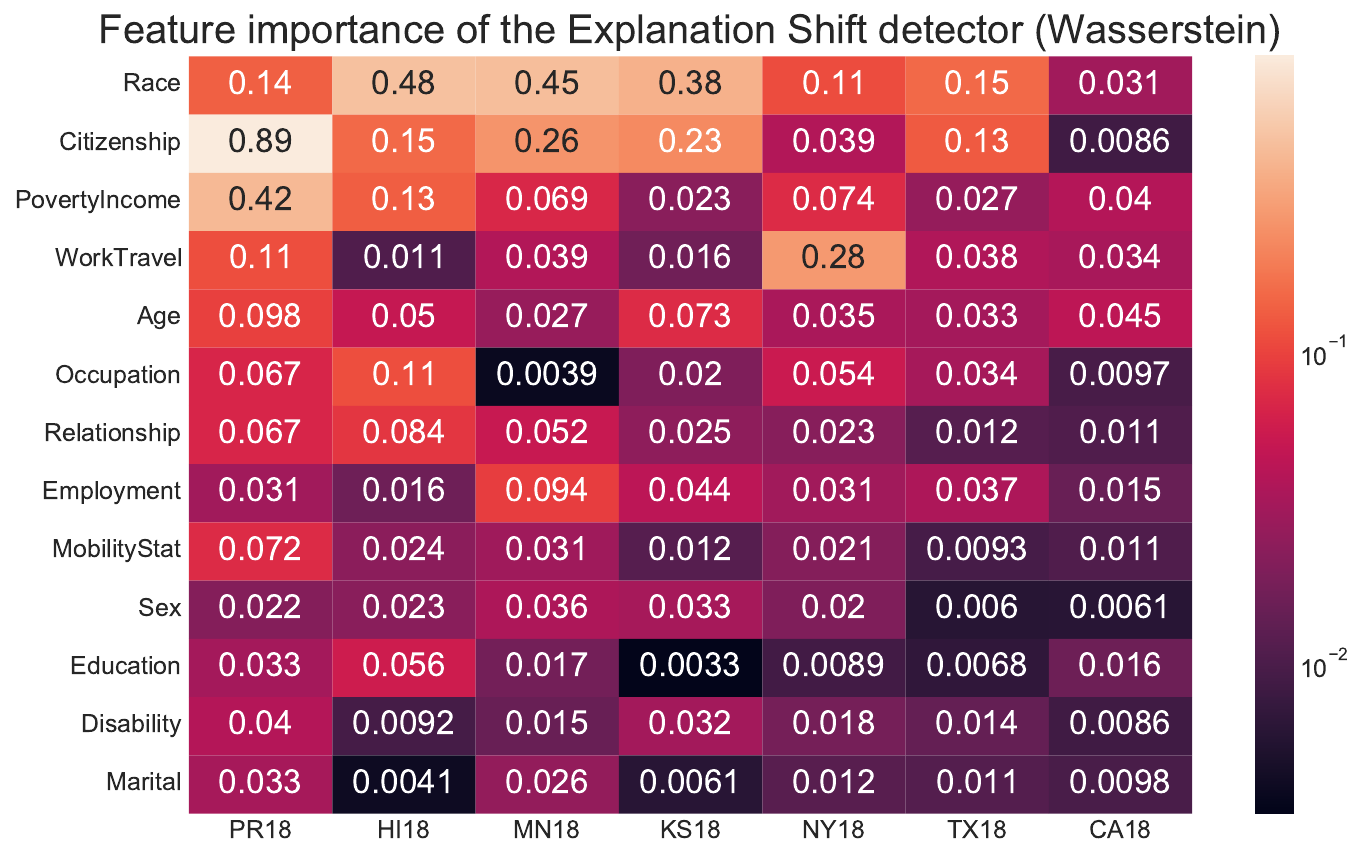}
\caption{In the left figure, comparison of the performance of \textit{Explanation Shift Detector}, in different states for the ACS TravelTime prediction task.  In the left figure, we can see how the state with the highest OOD AUC detection is KS18 and not PR18 as in other prediction tasks; this difference with respect to the other prediction task can be attributed to \enquote{Place of Birth}, whose feature attributions the model finds to be more different than in CA14.}
\label{fig:xai.traveltime.shift}
\end{figure*}

\paragraph{ACS Mobility}
The objective of this task is to predict whether an individual between the ages of 18 and 35 had the same residential address as a year ago. This filtering is intended to increase the difficulty of the prediction task, as the base rate for staying at the same address is above $90\%$ for the population \citep{ding2021retiring}.

The experiment shows a similar pattern to the ACS Income prediction task (cf. Section \ref{fig:xai.income.shift}), where the inland US states have an AUC range of $0.55-0.70$, while the state of PR18 achieves a higher AUC. For PR18, the model has shifted due to features such as Citizenship, while for the other states, it is Ancestry (Census record of your ancestors' lives with details like where they lived, who they lived with, and what they did for a living) that drives the change in the model.

As depicted in Figure \ref{fig:xai.mobility.shift}, all states, except for PR18, fall below an AUC of explanation shift detection of $0.70$. Protected social attributes, such as Race or Marital status, play an essential role for these states, whereas for PR18, Citizenship is a key feature driving the impact of distribution shift in model.

\begin{figure*}[ht]
\centering
\includegraphics[width=.8\textwidth]{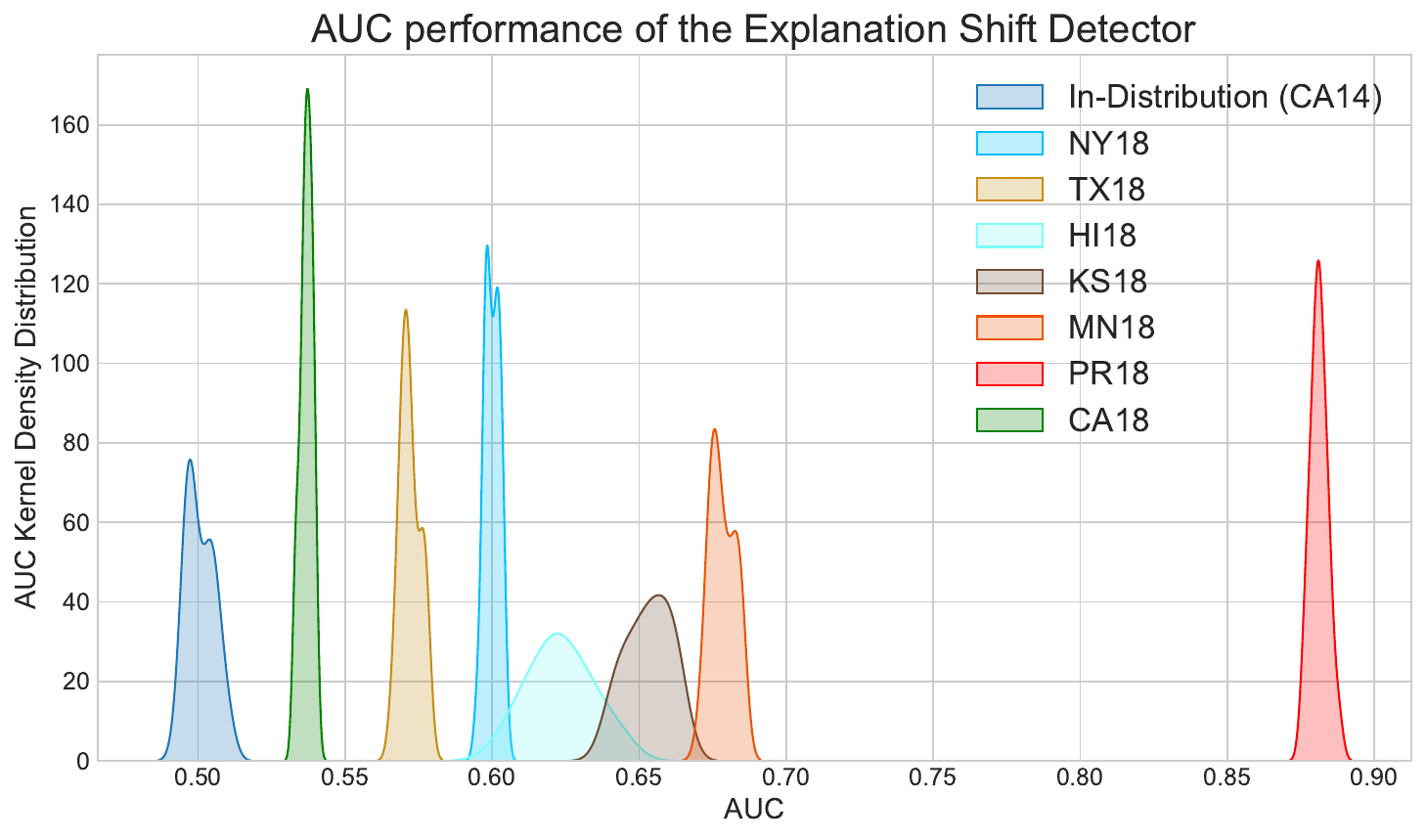}\hfill
\includegraphics[width=.8\textwidth]{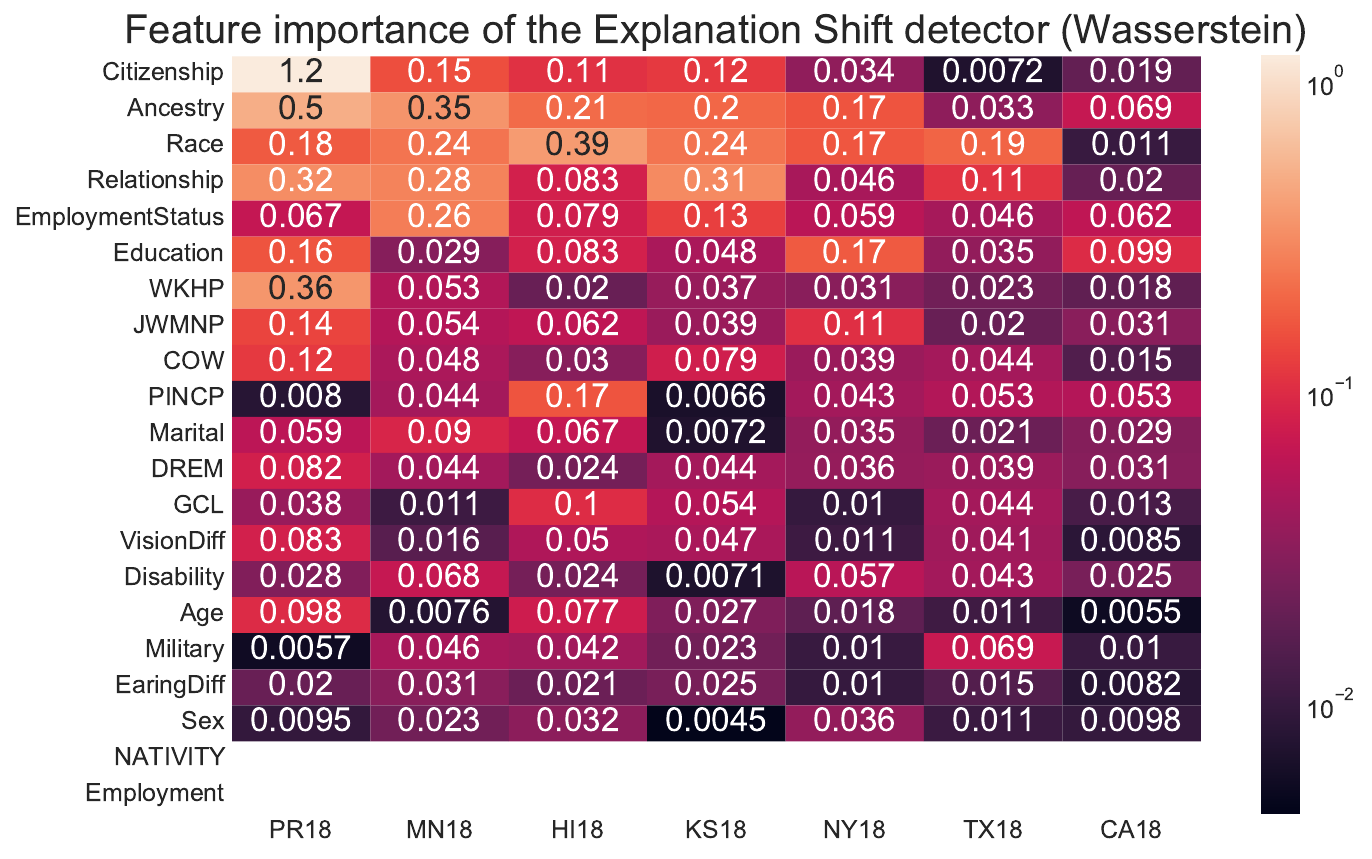}
\caption{Left figure shows a comparison of the \textit{Explanation Shift Detector}'s performance in different states for the ACS Mobility dataset. Except for PR18, all other states fall below an AUC of explanation shift detection of $0.70$. The features driving this difference are Citizenship and Ancestry relationships. For the other states, protected social attributes, such as Race or Marital status, play an important role.}
\label{fig:xai.mobility.shift}
\end{figure*}

\clearpage

\section{Experiments with Modeling Methods and Hyperparameters}\label{app:modelingmethods}
In the next sections, we are going to show the sensitivity or our method to variations of  the model $\textcolor{magenta}{f}$, the detector $\textcolor{magenta}{g}$, and the parameters of the estimator $f\textcolor{magenta}{_\theta}$. 

As an experimental setup, In the main body of the paper, we have focused on the UCI Adult Income dataset. The experimental setup has been using Gradient Boosting Decision Tree as the model $f_\theta$ and then as \enquote{Explanation Shift Detector} $g_\psi$ a logistic regression. In this section, we extend the experimental setup by providing experiments by varying the types of algorithms for a given experimental set-up: the UCI Adult Income dataset using the Novel Covariate Group Shift for the \enquote{Asian} group with a fraction ratio of $0.5$ (cf. Section \ref{sec:experiments}).

\subsection{Varying Models and Explanation Shift Detectors}\label{app:varying}

OOD data detection methods based on input data distributions only depend on the type of detector used, being independent of the model $f_\theta$. OOD Explanation methods rely on both the model and the data. Using explanations shifts as indicators for measuring distribution shifts impact on the model enables us to account for the influencing factors of the explanation shift. Therefore, in this section, we compare the performance of different types of algorithms for explanation shift detection  using the same experimental setup. The results of our experiments show that using Explanation Shift enables us to see differences in the choice of the original model $f_\theta$ and the Explanation Shift Detector $g_\phi$

\begin{table}[ht]
\begin{tabular}{cccccccc}
\multicolumn{1}{l}{}                       & \multicolumn{7}{c}{Estimator $f_\theta$}                                                                                                         \\ \cline{2-8} 
\multicolumn{1}{c|}{Detector $g_\phi$}              & \textbf{XGB} & \textbf{Log.Reg} & \textbf{Lasso} & \textbf{Ridge} & \textbf{Rand.Forest} & \textbf{Dec.Tree} & \textbf{MLP} \\ \hline
\multicolumn{1}{c|}{\textbf{XGB}}          & 0.583        & 0.619                 & 0.596          & 0.586          & 0.558                 & 0.522                 & 0.597        \\
\multicolumn{1}{c|}{\textbf{LogisticReg.}} & 0.605        & 0.609                 & 0.583          & 0.625          & 0.578                 & 0.551                 & 0.605        \\
\multicolumn{1}{c|}{\textbf{Lasso}}        & 0.599        & 0.572                 & 0.551          & 0.595          & 0.557                 & 0.541                 & 0.596        \\
\multicolumn{1}{c|}{\textbf{Ridge}}        & 0.606        & 0.61                  & 0.588          & 0.624          & 0.564                 & 0.549                 & 0.616        \\
\multicolumn{1}{c|}{\textbf{RandomForest}} & 0.586        & 0.607                 & 0.574          & 0.612          & 0.566                 & 0.537                 & 0.611        \\
\multicolumn{1}{c|}{\textbf{DecisionTree}} & 0.546        & 0.56                  & 0.559          & 0.569          & 0.543                 & 0.52                  & 0.569       
\end{tabular}
\caption{Comparison of explanation shift detection performance, measured by AUC, for different combinations of explanation shift detectors and estimators on the UCI Adult Income dataset using the Novel Covariate Group Shift for the \enquote{Asian} group with a fraction ratio of $0.5$ (cf. Section \ref{sec:experiments}). The table shows that the algorithmic choice for $f_\theta$ and $g_\psi$ can impact the OOD explanation performance.  We can see how, for the same detector, different $f_\theta$  models flag different OOD explanations performance. On the other side, for the same $f_\theta$ model, different detectors achieve different results.}\label{tab:benchmark}
\end{table}

\subsection{Hyperparameters Sensitivity Evaluation}\label{app:hyperparameter.shift}
This section presents an extension to our experimental setup where we vary the model complexity by varying the model hyperparameters $\Ss(f\textcolor{magenta}{_\theta},X)$. We use the UCI Adult Income dataset with the Novel Covariate Group Shift for the \enquote{Asian} group with a fraction ratio of $0.5$ as described in Section \ref{sec:experiments}. And for the Stackoverflow as training data we use the United States of America and a novel covariate group France.

In this experiment, we changed the hyperparameters of the original model: for the decision tree, we varied the depth of the tree, while for the gradient-boosting decision,  we changed the number of estimators, and for the random forest, both hyperparameters. We calculated the Shapley values using TreeExplainer \citep{lundberg2020local2global}. For the Detector choice of model, we compare Logistic Regression and XGBoost models.

\begin{figure}[ht]
\centering
\includegraphics[width=.8\textwidth]{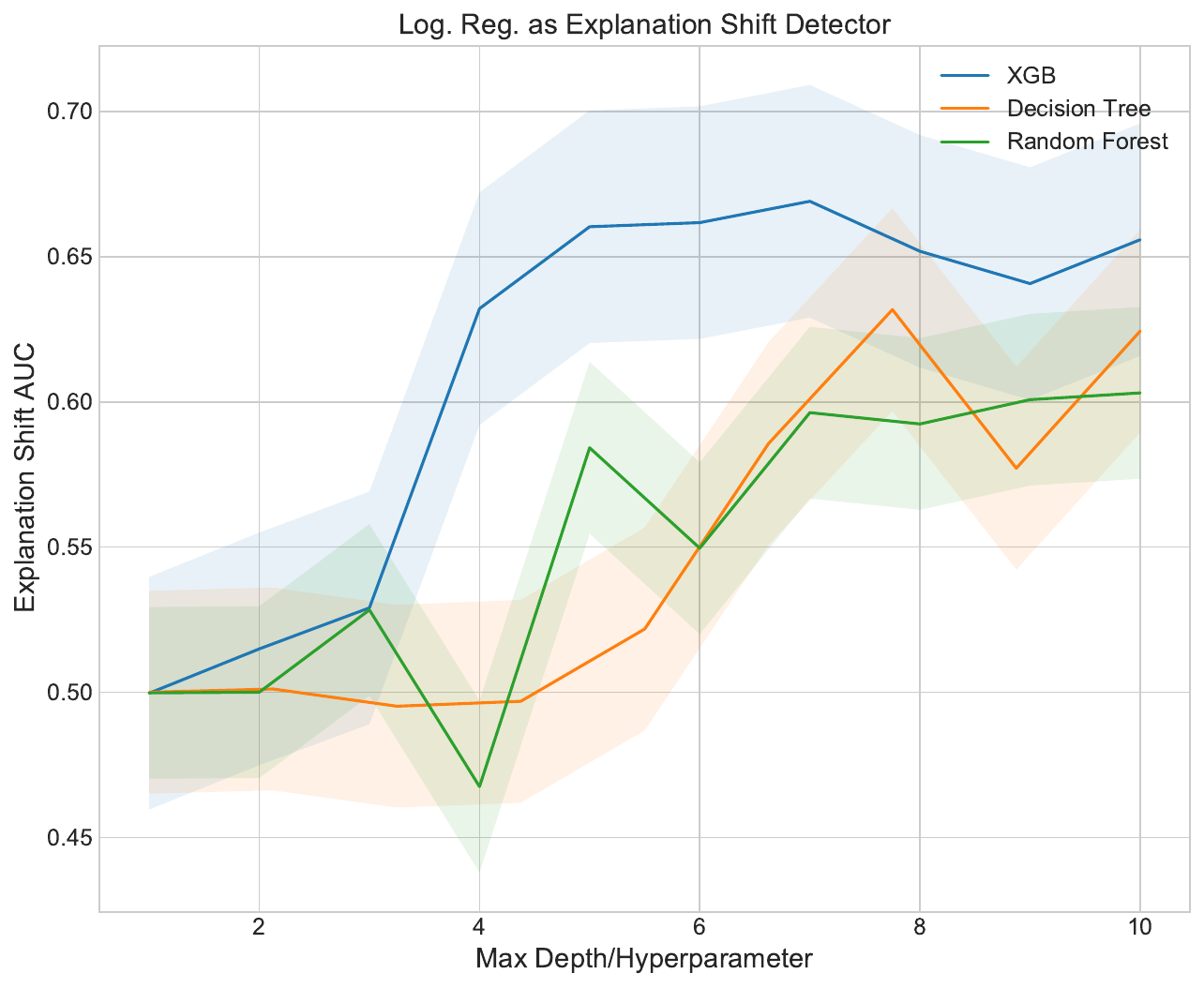}\hfill
\includegraphics[width=.8\textwidth]{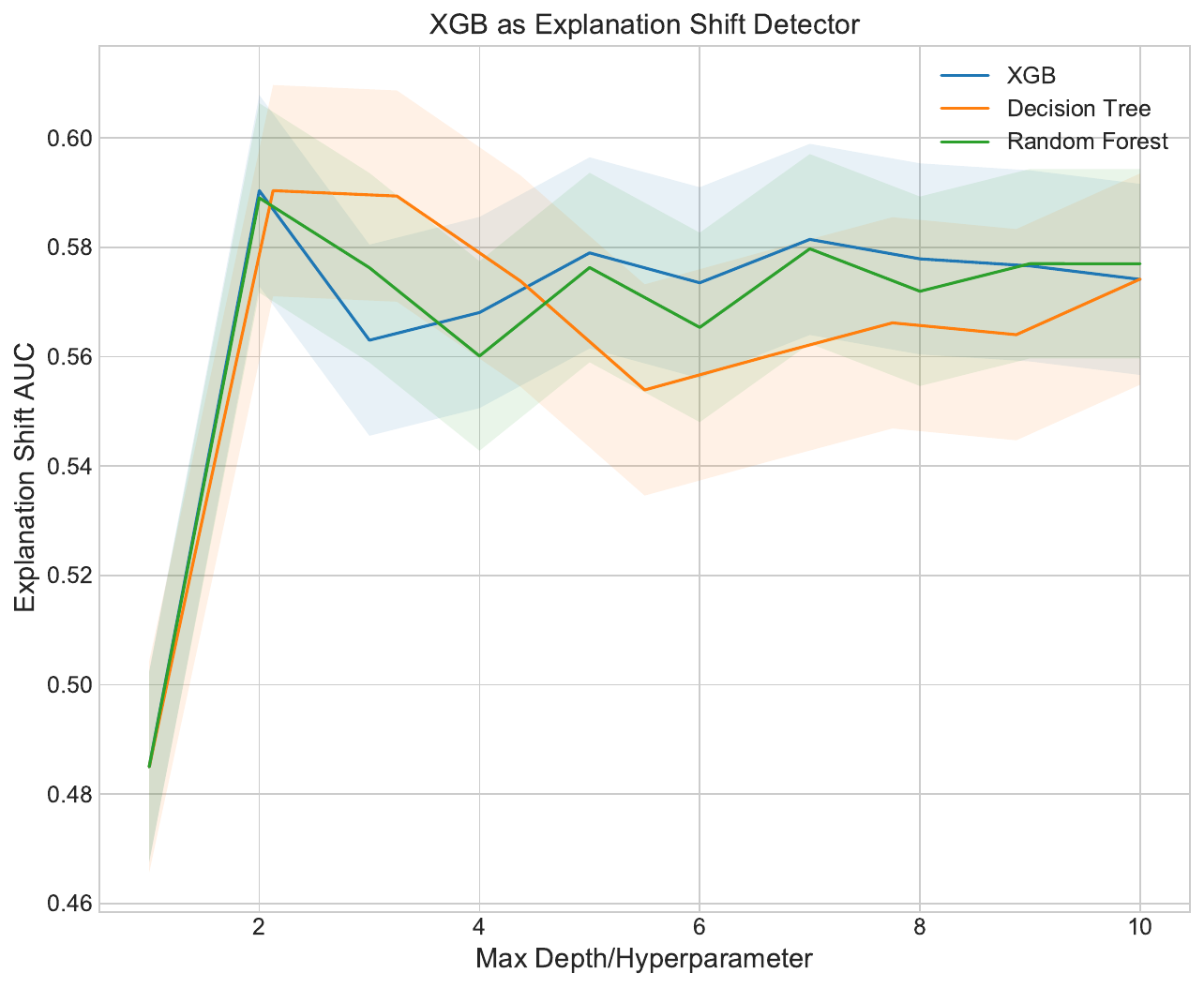}
\caption{Images represent the AUC of the \textit{Explanation Shift Detector}, on ACS Income and last two Stackoverflow under novel group shift. In the image above, the detector is a logistic regression, and in the images below, it is a gradient-boosting decision tree classifier. By changing the model, we can see that vanilla models (decision tree with depth 1 or 2) are unaffected by the distribution shift, while when increasing the model complexity, the out-of-distribution impact of the data in the model  starts to be tangible}
\label{fig:xai.hyper.shift}
\end{figure}

\begin{figure}[ht]
\centering
\includegraphics[width=.8\textwidth]{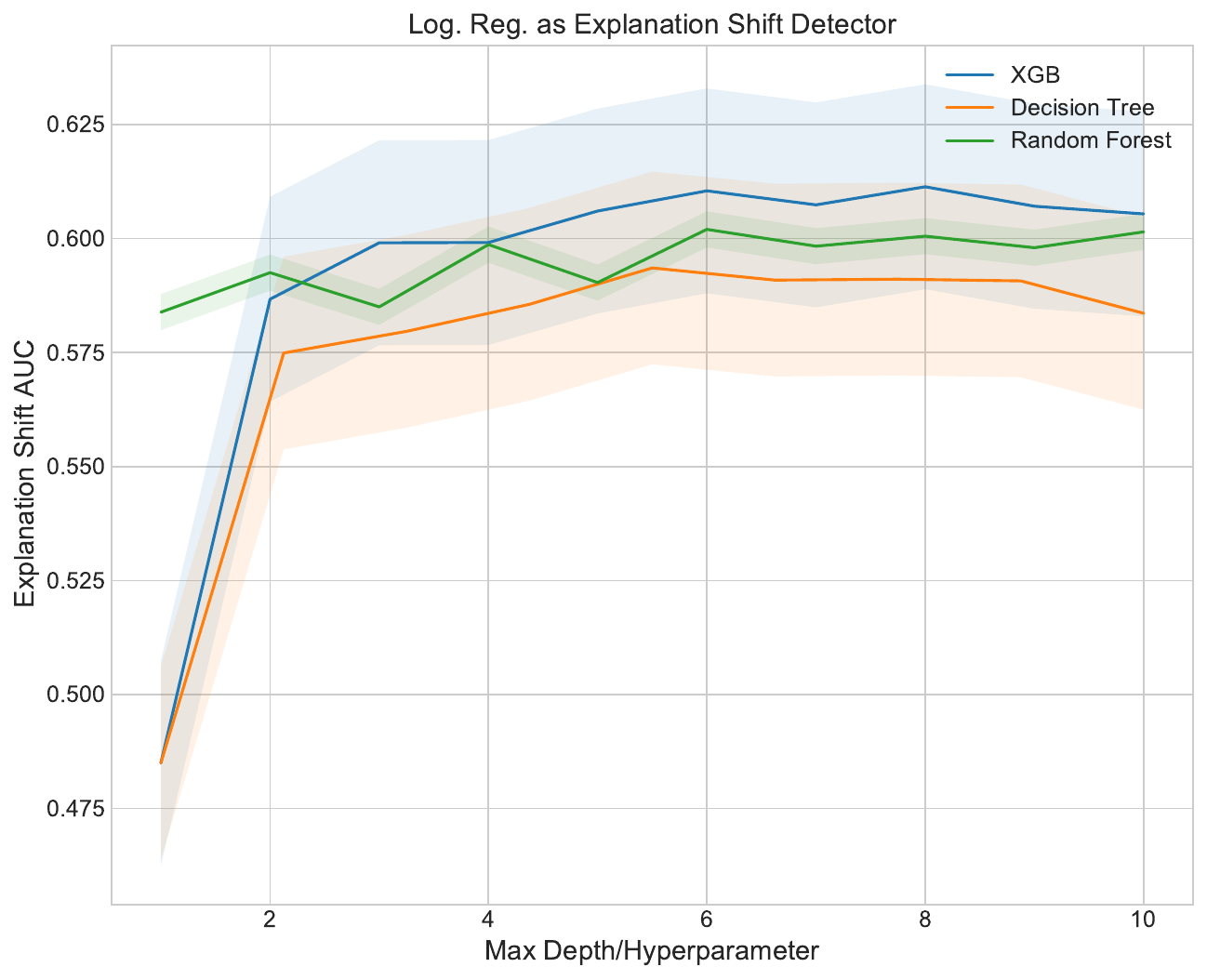}\hfill
\includegraphics[width=.8\textwidth]{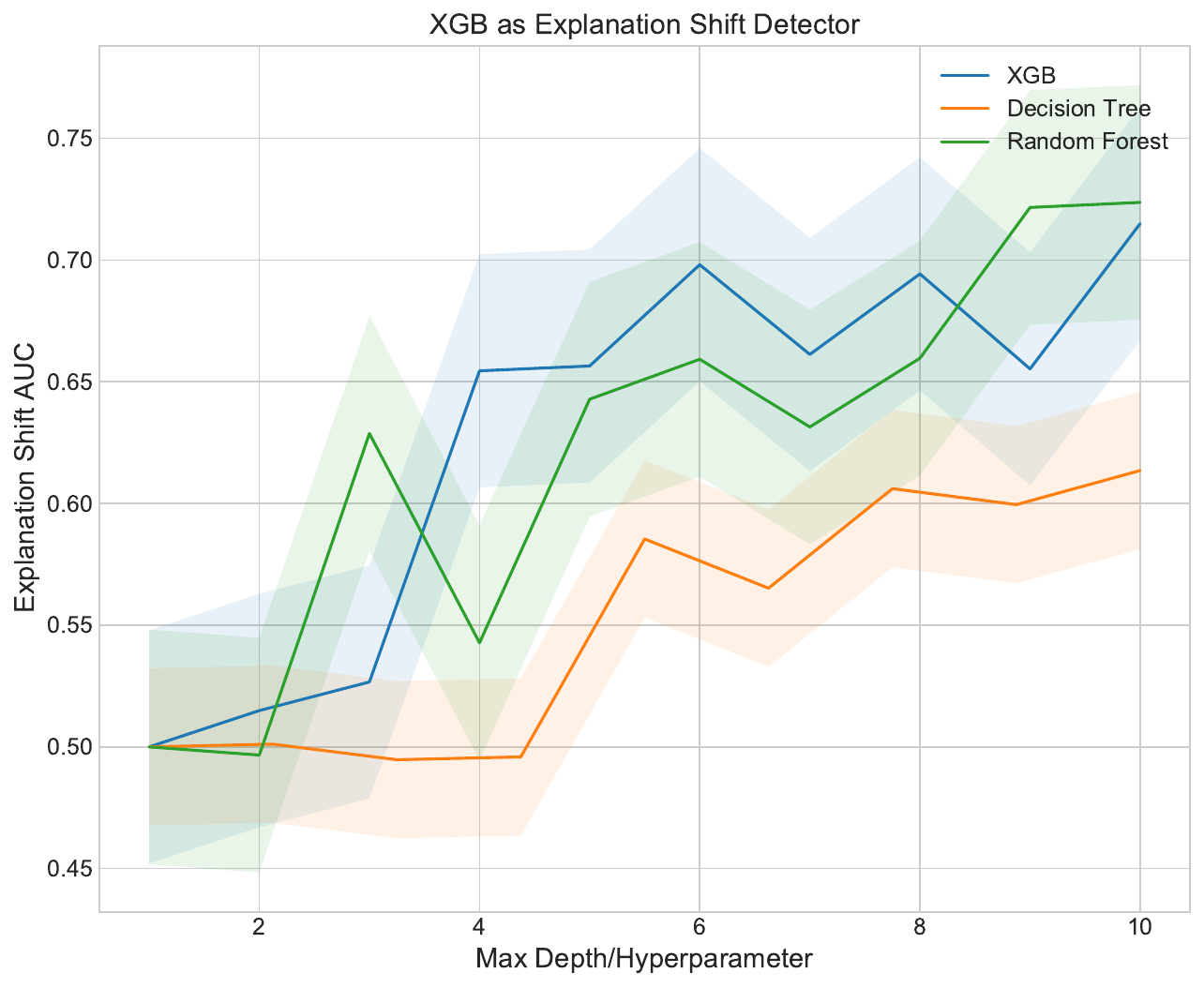}
\caption{Images represent the AUC of the \textit{Explanation Shift Detector}, on Stackoverflow under novel group shift. In the image above, the detector is a logistic regression, and in the image below, it is a gradient-boosting decision tree classifier. By changing the model, we can see that vanilla models (decision tree with depth 1 or 2) are unaffected by the distribution shift, while when increasing the model complexity, the out-of-distribution impact of the data in the model  starts to be tangible}
\label{fig:xai.hyper2.shift}
\end{figure}

The results presented in Figure \ref{fig:xai.hyper.shift} show the AUC of the \textit{Explanation Shift Detector} for the ACS Income dataset under novel group shift. We observe that the distribution shift does not affect very simplistic models, such as decision trees with depths 1 or 2. However, as we increase the model complexity, the out-of-distribution data impact on the model becomes more pronounced. Furthermore, when we compare the performance of the \textit{Explanation Shift Detector} across different models, such as Logistic Regression and Gradient Boosting Decision Tree, we observe distinct differences(note that the y-axis takes different values).

In conclusion, the explanation distributions serve as a projection of the data and model sensitive to what the model has learned. The results demonstrate the importance of considering model complexity under distribution shifts.

\clearpage
\section{LIME as an Alternative Explanation Method}\label{app:LIME}

Another feature attribution technique that satisfies the aforementioned properties (efficiency and uninformative features Section~\ref{sec:xai.foundations}) and can be used to create the explanation distributions is LIME (Local Interpretable Model-Agnostic Explanations). The similar discussion that we had in section~\ref{et.app:LIME}, also applies here.

\begin{figure}[ht]
\centering
\includegraphics[width=.8\textwidth]{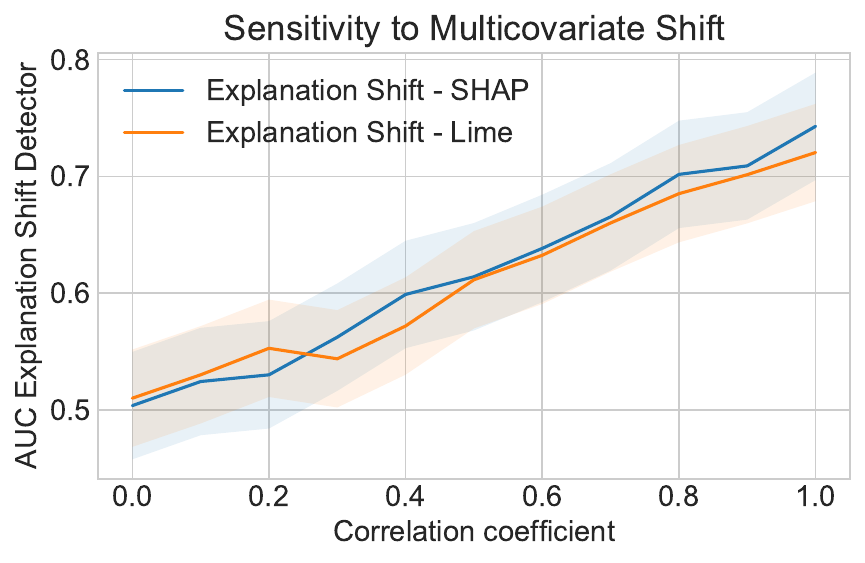}\hfill
\includegraphics[width=.8\textwidth]{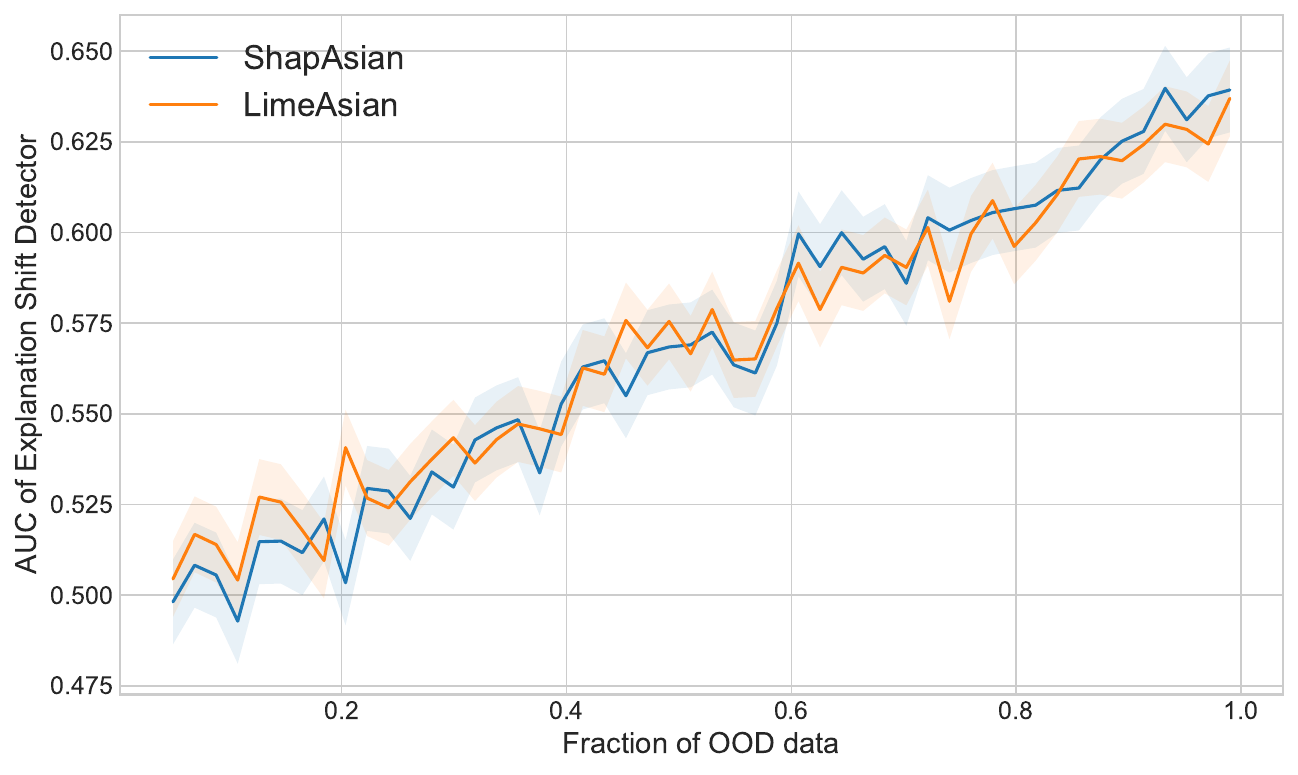}
\caption{Comparison of the explanation distribution generated by LIME and SHAP. The above figure shows the sensitivity of the predicted probabilities to multi covariate changes using the synthetic data experimental setup of \ref{fig:sensitivity} on the main body of the paper. The figure below shows the distribution of explanation shifts for a New Covariate Category shift (Asian) in the ACS Income dataset.}
\label{fig:lime}
\end{figure}

Figure \ref{fig:lime} compares the explanation distributions generated by LIME and SHAP. The left plot shows the sensitivity of the predicted probabilities to multicovariate changes using the synthetic data experimental setup from Figure \ref{fig:sensitivity} in the main body of the paper. The below plot shows the distribution of explanation shifts for a New Covariate Category shift (Asian) in the ASC Income dataset. The performance of OOD explanations detection is similar between the two methods, but LIME suffers from two drawbacks: its theoretical properties rely on the definition of a local neighborhood, which can lead to unstable explanations (false positives or false negatives on explanation shift detection), and its computational runtime required is much higher than that of SHAP (see experiments below).

\subsection{Runtime}
We conducted an analysis of the runtimes of generating the explanation distributions using the two proposed methods. The experiments were run on a server with 4 vCPUs and 32 GB of RAM. We used \texttt{shap} version $0.41.0$ and \texttt{lime} version $0.2.0.1$ as software packages. In order to define the local neighborhood for both methods in this example we use all the data provided as background data. As an $f_\theta$  model, we use an \texttt{xgboost} and compare the results of TreeShap against LIME.  When varying the number of samples we use 5 features and while varying the number of features we use $1000$ samples.

\begin{figure}[ht]
\centering
\includegraphics[width=.8\textwidth]{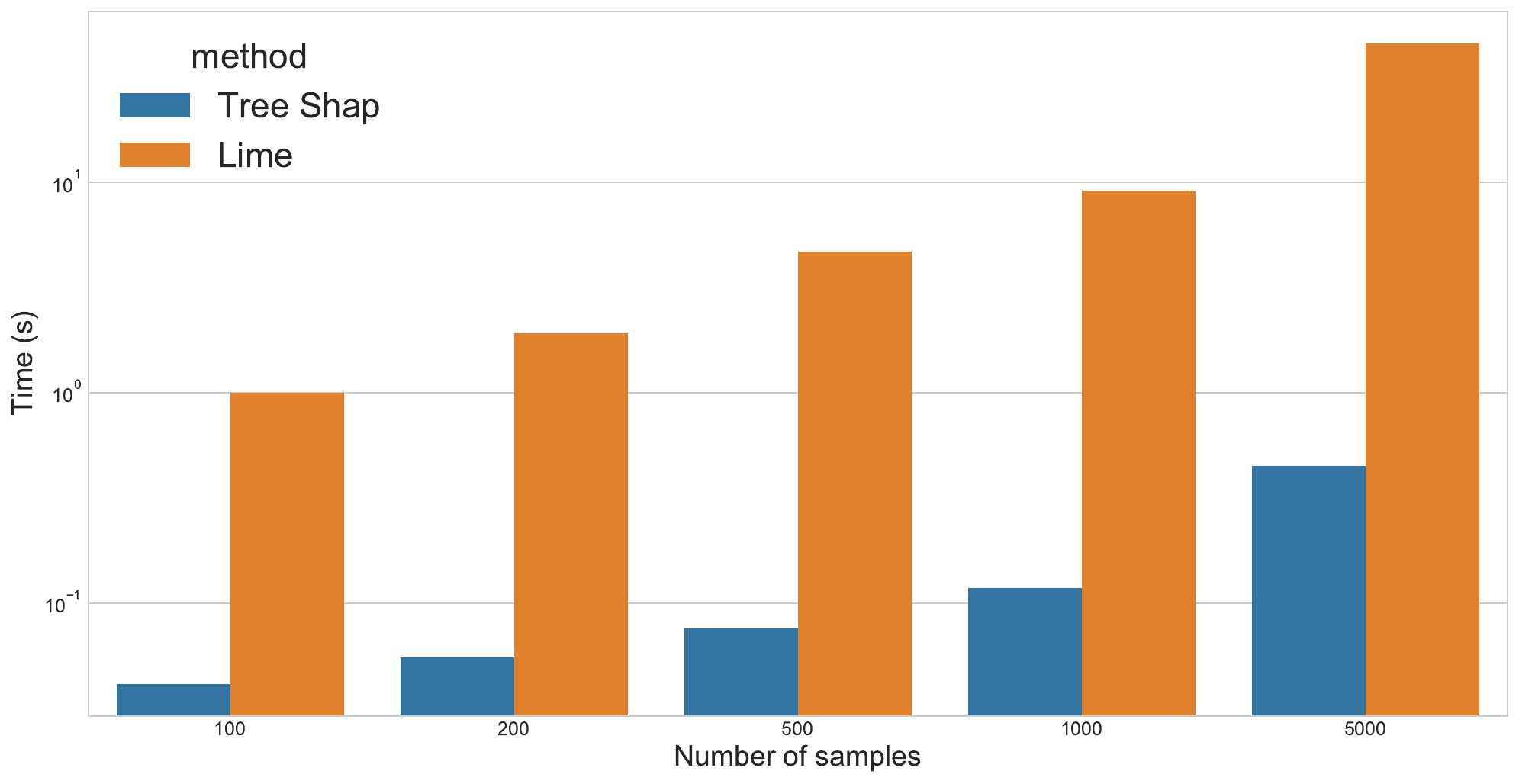}\hfill
\includegraphics[width=.8\textwidth]{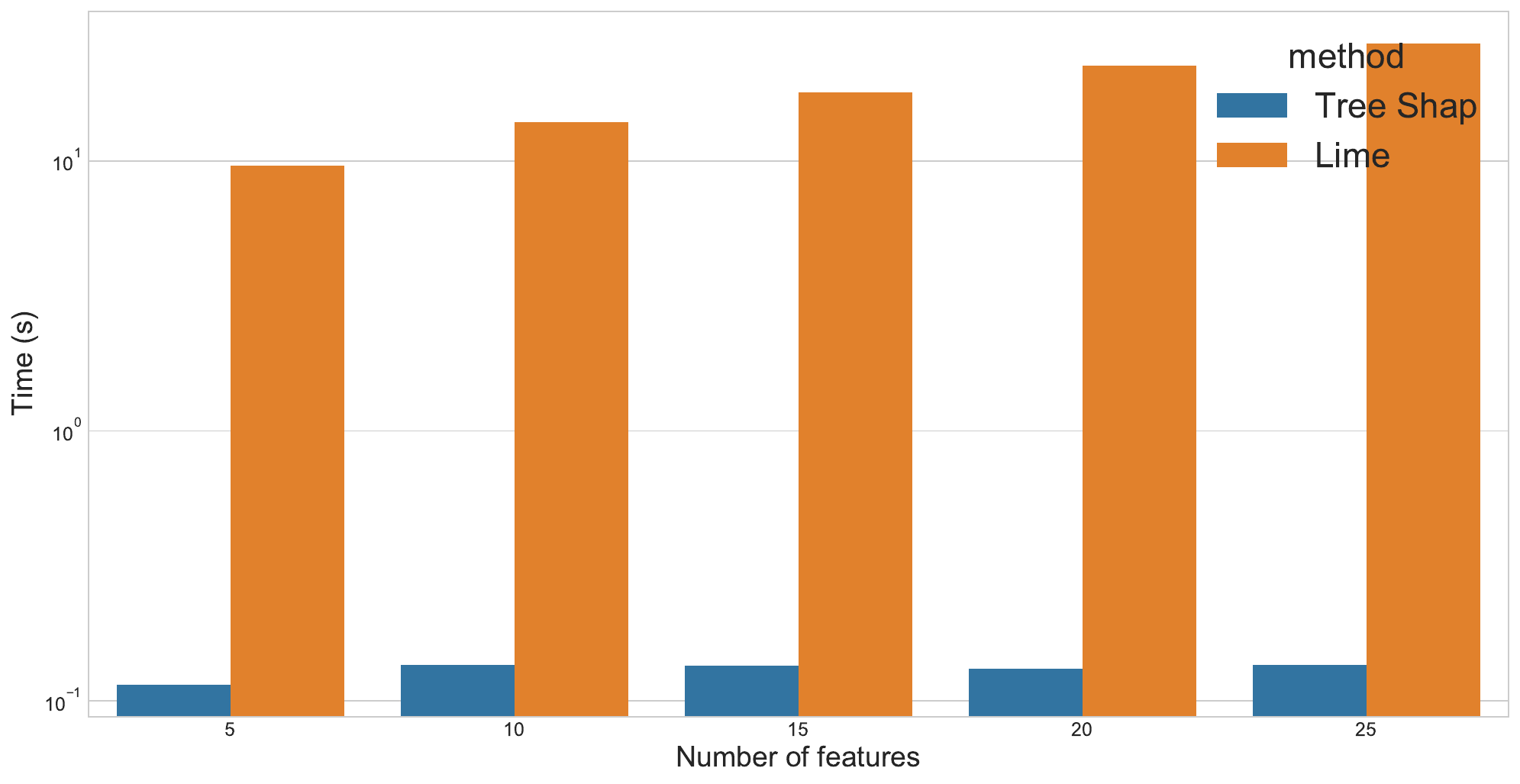}
\caption{Wall time for generating explanation distributions using SHAP and LIME with different numbers of samples (above) and different numbers of columns (below). Note that the y-scale is logarithmic. The experiments were run on a server with 4 vCPUs and 32 GB of RAM. The runtime required to create an explanation distributions with LIME is far greater than SHAP for a gradient-boosting decision tree}
\label{fig:computational}
\end{figure}

Figure \ref{fig:computational}, shows the wall time required for generating explanation distributions using SHAP and LIME with varying numbers of samples and columns. The runtime required of generating an explanation distributions using LIME is much higher than using SHAP, especially when producing explanations for distributions. This is due to the fact that LIME requires training a local model for each instance of the input data to be explained, which can be computationally expensive. In contrast, SHAP relies on heuristic approximations to estimate the feature attribution with no need to train a model
for each instance. The results illustrate that this difference in computational runtime becomes more pronounced as the number of samples and columns increases. 

We note that the computational burden of generating the explanation distributions can be further reduced by limiting the number of features to be explained, as this reduces the dimensionality of the explanation distributions, but this will inhibit the quality of the explanation shift detection as it won't be able to detect changes on the distribution shift that impact model on those features.

Given the current state-of-the-art of software packages we have used SHAP values due to lower runtime required and that theoretical guarantees hold with the implementations. In the experiments performed in this paper, we are dealing with a medium-scaled dataset with around $\sim 1,000,000$ samples and $20-25$ features. Further work can be envisioned on developing novel mathematical analysis and software that study under which conditions which method is more suitable.

\section{Related Work on Tabular Data}

\subsection{Classifier two-sample test} 

Evaluating how two distributions differ has been a widely studied topic in the statistics and statistical learning literature~\citep{statisticallearning,datasetShift,DBLP:conf/icml/LiuXL0GS20} and has advanced in recent years ~\citep{DBLP:conf/nips/ParkAKP21,DBLP:conf/nips/LeeLLS18,DBLP:conf/icml/ZhangSMW13}. The use of supervised learning classifiers to measure statistical tests has been explored by~\cite{DBLP:conf/iclr/Lopez-PazO17} proposing a classifier-based approach that returns test statistics to interpret differences between two distributions. We adopt their power-test analysis and interpretability approach but apply it to the explanation distributions instead of input data distributions.
\subsection{Out-Of-Distribution Detection}
Evaluating how two distributions differ has been a widely studied topic in the statistics and statistical learning literature~\cite{statisticallearning,datasetShift,DBLP:conf/icml/LiuXL0GS20}, that have advanced recently in last years ~\cite{DBLP:conf/nips/ParkAKP21,DBLP:conf/nips/LeeLLS18,DBLP:conf/icml/ZhangSMW13}.~\cite{DBLP:conf/nips/RabanserGL19} provides a comprehensive empirical investigation, examining how dimensionality reduction and two-sample testing might be combined to produce a practical pipeline for detecting distribution shifts in real-life machine learning systems.  Other methods to detect if new data is OOD have relied on neural networks based on the prediction distributions~\cite{fort2021exploring,NEURIPS2020_219e0524}. They use the maximum softmax probabilities/likelihood as a confidence score~\cite{DBLP:conf/iclr/HendrycksG17}, temperature or energy-based scores ~\cite{DBLP:conf/nips/RenLFSPDDL19,DBLP:conf/nips/LiuWOL20,DBLP:conf/nips/WangLBL21}, they extract information from the gradient space~\cite{DBLP:journals/corr/GradientShift}, relying on the latent space~\cite{DBLP:conf/nips/CrabbeQIS21}, they fit a Gaussian distribution to the embedding, or they use the Mahalanobis distance for out-of-distribution detection~\cite{DBLP:conf/nips/LeeLLS18,DBLP:journals/corr/reliableShift}. 

Many of these methods are explicitly developed for neural networks that operate on image and text data, and often, they can not be directly applied to traditional ML techniques. For image and text data, one may build on the assumption that the relationships between relevant predictor variables ($X$) and
response variables ($Y$) remain unchanged, i.e., \ that no \emph{concept shift} occurs. For instance, the essence of how a dog looks remains unchanged over different data sets, even if contexts may change. Thus, one can define invariances on the latent spaces of deep neural models, which do not apply to tabular data in a similar manner. For example, predicting buying behaviour before, during, and after the COVID-19 pandemic constitutes a conceptual shift that is not amenable to such methods. We focus on such tabular data where techniques such as gradient boosting decision trees achieve state-of-the-art model performance~\cite{grinsztajn2022why,DBLP:journals/corr/abs-2101-02118,BorisovNNtabular}.

Other research lines, such as membership inference (or membership classifiers), aim to detect that, given a data record and a machine learning model, whether the record was in the model's training dataset or not ~\cite{DBLP:conf/sp/ShokriSSS17}. Our approach is slightly different, as we don't attempt to predict if a certain instance belongs to the training data or not, but if its distribution is as the training dataset. 

\subsection{Detecting distribution shift and its impact on model behaviour}
Extensive related work has aimed at detecting that data is from out-of-distribution. To this end, they have created several benchmarks that measure whether data comes from in-distribution or not~\citep{wilds,wilds_ext,malinin2021shifts,malinin2022shifts,DBLP:conf/nips/MalininBGGGCNPP21}. In contrast, our main aim is to evaluate the impact of the distribution shift on the use of model.



A typical example is two-sample testing on the latent space such as described by~\cite{DBLP:conf/nips/RabanserGL19}. However, many of the methods developed for detecting out-of-distribution data are specific to neural networks processing image and text data and can not be applied to traditional machine learning techniques. These methods often assume that the relationships between predictor and response variables remain unchanged, i.e., no concept shift occurs. 
Our work is applied to  tabular data where techniques such as gradient boosting decision trees achieve state-of-the-art model performance~\citep{grinsztajn2022why,DBLP:journals/corr/abs-2101-02118,BorisovNNtabular}.

\subsection{Impossibility of model monitoring}

Recent research findings have formalized the limitations of monitoring machine learning models in the absence of labelled data. Specifically~\citep{garg2022leveraging,chen2022estimating} prove the impossibility of predicting model degradation or detecting out-of-distribution data with certainty~\citep{fang2022is,DBLP:conf/icml/ZhangGR21,guerin2022outofdistribution}. Although our approach does not overcome these limitations, it provides valuable insights for machine learning engineers to better understand changes in interactions between learned models and shifting data distributions.

\subsection{Model monitoring and distribution shift under specific assumptions:} Under specific types of assumptions, model monitoring and distribution shift become feasible tasks. One type of assumption often found in the literature is to leverage causal knowledge to identify the drivers of distribution changes~\citep{DBLP:conf/aistats/BudhathokiJBN21,zhang2022why,schrouff2022diagnosing}. For example,~\cite{DBLP:conf/aistats/BudhathokiJBN21} use graphical causal models and feature attributions based on Shapley values to detect changes in the distribution.
Similarly, other works aim to detect specific distribution shifts, such as covariate or concept shifts.  Our approach does not rely on additional information, such as a causal graph, labelled test data, or specific types of distribution shift. Still, by the nature of pure concept shifts, the model behaviour remains unaffected and new data need to come with labelled responses to be detected.

\subsection{Explainability and distribution shift}

~\cite{lundberg2020local2global} applied Shapley values to identify possible bugs in the pipeline by visualizing univariate SHAP contributions. Following this line of work, \cite{DBLP:conf/kdd/NigendaKZRTDK22} compare the order of the feature importance using the NDCG between training and unseen data.  We go beyond their work and formalize the multivariate explanation distributions on which we perform a two-sample classifier test to detect how distribution shift impacts interaction with the model. Furthermore, we provide a mathematical analysis of how the SHAP values contribute to detecting distribution shift. In Appendix~\ref{fig:shift.ndcg} we provide a formal comparison against~\cite{DBLP:conf/kdd/NigendaKZRTDK22}.

An approach using Shapley values by ~\cite{li2022enabling} allows for tracking distributional shifts and their impact among for categorical time series using slidSHAP, a novel method for unlabelled data streams. This approach is particularly useful for unlabelled data streams, offering insights into the changing data distribution dynamics. In contrast, our work focuses on defining explanation distributions and leveraging their theoretical properties in the context of distribution shift detection, employing a two-sample classifier test for detection.

Another perspective in the field of explainability is explored by~\cite{10.5555/3495724.3495784,DBLP:conf/nips/AdebayoGMGHK18}, who investigate the effectiveness of post-hoc model explanations for diagnosing model errors. They categorize these errors based on their source. While their work is geared towards model debugging, our research takes a distinct path by aiming to quantify the influence of distribution shifts on the model.

\cite{DBLP:conf/esann/HinderAVH22} proposes to explain concept drift by contrasting explanations
describing characteristic changes of spatial features. ~\cite{haug2021learning} track changes in the distribution of model parameter values that are directly related to the input features to identify concept drift early on in data streams. In a more recent paper,~\cite{haug2022change} also exploits the idea that local changes to feature attributions and distribution shifts are strongly intertwined and uses this idea to update the local feature attributions efficiently. Their work focuses on model retraining and concept shift, in our work, the original model $f_\theta$ remains unaltered, and since we are in an unsupervised monitoring scenario, we can't detect concept shifts see discussion in Section \ref{sec:discussion}


\section{Discussion}\label{sec:discussion}

In this study, we conducted a comprehensive evaluation of explanation shift by systematically varying models ($f$), model parametrizations ($\theta$), feature attribution explanations ($\Ss$), and input data distributions ($\D_X$). Our objective was to investigate the impact of distribution shift on the model by explanation shift and gain insights into its characteristics and implications.

The Shapley value, a key component in our method, describes how a model's prediction for a specific data point deviates from the mean. These theoretical considerations, which we laid out in Section~\ref{sec:math.analysis}, have been confirmed by our experimental sections.

In Section~\ref{sec:sub:novel.covariate}, we have studied input distribution shift. Our experiment shows that explanation shift detects input distribution shifts better than the best baseline methods. Table~\ref{tab:novel.group} showcases the two top-performing methods—comparing input distributions with Kolmogorov-Smirnoff \emph{(B1)} and our method — with statistically insignificant differences.

In Section~\ref{exp:synthetic}, we have studied co-variate shift. Considering, Table~\ref{tab:corr.Synth}, the best method for detecting input distribution shifts, \emph{(B1)}, fails completely on this task. The second best method is \emph{(B2)} comparison of prediction distributions using the Wasserstein distance, which also did quite well wrt.\ predicting input distribution shifts and came rather close behind our approach in both experiments.

Moreover, in our geopolitical and temporal shift experiment (Section~\ref{exp:geopolitical}), we demonstrate the ability to account for the drivers of model changes under such input data shifts. Cross-task comparisons in experiments (Figure~\ref{fig:xai.mobility} or Figure~\ref{fig:xai.traveltime}) highlight how explanation shift feature importance varies even when input distribution shifts remain constant during cross-task. These capabilities are not offered by any of the competing baselines. These observations are further supported by additional experiments in Appendix~\ref{app:hyperparameter}, where we solely vary model complexity, showcasing the adaptability of explanation shifts to changes in model characteristics.

Our approach cannot detect concept shifts, as concept shift requires understanding the interaction between prediction and response variables. By the nature of pure concept shifts, such changes do not affect the model. To be understood, new data need to come with labelled responses. We work under the assumption that such labels are not available for new data, nor do we make other assumptions; therefore, our method is not able to predict the degradation of prediction performance under distribution shifts. All papers such as \citep{garg2022leveraging,baek2022agreementontheline,chen2022estimating,fang2022is,DBLP:conf/icml/MillerTRSKSLCS21,lu2023predicting} that address the monitoring of prediction performance have the same limitation. Only under specific assumptions, e.g., no occurrence of concept shift or causal graph availability, can performance degradation be predicted with reasonable reliability.

The potential utility of explanation shifts as distribution shift indicators that affect the model in computer vision or natural language processing tasks remains an open question. We have used feature attribution explanations to derive indications of explanation shifts, but other AI explanation techniques may be applicable and come with their advantages.

\section{Conclusions}

Commonly, the problem of detecting the impact of the distribution shift on the model has relied on measurements for detecting shifts in the input or output data distributions or relied on assumptions either on the type of distribution shift or causal graphs availability. In this paper, we proposed explanation shifts as an indicator for detecting and identifying the impact of distribution shifts on machine learning models. We provide software, mathematical analysis examples, synthetic data, and real-data experimental evaluation. We found that measures of explanation shift can provide more insights than input distribution and prediction shift measures when monitoring machine learning models. 

\subsection*{Limitations}
The potential utility of explanation shifts as distribution shift indicators that affect the model in computer vision or natural language processing tasks remains an open question. We have used feature attribution explanations to derive indications of explanation shifts, but other AI explanation techniques may be applicable and come with their advantages. Also, our approach cannot detect concept shifts, as concept shift requires understanding the interaction between input data and response variables. By the nature of pure concept shifts, such changes do not affect the model. We work under the assumption that such labels are not available for new data, nor do we make other assumptions; therefore, our method is not able to predict the degradation of prediction performance under distribution shifts.

Furthermore, our use of the \texttt{shap} Python package for Shapley values approximation can introduce known drawbacks, as highlighted in recent literature~\citep{DBLP:journals/corr/abs-2212-11870,10.1145/3375627.3375830}. Additionally, our current implementation relies on linear Shapley value interaction approximations, which can be extended following the work of~\citet{fumagalli2023shapiq,DBLP:conf/aistats/BordtL23}.

\chapter{Building Indicators of Model Performance Deterioration}\label{ch:monitoring}

Monitoring machine learning models in production is not an easy task. There are situations when the true label of the deployment data is available, and performance metrics can be monitored. But there are cases where it is not, and performance metrics are not so trivial to calculate once the model has been deployed. Model monitoring aims to ensure that a machine learning application in a production environment displays consistent behavior over time. 

Being able to explain or remain accountable for the performance or the deterioration of a deployed model is crucial, as a drop in model performance can affect the whole business process, potentially having catastrophic consequences\footnote{The Zillow case is an example of consequences of model performance degradation in an unsupervised monitoring scenario, see
\url{https://edition.cnn.com/2021/11/09/tech/zillow-ibuying-home-zestimate/index.html} (Online accessed January 26, 2022).}. Once a deployed model has deteriorated, models are retrained using previous and new input data in order to maintain high performance. This process is called continual learning ~\citep{continual_learning} and it can be computationally expensive and put high demands on the software engineering system. Deciding when to retrain machine learning models is paramount in many situations.

Traditional machine learning systems assume that training data has been generated from a stationary source, but \textit{data is not static, it evolves}. This problem can be seen as a distribution shift, where the data distributions of the training set and the test set differ. Detecting distribution shifts has been a longstanding problem in the machine learning (ML) research community~\citep{SHIMODAIRA2000227,sugiyama,sugiyama2,priorShift,learningSampleSelection,intuitionSampleSelection,varietiesSelectionBias,cortes2008sample,CorrectingSampleSelection,practicalFB}, as it is one of the main sources of model performance deterioration ~\citep{datasetShift}. Furthermore, data scientists in machine learning competitions claim that finding the train/validation split that better resembles the test (evaluation) distribution is paramount to winning a Kaggle competition ~\citep{howtowinKaggle}. 

However, despite the fact that a shift in data distribution can be a source of model deterioration, the two are not identical. Indeed, if we shift a random noise feature we have caused a change in the data distribution, but we should not expect the performance of a model to decline when evaluated on this shifted dataset. Thus, we emphasize here that our focus is on \textit{model deterioration} and not distribution shift, despite the correlation between the two.

Established ways of monitoring distribution shift when the real target distribution is not available are based on statistical changes either the input data ~\citep{continual_learning,DBLP:conf/nips/RabanserGL19} or on the model output~\cite{garg2022leveraging}. These statistical tests correctly detect univariate changes in the distribution but are completely independent of the model performance and can therefore be too sensitive, indicating a change in the covariates but without any degradation in the model performance. This can result in false positives, leading to unnecessary model retraining. It is worth noting that several authors have stated the clear need to identify how non-stationary environments affect the behavior of models~\citep{continual_learning}.

Aside from merely indicating that a model has deteriorated, it can in some circumstances be beneficial to identify the \textit{cause} of the model deterioration by detecting and explaining the lack of knowledge in the prediction of a model. Such explainability techniques can provide algorithmic transparency to stakeholders and to the ML engineering team ~\citep{desiderataECB,bhatt2021uncertainty,koh2020understanding,ribeiro2016why,sundararajan2017axiomatic}.

This chapter's primary focus is on non-deep learning models and small to medium-sized tabular datasets, a size of data that is very common in the average industry, where, non-deep learning-based models achieve state-of-the-art results~\cite{grinsztajn:hal-03723551,survey_DL_tabular,do_we_need_DL}.

Our contributions are the following:

\begin{enumerate}
    
    \item We use this non-parametric uncertainty estimation method to develop a machine learning monitoring system for regression models, which outperforms previous monitoring methods in terms of detecting deterioration of model performance.
    
    \item We use explainable AI techniques to identify the source of model deterioration for both entire distributions as a whole as well as for individual samples, where classical statistical indicators can only determine distribution differences. 

    \item We release an open source Python package, \texttt{doubt}, which implements our uncertainty estimation method and is compatible with all \texttt{scikit-learn} models~\citep{pedregosa2011scikit}.
\end{enumerate}

\section{Related Work}\label{sec:relatedwork}

\subsection{Model Monitoring}

From a software perspective, a machine learning model is another software component whose functionalities need to be tested before deployment. This approach has one structural weakness: \textit{data is not static, it evolves} ~\citep{continual_learning}. The conditions in which a system is developed can differ from deployment time for a range of reasons, from an evolutionary environment, a sampling bias at training time, or the inability to reproduce test conditions at training time ~\citep{datasetShift}.

Model monitoring techniques help to detect unwanted changes in the behavior of a machine learning application in a production environment. One of the biggest challenges in model monitoring is distribution shift, which is also one of the main sources of model degradation ~\citep{datasetShift,continual_learning}. 

Diverse types of model monitoring scenarios require different supervision techniques. We can distinguish two main groups: Supervised learning and unsupervised learning. Supervised learning is the appealing one from a monitoring perspective, where performance metrics can easily be tracked. Whilst attractive, these techniques are often unfeasible as they rely either on having ground truth labeled data available or maintaining a hold-out set, which leaves the challenge of how to monitor ML models to the realm of unsupervised learning~\cite{continual_learning}. Popular unsupervised methods that are used in this respect are the Population Stability Index (PSI) and the Kolmogorov-Smirnov test (K-S), all of which measure how much the distribution of the covariates in the new samples differs from the covariate distribution within the training samples. These methods are often limited to real-valued data, low dimensions, and require certain probabilistic assumptions~\citep{continual_learning,ShiftsData}.


Another approach suggested by~\citet{shapTree} is to monitor the SHAP value contribution of input features over time together with decomposing the loss function across input features in order to identify possible bugs in the pipeline as well as distribution shift. This technique can account for previously unaccounted bugs in the machine learning production pipeline but fails to monitor the model degradation.

Is worth noting that prior work~\citep{garg2022leveraging,jiang2021assessing} has focused on monitoring models either on out-of-distribution data or in-distribution data~\citep{neyshabur2017exploring,neyshabur2018understanding}. Such a task, even if challenging, does not accurately represent the different types of data a model encounters in the wild. In a production environment, a model can encounter previously seen data (training data), unseen data with the same distribution (test data), and statistically new and unseen data (out-of-distribution data). That is why we focus our work on finding an unsupervised estimator that replicates the behavior of the model performance.

The idea of mixing uncertainty with dataset shift was introduced by~\citet{trustUncertainty}. Our work differs from theirs, in that they evaluate uncertainty by shifting the distributions of their dataset, where we aim to detect model deterioration under dataset shift using uncertainty estimation. Their work is also focused on deep learning classification problems, while we estimate uncertainty using model agnostic regression techniques. Further, our contribution allows us to pinpoint the features/dimensions that are main causes of the model degradation.

\citet{garg2022leveraging} introduces a monitoring system for classification models, based on imposing thresholds on the softmax values of the model. Our method differs from theirs in that we work with regression models and not classification models, and that our method utilizes external uncertainty estimation methods, rather than relying on the model's own ``confidence'' (i.e., the outputted logits and associated softmax values).

\citet{DBLP:conf/nips/RabanserGL19}, presents a comprehensive empirical investigation of dataset shift, examining how dimensionality reduction and two-sample testing might be combined to produce a practical pipeline for detecting distribution shift in a real-life machine learning system. They show that the two-sample-testing-based approach performs best. This serves as a baseline comparison within our models, even if their idea is more focused on binary classification, whereas our works focus on building a regression indicator. 

\subsection{Uncertainty}

Uncertainty estimation is being developed at a fast pace. Model averaging ~\citep{kumar2012bootstrap,gal2016dropout,lakshminarayanan2017simple,arnez2020} has emerged as the most common approach to uncertainty estimation. Ensemble and sampling-based uncertainty estimates have been successfully applied to many use cases such as detecting misclassifications~\citep{unc_ood_class}, out-of-distribution inputs~\citep{dangelo2021uncertaintybased}, adversarial attacks ~\citep{adversarialUncertainty,smith2018understanding}, automatic language assessments~\citep{malinin2019uncertainty} and active learning~\citep{kirsch2019batchbald}. In our work, we apply uncertainty to detect and explain model performance for seen data (train), unseen and identically distributed data (test), and statistically new and unseen data (out-of-distribution). \citet{barber2021predictive} recently introduced a new non-parametric method of creating prediction intervals using the Jackknife+. Our method differs from theirs in that we are using general bootstrapped samples for our estimates rather than leave-one-out estimates.

In this thesis, we use the method introduced by \cite{mougan2022monitoring}, which is an extension of the method proposed by \cite{kumar2012bootstrap}, that takes into account the model's variance in the construction of the prediction intervals. \cite{kumar2012bootstrap}, introduced a non-parametric method to compute prediction intervals for any ML model using a bootstrap estimate with theoretical guarantees.




\section{Methodology}\label{sec:methodology}

\subsection{Evaluation of Deterioration Detection Systems}\label{sec:evaluation-deterioration-systems}

The problem we are tackling in this paper is evaluating and accounting for model predictive performance deterioration, which in general is an impossible task. In order to achieve possible results, we simulate a gradual covariate distribution shift scenario in which we \textit{have} access to the true labels, which we can use to measure the model deterioration and thus evaluate the monitoring system. A naive simulation in which we simply manually shift a chosen feature of a dataset would not be representative, as the associated true labels could have changed if such a shift happened "in the wild".

Therefore, we propose the following alternative approach. Starting from a real-life dataset $\D$ and a numerical feature $F$ of $\D$, we sort the data samples of $\D$ by the value of $F$, and split the sorted $\D$ in three equally sized sections: $\{\Dd{below},\Dd{tr},\Dd{upper}\} \subseteq\D$. The model is then fitted to the middle section ($\Dd{tr}$) and evaluated on all of $\D$. The goal of the monitoring system is to input the model, the labelled data segment $\Dd{tr}$ and a sample of unlabelled data $\Dd{te}_X\subseteq\D$, and output a ``monitoring value'' which behaves like the model's performance on $\Dd{te}$. Such a prediction will thus have to take into account the training performance, generalization performance, and the out-of-distribution performance of the model. This setup aims to evaluate a gradual covariate shift, in which we assume that in the vicinity of the training data, concept shift has not occurred. 

In the experimental section, we compare our monitoring technique to several other such systems. To enable comparison between the different monitoring systems, we standardize all monitoring values as well as the performance metrics of the model. From these standardized values, we can now directly measure the goodness-of-fit of the model monitoring system by computing the absolute difference between its (standardized) monitoring values and the (standardized) ground truth model performance metrics. Our chosen evaluation method is very similar to the one used by \citet{garg2022leveraging}. They focus on classification models and their systems output estimates of the model's accuracy on the dataset. They evaluate these systems by computing the absolute difference between the system's accuracy estimate and the actual accuracy that the model achieves on the dataset.

As we are working with regression models in this paper, we will only operate with a single model performance metric: mean squared error. We will introduce our monitoring system, which is based on an uncertainty measure, and will compare our monitoring system against statistical tests based on input data or prediction data. In that section, we will also compare our uncertainty estimation method to current state-of-the-art uncertainty estimation methods.

\subsection{Uncertainty Estimation}\label{sec:uncertaintyestimation}

In order to estimate uncertainty in a general way for all machine learning models, we use a non-parametric regression technique. The method aims to determine prediction intervals for outputs of general non-parametric regression models using bootstrap methods.

We estimate the uncertainty of a model by computing a bootstrap estimate of the variance of the predictions. This is part of the method used in \cite{kumar2012bootstrap} and is computed as follows. From a dataset $X$ and a new sample $x_0\notin X$, we first bootstrap $X$ $B>0$ times, yielding bootstrapped subsamples $X_b$ for each $b<B$. We next fit our model on each of the $X_b$'s, call the resulting fitted models $M_b$, and then computing the predictions of the fitted models of $x_0$, $M_b(x_0)$. We then use the length of the 95\% confidence interval of these bootstrapped predictions as our uncertainty measure. Concretely, this is computed as
\begin{equation}
    \texttt{uncertainty}_M(x_0) := (q_{0.025}(\{M_b(x_0)\mid b<B\}), q_{0.975}(\{M_b(x_0)\mid b<B\})),
\end{equation}

with $q_\alpha(A)$ being the $\alpha$'th quantile of the set $A$.

\subsection{Detecting the Source of Uncertainty/Model Deterioration}
Using uncertainty as a method to monitor the performance of an ML model does not provide any information on \textit{what} features are the cause of the model degradation, only a goodness-of-fit to the model performance. We propose to solve this issue with the use of Shapley values. 

We start by fitting a model $f_{\theta}$ to the training data, $X^{\text{train}}$. We next shift the test data by five standard deviations (call the shifted data $X^{\text{ood}}$) and compute uncertainty estimates $Z$ of $f_{\theta}$ on $X^{\text{ood}}$. We next fit a second model $g_{\psi}$ on $(X^{\text{ood}})$ to predict the uncertainty estimate $Z$, and compute the associated Shapley values \cite{shapTree} of $g_{\psi}$. These Shapley values thus signify which features are the ones contributing the most to the uncertainty values. With the correlation between uncertainty values and model deterioration that we hope to conclude from the experiment described in the experimental section, this thus also provides us with a plausible cause of the model deterioration, if deterioration has taken place. Particularly, this methodology can be extended to large-scale datasets and deep learning-based models.
\section{Experiments}
\label{sec:experiments.monitoring}
Our experiments have been
organized into two main groups: Firstly, we assess the performance of our proposed uncertainty method for monitoring the performance of a machine learning model. Then, we evaluate the usability of the explainable uncertainty for identifying the features that are driving model degradation in local and global scenarios. We present the results over several real-world datasets, Table~\ref{tab:uncertaintydatasets}; we provide experiments on synthetic datasets that exhibit non-linear and linear behaviour.

\begin{table}[ht]
\begin{center}
\caption{Statistics of the regression datasets used in this paper.}\label{tab:uncertaintydatasets}
\begin{tabular}{l|c|c}
    Dataset & \# Samples & \# Features\\
    \hline
    Airfoil Self-Noise             & 1,503            & 5  \\
    Bike Sharing                   & 17,379           & 16 \\
    Concrete Strength              & 1,030            & 8  \\
    QSAR Fish Toxicity             & 908              & 6  \\
    Forest Fires                   & 517              & 12 \\
    Parkinsons                     & 5,875            & 22 \\
    Power Plant                    & 9,568            & 4  \\
    Protein                        & 45,730           & 9 \\

\end{tabular}
\end{center}
\end{table}

\subsection{Evaluating Model Deterioration}

The scenario we are addressing is characterized by regression data sets with statistically seen data (train data), IID statistically unseen data (test data), and out-of-distribution data. Following the open data for reproducible research guidelines described in~\citet{the_turing_way_community_2019_3233986} and for measuring the performance of the proposed methods, we have used eight open-source datasets (cf. Table~\ref{tab:uncertaintydatasets}) for an empirical comparison coming from the UCI repository~\citep{uci_data}. As described in the methodology, to benchmark our algorithm we, for each feature $F$ in each dataset $\D$, sort $\D$ according to $F$ and split $\D$ into three equally sized sections $\{\Dd{below},\Dd{tr},\Dd{upper}\} \subseteq\D$. We then train the model on $\Dd{tr}$ and test the performance of all of $\D$. In this way we obtain a mixture of train, test, and out-of-distribution data, allowing us to evaluate our monitoring techniques in all three scenarios.

In evaluating a monitoring system we need to make a concrete choice of the sampling method to get the unlabelled data $\mathcal{S}\subseteq\D$. We are here using a rolling window of fifty samples, which has the added benefit of giving insight into the performance of the monitoring system on each of the three sections $\Dd{lower}$, $\Dd{tr}$ and $\Dd{upper}$ (cf. Figure~\ref{fig:distribution}).

\begin{figure}[ht]
  \centering
  \includegraphics[width=1\columnwidth]{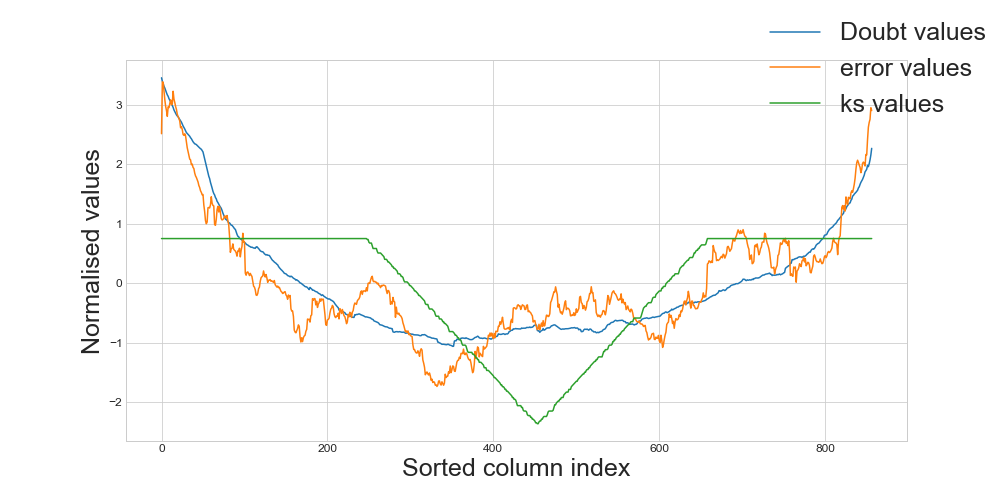}
  \caption{Comparison of different model degradation detection methods for the Fish Toxicity dataset. Each of the plots represents an independent experiment where each of the six features has been shifted, using the method described in the methodology section. Doubt achieves a better goodness-of-fit than previous statistical methods. A larger version of this figure can be found in Appendix.}\label{fig:distribution}
\end{figure}

We compare our monitoring system using the uncertainty estimation method against: $(i)$ two classical statistical methods on input data: the Kolmogorov-Smirnov test statistic (K-S) and the Population-Stability Index (PSI)~\citep{continual_learning}, $(ii)$ a Kolmogorov-Smirnov statistical test on the predictions between train and test~\cite{garg2022leveraging} that we denominate prediction shift and $(iii)$ the previous state-of-the-art  uncertainty estimate MAPIE. We evaluate the monitoring systems on a variety of model architectures: generalized linear models, tree-based models as well as neural networks.

The average performance across all datasets can be found in Table~\ref{tab:scores}.\footnote{See the appendix for a more detailed table.} From these we can see that our methods outperform K-S and PSI in all cases except for the Random Forest case, where our method is still on par with the best method, in that case, K-S. We have included a table with each dataset and all the estimators in the appendix, where it can be seen that both K-S and PSI easily identify a shift in the distribution but fail to detect when the model performance degrades, giving too many false positives.

\begin{table*}[ht]
\small
\begin{center}
\begin{tabular}{l|cccccc}
    & PSI & K-S & Pred.Shift & MAPIE & Doubt & Exp.Shift \\
    \hline
    Linear Reg. & $0.87 \pm 0.08$ & $0.81 \pm 0.10$ & $0.86 \pm 0.13$ & $0.77 \pm 0.10$ & $\mathbf{0.71 \pm 0.14}$ & $0.85 \pm 0.10$\\
    Poisson & $0.93 \pm 0.08$ & $0.94 \pm 0.20$ & $1.00 \pm 0.15$ & $0.83 \pm 0.18$ & $\mathbf{0.79 \pm 0.14}$& $0.94 \pm 0.14$ \\
    Decision Tree & $0.97 \pm 0.10$ & $0.52 \pm 0.12$ & $0.80 \pm 0.14$ & $0.60 \pm 0.16$ & $\mathbf{0.49 \pm 0.10}$& $0.77 \pm 0.16$ \\
    Random Forest & $0.95 \pm 0.08$ & $\mathbf{0.50 \pm 0.12}$ & $0.73 \pm 0.18$ & $0.86 \pm 0.15$ & $0.74 \pm 0.18$ & $0.89 \pm 0.20$\\
    Gradient Boosting & $0.95 \pm 0.08$ & $0.61 \pm 0.19$ & $0.75 \pm 0.20$ & $0.73 \pm 0.18$ & $\mathbf{0.58 \pm 0.23}$ & $0.87 \pm 0.18$\\
    MLP & $0.84 \pm 0.16$ & $0.72 \pm 0.22$ & $0.74 \pm 0.22$ & $0.74 \pm 0.38$ & $\mathbf{0.68 \pm 0.38}$& $0.84 \pm 0.10$ \\
\end{tabular}
\caption{Performance of model monitoring systems for model deterioration for a variety of model architectures on eight regression datasets from the UCI repository~\cite{uci_data}. The scores are the means and standard deviations of the absolute deviation from the true labels on $\Dd{lower}$ and $\Dd{upper}$ (lower is better). K-S and PSI are the monitoring systems obtained by computing the Kolmogorov-Smirnov test values and the Population Stability Index, respectively, Prediction Shift is the statistical comparison of the model prediction,  and Doubt is our method. The best results for each model architecture are shown in bold. See the Appendix for all the raw scores.}
\label{tab:scores}
\end{center}
\end{table*}

\subsection{Detecting the Source of Uncertainty}\label{sec:unceratainty_source}

For this experiment, we make use of two datasets: a synthetic one and the popular House Prices regression dataset\footnote{\url{https://www.kaggle.com/c/house-prices-advanced-regression-techniques}}, where the goal is to predict the selling price of a given property. We select two of the features that are the most correlated with the target, \texttt{GrLivArea} and \texttt{TotalBsmtSF}, and also create a new feature of random noise, to have an example of a feature with minimum correlation with the target. A model deterioration system should therefore highlight the \texttt{GrLivArea} and \texttt{TotalBsmtSF} features, and \textit{not} highlight the random features.

Concretely, we compute an estimation of the Shapley values using TreeSHAP~\cite{shapTree}, which is an efficient estimation approach values for tree-based models, that allows for this second model to identify the features that are the source of the uncertainty, and thus also provide an indicator for what features might be causing the model deterioration.

We fitted an MLP on the training dataset, which achieved a $R^2$ value of $0.79$ on the validation set. We then shifted all three features by five standard deviations and trained a gradient boosting model on the uncertainty values of the MLP on the validation set, which achieves a good fit (an $R^2$ value of $0.94$ on the hold-out set of the validation). We then compare the SHAP values of the gradient boosting model with the PSI and K-S statistics for the individual features.


\begin{figure}[ht]
  \centering
  \includegraphics[width=1\linewidth]{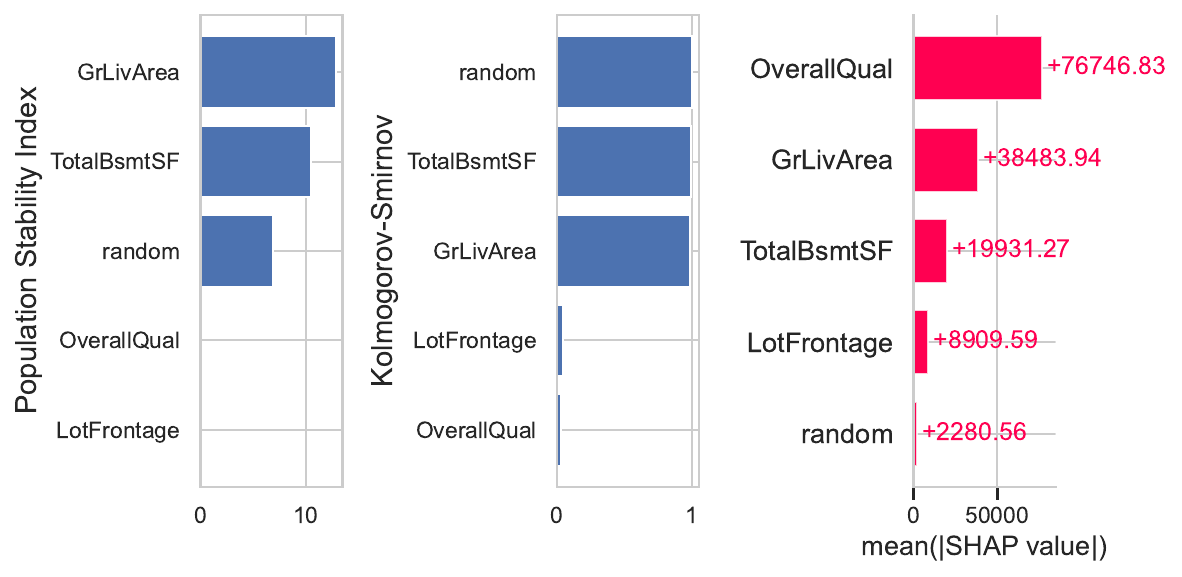}
  \caption{Global comparison of different distribution shift detection methods. Statistical methods correctly indicate that there exists a distribution shift in the shifted data. Shapley values indicate the contribution of each feature to the drop in predictive performance of the model.}\label{fig:shap}
\end{figure}

In Figure~\ref{fig:shap}, classical statistics and SHAP values to detect the source of the model deterioration are compared. We see that the PSI and K-S value correctly capture the shift in each of the three features (including the random noise). On the other hand, our SHAP method highlights the two substantial features (\texttt{GrLivArea} and \texttt{TotalBsmtSF}) and correctly does not assign a large value to the random feature, despite the distribution shift.

\begin{figure}[ht]
  \centering
  \includegraphics[width=1\linewidth]{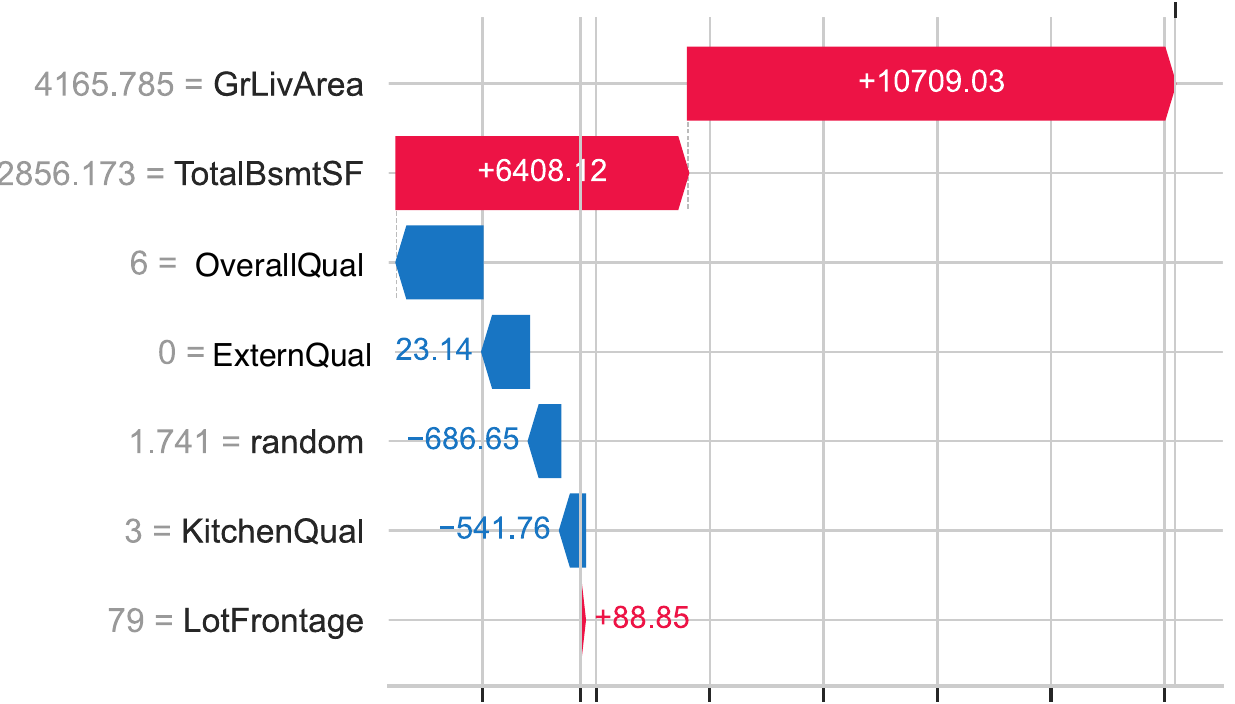}
  \caption{Individual explanation that displays the source of uncertainty for one instance. The previous method allowed only for comparison between distributions, now with explainable uncertainty, we are able to account for individual instances. In red, features pushing the uncertainty prediction higher are shown; in blue, those pushing the uncertainty prediction lower.}\label{fig:local_shap}
\end{figure}

Figure~\ref{fig:local_shap} shows features contributing to pushing the model output from the base value to the model output. Features pushing the uncertainty prediction higher are shown in red, and those pushing the uncertainty prediction lower are in blue~\citep{lundberg2018,lundberg2020local2global,lundberg2017unified}. From these values, we can, at a local level, also identify the two features (\texttt{GrLivArea} and \texttt{TotalBsmtSF}) causing the model deterioration in this case.

\section{Synthetic Data Experiments}
In this section we apply our model monitoring method to synthetic data to demonstrate main differences between our approach, methods from classical statistics and other model agnostic uncertainty approaches.

We sample data from a three-variate normal distribution $X = (X_1,X_2,X_3) \sim N(1,0.1\cdot I_3)$ with $I_3$ being an identity matrix of order 3. We construct the target variable as $Y:=X_1^2 + X_2 + \varepsilon$, with $\varepsilon \sim N(0,0.1)$ being random noise. We thus have a non-linear feature $X_1^2$, a linear feature $X_2$ and a feature $X_3$ which is not used, all of them being independent among each other. We draw $10,000$ random samples for both training and test data, yielding the training set $(X^{tr},Y^{tr})$ and the test set $(X^{te},Y^{te})$ and train a linear model $f_\theta$. 

\subsection{Evaluating Model Deterioration}
In this experiment we aim to compare indicators of model deterioration by replacing the value of each feature, $j=1,2,3$, with a continuous vector with evenly spaced numbers over a specified range $(-3,4)$

\begin{figure}[ht]
  \includegraphics[width=1\linewidth]{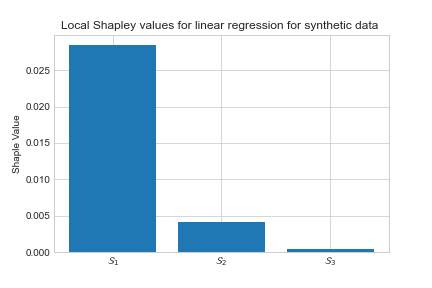}
  \caption{Shapley values for the data point $x:=(10,10,10)$  and the model $g_\theta$. Explainable uncertainty estimation allows to
  account for the drivers of uncertainty that serves as a proxy for model predictive performance degradation. }\label{fig:syntheticAnalytical}
\end{figure}

\begin{figure*}[ht]
  \includegraphics[width=1\linewidth]{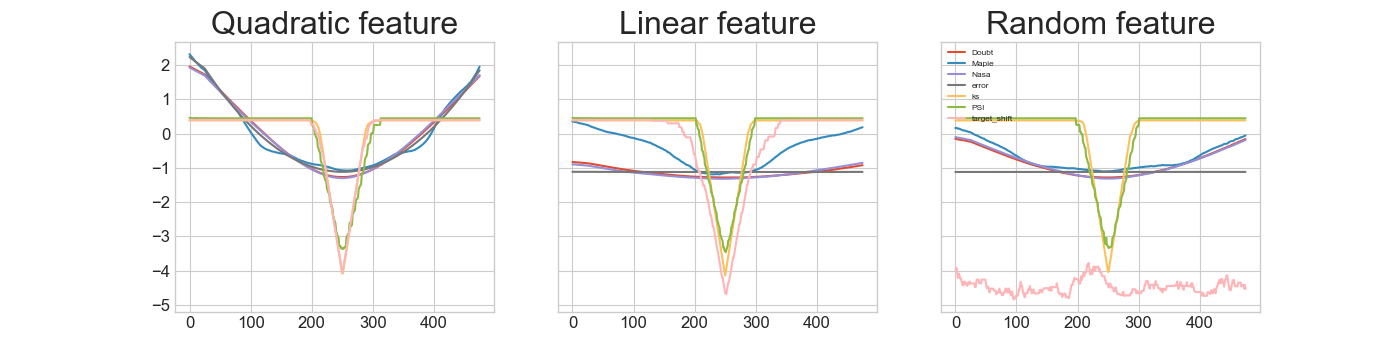}
  \caption{Comparison of different monitoring methods over the synthetic dataset. Each of the plots represents an independent experiment where each of the features has been replaced by a continuous vector with  evenly spaced numbers  in the range $(-3,4)$.  The x-axis represents the index of the sorted column. Doubt achieves a better goodness-of-fit than previous statistical methods (cf. Table \ref{table:synthetic}) }\label{fig:syntheticDegradation}
\end{figure*}

\begin{table*}[ht]
\begin{center}
\begin{tabular}{c|ccccccc}
                  & \textbf{Doubt} & \textbf{NASA}  & \textbf{MAPIE}   & \textbf{KS} & \textbf{PSI} & \textbf{Pred. Shift}& \textbf{Exp. Shift} \\ \hline
Quadratic feature &  \textbf{0.09}&     0.11        &    \textbf{0.09}&     0.96    &0.95          &   0.94  &   0.98                  \\
Linear feature    &  \textbf{0.11}&     0.12        &     0.70       &     0.35     & 1.39         &  1.46   &   1.20                    \\
Random feature    &  \textbf{0.34}&     0.35        &     0.42       &     1.39     &1.46          &   5.58  &   1.21                  \\ \hline
Mean              & \textbf{0.18} &     0.20        &     0.40       &      1.24    &1.29          &   2.69  &   1.13                    
\end{tabular}
\caption{Performance of model monitoring systems for model deterioration for a linear regression model on a synthetic dataset (cf. Figure \ref{fig:syntheticDegradation}).  K-S and PSI are the monitoring systems obtained by computing the Kolmogorov-Smirnov test values and the Population Stability Index, respectively, and Doubt is our method. Explanation Shift is the method presented in the previous chapter of the thesis. Best results are shown in bold.}\label{table:synthetic}
\end{center}
\end{table*}

In Figure \ref{fig:syntheticDegradation} and Table \ref{table:synthetic} we can see what is the impact of the synthetic distribution shift is in terms of model mean squared error. We see that our uncertainty method follows a better goodness-of-fit than both the Kolmogorov-Smirnov test and the Population Stability Index on the input data and and target data. The latter two methods have identical values for the linear and random features since they are independent of the target values.

\subsection{Detecting the Source of Uncertainty/Model Deterioration}

For this example, we simulate out-of-distribution data by shifting $X$ by 10; i.e., $X^{ood}_j := X^{tr}_j + 10$ for $j=1,2,3$. We train a linear regression model $f_\theta$ on $(X^{tr},Y^{tr})$ and use Doubt to get uncertainty estimates $Z$ on both $X^{te}$ and $X^{ood}$. We then train another linear model $g_\psi$ on $((X^{te},X^{ood}))$ to predict $Z$. The coefficients of $g_\psi$ are $\beta_1= 0.03478, \beta_2=0.005023 ,\beta_3= 0.000522$. 
Since the features are independent and we are dealing with linear regression, the interventional conditional expectation Shapley values can be computed as $\beta_i(x_i-\mu_i)$~\cite{DBLP:journals/corr/ShapTrueModelTrueData}, where $\mu$ is the mean of the training data . So for the data point $x:=(10,10,10)$, the Shapley values are $(0.0291,0.0042,0.0004)$, where the most relevant shifted feature in the model is the one that receives the highest Shapley value. In this experiment with synthetic data, statistical testing on the input data would have flagged the three feature distributions as equally shifted. With our proposed method, we can identify the more meaningful features towards identifying the source of model predictive performance deterioration. It is worth noting that this explainable AI uncertainty approach can be used with other uncertainty estimation techniques.

\section{Computational Performance Comparison}

\begin{figure*}[ht]
\centering
\includegraphics[width=.4\textwidth]{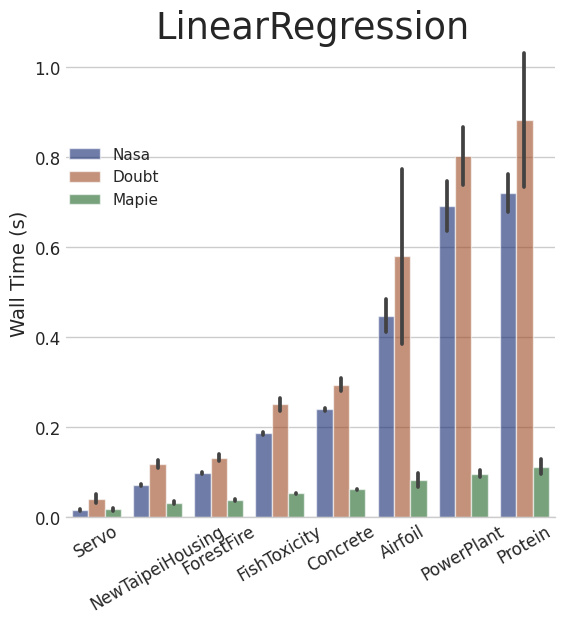}\hfill
\includegraphics[width=.4\textwidth]{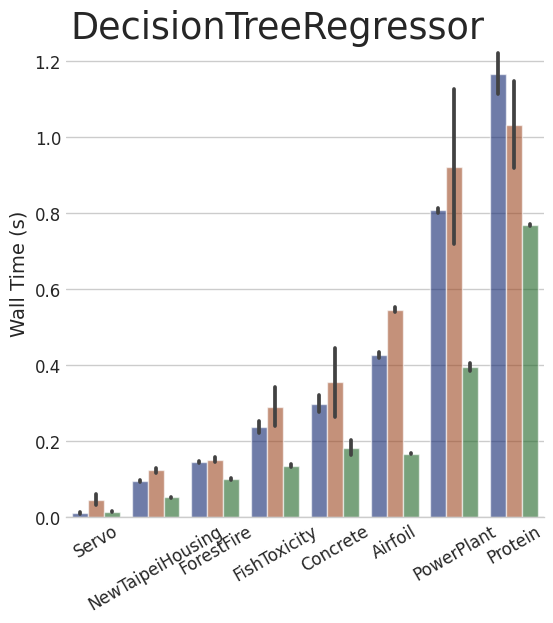}\hfill
\includegraphics[width=.4\textwidth]{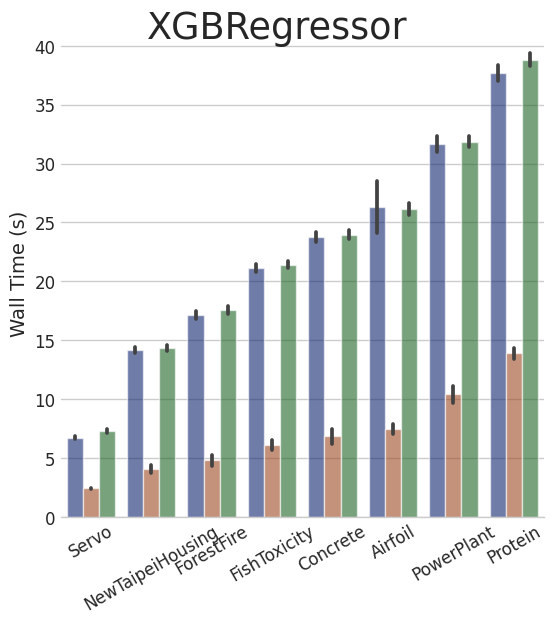}

\caption{Wall time computational performance of the three uncertainty estimation methods described through the paper. Datasets are sorted by the number of rows and the y-axis is not shared accross plots.}
\label{fig:figure3}
\end{figure*}

In this section we compare the wall clock time of the NASA method~\cite{kumar2012bootstrap}, MAPIE~\cite{barber2021predictive} and Doubt with three estimators: linear regression, decision tree and  gradient boosting decision tree, over the range of dataset previously used. We set the number of model bootstraps in the Doubt method to be equal to the number of cross-validation splits in the MAPIE method; namely, the square root of the total number of samples in the dataset. The experiment are performed on a virtual machine with 8 vCPUs and 60 GB RAM.

See the results in Figure \ref{fig:figure3}. We see here that the MAPIE approach is significantly faster than the NASA and Doubt methods in the cases where the model used is a linear regression model or a decision tree. However, interestingly, in the case where the model is an XGBoost model, the Doubt method is significantly faster than the other two.

\begin{table}[ht]
\centering
\caption{Non-aggregated version of Table \ref{tab:scores}. Model monitoring systems for model deterioration for a variety of model architectures on eight regression datasets from the UCI repository.}
\label{tab:appendix}
\resizebox{\textwidth}{!}{%
\begin{tabular}{l|l|llllllll|ll}
                            &             & Airfoil & Concrete & ForestFire & Parkinsons & PowerPlant & Protein & Bike & FishToxicity & Mean               & Std                 \\\hline
Linear Regression           & Uncertainty & 0.67    & 0.59     & 0.6        & 0.6        & 1.0        & 0.72    & 0.86 & 0.64         & 0.71               & 0.14  \\
                            & MAPIE       & 0.72    & 0.79     & 0.71       & 0.62       & 0.85       & 0.74    & 0.79 & 0.94         & 0.77               & 0.09 \\
                            & K-S         & 0.78    & 0.67     & 0.81       & 0.92       & 0.8        & 0.99    & 0.85 & 0.71         & 0.81               & 0.10 \\
                            & PSI         & 1.02    & 0.9      & 0.78       & 0.83       & 0.93       & 0.79    & 0.91 & 0.81         & 0.87               & 0.08 \\
                            & Target      & 0.94    & 0.68     & 0.7        & 0.84       & 0.8        & 0.97    & 0.97 & 1.0          & 0.86               & 0.12 \\\hline
Poisson Regressor           & Uncertainty & 0.81    & 0.49     & 0.93       & 0.82       & 0.97       & 0.83    & 0.79 & 0.71         & 0.79               & 0.14 \\
                            & MAPIE       & 0.84    & 0.55     & 0.89       & 0.86       & 0.87       & 0.91    & 0.61 & 1.1          & 0.82               & 0.17 \\
                            & K-S         & 0.83    & 0.68     & 1.08       & 1.08       & 0.8        & 1.16    & 1.16 & 0.79         & 0.94               & 0.19 \\
                            & PSI         & 1.01    & 0.9      & 1.04       & 1.04       & 0.94       & 0.85    & 0.85 & 0.86         & 0.93               & 0.08 \\
                            & Target      & 1.02    & 0.75     & 1.15       & 1.15       & 0.79       & 1.1     & 1.1  & 0.95         & 1.00               & 0.15  \\\hline
Decision Tree               & Uncertainty & 0.48    & 0.44     & 0.74       & 0.51       & 0.45       & 0.44    & 0.46 & 0.47         & 0.49               & 0.10 \\
                            & MAPIE       & 0.59    & 0.52     & 0.96       & 0.61       & 0.61       & 0.51    & 0.49 & 0.45         & 0.59               & 0.15 \\
                            & K-S         & 0.49    & 0.56     & 0.66       & 0.49       & 0.3        & 0.49    & 0.68 & 0.5          & 0.52               & 0.11  \\
                            & PSI         & 1.14    & 1.08     & 1.02       & 0.9        & 0.92       & 0.9     & 0.86 & 0.95         & 0.97               & 0.09  \\
                            & Target      & 0.96    & 0.77     & 0.81       & 0.9        & 0.64       & 0.54    & 0.94 & 0.79         & 0.79               & 0.14 \\\hline
Random Forest               & Uncertainty & 0.57    & 0.577    & 0.93       & 0.5        & 0.8        & 0.92    & 0.67 & 0.98         & 0.74               & 0.18 \\
                            & MAPIE       & 0.76    & 0.91     & 0.95       & 0.58       & 0.99       & 1.0     & 0.75 & 0.98         & 0.86               & 0.15 \\
                            & K-S         & 0.48    & 0.51     & 0.74       & 0.41       & 0.41       & 0.41    & 0.62 & 0.45         & 0.50               & 0.11 \\
                            & PSI         & 1.11    & 1.02     & 1.0        & 0.89       & 0.92       & 0.92    & 0.85 & 0.93         & 0.95               & 0.08 \\
                            & Target      & 0.95    & 0.7      & 0.7        & 0.87       & 0.61       & 0.38    & 0.89 & 0.8          & 0.73               & 0.18 \\\hline
Gradient Boosting           & Uncertainty & 0.48    & 0.49     & 1.0        & 0.45       & 0.45       & 0.32    & 0.57 & 0.88         & 0.58               & 0.23 \\
                            & MAPIE       & 0.66    & 0.86     & 0.94       & 0.55       & 0.67       & 0.51    & 0.64 & 1.03         & 0.73               & 0.18 \\
                            & K-S         & 0.45    & 0.47     & 0.65       & 0.9        & 0.9        & 0.42    & 0.64 & 0.48         & 0.61               & 0.19 \\
                            & PSI         & 1.11    & 1.01     & 0.99       & 0.91       & 0.92       & 0.9     & 0.85 & 0.98         & 0.95               & 0.08 \\
                            & Target      & 1.05    & 0.73     & 0.54       & 0.92       & 0.65       & 0.46    & 0.88 & 0.8          & 0.75               & 0.19 \\\hline
MultiLayer Perceptron       & Uncertainty & 1.5     & 0.93     & 0.81       & 0.43       & 0.43       & 0.42    & 0.52 & 0.41         & 0.68               & 0.38 \\
                            & MAPIE       & 1.6     & 0.85     & 0.95       & 0.49       & 0.58       & 0.42    & 0.55 & 0.55         & 0.74               & 0.38  \\
                            & K-S         & 1.03    & 0.72     & 1.18       & 0.61       & 0.62       & 0.64    & 0.62 & 0.62         & 0.75               & 0.22 \\
                            & PSI         & 1.1     & 0.75     & 1.08       & 0.76       & 0.88       & 0.79    & 0.74 & 0.66         & 0.84               & 0.16  \\
                            & Target      & 0.63    & 0.8      & 1.24       & 0.6        & 0.59       & 0.54    & 0.73 & 0.78         & 0.73               & 0.22 \\
\end{tabular}
}
\end{table}

\section{Conclusion}
\label{sec:conclusion}

In this work, we have provided methods and experiments to monitor and identify machine learning model deterioration via non-parametric bootstrapped uncertainty estimation methods and use explainability techniques to explain the source of the model deterioration under gradual covariate shift. 

Our monitoring system is based on a model agnostic uncertainty estimation method, which produces prediction intervals with theoretical guarantees and which achieves better coverage than the current state-of-the-art. The resulting monitoring system more accurately detects model deterioration than methods using classical statistics. Finally, we used SHAP values in conjunction with these uncertainty estimates to identify the features that are driving the model deterioration at both a global and local level, and qualitatively showed that these more accurately detect the source of the model deterioration compared to classical statistical methods.

\subsection{Limitations:}
Detecting performance degradation may be impossible in situations where no labelled data is available and no characterizations of the shift can be made. In this work, we have engineered an experiment of a gradual covariate shift, where uncertainty estimation by model averaging achieves the best result.

We emphasize here that due to computationally limitations, we have only benchmarked on datasets of relatively small to medium size (cf. Table~\ref{tab:uncertaintydatasets}), and further work needs to be done to see if these results are also valid for datasets of significantly larger size. This work also focused on tabular data and non-deep learning models.

\chapter{Conclusion and Remarks}\label{ch:conclusion}

\section{Summary of Findings}

This thesis studies the problem of model monitoring in the absence of labelled deployment data, aiming to improve the understanding of supervised machine learning systems in production. It begins by exploring the AI system life cycle, emphasizing the critical role of model monitoring in ensuring the functionality and reliability of deployed models and the unsupervised monitoring stage. 

In Chapter~\ref{ch:foundations}, we navigated through the foundations needed for this thesis. Starting from feature attribution distribution using both Shapley values and LIME in Section\ref{sec:xai.foundations}. In section~\ref{sec:foundations.alginemnt} we provide the foundations of AI alignment, drawing basic common understanding from political and moral philosophy. In section~\ref{sec:foundations.fairness}, we reviewed the current state-of-the-art of technical fairness metrics. Finally in section~\ref{sec:foundations.monitoring} we reviewed the problem of model monitoring under distribution shift with its inherent limitations.

Chapter \ref{ch:et} proposed a metric and a model for measuring \emph{equal treatment}, aiming to align AI systems with diverse human values encompassing political philosophy and moral principles. We have examined the principle of equality in machine learning, specifically on liberalism-oriented philosophical notions, emphasizing the importance of equal treatment for all individuals, regardless of their protected characteristics. By proposing a new formalization for equal treatment incorporating feature values' influence through Shapley values, we have sought to provide a better understanding of how can we align machine learning systems with equal treatment notions from the social sciences. The theoretical properties of our formalization have been analyzed, and a classifier two-sample test based on the AUC of an equal treatment inspector has been devised. Our experiments on synthetic and natural datasets have demonstrated the efficacy of our formalization, shedding light on the complexities of achieving equality in machine learning.

Chapter~\ref{ch:exp.shift} proposed to usage of explanation distributions shift to better detect changes in the model due to distribution shifts.From the predictive performance perspective, we have investigated the challenges posed by distribution shifts, which can significantly impact the performance of machine learning models on new data. By introducing the concept of explanation shift, we have developed a sensitive and explainable indicator to monitor changes in model behavior resulting from distribution shifts. Through theoretical analysis and empirical evaluations, we have demonstrated the effectiveness of explanation shifts and facilitated understanding of the interaction between distribution shifts and learned models.

Chapter~\ref{ch:monitoring} distinguishes between understanding changes in model behaviour and building indicators of performance degradation which remains an impossible task in many scenarios. We engineered an experiment where there is a gradual covariate shift and proposed model agnostic uncertainty estimation techniques to build an indicator of model performance degradation. Finally, we account for the drivers of model deterioration, training another model to predict the uncertainty, and explaining this model.

Finally, in Chapter~\ref{ch:conclusion}, we summarise the findings, make the contributions explicit and propose future work directions.

\section{Contributions to the Field}

The contributions made in this thesis help to tackle some of the challenges of model monitoring in machine learning, particularly in the context of deployment without labelled data. By defining and making explicit some the intricacies of post deployment monitoring, it flags the importance of unsupervised model monitoring to maintain the functionality and reliability of machine learning systems in production. The exploration of feature attribution distributions for AI alignment and technical fairness metrics, along with the challenges of model monitoring under distribution shift, provided techniques and methods to monitor models in production before the predictions are served in the real world.

We now summarize the key findings for each of the research questions posed.

\subsection{(RQ-1) How can we use feature attributions distributions for post-deployment monitor in the absence of labeled truth data?}

We addressed this issue by segmenting it into performance monitoring and AI alignment monitoring, and divide into two different research question aiming to explore in depth.

This thesis's technical solution leverages feature attribution distributions for model monitoring in unlabeled data scenarios. These distributions represent a projection of the model and data into a distribution space that retains distinct theoretical properties, depending on the feature attribution method utilized. We explored LIME and Shapley feature attribution methods.

\subsection{(RQ-2) Can feature attribution distributions be used to measure alignment of post-deployed models in a unsupervised monitoring set up??}

We initiated this by gathering research requirements from two specific philosophical perspectives—liberal-oriented political philosophy and deontological ethics—to align with the notion of equal treatment.

We introduced a new metric that improved existing equal treatment measures based on the collected requirements. By integrating feature attribution methods and philosophical principles, our proposed metric provides a more precise approach to measuring AI systems alignment with human values of equality and fairness.

\subsection{(RQ-3) How did Distribution Shift Impact the Model?}

We studied the interactions between deployed models and shifting data distributions, introducing the concept of explanation shift to detect and understand changes in model behaviour due to distribution shifts. This method offers a novel perspective on maintaining consistent model behaviour over time.

\subsection{(RQ-4) How can we identify and explain changes in model performance using feature attribution distribution?}

We proposed the use of Shapley values in conjunction with a second model that predicts uncertainty estimates to identify the features driving model deterioration at both a global and local level.

\subsection{Software Contributions}
The developed software, \texttt{skshift} and \texttt{explanationspace}, stands as a practical implementation of these contributions, offering researchers and practitioners a valuable tool for model monitoring. This work not only advances our understanding of model monitoring in unlabelled environments but also sets a new standard for the alignment of AI systems with ethical principles and human values.

To ensure reproducibility, we have made the data, code repositories, and experiments publicly available. The data, code, and tutorials can be accessed at the following URLs:

\begin{itemize}
\item \textbf{Explanation Shift}, Dataset and code repositories: \url{https://github.com/cmougan/ExplanationShift}

\item Open-source Python package with documentation and tutorials \texttt{skshift}: \url{https://skshift.readthedocs.io}

\item \textbf{Equal Treatment}, Data, data preparation routines, and code repositories: \url{https://github.com/cmougan/xAIAuditing}

\item Open-source Python package with documentation and tutorials \texttt{explanationspace}: \url{https://explanationspace.readthedocs.io}

\item  Data, data preparation routines, and code repositories for model monitoring with uncertainty estimation \url{https://github.com/cmougan/MonitoringUncertainty}

\end{itemize}

For our experiments, we used default parameters from the \texttt{scikit-learn} library \cite{pedregosa2011scikit}, unless stated otherwise. The system requirements and software dependencies for reproducing our experiments are provided. All experiments were conducted on a 4 vCPU server with 32 GB RAM.

\subsection{Limitations and Reflections}

As this thesis draws to a close, we now reflect on its scope, limitations, and areas requiring further exploration. While, I believe, the proposed methodologies and contributions are significant, they come with challenges and constraints that require reflection to provide a holistic view of the work's impact and applicability.

\subsubsection{Reliability of Explanations}

This thesis introduces an integrated framework for model monitoring that combines value alignment and performance maintenance in deployment scenarios. A central component of this framework is the reliance on algorithmic explanations, particularly feature attribution methods, to assess model behavior and detect shifts in explanation distributions. However, it relies on explanations.

Feature attribution explanation methods, such as those produced by LIME or Shapley values, are approximations and can introduce new biases. As previously stated in the thesis (cf. Section~\ref{ch:foundations}), previous research has focused on studying the reliability of local explanations; for example, LIME’s reliance on local perturbations may inadvertently favour majority groups due to its neighbourhood sampling strategy. However, when studying distributions, SHAP and LIME achieved very similar results, seeming to overcompensate those biases.  Further unexplored biases could compromise the fairness assessments derived from distributions of explanation-based methods. Therefore, a deeper technical exploration of these explanations is necessary. Can distributions of algorithmic explanations be considered objective or reliable enough to guide fairness and performance decisions? This question remains partially unanswered and warrants further investigation.

\subsubsection{Alternative Explanation Methods}

Feature attribution methods, such as those used in this thesis to monitor ML models after deployment, can be useful tools. Although this thesis predominantly focuses on Shapley values and LIME, other explanation techniques, such as counterfactual explanations or gradient-based methods, might offer complementary or superior insights in certain scenarios. Each method has its own set of assumptions, strengths, and weaknesses, which should be considered when selecting a monitoring strategy.

\subsubsection{Beyond Tabular Data}

Moreover, while the two explored feature attribution methods proposed in this thesis, have demonstrated efficacy in both theoretical and experimental settings, it might benefit from further validation across a wider range of models, data types, or deployment environments. The adaptability of these methods to different architectures, such as transformer models or non-tabular data like images and text, remains an open question. Future research could aim to generalize these techniques to broader contexts and evaluate their performance comprehensively.

\subsubsection{Complexity and Practicality of Feature Attribution-Based Monitoring}

Incorporating feature attribution methods into model monitoring introduces an additional layer of complexity. While this complexity may be justified in some scenarios where complexity is needed, it may not always be necessary or practical. Mathematically simpler or less computationally intensive methods might suffice in certain cases, particularly where the risk of model failure is lower or where interpretability is not a primary concern.

Furthermore, this work has been developed under specific assumptions, including controlled experimental setups and the availability of computational resources. These assumptions are explicitly stated to avoid overgeneralization. For instance, the impact of population imbalances or other real-world data challenges has not been thoroughly investigated. The findings should, therefore, be framed as promising but context-dependent, highlighting the need for cautious interpretation and application.

\section{Future Directions}

As we conclude this thesis, we highlight potential avenues for future research and development in model monitoring in general and in each of the chapter presented on this thesis.

\subsection{Software Improvement}

One avenue for future work involves expanding and refining the \emph{Equal Treatment Inspector} and the \emph{Explanation Shift Detector} software, allocated in the Python package \texttt{explanationspace} and \texttt{skshift} respectively. Currently, it provides insights into the features with different distributions of explanations, helping identify unequal treatment and distribution shift that impacts the model. Improvementts could include the development of a more user-friendly interface, incorporating statistical significance tests to validate differences in explanation distributions, extending to non-tabular data and exploring additional visualization techniques to facilitate a deeper understanding of the root sources of unequal treatment.

\subsection{Integration with Real-world Applications}

To validate the practical utility of the proposed approach, future work can potentially involve integrating the \emph{Equal Treatment Inspector} and \emph{Explanation Shift Detector} into real-world machine learning applications. This would entail applying the method to diverse datasets in various domains, such as finance, healthcare, or criminal justice, to assess its effectiveness in uncovering and addressing disparate treatment issues. Additionally, feedback from practitioners and stakeholders in these domains could be incorporated to improve the tool's applicability and relevance.

\subsection{Ethical and Societal Implications}

Future research should also delve into the ethical and societal implications of incorporating political liberal and Kantian deontological notions into machine learning fairness. This involves exploring how aligning AI systems with the principles of treating individuals as ends in themselves and equally might impact various stakeholders, including marginalized communities, policymakers, and the general public. Addressing these implications is crucial, as their implications within a machine learning context might be different than the ones from the social sciences.

\subsection{Longitudinal Studies}

Longitudinal studies could be conducted to assess the long-term impact and effectiveness of the proposed approach. These studies would involve monitoring the deployment of machine learning models incorporating the \emph{Equal Treatment Inspector} and \emph{Explanation Shift Detector} over an extended period, evaluating its ability to mitigate disparate treatment, and adapting the tool based on real-world feedback and evolving ethical standards.

By addressing these future directions, researchers can contribute to the ongoing discourse on fairness in machine learning and foster the development of ethically sound and socially responsible AI systems.

\listoffigures
\listoftables
\bibliographystyle{apalike}
\bibliography{UOS}

\backmatter
\raggedright                  
\copyrightDeclaration{} 
\end{document}